\newcommand\vldbdoi{XX.XX/XXX.XX}
\newcommand\vldbpages{XXX-XXX}
\newcommand\vldbvolume{19}
\newcommand\vldbissue{1}
\newcommand\vldbyear{2026}
\newcommand\vldbauthors{\authors}
\newcommand\vldbtitle{\shorttitle} 
\newcommand\vldbavailabilityurl{URL_TO_YOUR_ARTIFACTS}
\newcommand\vldbpagestyle{plain} 
\title{Response Letter for PVLDBv19 Submission 240 "Theoretically and Practically Efficient Resistance Distance Computation on Large Graphs" (shepherd submission)}
\begin{document}

\newcommand{\ignore}[1]{}
\newcommand{\nop}[1]{}
\newcommand{\eat}[1]{}
\newcommand{\kw}[1]{{\ensuremath{\mathsf{#1}}}\xspace}
\newcommand{\kwnospace}[1]{{\ensuremath {\mathsf{#1}}}}
\newcommand{\stitle}[1]{\vspace{1ex} \noindent{\bf #1}}
\long\def\comment#1{}
\newcommand{\eop}{\hspace*{\fill}\mbox{$\Box$}}

\newtheorem{property}{Property}
\newtheorem{fact}{Fact}
\newtheorem{assumption}{Assumption}
\newtheorem{claim}{Claim}

\newcommand{\rank}{\kw{rank}}
\newcommand{\push}{\kw{Push}}
\newcommand{\truncatepush}{\kw{Push}}
\newcommand{\hkrelax}{\kw{Hk\ Relax}}
\newcommand{\hkpush}{\kw{Hk\ Push}}
\newcommand{\agp}{\kw{AGP}}
\newcommand{\tea}{\kw{TEA}}
\newcommand{\teaplus}{\kw{TEA+}}
\newcommand{\ppr}{\kw{PPR}}
\newcommand{\ssppr}{\kw{SSPPR}}
\newcommand{\hkpr}{\kw{HKPR}}
\newcommand{\powerpush}{\kw{PwPush}}
\newcommand{\pwpush}{\kw{PowerPush}}
\newcommand{\powerpushsor}{\kw{PwPushSOR}}
\newcommand{\ltwocheb}{\kw{ChebyPower}}

\newcommand{\chebpush}{\kw{ChebyPush}}
\newcommand{\powermethod}{\kw{PM}}
\newcommand{\lanczos}{\kw{Lz}}
\newcommand{\lzpush}{\kw{LzPush}}
\newcommand{\bipush}{\kw{BiPush}}
\newcommand{\geer}{\kw{GEER}}
\newcommand{\bisper}{\kw{BiSPER}}
\newcommand{\fastrd}{\kw{FastRD}}

\newcommand{\Penalty}{\kw{Penalty}}
\newcommand{\plateau}{\kw{Plateau}}

\newcommand{\rw}{\kw{RW}}
\newcommand{\lv}{\kw{LV}}
\newcommand{\lewalk}{\kw{LE\textrm{-}Walk}}

\newcommand{\dblp}{\kw{Dblp}}
\newcommand{\wdblp}{\kw{weight\textrm{-}Dblp}}
\newcommand{\asskitter}{\kw{As\textrm{-}Skitter}}
\newcommand{\orkut}{\kw{Orkut}}
\newcommand{\youtube}{\kw{Youtube}}
\newcommand{\livejournal}{\kw{LiveJournal}}
\newcommand{\wlivejournal}{\kw{weight\textrm{-}LiveJournal}}
\newcommand{\roadca}{\kw{RoadNet\textrm{-}CA}}
\newcommand{\roadpa}{\kw{RoadNet\textrm{-}PA}}
\newcommand{\roadtx}{\kw{RoadNet\textrm{-}TX}}
\newcommand{\powergrid}{\kw{powergrid}}
\newcommand{\pokec}{\kw{Pokec}}
\newcommand{\twitter}{\kw{Twitter}}
\newcommand{\friendster}{\kw{Friendster}}
\newcommand{\er}{\kw{ER}}
\newcommand{\ba}{\kw{BA}}

\title{Theoretically and Practically Efficient Resistance Distance Computation on Large Graphs}

\makeatletter
\newenvironment{breakablealgorithm}
{
	\begin{center}
		\refstepcounter{algorithm}
		\hrule height.8pt depth0pt \kern2pt
		\renewcommand{\caption}[2][\relax]{
			{\raggedright\textbf{\ALG@name~\thealgorithm} ##2\par}%
			\ifx\relax##1\relax 
			\addcontentsline{loa}{algorithm}{\protect\numberline{\thealgorithm}##2}%
			\else 
			\addcontentsline{loa}{algorithm}{\protect\numberline{\thealgorithm}##1}%
			\fi
			\kern2pt\hrule\kern2pt
		}
	}{
	\kern2pt\hrule\relax
\end{center}
}

\author{Yichun Yang}
\affiliation{%
  \institution{Beijing Institute of Technology}
  \city{Beijing}
  \country{China}
}\email{yc.yang@bit.edu.cn} 
\author{Longlong Lin}
\affiliation{%
  \institution{Southwest University}
  \city{Chongqing}
  \country{China}
}\email{longlonglin@swu.edu.cn} 
\author{Rong-Hua Li}
\affiliation{%
  \institution{Beijing Institute of Technology}
  \city{Beijing}
  \country{China}
}\email{lironghuabit@126.com} 
\author{Meihao Liao}
\affiliation{%
  \institution{Beijing Institute of Technology}
  \city{Beijing}
  \country{China}
}\email{mhliao@bit.edu.cn}
\author{Guoren Wang}
\affiliation{%
  \institution{Beijing Institute of Technology}
  \city{Beijing}
  \country{China}
}\email{wanggrbit@gmail.com}

\makeatother

\begin{abstract}

The computation of resistance distance is pivotal in a wide range of graph analysis applications, including graph clustering, link prediction, and graph neural networks. Despite its foundational importance, efficient algorithms for computing resistance distances on large graphs are still lacking. Existing state-of-the-art (SOTA) methods, including power iteration-based algorithms and random walk-based local approaches, often struggle with slow convergence rates, particularly when the condition number of the graph Laplacian matrix, denoted by $\kappa$, is large. To tackle this challenge, we propose two novel and efficient algorithms inspired by the classic Lanczos method: Lanczos Iteration and Lanczos Push, both designed to reduce dependence on  $\kappa$. Among them, Lanczos Iteration is a near-linear time global algorithm, whereas Lanczos Push is a local algorithm with a time complexity independent of the size of the graph. More specifically, we prove that the time complexity of Lanczos Iteration is $\tilde{O}(\sqrt{\kappa} m)$ ($m$ is the number of edges of the graph and $\tilde{O}$ means the complexity omitting the $\log$ terms) which achieves a speedup of $\sqrt{\kappa}$ compared to previous power iteration-based global methods. For Lanczos Push, we demonstrate that its time complexity is $\tilde{O}(\kappa^{2.75})$ under certain mild and frequently established assumptions, which represents a significant improvement of $\kappa^{0.25}$ over the SOTA random walk-based local algorithms. We validate our algorithms through extensive experiments on eight real-world datasets of varying sizes and statistical properties, demonstrating that Lanczos Iteration and Lanczos Push significantly outperform SOTA methods in terms of both efficiency and accuracy.

\end{abstract}

\maketitle

\pagestyle{\vldbpagestyle}
\begingroup\small\noindent\raggedright\textbf{PVLDB Reference Format:}\\
\vldbauthors. \vldbtitle. PVLDB, \vldbvolume(\vldbissue): \vldbpages, \vldbyear.\\
\href{https://doi.org/\vldbdoi}{doi:\vldbdoi}
\endgroup
\begingroup
\renewcommand\thefootnote{}\footnote{\noindent
This work is licensed under the Creative Commons BY-NC-ND 4.0 International License. Visit \url{https://creativecommons.org/licenses/by-nc-nd/4.0/} to view a copy of this license. For any use beyond those covered by this license, obtain permission by emailing \href{mailto:info@vldb.org}{info@vldb.org}. Copyright is held by the owner/author(s). Publication rights licensed to the VLDB Endowment. \\
\raggedright Proceedings of the VLDB Endowment, Vol. \vldbvolume, No. \vldbissue\ %
ISSN 2150-8097. \\
\href{https://doi.org/\vldbdoi}{doi:\vldbdoi} \\
}\addtocounter{footnote}{-1}\endgroup

\ifdefempty{\vldbavailabilityurl}{}{
\vspace{.3cm}
\begingroup\small\noindent\raggedright\textbf{PVLDB Artifact Availability:}\\
The source code, data, and/or other artifacts have been made available at \url{https://github.com/Ychun-yang/LanczosPush}.
\endgroup
}

\section{Introduction}\label{sec:intro}
Resistance distance (RD) computation is a fundamental task in graph data management, offering a powerful means to quantify the node similarities of a graph. Given an undirected graph $\mathcal{G}$, the resistance distance  $r_{\mathcal{G}}(s,t)$ of two vertices $s$ and $t$ is proportional to the expected number of steps taken by a random walk starting at $s$, visits $t$ and then comes back to $s$. Consequently, a small $r_{\mathcal{G}}(s,t)$ indicates a high similarity between $s$ and $t$ \cite{liao2023resistance,yang2023efficient}. By effectively quantifying node similarities, RD emerges as a highly versatile and potent tool, finding numerous real-world applications, including graph clustering ~\cite{shi2014clustering}, link prediction ~\cite{sarker2008proximity}, graph sparsification ~\cite{spielman2008graph}, maximum flow computation \cite{van2022faster,madry2016computing}, and graph neural networks ~\cite{black2023understanding,DBLP:conf/mir/YuLLWOJ24}. Therefore, designing fast algorithms for RD computation is a natural problem that has attracted extensive attention in recent years ~\cite{peng2021local,yang2023efficient,liao2023resistance,li2023new,liao2024efficient,dwaraknath2024towards,yang2025improved}.
\comment{
Given an undirected and unweighted graph $\mathcal{G}$, the Resistance Distance (RD) $r_{\mathcal{G}}(s,t)$ of two vertices $s$ and $t$ is defined as the potential difference between $s$ and $t$ when a unit electric flow is sent from $s$ to $t$. Intuitively, the small  $r_{\mathcal{G}}(s,t)$ indicates the high similarity between $s$ and $t$ since the potential difference is small when sending a unit flow from $s$ to $t$. Due to this nice structure and strong interpretability, RD has numerous applications, including the design of maximum flow algorithms ~\cite{van2022faster,madry2016computing}, graph clustering ~\cite{alev2017graph,saito2023multi}, graph sparsification ~\cite{spielman2008graph}, and alleviating oversquashing in  Graph Neural
Networks (GNNs) ~\cite{black2023understanding,topping2021understanding}. 
}


However, computing exact $r_{\mathcal{G}}(s,t)$ is equivalent to solving a linear system, which is only known to be solvable in $O(n^{2.37})$ time by the complex matrix multiplication ~\cite{williams2012matrices}, which is inefficient for massive graphs. Thus, many approximate algorithms have been developed recently for estimating  $r_{\mathcal{G}}(s,t)$ with absolute error guarantee ~\cite{peng2021local,liao2023resistance,liao2024efficient,yang2023efficient} (details in Definition \ref{def:ER-err}) to balance the efficiency and quality. Existing approximate methods can be roughly classified into two categories: (i) algorithms with nearly linear time complexity; (ii) local algorithms that do not depend on graph size.  For example, Cohen et al.~\cite{cohen2014solving} developed a global Laplacian solver capable of achieving a $\epsilon$-absolute error approximation within a time complexity of $O(m\log^{1/2} n \log \frac{1}{\epsilon})$, where  $n$ (resp., $m$) is the number of vertices (resp., edges) of the graph and $\epsilon$ is the absolute error parameter. However, though the Laplacian solver is theoretically very powerful, it remains practically inefficient for real-world datasets due to the complex data structures and large constant and $\log$ terms hidden in the complexity. Thus, the simpler Power Method ~\cite{ron2019sublinear,peng2021local,yang2023efficient} with an $\epsilon$-absolute error approximation is proposed, which can be implemented easier than the Laplacian solver. More specifically, the time complexity of Power Method ~\cite{ron2019sublinear,peng2021local,yang2023efficient} is  $O(\kappa m\log \frac{\kappa}{\epsilon})$, where $\kappa$ is the condition number of the normalized Laplacian matrix $\mathcal{L}$ (details in Sec. \ref{sec:preliminaries}). However, the Power Method remains computationally prohibitive for graphs with large condition number $\kappa$ due to its inherent linear dependency on $\kappa$ in time complexity. Even in the optimal case of  $\kappa=O(1)$ (i.e., expander graphs ~\cite{dwaraknath2024towards}), the time complexity of Power Method reduces to  $O(m\log \frac{1}{\epsilon})$. However, this reduction still has to perform $O(\log \frac{1}{\epsilon})$  matrix-vector multiplication operations, rendering it inefficient for massive graphs since such each operation needs to traverse the entire graph. 


To further boost the efficiency, researchers have recently turned to local algorithms\footnote{In this paper, the term "local" means the runtime of the algorithm independent or sublinear dependent on graph size $m$.} for estimating RD ~\cite{peng2021local}. These algorithms operate by examining only localized subgraphs rather than requiring full graph access, thereby significantly reducing computational overhead. Informally, we consider the well-known adjacency model ~\cite{ron2019sublinear}, which supports the following three types of queries on graph $\mathcal{G}$ in constant time: (i) degree query; (ii) neighbor query; (iii) jump query. The local algorithms aim to approximate $r_{\mathcal{G}}(s,t)$ by making as few queries as possible without searching the whole graph. For example, Peng et al. ~\cite{peng2021local} proposed the random walk algorithm for local computation of $r_{\mathcal{G}}(s,t)$. Yang et al. ~\cite{yang2023efficient} further improved this result by combining the Power Method with the random walk. However, the time complexity of these local algorithms are $\tilde{O}(\kappa^3/\epsilon^2)$\footnote{$\tilde{O}(.)$ means the complexity omitting the $\log$ term of $n,m$ and $\epsilon$.} for estimating the $\epsilon$-absolute error approximation of $r_{\mathcal{G}}(s,t)$. These results show that although the complexity of local algorithms does not depend on $n$ and $m$, it comes at the cost of higher dependence on the condition number $\kappa$ and parameter $\epsilon$. Unfortunately, the condition number $\kappa$ can be very large in real-world graphs (Table \ref{tab:dataset}). This makes the $\kappa^3$ bound very high. As shown in our experiments, we also find that existing algorithms for RD computation can only be efficiently implemented on datasets with a relatively small $\kappa$ (e.g., social networks), but their performance on datasets with a relatively large $\kappa$ (e.g., road networks) is unsatisfactory. 

Therefore, a natural and important question is proposed: Can we design efficient linear time and sublinear time (local) algorithms for resistance distance computation, while the dependence on the condition number $\kappa$ is weaker? In this paper, we provide a positive answer to this question. First, we utilize the classic Lanczos Iteration algorithm to provide a global RD algorithm. Specifically, the Lanczos Iteration algorithm (details in Sec. \ref{sec:LI}) search a sequence of orthogonal vectors $\mathbf{v}_1,...,\mathbf{v}_k$ lie in the $k$-dimension Krylov subspace $\mathcal{K}_k(\mathcal{A},\mathbf{v}_1)=span\langle\mathbf{v}_1, \mathcal{A}\mathbf{v}_1,..., \mathcal{A}^k\mathbf{v}_1\rangle$ with $k=\sqrt{\kappa }\log \frac{\kappa}{\epsilon}$, where $\kappa$ denotes the condition number of the normalized Laplacian matrix $\mathcal{L}$. We prove that one can then approximate $r_\mathcal{G}(s,t)$ with $\epsilon$-absolute error guarantee by these orthogonal vectors $\mathbf{v}_1,...,\mathbf{v}_k$ when setting $\mathbf{v}_1=\left(\frac{\mathbf{e}_s}{\sqrt{d_s}}-\frac{\mathbf{e}_t}{\sqrt{d_t}}\right)/ \sqrt{\frac{1}{d_s}+\frac{1}{d_t}}$, where $d_s,d_t$ denote the degree of the two specific nodes $s,t$. Next, by the implementation of Lanczos Iteration, $\mathbf{v}_1,...,\mathbf{v}_k$ can be computed in $k$ matrix-vector queries. Therefore, this yields an $O(\sqrt{\kappa}m\log \frac{\kappa}{\epsilon})$ time algorithm for RD estimation, improves upon the Power Method by reducing a $\sqrt{\kappa}$ dependency. Subsequently, to obtain a local algorithm with less than $\kappa^3$ complexity, we introduce a novel concept of the local Lanczos recurrence (details in Eq. (\ref{equ:lz_recurrence})). Using our novel techniques, we design the Lanczos Push algorithm (details in Sec. \ref{sec:LP}), which computes a sequence of sparse vectors $\hat{\mathbf{v}}_1,...,\hat{\mathbf{v}}_k$ that is close to the orthogonal vectors $\mathbf{v}_1,...,\mathbf{v}_k$ computed by Lanczos Iteration. As a consequence, our Lanczos Push algorithm estimates $r_\mathcal{G}(s,t)$ by only search only a small portion of the graph but still maintaining the acceleration effect by the Lanczos iteration. Under several mild and frequently-established  assumptions (also have been verified in our experiments), we theoretically prove that our Lanczos Push algorithm has a time complexity $\tilde{O}(\kappa^{2.75}C_1C_2/\epsilon)$ to obtain an absolute error approximation of the RD value, which is subcubic in $\kappa$. For the two numbers $C_1$ and $C_2$, we refer to Table \ref{tab:alg} and Sec. \ref{sec:LP} for details. On the other hand, we prove the lower bound of the time complexity is $\Omega(\kappa)$ for any algorithms to approximate $r_{\mathcal{G}}(s,t)$ with absolute error $\epsilon\leq 0.01$. This indicates that the time complexity of the local algorithms is indeed relevant to the condition number $\kappa$, which has not been emphasized in previous works for local RD computation\footnote{There is still a large gap between the upper and lower bounds of the dependence of $\kappa$, so we leave how to match the upper and lower bounds as future work.}. 

We conduct extensive experiments  to evaluate the effectiveness and scalability of the
proposed solutions. The empirical results show that the performance of the proposed Lanczos Iteration and Lanczos Push algorithm substantially outperform all baselines. In particular, Lanczos Iteration performs $5\times$ faster than the Power Method in social networks and $100\times$ faster than Power Method in road networks. Lanczos Push is $5\times$ to $10\times$ faster than all the state-of-the-art algorithms in social networks and $50\times$ faster than all other algorithms in road networks (including Lanczos Iteration). In short, our algorithms are both theoretically and practically efficient.

\comment{

\begin{table*}[t!]
	\centering
	\caption{A comparison of different algorithms for computing pairwise RD with $\epsilon$-absolute error guarantee} 
	\scalebox{1}{
		\begin{tabular}{c|c|c|c|c}
			\toprule
			\multicolumn{1}{c|}{Methods} & \multicolumn{1}{c|}{Algorithm}&\multicolumn{1}{c|}{Time Complexity}&
			\multicolumn{1}{c|}{Techniques}&\multicolumn{1}{c}{Assumptions}\\
			\midrule
			\multirow{3}{*}{Global}& Power Method & $O(\kappa m \log \frac{\kappa}{\epsilon})$ & \powermethod & $\times$\\
             & LapSolver ~\cite{sachdeva2014faster} & $O( m \log ^{1/2} n \log \frac{1}{\epsilon})$ & Precondition + Conjugate Gradient & $\times$\\
	       & Lanczos (section 4) & $O (\sqrt{\kappa} m \log \frac{\kappa}{\epsilon})$ & Lanczos Recurrence & $\times$\\
           \midrule
           \multirow{5}{*}{Local}& TP ~\cite{peng2021local}& $\tilde{O}(\kappa^4/\epsilon^2)$&	\rw & $\times$\\
			  & TPC ~\cite{peng2021local}& $\tilde{O}(\kappa^3/\epsilon^2)$& \rw & $\times$\\
                & GEER ~\cite{yang2023efficient}& $\tilde{O}(\kappa^3/(\epsilon^2 d^2))$& \powermethod+\rw &$\times$\\
                & Lanczos Push (section 4)& $\tilde{O}(\kappa^{2.75}n/\epsilon)$& Local Lanczos Recurrence & Assumption 1\\
                & Lanczos Push (section 4)& $\tilde{O}(\kappa^{2.75}/\epsilon)$& Local Lanczos Recurrence & Assumption 1 $\&$ 2\\
            \midrule
             \multirow{2}{*}{Others}& \push ~\cite{liao2023resistance}& $\times$ & \push & $\times$\\
             & \bipush ~\cite{liao2023resistance} & $\times$ &  \push+\rw & $\times$\\
            \midrule
             & Lower Bound (section 5) & $\Omega(\kappa)$ &  based on ~\cite{cai2023effective} & $\times$\\
            \bottomrule	
		\end{tabular}
	}
\end{table*}
}

\section{Preliminaries}\label{sec:preliminaries} 

\subsection{Notations and Concepts} \label{subsec:ER}

Consider an undirected and connected graph $\mathcal{G}=(\mathcal{V},\mathcal{E})$ with $|\mathcal{V}|=n$ vertices and $|\mathcal{E}|=m$ edges. In this paper, we primarily focus on unweighted graphs. However, all the algorithms and analysis can be directly generalized to weighed graphs. For any node $u\in \mathcal{V}$, its neighborhood is defined as $\mathcal{N}(u)=\{v|(u,v)\in \mathcal{E}\}$ and its degree is $d_u=|\mathcal{N}(u)|$. The degree matrix $\mathbf{D}$ is a $n\times n$ diagonal matrix with entries $\mathbf{D}_{i,i}=d_i$, while the adjacency matrix $\mathbf{A}$ satisfies $\mathbf{A}_{i,j}=1$ if $(i,j)\in \mathcal{E}$ and $\mathbf{A}_{i,j}=0$ otherwise. The Laplacian matrix is given by $\mathbf{L}=\mathbf{D}-\mathbf{A}$ with eigenvalues $0=\lambda_1<\lambda_2\leq \lambda_3\leq...\leq \lambda_n$ and the corresponding eigenvectors $\mathbf{u}_1,...,\mathbf{u}_n$. The Laplacian pseudo inverse is defined as $\mathbf{L}^\dagger=\sum_{i=2}^{n}{\frac{1}{\lambda_i}\mathbf{u}_i\mathbf{u}_i^T}$. Based on the definition of $\mathbf{L}^\dagger$,  Resistance Distance (RD) is formulated as follows.
\begin{definition}
    Given a pair of vertices $s$ and $t$ of a graph $\mathcal{G}$, the resistance distance of $s$ and $t$  is defined as $r_\mathcal{G}(s,t)=(\mathbf{e}_s-\mathbf{e}_t)^T\mathbf{L}^{\dagger}(\mathbf{e}_s-\mathbf{e}_t)$. Where $\mathbf{e}_s$ (resp., $\mathbf{e}_t$) is the one-hot vector that takes value $1$ at $s$ (resp., $t$) and $0$ elsewhere.
\end{definition}

Since computing exact RD requires $O(n^{2.37})$ time via complex matrix multiplication, prohibitively expensive for large graphs. Consequently, existing works have primarily focused on developing approximation algorithms with provable error guarantees to balance efficiency and quality. In this paper, following ~\cite{peng2021local,yang2023efficient,liao2023resistance}, we also focus on resistance distance with absolute error guarantee.

\begin{definition}\label{def:ER-err}
    Given graph $\mathcal{G}$ and a pair of vertices $s$ and $t$, given any small constant $\epsilon>0$, $\hat{r}_\mathcal{G}(s,t)$ is called $\epsilon$-absolute error approximation of $r_\mathcal{G}(s,t)$ iff $|r_\mathcal{G}(s,t)-\hat{r}_\mathcal{G}(s,t)|\leq \epsilon$.
\end{definition}

\subsection{Existing Methods and Their Defects}\label{sec:existing-methods}

For our analysis, we define the normalized adjacency matrix as  $\mathcal{A}=\mathbf{D}^{-1/2}\mathbf{A}\mathbf{D}^{-1/2}$ and define the normalized Laplacian matrix as $\mathcal{L}=\mathbf{I}-\mathcal{A}$. The condition number of  $\mathcal{L}$ is defined as follows ~\cite{dwaraknath2024towards}.

\begin{definition}
    The condition number $\kappa$ of the normalized Laplacian matrix $\mathcal{L}$ is defined as $\kappa\triangleq  \frac{2}{\mu_2}$, where $\mu_2$ is the second eigenvalue of $\mathcal{L}$, i.e., the smallest non-zero eigenvalue of $\mathcal{L}$.
\end{definition}

Most existing RD algorithms have time complexity relevant to $\kappa$. Specifically, we let $\mathbf{P}\triangleq \mathbf{AD}^{-1}$ be the probability transition matrix, and these existing algorithms are based on the following Taylor expansion formula ~\cite{peng2021local,yang2025improved}.
\begin{equation}\label{equ:er_laylor_expansion}
\begin{aligned}
r_\mathcal{G}(s,t)&=(\mathbf{e}_s-\mathbf{e}_t)^T\mathbf{L}^{\dagger}(\mathbf{e}_s-\mathbf{e}_t)\\
&=\frac{1}{2}(\mathbf{e}_s-\mathbf{e}_t)^T\sum_{k=0}^{+\infty}{\mathbf{D}^{-1}\left(\frac{1}{2}\mathbf{I}+\frac{1}{2}\mathbf{P}\right)^k}(\mathbf{e}_s-\mathbf{e}_t).
\end{aligned}
\end{equation}
Thus, we can set a sufficiently large truncation step $l$, and define the approximation $\overline{r}_\mathcal{G}(s,t)=\frac{1}{2}(\mathbf{e}_s-\mathbf{e}_t)^T\sum_{k=0}^{l}{\mathbf{D}^{-1}\left(\frac{1}{2}\mathbf{I}+\frac{1}{2}\mathbf{P}\right)^k}(\mathbf{e}_s-\mathbf{e}_t)$. To make $\overline{r}_\mathcal{G}(s,t)$ be the $\epsilon$-absolute error approximation of $r_\mathcal{G}(s,t)$, the truncation step  $l=O(\kappa\log \frac{\kappa}{\epsilon})$ ~\cite{peng2021local,yang2023efficient,yang2025improved}. Based on this interpretation, the basic Power Method and random walk-based methods can be easily derived. Table \ref{tab:alg} summarizes the state-of-the-art methods and our solutions.

\begin{table}[t!]
\small
	\caption{A comparison of different algorithms for computing pairwise RD with $\epsilon$-absolute error guarantee, where for Lanczos Push, $C_1=\max_{u=s,t;i\leq k}{\Vert \mathbf{D}^{1/2}T_i(\mathbf{P})\mathbf{e}_u\Vert_1}$, and $C_2=\max_{i\leq k}{(\Vert \mathbf{v}_i\Vert_1+\Vert \mathcal{A}\mathbf{v}_i^+\Vert_1+\Vert \mathcal{A}\mathbf{v}_i^-\Vert_1)}$, see Section 4 for details.} \vspace{-0.2cm}
	\scalebox{1}{
		\begin{tabular}{c|c|c}
			\toprule
			\multicolumn{1}{c|}{Methods} & \multicolumn{1}{c|}{Algorithm}&\multicolumn{1}{c}{Time Complexity}\\
			\midrule
			\multirow{4}{*}{Linear}& Power Method \cite{yang2023efficient} & $O(\kappa m \log \frac{\kappa}{\epsilon})$ \\
             & LapSolver ~\cite{sachdeva2014faster} & $O( m \log ^{1/2} n \log \frac{1}{\epsilon})$ \\
             & FastRD ~\cite{lu2025resistance} & $\tilde{O}(m/\epsilon^2)$ \\
	       & Lanczos (Section 3)& $O (\sqrt{\kappa} m \log \frac{\kappa}{\epsilon})$ \\
           \midrule
           \multirow{6}{*}{Sublinear (Local)}& TP ~\cite{peng2021local}& $\tilde{O}(\kappa^4/\epsilon^2)$\\
			  & TPC ~\cite{peng2021local}& $\tilde{O}(\kappa^3/\epsilon^2)$\\
                & GEER ~\cite{yang2023efficient}& $\tilde{O}(\kappa^3/(\epsilon^2 d^2))$\\
                & Algorithm 1 in ~\cite{yang2025improved}& $\tilde{O}(\kappa^3/(\epsilon\sqrt{d}))$ \\
                & BiSPER ~\cite{cui2025mixing} & $\tilde{O}(\kappa^{7/3}/\epsilon^{2/3})$ \\
                & Lanczos Push (Section 4)& $\tilde{O}(\kappa^{2.75}C_1C_2/\epsilon\sqrt{d})$\\
            \midrule
             \multirow{2}{*}{Others}& \push ~\cite{liao2023resistance}& $\times$\\
             & \bipush ~\cite{liao2023resistance} & $\times$ \\
            \midrule
             & Lower Bound (Section 5)& $\Omega(\kappa)$ \\
            \bottomrule	
		\end{tabular}
	}\label{tab:alg}
\end{table}

\stitle{Power Method (\powermethod)}
is one of the most classic and fundamental algorithms in numerical linear algebra. \powermethod is also widely used in the computation of PageRank ~\cite{page1999pagerank}, Heat Kernal PageRank ~\cite{yang2019efficient}, SimRank ~\cite{wang2021approximate}, and graph propagation ~\cite{wang2021approximate}. In our pairwise RD computation problem, we slightly modify the implementation of \powermethod and use it to compute the groundtruth RD value. See Algorithm \ref{algo:pm} for the pseudocode illustation. Specifically, the \powermethod algorithm is implemented as follows: (i) Initially, we define the approximation $\overline{r}_\mathcal{G}(s,t)=0$ and a vector $\mathbf{r}=\mathbf{e}_s-\mathbf{e}_t$ with iteration $k=0$; (ii) for the $k^{th}$ iteration of the algorithm with $k\leq l$, we perform $\overline{r}_\mathcal{G}(s,t) \leftarrow \overline{r}_\mathcal{G}(s,t)+\frac{\mathbf{r}(s)}{2d_s}-\frac{\mathbf{r}(t)}{2d_t}$ and $\mathbf{r}\leftarrow \left(\frac{1}{2}\mathbf{I}+\frac{1}{2}\mathbf{P}\right) \mathbf{r}$, then turn to the $(k+1)^{th}$ iteration; (iii) after $l$ iterations, we stop the algorithm and output $\overline{r}_\mathcal{G}(s,t)$. By Equ. (\ref{equ:er_laylor_expansion}), we can easily see that $\overline{r}_\mathcal{G}(s,t)=\frac{1}{2}(\mathbf{e}_s-\mathbf{e}_t)^T\sum_{k=0}^{l}{\mathbf{D}^{-1}\left(\frac{1}{2}\mathbf{I}+\frac{1}{2}\mathbf{P}\right)^k}(\mathbf{e}_s-\mathbf{e}_t)$ is the output after this process. Next, we prove that by setting $l=O(\kappa\log \frac{\kappa}{\epsilon})$, the approximation $\overline{r}_\mathcal{G}(s,t)$ is the $\epsilon$-absolute error approximation of $r_\mathcal{G}(s,t)$. Due to space limits, the missing proofs of this paper can be found in the full-version ~\cite{full-version}.

   \begin{algorithm}[t!]
\small
	\SetAlgoLined
	\KwIn{$\mathcal{G},l,s,t$}
    $\overline{r}_\mathcal{G}(s,t)=0$, $\mathbf{r}=\mathbf{e}_s-\mathbf{e}_t$\;
		\For{$i=0,1,2,...,l$}{
       $\overline{r}_\mathcal{G}(s,t) \leftarrow \overline{r}_\mathcal{G}(s,t)+\frac{\mathbf{r}(s)}{2d_s}-\frac{\mathbf{r}(t)}{2d_t}$\;
       $\mathbf{r}\leftarrow \left(\frac{1}{2}\mathbf{I}+\frac{1}{2}\mathbf{P}\right) \mathbf{r}$\;
       }
	\KwOut{$\overline{r}_\mathcal{G}(s,t)$ as the approximation of $r_\mathcal{G}(s,t)$}
	\caption{Power Method (\powermethod) for RD computation}\label{algo:pm}
\end{algorithm}

\begin{theorem}\label{thm:pm_guarantee}
    When setting $l=2\kappa\log \frac{\kappa}{\epsilon}$, the approximation error holds that $|\overline{r}_\mathcal{G}(s,t)-r_\mathcal{G}(s,t)|\leq \epsilon$. In addition, the time complexity of \ \powermethod is $O(lm)=O(\kappa m \log \frac{\kappa}{\epsilon})$.
\end{theorem}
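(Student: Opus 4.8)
The plan is to establish the error bound by analyzing the tail of the Taylor series in Eq.~(\ref{equ:er_laylor_expansion}) in the spectral basis, and then read off the running time. First I would note that $\overline{r}_\mathcal{G}(s,t)$ is exactly the truncation of the infinite sum in Eq.~(\ref{equ:er_laylor_expansion}) to the first $l+1$ terms, so the error is
\begin{equation*}
r_\mathcal{G}(s,t)-\overline{r}_\mathcal{G}(s,t)=\frac{1}{2}(\mathbf{e}_s-\mathbf{e}_t)^T\sum_{k=l+1}^{+\infty}{\mathbf{D}^{-1}\left(\tfrac{1}{2}\mathbf{I}+\tfrac{1}{2}\mathbf{P}\right)^k}(\mathbf{e}_s-\mathbf{e}_t).
\end{equation*}
The key algebraic step is to pass to the normalized matrix: writing $\mathbf{y}=\mathbf{D}^{-1/2}(\mathbf{e}_s-\mathbf{e}_t)$ and using $\mathbf{D}^{-1}(\tfrac12\mathbf{I}+\tfrac12\mathbf{P})^k=\mathbf{D}^{-1/2}(\tfrac12\mathbf{I}+\tfrac12\mathcal{A})^k\mathbf{D}^{-1/2}$, the tail becomes $\tfrac12\mathbf{y}^T\sum_{k=l+1}^{\infty}(\mathbf{I}-\tfrac12\mathcal{L})^k\mathbf{y}$, since $\tfrac12\mathbf{I}+\tfrac12\mathcal{A}=\mathbf{I}-\tfrac12\mathcal{L}$.

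Next I would expand $\mathbf{y}$ in the orthonormal eigenbasis of $\mathcal{L}$ with eigenvalues $0=\mu_1<\mu_2\le\cdots\le\mu_n\le 2$. The component of $\mathbf{y}$ along the trivial eigenvector $\mathbf{D}^{1/2}\mathbf{1}/\sqrt{2m}$ is zero because $(\mathbf{e}_s-\mathbf{e}_t)^T\mathbf{1}=0$, so only eigenvalues $\mu_i\in[\mu_2,2]$ contribute. For each such mode the geometric tail sums to $\sum_{k=l+1}^{\infty}(1-\tfrac{\mu_i}{2})^k=\frac{2}{\mu_i}(1-\tfrac{\mu_i}{2})^{l+1}$, which is exactly the $i$-th term appearing in $r_\mathcal{G}(s,t)$ scaled down by the factor $(1-\tfrac{\mu_i}{2})^{l+1}\le(1-\tfrac{\mu_2}{2})^{l+1}=(1-\tfrac1\kappa)^{l+1}$. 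Hence the total error is bounded by $(1-\tfrac1\kappa)^{l+1}$ times $\tfrac12\mathbf{y}^T\sum_{k\ge0}(\mathbf{I}-\tfrac12\mathcal{L})^k\mathbf{y}=r_\mathcal{G}(s,t)$, and one crude bound $r_\mathcal{G}(s,t)\le\kappa$ (or even just $r_\mathcal{G}(s,t)\le n$, with $\kappa\ge 1$) suffices. Using $(1-\tfrac1\kappa)^{l+1}\le e^{-(l+1)/\kappa}$, the choice $l=2\kappa\log\frac{\kappa}{\epsilon}$ gives error at most $\kappa\cdot e^{-2\log(\kappa/\epsilon)}=\kappa\cdot(\epsilon/\kappa)^2=\epsilon^2/\kappa\le\epsilon$.

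For the time complexity, the loop runs $l+1=O(\kappa\log\frac{\kappa}{\epsilon})$ iterations, and each iteration does one sparse matrix-vector product $\mathbf{r}\leftarrow(\tfrac12\mathbf{I}+\tfrac12\mathbf{P})\mathbf{r}$, costing $O(m)$, plus $O(1)$ scalar updates; this gives $O(lm)=O(\kappa m\log\frac{\kappa}{\epsilon})$. The only subtle point — the ``main obstacle'' — is getting a clean bound on the residual sum without circularity: one must avoid bounding $r_\mathcal{G}(s,t)$ by something that already involves the truncation. I would handle this by bounding the per-mode coefficients directly ($\tfrac2{\mu_i}\le\kappa$) and the squared eigen-components of $\mathbf{y}$ by $\|\mathbf{y}\|_2^2=\tfrac1{d_s}+\tfrac1{d_t}\le 2$, so the tail is at most $\kappa(1-\tfrac1\kappa)^{l+1}\cdot\|\mathbf{y}\|_2^2\le 2\kappa e^{-(l+1)/\kappa}$, and then verify that $l=2\kappa\log\frac\kappa\epsilon$ drives this below $\epsilon$ (absorbing the constant $2$ into the log term, or noting $\epsilon<1$). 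This is exactly the kind of bookkeeping the paper defers to the full version.
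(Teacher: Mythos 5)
Your proposal is correct and follows essentially the same route as the paper: pass the tail of the Taylor series to the spectral basis of $\mathcal{L}=\mathbf{I}-\mathcal{A}$ via $\mathbf{D}^{-1}(\tfrac12\mathbf{I}+\tfrac12\mathbf{P})^k=\mathbf{D}^{-1/2}(\tfrac12\mathbf{I}+\tfrac12\mathcal{A})^k\mathbf{D}^{-1/2}$, discard the trivial eigenvector by orthogonality, bound the per-mode geometric factor by $(1-\mu_2/2)^l$ together with $\|\mathbf{D}^{-1/2}(\mathbf{e}_s-\mathbf{e}_t)\|_2^2\le 2$ and $2/\mu_2=\kappa$, and finish with $O(m)$ per iteration. (The parenthetical fallback $r_\mathcal{G}(s,t)\le n$ would not quite close the argument with $l=2\kappa\log\frac{\kappa}{\epsilon}$, but the per-mode bound you actually develop is exactly what the paper uses and is airtight.)
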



\stitle{Random Walk (\rw)} is also a fundamental operator in various graph algorithms, including PageRank ~\cite{lofgren2013personalized,lofgren16bidirection,wang2017fora,wu2021unifying}, Heat Kernel PageRank ~\cite{chung2007heat,yang2019efficient} and local Laplacian Solver ~\cite{andoni2018solving}. For pairwise RD estimation, the random walk operator was first introduced in ~\cite{peng2021local}. The general idea of the algorithm is as follows: based on Eq. (\ref{equ:er_laylor_expansion}), to approximate $r_\mathcal{G}(s,t)$, we only need to approximate $\mathbf{e}_s\left(\frac{1}{2}\mathbf{I}+\frac{1}{2}\mathbf{P}\right)^i\mathbf{e}_t/d_s$, $\mathbf{e}_s\left(\frac{1}{2}\mathbf{I}+\frac{1}{2}\mathbf{P}\right)^i\mathbf{e}_s/d_s,\mathbf{e}_t\left(\frac{1}{2}\mathbf{I}+\frac{1}{2}\mathbf{P}\right)^i\mathbf{e}_s/d_t$, and $\mathbf{e}_t\left(\frac{1}{2}\mathbf{I}+\frac{1}{2}\mathbf{P}\right)^i\mathbf{e}_t/d_t$ for each $i\leq l$, which is exactly the probability of a length $i$ lazy random walk starts at $t$ and ends at $s$. Specifically, the length $i$ lazy random walk denotes a sequence $(v_0,v_1,...,v_i)$ such that for each $0\leq j< i$, $v_{j+1}$ is a random neighbour of $v_j$ with probability $1/2$ and $v_{j+1}=v_j$ with probability $1/2$. The details of the pseudocode are illustrated in algorithm ~\ref{algo:tp}. By setting the number of random walks $n_r=O(l^2/\epsilon^2)$ with $l=O(\kappa\log \frac{\kappa}{\epsilon})$ ~\cite{peng2021local}, we can obtain $\hat{r}_\mathcal{G}(s,t)$ with the $\epsilon$-absolute error approximation. Thus the total time complexity of this algorithm is $O(n_rl^2)=\tilde{O}(\kappa^4/\epsilon^2)$ since $l=O(\kappa\log \frac{\kappa}{\epsilon})$ and $n_r=O(l^2/\epsilon^2)$. Peng et al. also proposed an advanced algorithm for the estimation of $\mathbf{e}_s\left(\frac{1}{2}\mathbf{I}+\frac{1}{2}\mathbf{P}\right)^i\mathbf{e}_t/d_s$ (details in Algorithm 2 in ~\cite{peng2021local}). The time complexity of the advanced algorithm is $\tilde{O}(\kappa^3/\epsilon^2)$, resulting in a reduced dependency on $\kappa$. Later, Yang et al. \cite{yang2023efficient} further improved this algorithm by providing a tighter analysis of the variance. They derived a refined time complexity $\tilde{O}(\kappa^3/(\epsilon^2 d^2))$ under the same error guarantee, where $d=\min \{d_s,d_t\}$. However, none of the \rw based algorithms break the $\kappa^3$ bottleneck. 

   \begin{algorithm}[t!]
\small
	\SetAlgoLined
	\KwIn{$\mathcal{G},\epsilon,l,n_r,s,t$}
		\For{$i=0,1,2,...,l$}{
       perform $n_r$ independent lazy random walks of length $i$ starting at $s$, let $X_{i,s}$ (resp., $X_{i,t}$) be the number of walks ends at $s$ (resp., $t$) \;
		perform $n_r$ independent lazy random walks of length $i$ starting at $t$, let $Y_{i,s}$ (resp., $Y_{i,t}$) be the number of walks ends at $s$ (resp., $t$) \;
        $\hat{r}_\mathcal{G}(s,t)+=\frac{X_{i,s}}{2n_r d_s}-\frac{X_{i,t}}{2n_r d_t}+\frac{Y_{i,t}}{2n_r d_t}-\frac{Y_{i,s}}{2n_r d_s}$ \;}
	\KwOut{$\hat{r}_\mathcal{G}(s,t)$ as the approximation of $r_\mathcal{G}(s,t)$}
	\caption{Random Walk (\rw) for RD computation~\cite{peng2021local}}\label{algo:tp}
\end{algorithm}

\stitle{Other Methods.} Despite the above two classical methods, there are also related works based on different techniques and data structures. 
For example, the nearly linear time Laplacian Solver ~\cite{kyng2016approximate,gao2023robust} allows us 
to compute the $\epsilon$-approximation of RD in $O( m \log ^{1/2} n \log \frac{1}{\epsilon})$ time, 
independent of condition number $\kappa$. 
Using the Laplacian solver, 
one can also design $\tilde{O}(m/\epsilon^2)$ resistance distance sketches 
and query any single-pair RD in $\tilde{O}(\epsilon^{-2})$ time \cite{spielman2008graph,lu2025resistance}.
However, Laplacian Solver is hard to implement in practice and its time 
complexity is linearly dependent on $m$. 
Besides, Liao et al. ~\cite{liao2023resistance} introduced a novel landmark interpretation of RD
 and design several efficient \rw-based algorithms for the computation of RD based on their interpretation. 
However, their algorithms are heuristic and lack theoretical guarantees based on $n,m,\kappa$. 
Recently, \cite{yang2025improved,cui2025mixing} utilize bidirectional techniques to improve
 the efficiency of \rw for single pair RD computation. 
Specifically, Yang et al. \cite{yang2025improved} derive an $\tilde{O}(\kappa^3/(\epsilon\sqrt{d}))$ algorithm 
under $\epsilon$-absolute error guarantee and an $\tilde{O}(\kappa^3 \sqrt{d}/\epsilon)$ algorithm under 
relative error guarantee. Concurrent to this paper,
 Cui et. al \cite{cui2025mixing} derive an $\tilde{O}(\kappa^{7/3}/\epsilon^{2/3})$ algorithm under $\epsilon$-absolute error guarantee.
As an additional remark, while \bisper \cite{cui2025mixing} holds a theoretical advantage in asymptotic complexity, 
our Lanczos Push algorithm delivers superior practical performance on graphs with large condition numbers, such as road networks.
The reasons could be that the basic framework of the bidirectional methods ~\cite{yang2025improved,cui2025mixing} is PowerMethod,
which typically require $O(\kappa)$ iterations to converge,
 whereas the basic framework of the Lanczos Push algorithm is the Lanczos iteration,
 which often requires less than $O(\sqrt{\kappa})$ iterations to converge in practice.
 The stronger convergence of the Lanczos iteration leads to better practical performance
 of the proposed Lanczos Push algorithm (as shown in experiments), though slightly weaker theoretical worst case guarantee.

\section{A Novel Lanczos Iteration Method} \label{sec:LI}
We begin by introducing the classical Lanczos algorithm for computing the quadratic form $\mathbf{x}^Tf(\mathbf{A})\mathbf{x}$, where $\mathbf{x}\in \mathbb{R}^n$ is a vector, $f$ is a matrix function, $\mathbf{A}\in \mathbb{R}^{n\times n}$ is a positive semi-definite (PSD) matrix. In our case for RD computation, we choose $\mathbf{x}=\frac{\mathbf{e}_s}{\sqrt{d_s}}-\frac{\mathbf{e}_t}{\sqrt{d_t}}$, $\mathbf{A}=\mathcal{A}$ to be the normalized adjacency matrix, and $f(\mathcal{A})=(\mathbf{I}-\mathcal{A})^{\dagger}$. Now by the definition of RD, we have that:

\begin{equation}
\begin{aligned}
    r_\mathcal{G}(s,t)&=(\mathbf{e}_s-\mathbf{e}_t)^T\mathbf{L}^{\dagger}(\mathbf{e}_s-\mathbf{e}_t)\\
    &=(\mathbf{e}_s-\mathbf{e}_t)^T\mathbf{D}^{-1/2}(\mathbf{I}-\mathcal{A})^{\dagger}\mathbf{D}^{-1/2}(\mathbf{e}_s-\mathbf{e}_t)\\
    &=(\frac{\mathbf{e}_s}{\sqrt{d_s}}-\frac{\mathbf{e}_t}{\sqrt{d_t}})^T(\mathbf{I}-\mathcal{A})^{\dagger} (\frac{\mathbf{e}_s}{\sqrt{d_s}}-\frac{\mathbf{e}_t}{\sqrt{d_t}})\\
    &=\mathbf{x}^Tf(\mathcal{A})\mathbf{x}.
\end{aligned}
\end{equation}

The second equality holds beacuse $\mathbf{L}^{\dagger}=\mathbf{D}^{-1/2}\mathcal{L}^{\dagger}\mathbf{D}^{-1/2}=\mathbf{D}^{-1/2}(\mathbf{I}-\mathcal{A})^{\dagger}\mathbf{D}^{-1/2}$. This immediately allows us to use Lanczos iteration to compute the RD value. However, we notice that $f(\mathcal{A})=(\mathbf{I}-\mathcal{A})^{\dagger}$ is the pseudo inverse of the matrix $\mathbf{I}-\mathcal{A}$, which is not a common matrix function (eg., matrix inverse, matrix exponential). So we include one of the implementation of Lanczos algorithm ~\cite{musco2018stability} with slight modification in Algorithm ~\ref{algo:lanczos}. The main idea of the Lanczos iteration is to project the matrix $\mathcal{A}$ onto the tridiagonal matrix $\mathbf{T}$ such that $\mathbf{T}=\mathbf{V}^T\mathcal{A}\mathbf{V}$, where $\mathbf{V}=[\mathbf{v}_1,...,\mathbf{v}_k]$ be the $n\times k$ dimension column orthogonal matrix with $k\ll n$ that spans the $k$-dimension Krylov subspace: $\mathcal{K}_k(\mathbf{v}_1, \mathcal{A})=span\langle\mathbf{v}_1, \mathcal{A}\mathbf{v}_1,..., \mathcal{A}^k\mathbf{v}_1\rangle$ with $\mathbf{v}_1=\mathbf{x}/\Vert \mathbf{x} \Vert_2=\left(\frac{\mathbf{e}_s}{\sqrt{d_s}}-\frac{\mathbf{e}_t}{\sqrt{d_t}}\right)/ \sqrt{\frac{1}{d_s}+\frac{1}{d_t}}$. First, if we have already compute the column orthogonal matrix $\mathbf{V}$, we prove that for the vectors $\mathcal{A}^i\mathbf{v}_1$ with $i\leq k$, we have $\mathcal{A}^i\mathbf{v}_1=\mathbf{V}\mathbf{T}^i\mathbf{V}^T\mathbf{v}_1$. Formally we state the following Lemma.

\begin{lemma}\label{lem:lz_polynomial_correct}
    Given $\mathbf{V}=[\mathbf{v}_1,...,\mathbf{v}_k]$ is the $n\times k$ dimension orthogonal matrix spanning the subspace $\mathcal{K}_k(\mathbf{v}_1, \mathcal{A})=span\langle\mathbf{v}_1, \mathcal{A}\mathbf{v}_1,..., \mathcal{A}^k\mathbf{v}_1\rangle$ and $\mathbf{T}=\mathbf{V}^T\mathcal{A}\mathbf{V}$. Then for any polynomial $p_k$ with degree at most $k$, we have $p_k(\mathcal{A})\mathbf{v}_1=\mathbf{V}p_k(\mathbf{T})\mathbf{V}^T\mathbf{v}_1$.
\end{lemma}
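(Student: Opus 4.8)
The plan is to prove the identity $p_k(\mathcal{A})\mathbf{v}_1 = \mathbf{V}p_k(\mathbf{T})\mathbf{V}^T\mathbf{v}_1$ by reducing it to the two structural facts that characterize the Lanczos decomposition: (i) the columns of $\mathbf{V}$ form an orthonormal basis of the Krylov subspace $\mathcal{K}_k(\mathbf{v}_1,\mathcal{A})$, so $\mathbf{V}^T\mathbf{V} = \mathbf{I}_k$ and $\mathbf{V}\mathbf{V}^T$ is the orthogonal projector onto $\mathcal{K}_k(\mathbf{v}_1,\mathcal{A})$; and (ii) the Krylov subspace is almost $\mathcal{A}$-invariant, in the precise sense that $\mathcal{A}\mathbf{v}_j \in \mathcal{K}_k(\mathbf{v}_1,\mathcal{A})$ for every $j \le k-1$ (only the last basis vector can be mapped partly outside). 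Since every polynomial of degree at most $k$ is a linear combination of $\mathbf{I},\mathcal{A},\dots,\mathcal{A}^k$, and the claimed identity is linear in $p_k$, it suffices to prove $\mathcal{A}^i\mathbf{v}_1 = \mathbf{V}\mathbf{T}^i\mathbf{V}^T\mathbf{v}_1$ for each $0 \le i \le k$, which I would do by induction on $i$.

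For the base case $i=0$: since $\mathbf{v}_1$ is a column of $\mathbf{V}$, we have $\mathbf{V}\mathbf{V}^T\mathbf{v}_1 = \mathbf{v}_1$ (the projector fixes a basis vector), which is exactly $\mathcal{A}^0\mathbf{v}_1 = \mathbf{V}\mathbf{T}^0\mathbf{V}^T\mathbf{v}_1$. For the inductive step, assume $\mathcal{A}^i\mathbf{v}_1 = \mathbf{V}\mathbf{T}^i\mathbf{V}^T\mathbf{v}_1$ for some $i < k$. I would write $\mathcal{A}^{i+1}\mathbf{v}_1 = \mathcal{A}(\mathcal{A}^i\mathbf{v}_1) = \mathcal{A}\mathbf{V}\mathbf{T}^i\mathbf{V}^T\mathbf{v}_1$. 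The crucial observation is that $\mathbf{T}^i\mathbf{V}^T\mathbf{v}_1$ is a vector in $\mathbb{R}^k$, and because $\mathbf{v}_1 = \mathbf{V}\mathbf{e}_1$ (in $\mathbb{R}^k$ coordinates) and $\mathbf{T}$ is tridiagonal, the vector $\mathbf{T}^i\mathbf{e}_1$ has zero entries beyond index $i+1$; since $i+1 \le k$, this means $\mathbf{V}\mathbf{T}^i\mathbf{V}^T\mathbf{v}_1$ lies in the span of $\mathbf{v}_1,\dots,\mathbf{v}_{i+1}$, all of which are mapped by $\mathcal{A}$ back into $\mathcal{K}_k(\mathbf{v}_1,\mathcal{A}) = \mathrm{range}(\mathbf{V})$ provided $i+1 \le k-1$; the boundary index needs separate care but still works because we only need $\mathcal{A}\mathbf{v}_1,\dots,\mathcal{A}^k\mathbf{v}_1$ to span the space, i.e. $\mathcal{A}^{i+1}\mathbf{v}_1 \in \mathcal{K}_k$ for $i+1 \le k$ by definition. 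Hence $\mathcal{A}\mathbf{V}\mathbf{T}^i\mathbf{V}^T\mathbf{v}_1 \in \mathrm{range}(\mathbf{V})$, so applying $\mathbf{V}\mathbf{V}^T$ leaves it unchanged: $\mathcal{A}^{i+1}\mathbf{v}_1 = \mathbf{V}\mathbf{V}^T\mathcal{A}\mathbf{V}\mathbf{T}^i\mathbf{V}^T\mathbf{v}_1 = \mathbf{V}\mathbf{T}\mathbf{T}^i\mathbf{V}^T\mathbf{v}_1 = \mathbf{V}\mathbf{T}^{i+1}\mathbf{V}^T\mathbf{v}_1$, where I used $\mathbf{V}^T\mathcal{A}\mathbf{V} = \mathbf{T}$ and $\mathbf{V}^T\mathbf{V} = \mathbf{I}$. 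That completes the induction, and summing over the monomial expansion of $p_k$ gives the lemma.

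The main obstacle is the bookkeeping at the boundary index of the induction, i.e. ensuring that when we apply $\mathcal{A}$ to the degree-$i$ term we do not "leak" outside $\mathrm{range}(\mathbf{V})$ until $i$ reaches $k$ exactly. The clean way to handle this is to track the support of $\mathbf{T}^i\mathbf{e}_1$: because $\mathbf{T}$ is tridiagonal and $(\mathbf{T})_{j+1,j} = \beta_j \ne 0$ for the Lanczos $\beta$'s (assuming no early breakdown, which is the standing assumption when the Krylov subspace has full dimension $k$), one shows by induction that $\mathbf{T}^i\mathbf{e}_1$ is supported on coordinates $1,\dots,i+1$ and has a nonzero $(i+1)$-st entry. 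Translating back through $\mathbf{V}$, $\mathcal{A}^i\mathbf{v}_1$ is exactly a linear combination of $\mathbf{v}_1,\dots,\mathbf{v}_{i+1}$ with $\mathbf{v}_{i+1}$ appearing nontrivially — which is just the standard fact that $\{\mathbf{v}_1,\dots,\mathbf{v}_{i+1}\}$ and $\{\mathbf{v}_1,\mathcal{A}\mathbf{v}_1,\dots,\mathcal{A}^i\mathbf{v}_1\}$ span the same space. From there $\mathcal{A}^{i+1}\mathbf{v}_1 \in \mathrm{span}\{\mathcal{A}\mathbf{v}_1,\dots,\mathcal{A}^{i+1}\mathbf{v}_1\} \subseteq \mathcal{K}_k(\mathbf{v}_1,\mathcal{A}) = \mathrm{range}(\mathbf{V})$ as long as $i+1\le k$, which is exactly the range of $i$ we need. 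Everything else is routine linear algebra (orthonormality, projector idempotence, and linearity in $p_k$).
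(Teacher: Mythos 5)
Your proof is correct and uses essentially the same idea as the paper: insert the orthogonal projector $\mathbf{V}\mathbf{V}^T$ between consecutive applications of $\mathcal{A}$, relying on $\mathcal{A}^i\mathbf{v}_1 \in \mathcal{K}_k(\mathbf{v}_1,\mathcal{A})$ for $i\leq k$ to make each projector act as the identity, then collapse $\mathbf{V}^T\mathcal{A}\mathbf{V}$ to $\mathbf{T}$ --- the paper writes this as one telescoping product while you reorganize it as an induction on $i$, but the mechanism is identical. The digression into the support of $\mathbf{T}^i\mathbf{e}_1$ and the tridiagonal structure of $\mathbf{T}$ is unnecessary (and you abandon it yourself for the clean observation that $\mathcal{A}\mathbf{V}\mathbf{T}^i\mathbf{V}^T\mathbf{v}_1 = \mathcal{A}\,\mathcal{A}^i\mathbf{v}_1 = \mathcal{A}^{i+1}\mathbf{v}_1 \in \mathcal{K}_k$ by definition), but it does no harm.
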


\comment{
\begin{proof}
    Since $\mathcal{A}^i\mathbf{v}_1\in \mathcal{K}_k(\mathbf{v}_1, \mathcal{A})$ for $i\leq k$ and $\mathbf{VV}^T$ is the projection matrix onto the subspace $\mathcal{K}_k(\mathbf{v}_1, \mathcal{A})$, we have $\mathcal{A}^i\mathbf{v}_1=\mathbf{VV}^T\mathcal{A}^i\mathbf{v}_1$. Therefore
    \begin{align*}
        \mathcal{A}^i\mathbf{v}_1&=(\mathbf{VV}^T)\mathcal{A}(\mathbf{VV}^T)\mathcal{A}... \mathcal{A}(\mathbf{VV}^T)\mathbf{v}_1\\
        &=\mathbf{V}(\mathbf{V}^T\mathcal{A}\mathbf{V})...(\mathbf{V}^T\mathcal{A}\mathbf{V})\mathbf{V}^T\mathbf{v}_1=\mathbf{V}\mathbf{T}^i\mathbf{V}^T\mathbf{v}_1.
    \end{align*}
    holds for any $i\leq k$. Thus $p_k(\mathcal{A})\mathbf{v}_1=\mathbf{V}p_k(\mathbf{T})\mathbf{V}^T\mathbf{v}_1$ for any polynomial $p_k$ with degree at most $k$.
\end{proof}
}

Next, based on Lemma \ref{lem:lz_polynomial_correct} we show that $\mathbf{v}_1^T\mathbf{V}(\mathbf{I}-\mathbf{T})^{-1}\mathbf{V}^T\mathbf{v}_1$ is a good estimator of the quadratic form $\mathbf{v}_1^T(\mathbf{I}-\mathcal{A})^\dagger\mathbf{v}_1$. The proof is based on the classical analysis but we slightly modify the proof.

\begin{lemma}\label{lem:lz_quadratic_err}
    Let $\mathbf{V}\in \mathbb{R}^{n\times k}$ and $\mathbf{T}\in \mathbb{R}^{k\times k}$ defined in Lemma \ref{lem:lz_polynomial_correct}. Let $\mathcal{P}_k$ be the set of all polynomials with degree $\leq k$. Then, we have:
\begin{align*}
    |\mathbf{v}_1^T&\mathbf{V}(\mathbf{I}-\mathbf{T})^{-1}\mathbf{V}^T\mathbf{v}_1-\mathbf{v}_1^T(\mathbf{I}-\mathcal{A})^\dagger\mathbf{v}_1|\\
   & \leq 2 \min_{p\in \mathcal{P}_k}{\max_{x \in [\lambda_{min}(\mathcal{A}),\lambda_{2}(\mathcal{A})]}{|1/(1-x)-p(x)|}},
    \end{align*}
    
    where \ $\lambda_{min}(\mathcal{A})$ and $\lambda_{2}(\mathcal{A})$ denotes the smallest and second largest eigenvalue of $\mathcal{A}$, respectively.
\end{lemma}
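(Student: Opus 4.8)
The plan is to reduce the matrix statement to a scalar polynomial-approximation problem via functional calculus, using the polynomial-reproduction identity of Lemma~\ref{lem:lz_polynomial_correct}.

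First I would record the spectral facts that make the pseudoinverse harmless. Since $\mathcal{G}$ is connected, $\mathcal{L}=\mathbf{I}-\mathcal{A}$ has a one-dimensional kernel spanned by $\mathbf{u}\propto \mathbf{D}^{1/2}\mathbf{1}$, i.e.\ $\mathcal{A}$ has the simple top eigenvalue $1$ with eigenvector $\mathbf{u}$, and all other eigenvalues of $\mathcal{A}$ lie in $[\lambda_{min}(\mathcal{A}),\lambda_2(\mathcal{A})]$ with $\lambda_2(\mathcal{A})<1$. The starting vector satisfies $\mathbf{u}^T\mathbf{v}_1 \propto \mathbf{1}^T\mathbf{D}^{1/2}\big(\tfrac{\mathbf{e}_s}{\sqrt{d_s}}-\tfrac{\mathbf{e}_t}{\sqrt{d_t}}\big)=\sqrt{d_s}/\sqrt{d_s}-\sqrt{d_t}/\sqrt{d_t}=0$, so $\mathbf{v}_1\perp\mathbf{u}$; because $\mathbf{u}$ is an eigenvector of $\mathcal{A}$, every vector $\mathcal{A}^i\mathbf{v}_1$ is also $\perp\mathbf{u}$, hence the whole Krylov space, and therefore $\mathbf{V}^T\mathbf{u}=\mathbf{0}$. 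Consequently $\mathbf{T}=\mathbf{V}^T\mathcal{A}\mathbf{V}$ is a compression of $\mathcal{A}$ restricted to $\mathbf{u}^\perp$, so by the Rayleigh--Ritz/Cauchy interlacing principle $\mathrm{spec}(\mathbf{T})\subseteq[\lambda_{min}(\mathcal{A}),\lambda_2(\mathcal{A})]$; in particular $\mathbf{I}-\mathbf{T}$ is invertible and $(\mathbf{I}-\mathbf{T})^{-1}$ acts on $\mathrm{spec}(\mathbf{T})$ exactly as multiplication by $1/(1-x)$. I also note $\Vert\mathbf{v}_1\Vert_2=1$ and $\mathbf{V}^T\mathbf{v}_1=\mathbf{e}_1$, so $\Vert\mathbf{V}^T\mathbf{v}_1\Vert_2=1$.

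Next, fix an arbitrary $p\in\mathcal{P}_k$. By Lemma~\ref{lem:lz_polynomial_correct}, $p(\mathcal{A})\mathbf{v}_1=\mathbf{V}p(\mathbf{T})\mathbf{V}^T\mathbf{v}_1$, and left-multiplying by $\mathbf{v}_1^T$ yields $\mathbf{v}_1^T p(\mathcal{A})\mathbf{v}_1=\mathbf{v}_1^T\mathbf{V}p(\mathbf{T})\mathbf{V}^T\mathbf{v}_1$. Subtracting this quantity from both terms of the target difference and applying the triangle inequality gives
\[
|\mathbf{v}_1^T\mathbf{V}(\mathbf{I}-\mathbf{T})^{-1}\mathbf{V}^T\mathbf{v}_1-\mathbf{v}_1^T(\mathbf{I}-\mathcal{A})^\dagger\mathbf{v}_1|
\le |\mathbf{v}_1^T\mathbf{V}[(\mathbf{I}-\mathbf{T})^{-1}-p(\mathbf{T})]\mathbf{V}^T\mathbf{v}_1| + |\mathbf{v}_1^T[(\mathbf{I}-\mathcal{A})^\dagger-p(\mathcal{A})]\mathbf{v}_1|.
\]
For the second term I would expand $\mathbf{v}_1$ in the eigenbasis of $\mathcal{A}$; since $\mathbf{v}_1\perp\mathbf{u}$, only eigenvalues $x\in[\lambda_{min}(\mathcal{A}),\lambda_2(\mathcal{A})]$ contribute and on that invariant subspace $(\mathbf{I}-\mathcal{A})^\dagger$ equals multiplication by $1/(1-x)$, so the term is at most $\Vert\mathbf{v}_1\Vert_2^2\max_{x\in[\lambda_{min}(\mathcal{A}),\lambda_2(\mathcal{A})]}|1/(1-x)-p(x)|$. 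For the first term, using $\mathrm{spec}(\mathbf{T})\subseteq[\lambda_{min}(\mathcal{A}),\lambda_2(\mathcal{A})]$ and that $(\mathbf{I}-\mathbf{T})^{-1}-p(\mathbf{T})$ is a matrix function of the symmetric matrix $\mathbf{T}$, it is at most $\Vert\mathbf{V}^T\mathbf{v}_1\Vert_2^2\max_{x\in\mathrm{spec}(\mathbf{T})}|1/(1-x)-p(x)|\le\max_{x\in[\lambda_{min}(\mathcal{A}),\lambda_2(\mathcal{A})]}|1/(1-x)-p(x)|$. Adding the two bounds and using $\Vert\mathbf{v}_1\Vert_2=\Vert\mathbf{V}^T\mathbf{v}_1\Vert_2=1$ produces the factor $2$; since $p\in\mathcal{P}_k$ was arbitrary, taking the infimum over $p$ gives the claimed bound.

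The main obstacle is the bookkeeping around the pseudoinverse: one must verify that the eigenvalue $1$ of $\mathcal{A}$ never enters, i.e.\ that $\mathbf{v}_1\perp\mathbf{D}^{1/2}\mathbf{1}$ propagates through the Krylov subspace and survives the compression to $\mathbf{T}$, so that $1/(1-x)$ is finite and the two error terms can be controlled on the common interval $[\lambda_{min}(\mathcal{A}),\lambda_2(\mathcal{A})]$. Once this is in place, the rest is the classical min-over-degree-$k$-polynomials Lanczos quadrature estimate.
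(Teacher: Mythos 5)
Your proposal is correct and follows essentially the same argument as the paper's proof: both decompose via a triangle inequality around the common middle term $\mathbf{v}_1^T p(\mathcal{A})\mathbf{v}_1 = \mathbf{e}_1^T p(\mathbf{T})\mathbf{e}_1$ (Lemma~\ref{lem:lz_polynomial_correct}), then bound the two residuals by the sup-norm of $1/(1-x)-p(x)$ over $[\lambda_{\min}(\mathcal{A}),\lambda_2(\mathcal{A})]$, using $\mathbf{v}_1\perp\mathbf{D}^{1/2}\mathbf{1}$ and $\mathrm{spec}(\mathbf{T})\subseteq[\lambda_{\min}(\mathcal{A}),\lambda_2(\mathcal{A})]$. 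Your justification of the latter inclusion (orthogonality of the whole Krylov subspace to $\mathbf{D}^{1/2}\mathbf{1}$ plus Rayleigh--Ritz) is a bit more explicit than the paper's one-line remark, but the substance and the factor-of-$2$ bound are identical.
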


Based on Lemma \ref{lem:lz_quadratic_err}, we can now approximate the RD value by the following formula: 
\begin{equation}
\begin{aligned}
    r_\mathcal{G}(s,t)&=(\frac{\mathbf{e}_s}{\sqrt{d_s}}-\frac{\mathbf{e}_t}{\sqrt{d_t}})^T(\mathbf{I}-\mathcal{A})^{\dagger} (\frac{\mathbf{e}_s}{\sqrt{d_s}}-\frac{\mathbf{e}_t}{\sqrt{d_t}})\\
    &=\left(\frac{1}{d_s}+\frac{1}{d_t}\right)\mathbf{v}_1^T (\mathbf{I}-\mathcal{A})^{\dagger}\mathbf{v}_1 \\
    & \approx \left(\frac{1}{d_s}+\frac{1}{d_t}\right)\mathbf{v}_1^T \mathbf{V}(\mathbf{I}-\mathbf{T})^{-1} \mathbf{V}^T\mathbf{v}_1 \\
    &=\left(\frac{1}{d_s}+\frac{1}{d_t}\right)\mathbf{e}_1^T (\mathbf{I}-\mathbf{T})^{-1} \mathbf{e}_1.
\end{aligned}
\end{equation}\label{equ:approx_formula}

The second equality holds because $\mathbf{v}_1=\left(\frac{\mathbf{e}_s}{\sqrt{d_s}}-\frac{\mathbf{e}_t}{\sqrt{d_t}}\right)/\left\Vert \frac{\mathbf{e}_s}{\sqrt{d_s}}-\frac{\mathbf{e}_t}{\sqrt{d_t}} \right\Vert_2$ and $\left\Vert \frac{\mathbf{e}_s}{\sqrt{d_s}}-\frac{\mathbf{e}_t}{\sqrt{d_t}} \right\Vert_2^2=\frac{1}{d_s}+\frac{1}{d_t}$. The last equality holds because $\mathbf{v}_1^T\mathbf{V}=\mathbf{e}_1^T$, by $\mathbf{v}_1$ is orthogonal to $\mathbf{v}_2,...,\mathbf{v}_k$. Now since $\mathbf{I}-\mathbf{T}$ is only a $k\times k$ dimension matrix with $k\ll n$, we can efficiently compute $\left(\frac{1}{d_s}+\frac{1}{d_t}\right)\mathbf{e}_1^T (\mathbf{I}-\mathbf{T})^{-1} \mathbf{e}_1$ in only $O(k^{2.37})$ time with fast matrix multiplication ~\cite{williams2012matrices}. Finally, all the things remain is how to efficiently compute these orthogonal vectors $\mathbf{v}_1,...,\mathbf{v}_k$. To reach this, in each iteration we perform the Lanczos recurrence (i.e. Line 3-7 in Algorithm ~\ref{algo:lanczos}):
\begin{equation}
    \beta_{i+1}\mathbf{v}_{i+1}=(\mathcal{A}-\alpha_i)\mathbf{v}_i-\beta_i\mathbf{v}_{i-1},
\end{equation}\label{equ:lz_recurrence}

where $\alpha_i$ and $\beta_{i+1}$ are scalars computed in Line 4 and Line 6 in Algorithm ~\ref{algo:lanczos}. For our analysis, we present a classical result of the output of Lanczos iteration (see eg.  Claim 4.1 in ~\cite{musco2018stability}). This will be used in our analysis later.
   \begin{algorithm}[t!]
\small
	\SetAlgoLined
	\KwIn{$\mathcal{G},s,t, k$}
    $\mathbf{v}_0=0$, $\mathbf{v}_1=\left(\frac{\mathbf{e}_s}{\sqrt{d_s}}-\frac{\mathbf{e}_t}{\sqrt{d_t}}\right)/ \sqrt{\frac{1}{d_s}+\frac{1}{d_t}}$, $\beta_1=0$\;
		\For{$i=1,2,...,k$}{
       $\mathbf{v_{i+1}}= \mathcal{A}\mathbf{v}_i-\beta_i \mathbf{v}_{i-1}$\;
       $\alpha_i=\langle \mathbf{v}_{i+1},\mathbf{v}_i\rangle$\;
       $\mathbf{v_{i+1}}= \mathbf{v_{i+1}} - \alpha_i \mathbf{v}_i$\;
       $\beta_{i+1}=\Vert \mathbf{v}_{i+1}\Vert_2$\;
       $\mathbf{v}_{i+1}=\mathbf{v}_{i+1}/\beta_{i+1}$\;}
       $\mathbf{T}=\begin{bmatrix}
       \begin{array}{cccc}
        \alpha_1 & \beta_2 &  & 0\\
        \beta_2 & \alpha_2 & \ddots & \\
         & \ddots  & \ddots & \beta_k \\
        0 & & \beta_k & \alpha_k 
        \end{array}
        \end{bmatrix}$, $\mathbf{V}=[\mathbf{v}_1,...,\mathbf{v}_k]$\;
        
	\KwOut{$\hat{r}_\mathcal{G}(s,t)=\left(\frac{1}{d_s}+\frac{1}{d_t}\right)\mathbf{e}_1^T (\mathbf{I}-\mathbf{T})^{-1}\mathbf{e}_1$ as the approximation of $r_\mathcal{G}(s,t)$}
	\caption{Lanczos iteration for RD computation}\label{algo:lanczos}
\end{algorithm}\vspace{-0.2cm}

\begin{proposition}{(Output guarantee)}\label{prop:lanczos_guarantee}
    The Lanczos iteration (Algorithm ~\ref{algo:lanczos}) computes $\mathbf{V}\in \mathbb{R}^{n\times k}$, $\mathbf{T}\in \mathbb{R}^{k\times k}$ and an additional vector $\mathbf{v}_{k+1}$, a scalar $\beta_{k+1}$ such that:
$$\mathbf{AV}=\mathbf{VT}+\beta_{k+1}\mathbf{v}_{k+1}\mathbf{e}_k^T.$$
Moreover, the column space of $\mathbf{V}$ spans the Krylov subspace $\mathcal{K}_k(\mathbf{v}_1, \mathcal{A})=span\langle\mathbf{v}_1, \mathcal{A}\mathbf{v}_1,..., \mathcal{A}^k\mathbf{v}_1\rangle$ and $\mathbf{T}=\mathbf{V}^T\mathcal{A}\mathbf{V}$.
\end{proposition}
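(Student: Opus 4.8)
The plan is to read the claim straight off the three-term recurrence implemented in Lines~3--7 of Algorithm~\ref{algo:lanczos}, establishing the three assertions in order: (i) the matrix identity $\mathcal{A}\mathbf{V}=\mathbf{VT}+\beta_{k+1}\mathbf{v}_{k+1}\mathbf{e}_k^T$; (ii) that the columns of $\mathbf{V}$ span the Krylov subspace $\mathcal{K}_k(\mathbf{v}_1,\mathcal{A})$; and (iii) that $\mathbf{V}^T\mathbf{V}=\mathbf{I}_k$, from which $\mathbf{T}=\mathbf{V}^T\mathcal{A}\mathbf{V}$ follows at once. Throughout I assume no breakdown occurs, i.e. $\beta_{i+1}>0$ for every $i<k$, so that each $\mathbf{v}_{i+1}$ is well defined and $\mathcal{K}_k(\mathbf{v}_1,\mathcal{A})$ is genuinely $k$-dimensional; if some $\beta_{i+1}=0$ the iteration terminates early and the statement holds verbatim with a smaller $k$.

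First I would collect the recurrence. Lines~3--7 compute $\beta_{i+1}\mathbf{v}_{i+1}=\mathcal{A}\mathbf{v}_i-\alpha_i\mathbf{v}_i-\beta_i\mathbf{v}_{i-1}$ for $i=1,\dots,k$, with the conventions $\mathbf{v}_0=\mathbf{0}$ and $\beta_1=0$; rearranging gives $\mathcal{A}\mathbf{v}_i=\beta_i\mathbf{v}_{i-1}+\alpha_i\mathbf{v}_i+\beta_{i+1}\mathbf{v}_{i+1}$. For $i=1,\dots,k-1$ the right-hand side is exactly $\mathbf{V}$ times the $i$-th column of the tridiagonal matrix $\mathbf{T}$ (diagonal $\alpha_i$, off-diagonals $\beta_i$), so this is the $i$-th column of $\mathcal{A}\mathbf{V}=\mathbf{VT}$; for $i=k$ the same equation carries the extra term $\beta_{k+1}\mathbf{v}_{k+1}$, which is precisely the last column of $\beta_{k+1}\mathbf{v}_{k+1}\mathbf{e}_k^T$ and is zero in all earlier columns. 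Stacking the $k$ column identities yields $\mathcal{A}\mathbf{V}=\mathbf{VT}+\beta_{k+1}\mathbf{v}_{k+1}\mathbf{e}_k^T$. The Krylov claim then follows by an easy induction: $\mathbf{v}_1$ spans $\operatorname{span}\langle\mathbf{v}_1\rangle$, and if $\mathbf{v}_1,\dots,\mathbf{v}_i$ span $\operatorname{span}\langle\mathbf{v}_1,\mathcal{A}\mathbf{v}_1,\dots,\mathcal{A}^{i-1}\mathbf{v}_1\rangle$, then because $\mathbf{v}_{i+1}$ is a fixed linear combination of $\mathcal{A}\mathbf{v}_i,\mathbf{v}_i,\mathbf{v}_{i-1}$, all of which lie in $\operatorname{span}\langle\mathbf{v}_1,\dots,\mathcal{A}^{i}\mathbf{v}_1\rangle$, and since $\beta_{i+1}>0$ forces a nonzero $\mathcal{A}^i\mathbf{v}_1$-component, the set $\mathbf{v}_1,\dots,\mathbf{v}_{i+1}$ spans $\operatorname{span}\langle\mathbf{v}_1,\dots,\mathcal{A}^{i}\mathbf{v}_1\rangle$; hence the column space of $\mathbf{V}$ is $\mathcal{K}_k(\mathbf{v}_1,\mathcal{A})$.

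The core of the argument, and the step I expect to be the main obstacle, is showing that the \emph{short} three-term recurrence already forces \emph{full} orthonormality of $\mathbf{v}_1,\dots,\mathbf{v}_{k+1}$; this is exactly where the symmetry $\mathcal{A}^T=\mathcal{A}$ is essential and where a careful strong induction is needed. I would prove by induction on $i$ that $\{\mathbf{v}_1,\dots,\mathbf{v}_{i+1}\}$ is orthonormal. Unit norm is immediate: $\mathbf{v}_1$ is normalized in Line~1 and every later $\mathbf{v}_{i+1}$ in Lines~6--7. For orthogonality, $\langle\mathbf{v}_{i+1},\mathbf{v}_i\rangle=0$ by the choice of $\alpha_i$ in Line~4 together with $\Vert\mathbf{v}_i\Vert_2=1$; $\langle\mathbf{v}_{i+1},\mathbf{v}_{i-1}\rangle=0$ because $\langle\mathcal{A}\mathbf{v}_i,\mathbf{v}_{i-1}\rangle=\langle\mathbf{v}_i,\mathcal{A}\mathbf{v}_{i-1}\rangle=\beta_i$ — read off from the recurrence for $\mathbf{v}_{i-1}$ and the inductive orthonormality — which exactly cancels the $\beta_i\mathbf{v}_{i-1}$ contribution; and for $j\le i-2$ one uses $\langle\mathcal{A}\mathbf{v}_i,\mathbf{v}_j\rangle=\langle\mathbf{v}_i,\mathcal{A}\mathbf{v}_j\rangle$ with $\mathcal{A}\mathbf{v}_j\in\operatorname{span}\{\mathbf{v}_{j-1},\mathbf{v}_j,\mathbf{v}_{j+1}\}$, all orthogonal to $\mathbf{v}_i$ by the induction hypothesis since $j+1\le i-1<i$. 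This gives $\mathbf{V}^T\mathbf{V}=\mathbf{I}_k$ and $\mathbf{V}^T\mathbf{v}_{k+1}=\mathbf{0}$. Finally, left-multiplying the identity $\mathcal{A}\mathbf{V}=\mathbf{VT}+\beta_{k+1}\mathbf{v}_{k+1}\mathbf{e}_k^T$ by $\mathbf{V}^T$ yields $\mathbf{V}^T\mathcal{A}\mathbf{V}=\mathbf{V}^T\mathbf{V}\mathbf{T}+\beta_{k+1}(\mathbf{V}^T\mathbf{v}_{k+1})\mathbf{e}_k^T=\mathbf{T}$, which completes the proof.
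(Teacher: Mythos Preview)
Your proof is correct and is the standard derivation of the Lanczos output guarantee. The paper does not actually supply its own proof of this proposition; it states it as a classical result and cites Claim~4.1 of \cite{musco2018stability}, so there is nothing to compare against beyond noting that your argument is precisely the textbook one that underlies that citation.
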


Finally, based on Lemma \ref{lem:lz_quadratic_err} and Proposition \ref{prop:lanczos_guarantee} we prove the following Theorem which gives the error bound and time complexity of the Lanczos iteration (Algorithm ~\ref{algo:lanczos}). 

\begin{theorem}{(Error guarantee)}\label{thm:lanczos_err}
    Let $\hat{r}_\mathcal{G}(s,t)$ output by Lanczos iteration (Algorithm \ref{algo:lanczos}). Then the error satisfies
\begin{align*}
    |\hat{r}_\mathcal{G}(s,t)&-r_\mathcal{G}(s,t)|\leq \epsilon.
    \end{align*}
    When setting the iteration number $k=O(\sqrt{\kappa}\log \frac{\kappa}{\epsilon})$. Furthermore, the time complexity of Algorithm \ref{algo:lanczos} is $O(km+k^{2.37})=O (\sqrt{\kappa} m \log \frac{\kappa}{\epsilon})$ when $k^{1.37}\leq m$.
\end{theorem}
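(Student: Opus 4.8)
The plan is to strip the statement down to a single uniform polynomial-approximation fact and then prove that fact with a shifted Chebyshev polynomial. First I would read off, from the chain of equalities~(\ref{equ:approx_formula}), that $r_\mathcal{G}(s,t)=\left(\frac{1}{d_s}+\frac{1}{d_t}\right)\mathbf{v}_1^T(\mathbf{I}-\mathcal{A})^\dagger\mathbf{v}_1$ while the output of Algorithm~\ref{algo:lanczos} is $\hat r_\mathcal{G}(s,t)=\left(\frac{1}{d_s}+\frac{1}{d_t}\right)\mathbf{v}_1^T\mathbf{V}(\mathbf{I}-\mathbf{T})^{-1}\mathbf{V}^T\mathbf{v}_1$, so their difference equals $\left(\frac{1}{d_s}+\frac{1}{d_t}\right)$ times exactly the quadratic-form error bounded in Lemma~\ref{lem:lz_quadratic_err}. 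Since $\mathcal{G}$ is connected with $n\ge2$, every degree is at least $1$ and this prefactor is at most $2$, so it suffices to make $\min_{p\in\mathcal{P}_k}\max_{x\in[\lambda_{min}(\mathcal{A}),\lambda_2(\mathcal{A})]}\bigl|1/(1-x)-p(x)\bigr|\le\epsilon/4$. Next I would rewrite the spectral interval using $\mathcal{A}=\mathbf{I}-\mathcal{L}$: because $\mu_2=2/\kappa$ is the smallest nonzero eigenvalue of $\mathcal{L}$ and $\mu_n\le2$, we have $\lambda_2(\mathcal{A})=1-2/\kappa$ and $\lambda_{min}(\mathcal{A})\ge-1$, so the substitution $y=1-x$ turns the task into approximating $1/y$ uniformly on $[a,b]$ with $a=2/\kappa$ and $b=1-\lambda_{min}(\mathcal{A})\le2$, an interval whose endpoint ratio $b/a$ is at most $\kappa$.

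\textbf{The main step: a Chebyshev bound.} This is where I expect the real work. I would take $q$ to be the classical minimax polynomial of degree $\le k+1$ normalized by $q(0)=1$ on $[a,b]$, i.e.\ a rescaled shift of the Chebyshev polynomial $T_{k+1}$ of the first kind, for which the standard estimate $\max_{y\in[a,b]}|q(y)|=1/\bigl|T_{k+1}\bigl(\tfrac{a+b}{b-a}\bigr)\bigr|\le 2\bigl(\tfrac{\sqrt{b/a}-1}{\sqrt{b/a}+1}\bigr)^{k+1}\le 2\bigl(\tfrac{\sqrt\kappa-1}{\sqrt\kappa+1}\bigr)^{k+1}$ follows from $|T_{k+1}(z)|\ge\tfrac12\bigl(|z|+\sqrt{z^2-1}\bigr)^{k+1}$ for $|z|>1$ together with $b/a\le\kappa$ and monotonicity of $t\mapsto\frac{\sqrt t-1}{\sqrt t+1}$. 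Then I would set $p(x):=\bigl(1-q(1-x)\bigr)/(1-x)$, which is a genuine polynomial of degree $\le k$ precisely because $q(0)=1$, and estimate $\bigl|1/(1-x)-p(x)\bigr|=|q(1-x)|/(1-x)\le\tfrac1a\max_{y\in[a,b]}|q(y)|\le\kappa\bigl(\tfrac{\sqrt\kappa-1}{\sqrt\kappa+1}\bigr)^{k+1}$. Using $\tfrac{\sqrt\kappa-1}{\sqrt\kappa+1}\le e^{-2/(\sqrt\kappa+1)}$, this falls below $\epsilon/4$ once $k+1\ge\tfrac{\sqrt\kappa+1}{2}\ln\tfrac{4\kappa}{\epsilon}$, i.e.\ $k=O(\sqrt\kappa\log\tfrac{\kappa}{\epsilon})$. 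The one delicate point will be book-keeping the $1/a=\kappa/2$ blow-up incurred by dividing the residual polynomial $q$ by $y$: it adds only an $O(\log\kappa)$ term inside the logarithm and hence does not disturb the $\sqrt\kappa$ factor in the degree.

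\textbf{Error bound and running time.} Finally I would assemble the pieces: the main step plugged into Lemma~\ref{lem:lz_quadratic_err} gives quadratic-form error $\le\epsilon/2$, and the prefactor $\le2$ from the first paragraph yields $|\hat r_\mathcal{G}(s,t)-r_\mathcal{G}(s,t)|\le\epsilon$ at $k=O(\sqrt\kappa\log\tfrac{\kappa}{\epsilon})$; I would also record in passing that $\mathbf{I}-\mathbf{T}$ is invertible because $\mathbf{v}_1\perp\mathbf{D}^{1/2}\mathbf{1}$ keeps the whole Krylov space off the top eigenvector of $\mathcal{A}$, so by Proposition~\ref{prop:lanczos_guarantee} the spectrum of $\mathbf{T}=\mathbf{V}^T\mathcal{A}\mathbf{V}$ lies in $[\lambda_{min}(\mathcal{A}),\lambda_2(\mathcal{A})]$ and that of $\mathbf{I}-\mathbf{T}$ in $[2/\kappa,2]$. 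For the time bound, each of the $k$ iterations of Algorithm~\ref{algo:lanczos} performs one sparse product $\mathcal{A}\mathbf{v}_i$ in $O(m)$ time plus $O(n)=O(m)$ for the inner products, vector updates, norm and rescaling, so the loop costs $O(km)$; forming $\mathbf{T}$ is $O(k)$ and computing $\mathbf{e}_1^T(\mathbf{I}-\mathbf{T})^{-1}\mathbf{e}_1$ is a $k\times k$ linear solve in $O(k^{2.37})$ via fast matrix multiplication. Hence the total is $O(km+k^{2.37})$, and whenever $k^{1.37}\le m$ we have $k^{2.37}=k\cdot k^{1.37}\le km$, so the bound collapses to $O(km)=O(\sqrt\kappa\,m\log\tfrac{\kappa}{\epsilon})$, which completes the proof.
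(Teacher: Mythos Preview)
Your proof is correct and complete; the Chebyshev residual-polynomial construction $p(x)=(1-q(1-x))/(1-x)$ with $q$ the shifted Chebyshev minimax polynomial normalized at $0$ is the classical conjugate-gradient argument, and your bookkeeping of the $1/a=\kappa/2$ factor and the running time is fine.

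The paper takes a genuinely different route to the same polynomial-approximation fact. Instead of your direct minimax construction, it first Taylor-truncates $1/(1-x)=\tfrac12\sum_{t\ge0}(\tfrac12+\tfrac12 x)^t$ at $L=O(\kappa\log(\kappa/\epsilon))$ terms, then reduces the degree from $L$ to $k=O(\sqrt L\log(L/\epsilon))$ by expanding each monomial $x^t$ in the Chebyshev basis via the identity $x^t=\sum_{l\in\mathbb Z}q_t(l)T_{|l|}(x)$ (where $q_t$ is the simple-random-walk distribution on $\mathbb Z$) and truncating the tail with Azuma--Hoeffding. Your approach is shorter and more transparent for Theorem~\ref{thm:lanczos_err} on its own. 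The paper's construction buys something extra that you do not: it produces the approximating polynomial explicitly as $p_k=\sum_{l=0}^{k}c_lT_l$ with nonnegative Chebyshev coefficients satisfying $\sum_lc_l\le\kappa\log(\kappa/\epsilon)$, and this coefficient bound is later essential in the error analysis of the Lanczos Push algorithm (Theorem~\ref{thm:lanczos_push_error}), where the deviation is controlled term-by-term in the Chebyshev basis.
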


Therefore, according to Theorem \ref{thm:lanczos_err} we provided that the approximation $\hat{r}_\mathcal{G}(s,t)$ satisfies the $\epsilon$-absolute error guarantee after $k=\sqrt{\kappa}\log \frac{\kappa}{\epsilon}$ iterations of Lanczos recurrence. Recall that the iteration number of \powermethod is required to be $l=\kappa\log \frac{\kappa}{\epsilon}$, so Algorithm \ref{algo:lanczos} is $\sqrt{\kappa}$-times faster than \powermethod.

\section{A Novel Lanczos Push method} \label{sec:LP}

\subsection{The Lanczos Push Algorithm}
Next, we introduce our novel push-style local algorithm, called Lanczos Push, based on a newly-developed "purning" technique. Before introducing our techniques, we first explain the high level idea of our method. Recall that the Lanczos iteration search a sequence of orthogonal vectors $\mathbf{v}_1,...,\mathbf{v}_k$ to accelerate RD computation (i.e., only $\sqrt{\kappa}$ dependency on condition number $\kappa$, which improves upon PM by a $\sqrt{\kappa}$ factor). Now, to design a local algorithm for RD computation while maintaining the rapid convergence rate by Lanczos iteration, our hope is to search a sequence of $\mathbf{sparse}$ vectors $\hat{\mathbf{v}}_1,...,\hat{\mathbf{v}}_k$ that is similar to $\mathbf{v}_1,...,\mathbf{v}_k$ (though they can be no longer orthogonal). See Fig. \ref{fig:illustration_lzpush} as an illustration.

\begin{figure}
    \centering
    \includegraphics[scale=0.2]{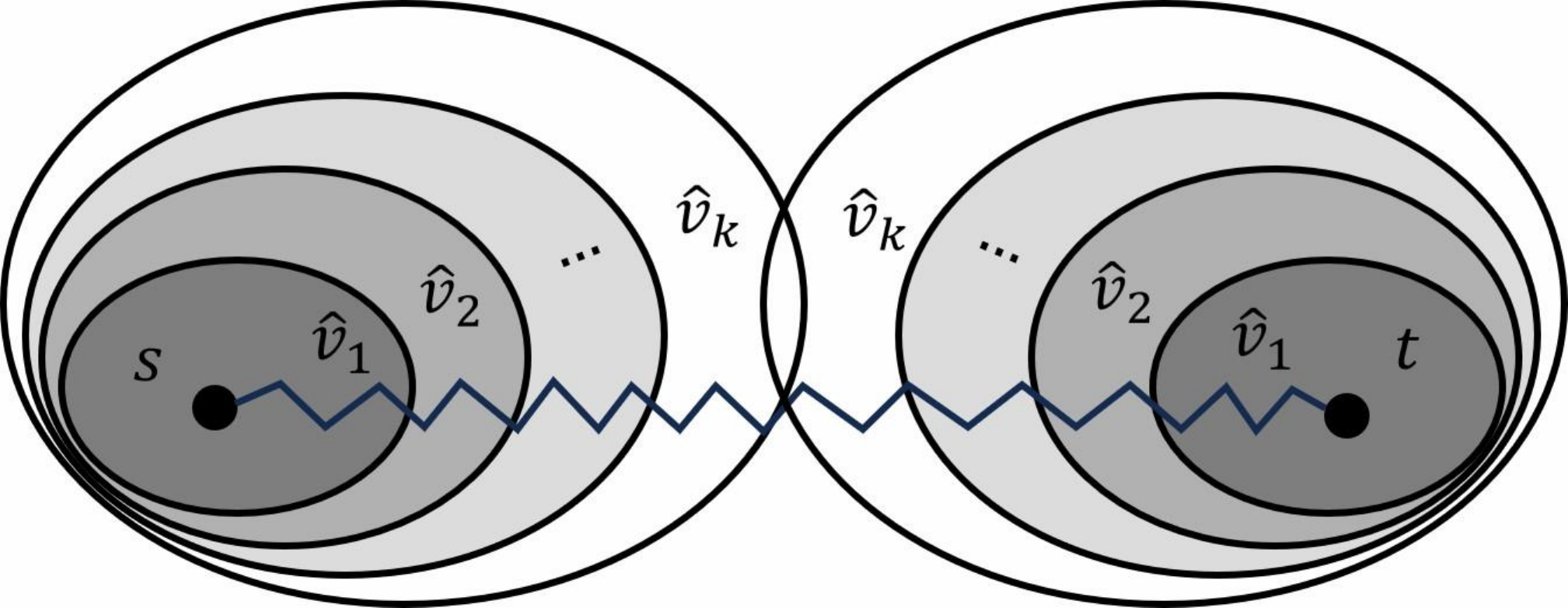} \vspace{-0.3cm}
    \caption{High level idea of our Lanczos Push algorithm for computing $r_\mathcal{G}(s,t)$; $\hat{\mathbf{v}}_1,...,\hat{\mathbf{v}}_k$ are sparse vectors.}\vspace{-0.5cm}
    \label{fig:illustration_lzpush}
\end{figure}

To specify our method, let's closely examine the time complexity of Lanczos iteration (Algorithm \ref{algo:lanczos}). Note that in Line 3 of Algorithm \ref{algo:lanczos}, we perform the matrix-vector multiplication $\mathcal{A}\mathbf{v}_i$, which requires $O(m)$ operations. In Line 4-7 of Algorithm \ref{algo:lanczos}, we invoke the inner product and addition/subtraction operations on $\mathbf{v}_i$, which requires $O(n)$ operations. Therefore, to transform Algorithm \ref{algo:lanczos} to a local algorithm, there are two main challenges: (i) "locally" implement the matrix-vector multiplication $\mathcal{A}\mathbf{v}_i$ in tach iteration. That is, we should approximate $\mathcal{A}\mathbf{v}_i$ by only invoking small portion of graph. (ii) maintaining $\mathbf{v}_i$ a sparse vector in each iteration. To address these challenges, we propose the Lanczos Push algorithm to locally approximate these vectors. To reach this end, first we define our sparse approximation of $\mathbf{v}_i$ and $\mathcal{A}\mathbf{v}_i$.

\begin{definition}\label{def:AMV(A,v_i)}
    We define $AMV(\mathcal{A},\hat{\mathbf{v}}_i)\in \mathbb{R}^n$ as the vector that approximates $\mathcal{A}\hat{\mathbf{v}}_i$. Specifically, for each node $v\in \mathcal{V}$, we define $AMV(\mathcal{A},\hat{\mathbf{v}}_i)(v)=\sum_{u\in \mathcal{N}(v)}{\frac{\hat{\mathbf{v}}_i(u)}{\sqrt{d_ud_v}}\mathbb{I}_{|\hat{\mathbf{v}}_i(u)|>\epsilon\sqrt{d_ud_v}}}$. Where we denote $\mathbb{I}_{|\hat{\mathbf{v}}_i(u)|>\epsilon\sqrt{d_ud_v}}$ as the indicator that takes value $1$ if and only if $|\hat{\mathbf{v}}_i(u)|>\epsilon\sqrt{d_ud_v}$.
\end{definition}

\begin{definition}\label{def:vi_constraint}
    We define $\hat{\mathbf{v}}_i|_{S_i}\in \mathbb{R}^n$ as the vector $\hat{\mathbf{v}}_i$ constraint on the subset $S_i\subset \mathcal{V}$. Specifically, $\hat{\mathbf{v}}_i|_{S_i}(u)=\hat{\mathbf{v}}_i(u)$ for $u\in S_i$, and $\hat{\mathbf{v}}_i|_{S_i}(u)=0$ otherwise. Where $S_i= \{u\in \mathcal{V}: |\hat{\mathbf{v}}_i(u)|> \epsilon d_u\}$.
\end{definition}

Based on Definition \ref{def:AMV(A,v_i)} and \ref{def:vi_constraint}, we propose the following subset Lanczos recurrence:

\begin{equation}\label{equ:subset_lz_recurrence}
\hat{\beta}_{i+1}\hat{\mathbf{v}}_{i+1}=AMV(\mathcal{A},\hat{\mathbf{v}}_i)-\hat{\alpha}_i\hat{\mathbf{v}}_i|_{S_i}-\hat{\beta}_i\hat{\mathbf{v}}_{i-1}|_{S_{i-1}}.
\end{equation}

In Eq. (\ref{equ:subset_lz_recurrence}), $AMV(\mathcal{A},\hat{\mathbf{v}}_i)$ is the sparse approximation of $\mathcal{A}\hat{\mathbf{v}}_i$ by Definition \ref{def:AMV(A,v_i)}, $\hat{\mathbf{v}}_i|_{S_i}$ is the sparse approximation of $\hat{\mathbf{v}}_i$ by Definition \ref{def:vi_constraint}, $\hat{\alpha}_i$ and $\hat{\beta}_{i+1}$ are scalars computed in the same way as Line 4 and Line 6 in Algorithm ~\ref{algo:lanczos}. Putting these things together, we provide the pseudocode in Algorithm \ref{algo:lanczos_local}. The idea of Algorithm \ref{algo:lanczos_local} is simple: to make Lanczos iteration implemented locally on graphs, we perform the subset Lanczos iteration (Eq. (\ref{equ:subset_lz_recurrence})). In each iteration, we perform the following two key operations: (i) For the approximation of the matrix-vector multiplication $\mathcal{A}\hat{\mathbf{v}}_i$, we only search on the node $u$ such that $\hat{\mathbf{v}}_i(u)$ non-zero. Next, for the neighbors $v\in \mathcal{N}(u)$, we only add the scores along the edge $(u,v)\in \mathcal{E}$ with $|\hat{\mathbf{v}}_i(u)|> \epsilon \sqrt{d_ud_v}$, see Line 4-8 in Algorithm \ref{algo:lanczos_local}. We denote $AMV(\mathcal{A},\hat{\mathbf{v}}_i)$ as the approximation of $\mathcal{A}\hat{\mathbf{v}}_i$ after this operation. (ii) We select a subset of important nodes $S_i= \{u\in \mathcal{V}: |\hat{\mathbf{v}}_i(u)|> \epsilon d_u\}$, and we perform addition/subtraction only constraint on subset $S_i$, see Line 9-11, Line 13-15 of Algorithm \ref{algo:lanczos_local}. These operations maintain the sparsity of the computation. \comment{In the next section, we will prove that after our sparse approximation, the time complexity of the proposed algorithm will become local, while the error propagation maintains stable. Our analysis for the stability result (Theorem \ref{thm:lanczos_push_error}) is partially inspired by ~\cite{musco2018stability}.}

\begin{algorithm}[t!]
\small
	\SetAlgoLined
	\KwIn{$\mathcal{G},s,t, k,\epsilon$}
    $\hat{\mathbf{v}}_0=0$, $\hat{\mathbf{v}}_1=\left(\frac{\mathbf{e}_s}{\sqrt{d_s}}-\frac{\mathbf{e}_t}{\sqrt{d_t}}\right)/ \sqrt{\frac{1}{d_s}+\frac{1}{d_t}}$, $\hat{\beta}_1=0$\;
		\For{$i=1,2,...,k$}{
        $S_i= \{u\in \mathcal{V}: |\hat{\mathbf{v}}_i(u)|> \epsilon d_u\}$\;
        \For{node $u$ such that $\hat{\mathbf{v}}_i(u)\neq 0$}{
            \For{$v\in \mathcal{N}(u)$ with $|\hat{\mathbf{v}}_i(u)|>\epsilon \sqrt{d_ud_v}$}{
                $\hat{\mathbf{v}}_{i+1}(v)=\hat{\mathbf{v}}_{i+1}(v)+\frac{1}{\sqrt{d_u d_v}}\hat{\mathbf{v}}_i (u)$\;
            }
        }
        \For{$u\in S_{i-1}$}{
       $\hat{\mathbf{v}}_{i+1}(u)= \hat{\mathbf{v}}_{i+1}(u)-\hat{\beta}_i \hat{\mathbf{v}}_{i-1}(u)$\;
       }
       $\hat{\alpha}_i=\langle \hat{\mathbf{v}}_{i+1},\hat{\mathbf{v}}_i\rangle$\;
       \For{$u\in S_i$}{
       $\hat{\mathbf{v}}_{i+1}(u)= \hat{\mathbf{v}}_{i+1}(u) - \hat{\alpha}_i \hat{\mathbf{v}}_i(u)$\;
       }
       $\hat{\beta}_{i+1}=\Vert\hat{\mathbf{v}}_{i+1}\Vert_2$\;
       $\hat{\mathbf{v}}_{i+1}=\hat{\mathbf{v}}_{i+1}/\hat{\beta}_{i+1}$\;}
       $\hat{\mathbf{T}}=\begin{bmatrix}
       \begin{array}{cccc}
        \hat{\alpha}_1 & \hat{\beta}_2 &  & 0\\
        \hat{\beta}_2 & \hat{\alpha}_2 & \ddots & \\
         & \ddots  & \ddots & \hat{\beta}_k \\
        0 & & \hat{\beta}_k & \hat{\alpha}_k 
        \end{array}
        \end{bmatrix}$, $\hat{\mathbf{V}}=[\hat{\mathbf{v}}_1,...,\hat{\mathbf{v}}_k]$\;
        
	\KwOut{$r'_\mathcal{G}(s,t)=\left(\frac{1}{d_s}+\frac{1}{d_t}\right)\hat{\mathbf{v}}_1^T \hat{\mathbf{V}} (\mathbf{I}-\hat{\mathbf{T}})^{-1}\mathbf{e}_1$ as the approximation of $r_\mathcal{G}(s,t)$}
	\caption{Lanczos Push for RD computation}\label{algo:lanczos_local}
\end{algorithm}

\begin{figure}
    \centering
    \includegraphics[scale=0.17]{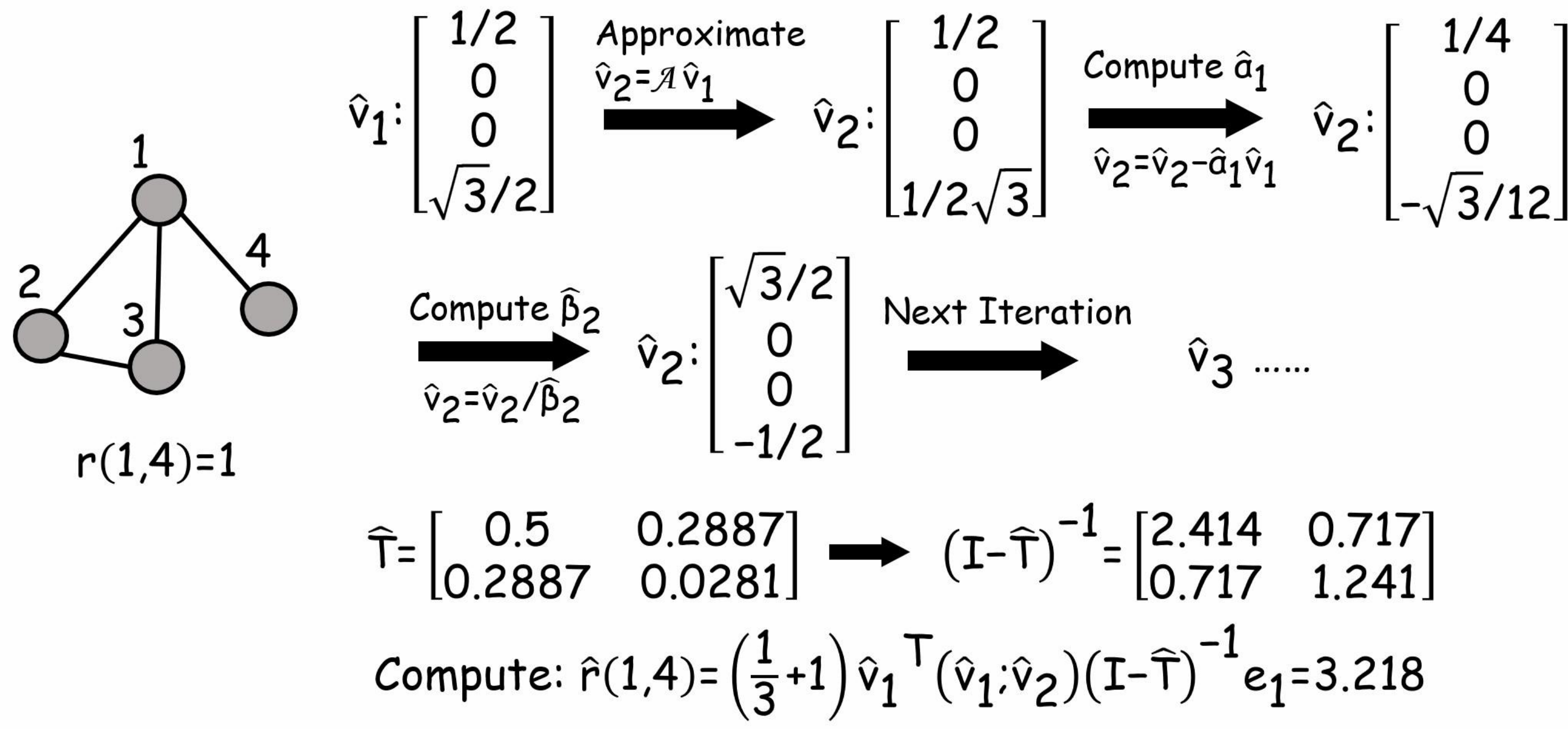} \vspace{-0.3cm}
    \caption{A running example of our Lanczos Push algorithm}\vspace{-0.5cm}
    \label{fig:example_lzpush}
\end{figure}

\stitle{A running example.} To illustrate the proposed Lanczos Push algorithm, we provide a running example here. We consider a toy graph with vertex set $\mathcal{V}=\{1,2,3,4\}$ and edge set $\mathcal{E}=\{(1,2),(1,3),(1,4),(2,3)\}$, as illustrated in Fig. \ref{fig:example_lzpush}. We focus on computing the resistance distance (RD) between nodes index $1$ and $4$, where the exact value is known to be $r(1,4)=1$ by the definition of resistance. For demonstration purposes, we set the algorithm parameters to $k=2$ and $\epsilon=0.25$ for Lanczos Push algorithm. We first initialize $\hat{\mathbf{v}}_1=\left(\frac{\mathbf{e}_s}{\sqrt{d_s}}-\frac{\mathbf{e}_t}{\sqrt{d_t}}\right)/ \sqrt{\frac{1}{d_s}+\frac{1}{d_t}}=[1/2,0,0,\sqrt{3}/2]^T$ and set $\hat{\beta}_1=0$ by Line 1 of Algorithm \ref{algo:lanczos_local}. We set $S_1=\{1,4\}$ by Line 3. Next, we invoke Lines 4-8 to approximate the matrix-vector multiplication $\mathcal{A}\hat{\mathbf{v}}_1$. Notably, the computations along the edge $(1,2)$ and $(1,3)$ are prevented because $|\hat{\mathbf{v}}_1(u)|/\sqrt{d_ud_v}=0.5/\sqrt{6}<0.25=\epsilon $ for $(u,v)=(1,2)$ and $(1,3)$. After the computation of Line 4-8, we obtain $\hat{\mathbf{v}}_2=[1/2,0,0,1/2\sqrt{3}]^T$. Then, we skip the computation from Lines 9-11 because we set $\hat{\beta}_1=0$ at initial step. Next, we compute $\hat{\alpha}_1=\langle \hat{\mathbf{v}}_{2},\hat{\mathbf{v}}_1\rangle=1/2$ by Line 12. We update $\hat{\mathbf{v}}_2=\hat{\mathbf{v}}_2-\hat{\alpha}_1\hat{\mathbf{v}}_1|_{S_1}=[1/4,0,0,-\sqrt{3}/12]^T$ by Line 13-15. We compute $\hat{\beta}_2=\Vert \hat{\mathbf{v}}_2\Vert_2=1/2\sqrt{3}$ by Line 16, and we finally update $\hat{\mathbf{v}}_2=\hat{\mathbf{v}}_2/\beta_2=[\sqrt{3}/2,0,0,-1/2]^T$ by Line 17. Subsequently, we turn to the next iteration. For iteration $i=2$, we can similarly compute $\hat{\alpha}_2= 0.0281$; we omit the details here because all the routines maintain the same. Finally, we compute the approximate RD value by $\hat{r}(1,4)=\left(\frac{1}{3}+1\right)\hat{\mathbf{v}}_1^T [\hat{\mathbf{v}}_1;\hat{\mathbf{v}}_2] (\mathbf{I}-\hat{\mathbf{T}})^{-1}\mathbf{e}_1=3.218$. Note that this result seems inaccurate mainly because we set a large threshold $\epsilon=0.25$ for better illustration. In real-world graphs, when setting the threshold $\epsilon$ smaller, our Lanczos Push algorithm successfully reduce computation costs while does not produce too much additional error. The theoretical guarantees are provided in the next subsection.

\comment{Therefore, it is easy to see that the operation times of the approximate matrix-vector multiplication $AMV(\mathcal{A},\hat{\mathbf{v}}_i)$ can be bounded by only $O(\frac{\Vert \mathcal{A}\hat{\mathbf{v}}_i\Vert_1}{\epsilon})$ , which is independent of $m$. Additionally, by our choose of $S_i$, the size of $S_i$ can be bounded by $|S_i|\leq O(\frac{\Vert \hat{\mathbf{v}}_i\Vert_1}{\epsilon})$, which is independent of $n$. Next, if we perform the subset Lanczos recurrence (Equ. (\ref{equ:subset_lz_recurrence})) in each iteration, we can make the non-zero entries of $\hat{\mathbf{v}}_i$, $supp(\hat{\mathbf{v}}_i)$ at most $O(\frac{\Vert \hat{\mathbf{v}}_i\Vert_1+\Vert \mathcal{A}\hat{\mathbf{v}}_i\Vert_1}{\epsilon})$). 
When assuming that $\Vert \hat{\mathbf{v}}_i\Vert_1,\Vert \mathcal{A}\hat{\mathbf{v}}_i\Vert_1=O(1)$, we can prove that the total operations perform by Algorithm \ref{algo:lanczos_local} in each iteration $i$ will be only $O(\frac{\Vert \hat{\mathbf{v}}_i\Vert_1+\Vert \mathcal{A}\hat{\mathbf{v}}_i\Vert_1}{\epsilon})=O(\frac{1}{\epsilon})$, independent of $n$ and $m$.}
 
\subsection{Analysis of the Lanczos Push Algorithm}
Here, we prove that our Lanczos Push algorithm (Algorithm \ref{algo:lanczos_local}) does not introduce significant errors compared to the Lanczos iteration, while drastically reducing the runtime. However, due to the complexity of the proofs, we provide only an outline here, the details are left in the full version ~\cite{full-version}. Overall, the theoretical analysis is devided into two steps: (i) the error bound of Algorithm \ref{algo:lanczos_local}, (ii) the local time complexity of Algorithm \ref{algo:lanczos_local}. For simplicity, we denote $C_1=\max_{u=s,t;i\leq k}{\Vert T_i(\mathbf{P})\mathbf{e}_u\Vert_1}$, where $T_i(x)$ is the first kind of Chebyshev polynomials (definition see Eq. 13 in Appendix). We denote $C_2=\max_{i\leq k}{(\Vert \hat{\mathbf{v}}_i\Vert_1+\Vert \mathcal{A}\hat{\mathbf{v}}_i^+\Vert_1+\Vert \mathcal{A}\hat{\mathbf{v}}_i^-\Vert_1)}$, where $\hat{\mathbf{v}}_i^+,\hat{\mathbf{v}}_i^-$ are the positive parts and negative parts of $\hat{\mathbf{v}}_i$, respectively.

\begin{theorem}\label{thm:lanczos_push_error}
   Given $\epsilon'\in (0,1)$ , setting iteration number $k=\sqrt{\kappa}\log \frac{\kappa}{\epsilon'}$, parameter $\epsilon=\tilde{\Omega}(\frac{\epsilon'}{\kappa^{2.25}C_1})$, the error between the approximation $r'_\mathcal{G} (s,t) $ output by Algorithm \ref{algo:lanczos_local} and accurate RD value $r_\mathcal{G} (s,t) $ can be bounded by:
    $$|r_\mathcal{G} (s,t)-r'_\mathcal{G} (s,t)| \leq \epsilon'.$$
\end{theorem}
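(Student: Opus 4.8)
The plan is to view the Lanczos Push algorithm as a \emph{perturbed} exact Lanczos iteration and to show that the two pruning operations contribute only an extra $O(\epsilon')$ to the error already controlled by Theorem~\ref{thm:lanczos_err}. Concretely, rewrite the subset recurrence (\ref{equ:subset_lz_recurrence}) as
$$\hat\beta_{i+1}\hat{\mathbf v}_{i+1}=\mathcal{A}\hat{\mathbf v}_i-\hat\alpha_i\hat{\mathbf v}_i-\hat\beta_i\hat{\mathbf v}_{i-1}+\mathbf f_i,$$
where $\mathbf f_i=(AMV(\mathcal{A},\hat{\mathbf v}_i)-\mathcal{A}\hat{\mathbf v}_i)+\hat\alpha_i(\hat{\mathbf v}_i-\hat{\mathbf v}_i|_{S_i})+\hat\beta_i(\hat{\mathbf v}_{i-1}-\hat{\mathbf v}_{i-1}|_{S_{i-1}})$ collects exactly the mass discarded by the $AMV$ truncation and the two support restrictions. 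In matrix form this is $\mathcal{A}\hat{\mathbf V}=\hat{\mathbf V}\hat{\mathbf T}+\hat\beta_{k+1}\hat{\mathbf v}_{k+1}\mathbf e_k^T-\mathbf F$ with $\mathbf F=[\mathbf f_1,\dots,\mathbf f_k]$, which is precisely the setting of the finite-precision Lanczos analysis of \cite{musco2018stability}. The strategy is then: (i) bound $\max_i\|\mathbf f_i\|_2$ in terms of $\epsilon$ and $C_1$; (ii) push this through a perturbed version of Lemmas~\ref{lem:lz_polynomial_correct} and~\ref{lem:lz_quadratic_err}; (iii) add the $\le\epsilon'/2$ polynomial-approximation error already used for Theorem~\ref{thm:lanczos_err} and choose $\epsilon$ so the perturbation contributes the remaining $\epsilon'/2$.

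For step (i), note every entry dropped by $AMV$ is an edge contribution $\hat{\mathbf v}_i(u)/\sqrt{d_ud_v}$ with $|\hat{\mathbf v}_i(u)|\le\epsilon\sqrt{d_ud_v}$, and every entry dropped by $|_{S_i}$ satisfies $|\hat{\mathbf v}_i(u)|\le\epsilon d_u$; since $\|\hat{\mathbf v}_i\|_2=1$ and $\|\mathcal{A}\|\le1$ we also have $|\hat\alpha_i|,|\hat\beta_i|=O(1)$, so summing gives $\|\mathbf f_i\|_2\le\epsilon\cdot(\text{total }\ell_1\text{-type mass of }\hat{\mathbf v}_i\text{ and }\mathcal{A}\hat{\mathbf v}_i)$. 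The key is to bound that mass by $\tilde O(\kappa^{2.25}C_1)$: in exact arithmetic $\mathbf v_i=p_{i-1}(\mathcal{A})\mathbf v_1$ for the degree-$(i-1)$ Lanczos polynomial, and expanding $p_{i-1}$ in the Chebyshev basis expresses $\mathbf v_i$ as a bounded-coefficient combination of $T_j(\mathcal{A})(\mathbf e_s/\sqrt{d_s})$ and $T_j(\mathcal{A})(\mathbf e_t/\sqrt{d_t})$ for $j\le i-1$; because $\mathbf P=\mathbf D^{1/2}\mathcal{A}\mathbf D^{-1/2}$, the (degree-weighted) $\ell_1$-mass of these vectors is exactly what $C_1=\max_{u=s,t;\,i\le k}\|\mathbf D^{1/2}T_i(\mathbf P)\mathbf e_u\|_1$ controls, up to an $\operatorname{poly}(k)=\operatorname{poly}(\kappa)$ factor from the Chebyshev coefficients. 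Carrying along an inductive closeness estimate $\|\hat{\mathbf v}_i-\mathbf v_i\|$ so the same mass bound applies to the computed iterates then yields $\|\mathbf f_i\|_2=\tilde O(\epsilon\,\kappa^{2.25}C_1)$, so taking $\epsilon$ on the order of $\epsilon'/(\kappa^{2.25}C_1)$ makes every $\|\mathbf f_i\|_2$ as small as we need.

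For steps (ii)–(iii), I would prove the perturbed analogue of Lemma~\ref{lem:lz_polynomial_correct}: by induction on the degree, $\mathcal{A}^j\hat{\mathbf v}_1=\hat{\mathbf V}\hat{\mathbf T}^j\mathbf e_1+(\text{error of size }O(j)\|\mathbf F\|)$, hence $p(\mathcal{A})\hat{\mathbf v}_1\approx\hat{\mathbf V}p(\hat{\mathbf T})\mathbf e_1$ for the degree-$(k-1)$ Chebyshev polynomial $p$ that best approximates $1/(1-x)$ on $[\lambda_{\min}(\mathcal{A}),\lambda_2(\mathcal{A})]$. Combining this with the near-orthogonality $\hat{\mathbf v}_1^T\hat{\mathbf V}\approx\mathbf e_1^T$ (a consequence of $\|\mathbf F\|$ being small, using $\hat{\mathbf v}_1=\mathbf v_1$ exactly), with the fact that $\mathbf I-\hat{\mathbf T}$ stays invertible and $\|(\mathbf I-\hat{\mathbf T})^{-1}\|=O(\kappa)$ (its spectrum is an $O(\|\mathbf F\|)$-perturbation of that of $\mathbf I-\mathbf T\subseteq[\mu_2,2]$), and with the $\le\epsilon'/2$ Chebyshev approximation bound already used for Theorem~\ref{thm:lanczos_err}, gives $|r_\mathcal{G}(s,t)-r'_\mathcal{G}(s,t)|\le\epsilon'/2+\tilde O(\kappa^{O(1)})\max_i\|\mathbf f_i\|_2\le\epsilon'$ for the stated $k$ and $\epsilon$. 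The main obstacle is the coupled induction implicit in step (i)–(ii): one must simultaneously propagate the closeness $\hat{\mathbf v}_i\approx\mathbf v_i$ (which validates the mass and hence the per-step pruning bounds) and the accumulation of the $\mathbf f_i$'s through the now non-orthogonal recurrence while keeping $\hat{\mathbf T}$ well-conditioned — this delicate bookkeeping, analogous to the stability analysis of \cite{musco2018stability}, is precisely where the exponent $\kappa^{2.25}$ is pinned down.
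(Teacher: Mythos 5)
Your high-level framing---view Lanczos Push as a perturbed Lanczos recurrence, isolate the perturbation vectors, and accumulate them through the Chebyshev decomposition of the approximating polynomial before paying the remaining $\epsilon'$ via Theorem~\ref{thm:lanczos_err}'s polynomial-approximation bound---matches the skeleton of the paper's proof. However, the concrete route you propose for the accumulation step has several gaps, and some of your key claims point in the wrong direction.

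First, the paper never bounds $\|\mathbf f_i\|_2$. It bounds the per-step perturbation \emph{entrywise}: from the pruning thresholds one immediately gets $|\delta_i(u)|\le 3\epsilon d_u$ for every $u$ (this is Lemma~\ref{lem:delta_bound}), with no reference to $C_1$, no Chebyshev-expansion argument, and no bound on the mass of $\hat{\mathbf v}_i$ at all. This is essential: an $\ell_2$-norm bound on $\delta_i$ derived from the entrywise bound would pick up a factor $\|\mathbf d\|_2$ (the degree vector in Euclidean norm), which can be enormous and would blow up the final estimate. Your proposal to target $\|\mathbf f_i\|_2$ is therefore the wrong quantity.

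Second, you misidentify the role of $C_1$. The $\ell_1$-mass of $\hat{\mathbf v}_i$ and of $\mathcal A\hat{\mathbf v}_i$ is what $C_2$ controls, and $C_2$ appears only in the runtime bound (Theorem~\ref{thm:lanczos_push_runtime}), not in the error bound. The constant $C_1=\max_{u=s,t;\,i\le k}\|\mathbf D^{1/2}T_i(\mathbf P)\mathbf e_u\|_1$ enters at the very end, when the accumulated deviation $d_l=\hat{\mathbf V}T_l(\hat{\mathbf T})\mathbf e_1-T_l(\mathcal A)\mathbf v_1$ is paired with $\mathbf v_1$: since $\mathbf v_1$ is supported on $\{s,t\}$, one only needs $|d_l(s)|$ and $|d_l(t)|$, and bounding these entries amounts to controlling how $U_{l-j}(\mathbf P)$ spreads mass from $s$ (resp.\ $t$)---that is where $C_1$ is used (Lemma~\ref{lem:bound_Ul} and Lemma~\ref{lem:bound_dl}). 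Your attempt to make $C_1$ control the mass of $\hat{\mathbf v}_i$ via Chebyshev expansion of the Lanczos polynomial is also unsubstantiated: Lanczos polynomials are produced by the orthogonalization process and need not have poly$(\kappa)$-bounded Chebyshev coefficients; the paper only expands the fixed target polynomial $p$ approximating $1/(1-x)$ (whose coefficients satisfy $\sum_l c_l\le\kappa\log(\kappa/\epsilon)$ by construction), never the Lanczos iterates themselves.

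Third, the ``coupled induction'' you flag as the main obstacle---simultaneously tracking $\|\hat{\mathbf v}_i-\mathbf v_i\|$ and the accumulated perturbation---is exactly the route the paper \emph{avoids}. Instead of comparing $\hat{\mathbf v}_i$ to the exact $\mathbf v_i$, the paper defines $d_l$ above and shows (Fact~\ref{fact:dl_recurrence}) that $d_{l+1}=2\mathcal A d_l-d_{l-1}-\xi_l$, i.e.\ $d_l$ satisfies the same three-term recurrence as the Chebyshev polynomials with forcing terms $\xi_j=2\mathbf E T_j(\hat{\mathbf T})\mathbf e_1$. This gives the closed form $d_l=-\tfrac12 U_{l-1}(\mathcal A)\xi_1-\sum_{j=2}^{l-1}U_{l-j}(\mathcal A)\xi_j$ in the second-kind Chebyshev basis (Fact~\ref{fact:dl_value}), which is the structural insight that makes the accumulation controllable and yields the clean factorization $|d_l(u)|\le 6C_1k^{5/2}\epsilon\sqrt{d_u}$. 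The $\kappa^{2.25}$ then falls out as $\kappa\log\cdot k^{5/2}=\kappa\log\cdot\kappa^{1.25}\log^{5/2}$, not from any per-step bound. Finally, your plan to \emph{derive} invertibility of $\mathbf I-\hat{\mathbf T}$ from $\|\mathbf F\|$ being small is not taken in the paper either; the spectrum containment $\lambda(\hat{\mathbf T})\subset[\lambda_n(\mathcal A),\lambda_2(\mathcal A)]$ is assumed directly (Assumption~\ref{assump:lzpush_error}) and verified experimentally, and deriving it from the spectral norm of $\mathbf F$ is again undermined by point one.
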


\begin{theorem}\label{thm:lanczos_push_runtime}
The total operations performed by Algorithm \ref{algo:lanczos_local} can be bounded by $O(\frac{1}{\epsilon}\sum_{i\leq k}{(\Vert \hat{\mathbf{v}}_i\Vert_1+\Vert \mathcal{A}\hat{\mathbf{v}}_i^+\Vert_1+\Vert \mathcal{A}\hat{\mathbf{v}}_i^-\Vert_1)}+k^{2.37})$. When setting $k=\sqrt{\kappa}\log \frac{\kappa}{\epsilon'}$, $\epsilon=\tilde{\Omega}(\frac{\epsilon'}{\kappa^{2.25}C_1})$, this is $\tilde{O}(\kappa^{2.75}C_1C_2/\epsilon')$.
    \end{theorem}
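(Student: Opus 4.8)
The plan is to bound the work done in each of the $k$ iterations of Algorithm~\ref{algo:lanczos_local} in terms of the $\ell_1$-type quantities appearing in the statement, sum over iterations, and finally substitute the prescribed values of $k$ and $\epsilon$. \textbf{Step 1 (per-iteration cost).} Fix an iteration $i$ and observe that every line of the loop touches only a bounded number of coordinates: Line~3 and the inner product in Line~12 cost $O(|\mathrm{supp}(\hat{\mathbf v}_i)|+|\mathrm{supp}(\hat{\mathbf v}_{i+1})|)$; Lines~9--11 and Lines~13--15 cost $O(|S_{i-1}|)$ and $O(|S_i|)$; Lines~16--17 cost $O(|\mathrm{supp}(\hat{\mathbf v}_{i+1})|)$; and Lines~4--8 cost $O(|\mathrm{supp}(\hat{\mathbf v}_i)|+E_i)$, where $E_i$ counts the ordered pairs $(u,v)$ with $v\in\mathcal N(u)$, $\hat{\mathbf v}_i(u)\neq 0$ and $|\hat{\mathbf v}_i(u)|>\epsilon\sqrt{d_ud_v}$ (the edges along which a push actually happens), assuming neighbour lists are scanned in order of increasing degree so that exactly these pairs are enumerated.

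\textbf{Step 2 (bounding $|S_i|$ and $E_i$ by $\ell_1$ norms).} For $S_i$: every $u\in S_i$ has $|\hat{\mathbf v}_i(u)|>\epsilon d_u$, so an averaging argument gives $|S_i|<\tfrac1\epsilon\sum_u\tfrac{|\hat{\mathbf v}_i(u)|}{d_u}\le\tfrac1\epsilon\|\hat{\mathbf v}_i\|_1$ (using $d_u\ge 1$). For $E_i$: each active pair contributes $\tfrac{|\hat{\mathbf v}_i(u)|}{\sqrt{d_ud_v}}>\epsilon$, hence $E_i<\tfrac1\epsilon\sum_{(u,v):\,v\in\mathcal N(u)}\tfrac{|\hat{\mathbf v}_i(u)|}{\sqrt{d_ud_v}}=\tfrac1\epsilon\bigl\|\mathcal A|\hat{\mathbf v}_i|\bigr\|_1=\tfrac1\epsilon\bigl(\|\mathcal A\hat{\mathbf v}_i^+\|_1+\|\mathcal A\hat{\mathbf v}_i^-\|_1\bigr)$, where the last equality uses $|\hat{\mathbf v}_i|=\hat{\mathbf v}_i^++\hat{\mathbf v}_i^-$ and the nonnegativity of $\mathcal A|\hat{\mathbf v}_i|$.

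\textbf{Step 3 (closing the recursion and summing).} After Lines~4--8 the support of $\hat{\mathbf v}_{i+1}$ lies among the heads of active edges (size $\le E_i$), and Lines~9--15 only add coordinates already in $S_{i-1}\cup S_i$; thus $|\mathrm{supp}(\hat{\mathbf v}_{i+1})|\le E_i+|S_i|+|S_{i-1}|$, which by Step~2 is $O\!\bigl(\tfrac1\epsilon\sum_{j\in\{i-1,i\}}(\|\hat{\mathbf v}_j\|_1+\|\mathcal A\hat{\mathbf v}_j^+\|_1+\|\mathcal A\hat{\mathbf v}_j^-\|_1)\bigr)$. Substituting into Step~1, iteration $i$ costs $O\!\bigl(\tfrac1\epsilon\sum_{j\in\{i-2,i-1,i\}}(\|\hat{\mathbf v}_j\|_1+\|\mathcal A\hat{\mathbf v}_j^+\|_1+\|\mathcal A\hat{\mathbf v}_j^-\|_1)\bigr)$. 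Summing over $i=1,\dots,k$ (each term recurs $O(1)$ times) and adding the $O(k^{2.37})$ needed to form $(\mathbf I-\hat{\mathbf T})^{-1}\mathbf e_1$ by fast matrix inversion gives the first displayed bound. Now $\sum_{i\le k}(\|\hat{\mathbf v}_i\|_1+\|\mathcal A\hat{\mathbf v}_i^+\|_1+\|\mathcal A\hat{\mathbf v}_i^-\|_1)\le kC_2$ by definition of $C_2$; plugging in $k=\sqrt\kappa\log\frac{\kappa}{\epsilon'}$ and $\tfrac1\epsilon=\tilde O(\kappa^{2.25}C_1/\epsilon')$ yields $\tfrac{kC_2}{\epsilon}=\tilde O(\kappa^{2.75}C_1C_2/\epsilon')$, and since $C_1,C_2\ge 1$ and $k^{2.37}=\tilde O(\kappa^{1.185})$ is dominated, the total is $\tilde O(\kappa^{2.75}C_1C_2/\epsilon')$.

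\textbf{Main obstacle.} The delicate point is Step~2's identity $E_i<\tfrac1\epsilon(\|\mathcal A\hat{\mathbf v}_i^+\|_1+\|\mathcal A\hat{\mathbf v}_i^-\|_1)$ combined with Step~3's claim that $|\mathrm{supp}(\hat{\mathbf v}_{i+1})|$ never grows faster than the $\ell_1$ quantities at steps $i-1,i$ --- i.e.\ that the pruning thresholds in Definitions~\ref{def:AMV(A,v_i)} and~\ref{def:vi_constraint} are calibrated so the support does not blow up geometrically across the $k$ iterations. A secondary subtlety is realizing Lines~4--8 in $O(|\mathrm{supp}(\hat{\mathbf v}_i)|+E_i)$ time rather than $O(\sum_{u\in\mathrm{supp}(\hat{\mathbf v}_i)}d_u)$, which requires accessing each node's neighbours in order of degree; and one must also check $C_1,C_2\ge 1$ so the $k^{2.37}$ additive term is absorbed.
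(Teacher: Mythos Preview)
Your proposal is correct and follows essentially the same route as the paper: bound $|S_i|\le\|\hat{\mathbf v}_i\|_1/\epsilon$, bound the number of pushed edges $|E_i|$ by $(\|\mathcal A\hat{\mathbf v}_i^+\|_1+\|\mathcal A\hat{\mathbf v}_i^-\|_1)/\epsilon$, observe $|\mathrm{supp}(\hat{\mathbf v}_{i+1})|\le|E_i|+|S_i|+|S_{i-1}|$, sum over $i$, and substitute $k$ and $\epsilon$. The only cosmetic difference is that the paper splits $E_i=E_i^+\cup E_i^-$ and bounds each piece separately, whereas you go through $\|\mathcal A|\hat{\mathbf v}_i|\|_1$ and then split; these are equivalent. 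Your remark that Lines~4--8 need degree-sorted adjacency lists to realize $O(|\mathrm{supp}(\hat{\mathbf v}_i)|+E_i)$ time (rather than $O(\sum_{u\in\mathrm{supp}}d_u)$), and your explicit check that $k^{2.37}$ is absorbed, are points the paper leaves implicit.
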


The above theoretical analysis of the Lanczos Push algorithm is based on a simple assumption.
\begin{assumption}\label{assump:lzpush_error}
    We assume $\lambda(\hat{\mathbf{T}})\subset [\lambda_n(\mathcal{A}),\lambda_2(\mathcal{A})]$.
\end{assumption}

We notice that the requirement of Assumption \ref{assump:lzpush_error} is mild. In fact, from the proof of Lemma \ref{lem:lz_quadratic_err}, we always have $\lambda({\mathbf{T}})\subset [\lambda_n(\mathcal{A}),\lambda_2(\mathcal{A})]$ for the exact Lanczos iteration (i.e., Algorithm \ref{algo:lanczos} , also Lanczos Push algorithm when setting $\epsilon=0$). We also show that this assumption holds in various datasets in our experiments (details in Sec. \ref{sec:exp}).

\stitle{Understanding the theoretical result.} At a first glance, the result of Theorem \ref{thm:lanczos_push_error} and Theorem \ref{thm:lanczos_push_runtime} maybe not clear enough due to the existence of $C_1$ and $C_2$, we hereby provide some further discussions. First, we provide a simpler version of Theorem \ref{thm:lanczos_push_runtime} by proving the upper bound of $C_1$ and $C_2$.

\begin{corollary}\label{coro:lzpush_time_error}
    The runtime of Algorithm \ref{algo:lanczos_local} is $\tilde{O}(\kappa^{2.75}\sqrt{nm}/(\epsilon\sqrt{d}))$ to compute the $\epsilon$-absolute error approximation of $r_\mathcal{G} (s,t) $. Specifically, $C_1\leq \sqrt{m}$ and $C_2\leq 3\sqrt{n}$.
\end{corollary}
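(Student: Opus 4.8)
The plan is to derive Corollary~\ref{coro:lzpush_time_error} from Theorem~\ref{thm:lanczos_push_runtime} by supplying explicit worst-case bounds for the two graph-dependent quantities $C_1$ and $C_2$. Every estimate we need rests on two elementary facts: the normalized adjacency matrix $\mathcal{A}$ is symmetric with spectrum contained in $[-1,1]$, so $\Vert\mathcal{A}\Vert_2\le 1$; and the first-kind Chebyshev polynomials satisfy $\max_{x\in[-1,1]}|T_i(x)|\le 1$, so $\Vert T_i(\mathcal{A})\Vert_2\le 1$ for every $i$. The one point requiring care is that $C_1$ and $C_2$ are defined through $\ell_1$ norms, while the induced $\ell_1$ norm of $\mathcal{A}$ can exceed $1$ on high-degree graphs; hence each bound must be passed through the $\ell_2$ norm, paying a Cauchy--Schwarz factor $\sqrt{n}$ (respectively $\sqrt{2m}$).

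For $C_2=\max_{i\le k}(\Vert\hat{\mathbf{v}}_i\Vert_1+\Vert\mathcal{A}\hat{\mathbf{v}}_i^+\Vert_1+\Vert\mathcal{A}\hat{\mathbf{v}}_i^-\Vert_1)$ I would first observe that $\hat{\mathbf{v}}_1$ is $\ell_2$-normalized by construction and Lines~16--17 of Algorithm~\ref{algo:lanczos_local} renormalize every subsequent iterate, so $\Vert\hat{\mathbf{v}}_i\Vert_2=1$ for all $i$ (this is the only spot where it matters that, despite the approximate matrix--vector step and the restriction to $S_i$, the iterates are still exactly unit vectors). Then $\Vert\hat{\mathbf{v}}_i\Vert_1\le\sqrt{n}\,\Vert\hat{\mathbf{v}}_i\Vert_2=\sqrt{n}$. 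Since $\hat{\mathbf{v}}_i^{+}$ and $\hat{\mathbf{v}}_i^{-}$ have disjoint supports and together recover $\hat{\mathbf{v}}_i$ up to sign, $\Vert\hat{\mathbf{v}}_i^{\pm}\Vert_2\le 1$; combined with $\Vert\mathcal{A}\Vert_2\le 1$ this gives $\Vert\mathcal{A}\hat{\mathbf{v}}_i^{\pm}\Vert_2\le 1$, hence $\Vert\mathcal{A}\hat{\mathbf{v}}_i^{\pm}\Vert_1\le\sqrt{n}$. Summing the three terms yields $C_2\le 3\sqrt{n}$.

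For $C_1=\max_{u=s,t;\,i\le k}\Vert T_i(\mathbf{P})\mathbf{e}_u\Vert_1$ I would use the similarity $\mathbf{P}=\mathbf{A}\mathbf{D}^{-1}=\mathbf{D}^{1/2}\mathcal{A}\mathbf{D}^{-1/2}$, which gives $T_i(\mathbf{P})\mathbf{e}_u=d_u^{-1/2}\,\mathbf{D}^{1/2}\mathbf{w}$ with $\mathbf{w}:=T_i(\mathcal{A})\mathbf{e}_u$ and $\Vert\mathbf{w}\Vert_2\le\Vert T_i(\mathcal{A})\Vert_2\,\Vert\mathbf{e}_u\Vert_2\le 1$. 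Cauchy--Schwarz then gives $\Vert\mathbf{D}^{1/2}\mathbf{w}\Vert_1=\sum_v\sqrt{d_v}\,|w_v|\le\bigl(\textstyle\sum_v d_v\bigr)^{1/2}\Vert\mathbf{w}\Vert_2=\sqrt{2m}$, so $\Vert T_i(\mathbf{P})\mathbf{e}_u\Vert_1\le\sqrt{2m/d_u}$. Maximizing over $u\in\{s,t\}$ and $i\le k$ gives $C_1\le\sqrt{2m/d}$ with $d=\min\{d_s,d_t\}$; in particular $C_1=O(\sqrt{m})$, matching the stated $\sqrt{m}$ bound up to a constant (and exactly once $d\ge 2$). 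Keeping the refined $1/\sqrt{d}$ factor is precisely what produces the $\sqrt{d}$ in the denominator of the final runtime.

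Finally, substituting into Theorem~\ref{thm:lanczos_push_runtime} the product $C_1C_2=O(\sqrt{2m/d}\cdot\sqrt{n})=O(\sqrt{nm}/\sqrt{d})$ turns the runtime $\tilde{O}(\kappa^{2.75}C_1C_2/\epsilon)$ into $\tilde{O}(\kappa^{2.75}\sqrt{nm}/(\epsilon\sqrt{d}))$, which is the claim. I expect no deep obstacle here; the only step where one could slip is in justifying the $\ell_2$ contractivity of $\mathcal{A}$ and of all $T_i(\mathcal{A})$ and the exact unit-normalization of the pruned Lanczos iterates, since the Cauchy--Schwarz reductions collapse without those facts.
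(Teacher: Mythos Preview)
Your proposal is correct and follows essentially the same route as the paper: both arguments bound $C_2\le 3\sqrt{n}$ via the exact $\ell_2$-normalization of $\hat{\mathbf{v}}_i$, $\Vert\mathcal{A}\Vert_2\le 1$, and Cauchy--Schwarz, and bound $C_1$ via $\max_{[-1,1]}|T_i|\le 1$ together with a Cauchy--Schwarz conversion from $\ell_2$ to $\ell_1$. The only cosmetic difference is that you route the $C_1$ estimate through the symmetric matrix $\mathcal{A}$ using $\mathbf{P}=\mathbf{D}^{1/2}\mathcal{A}\mathbf{D}^{-1/2}$ (which cleanly justifies $\Vert T_i(\mathcal{A})\Vert_2\le 1$) and pick up the $1/\sqrt{d}$ factor inside $C_1$, whereas the paper asserts $\Vert T_i(\mathbf{P})\Vert_2\le 1$ directly and draws the $\sqrt{d}$ from the refined statement of Theorem~\ref{thm:lanczos_push_runtime}; the end bound is the same.
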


\comment{
\begin{proof}
   \textcolor{blue}{First, by the property of Chebyshev polynomials, we have $\max_{x\in [-1,1]}{|T_i(x)|}=1$ for any $i\in \mathbb{N}^+$. By the fact that the eigenvalues of the probability transition matrix $\mathbf{P}$ relies in $[-1,1]$, we have $\Vert T_i(\mathbf{P})\Vert_2\leq 1$. By Cauchy-Schwarz inequality, $\Vert T_i(\mathbf{P})\mathbf{e}_u\Vert_1\leq \sqrt{n}\Vert T_i(\mathbf{P})\mathbf{e}_u\Vert_2\leq \sqrt{n}$. As a result, $C_1\leq \sqrt{n}$.}

    \textcolor{blue}{In each iteration we maintain $\Vert \hat{\mathbf{v}}_i\Vert_2=1$. So again, by Cauchy-Schwarz inequality and by $\Vert \mathcal{A}\Vert_2\leq 1$, we have $\Vert \mathcal{A}\hat{\mathbf{v}}_i^+\Vert_1\leq \sqrt{n}\Vert \mathcal{A}\hat{\mathbf{v}}_i^+\Vert_2\leq \sqrt{n}\Vert \hat{\mathbf{v}}_i^+\Vert_2\leq \sqrt{n}\Vert \hat{\mathbf{v}}_i\Vert_2=\sqrt{n}$, and the same upper bound also holds for $\Vert \mathcal{A}\hat{\mathbf{v}}_i^-\Vert_1$. As a result, $C_2\leq 3\sqrt{n}$. So the runtime bound of Algorithm \ref{algo:lanczos_local} can be simplified by $\tilde{O}(\kappa^{2.75}n/\epsilon)$.}
\end{proof}
}

Corollary \ref{coro:lzpush_time_error} establishes that Lanczos Push achieves sublinear runtime dependence on graph size $m$ while maintaining sub-cubic dependence on the condition number $\kappa$. This result confirms our key intuition: the algorithm preserves faster $\kappa$-dependent convergence rate while operating in sublinear time in terms of $m$. Furthermore, our analysis provides the first theoretical justification for implementing Lanczos in a local manner. However, our experimental results consistently outperform the worst-case bounds in Corollary \ref{coro:lzpush_time_error}, suggesting these theoretical guarantees may not be tight. In particular, the empirical behavior of $C_1$ and $C_2$ demonstrates better scaling than simply using the Cauchy-Schwarz inequality in our proof. Moreover, our experiments (Exp-9, Section 6) reveal an intriguing pattern: $C_1$ and $C_2$ tend to grow slower on poor expansion graphs (e.g., road networks) and faster on good expansion graphs (e.g., social networks). This finding provides further evidence that Lanczos Push is particularly well-suited for poor expansion graphs, and we hope that there will be sharper theoretical bounds or further findings to explain this phenomenon in future studies.


\comment{
 \subsubsection{Sketch of the proof for Theorem \ref{thm:lanczos_push_error}.} 
 
Due to space limits we only provide a sketch, see our full version for the complete proof ~\cite{full-version}. Generally speaking, the proof of Theorem \ref{thm:lanczos_push_error} is split into three steps. 
 
 \stitle{Step 1.} For the first step, we observe that the error between $r'_\mathcal{G}(s,t)$ and $r_\mathcal{G}(s,t)$ can be expressed by the following formula:
 \begin{align*}
     |r'_\mathcal{G}(s,t)-r_\mathcal{G}(s,t)|/\left(1/d_s+1/d_t\right)=|\mathbf{v}_1^T (\mathbf{I}-\mathcal{A})^\dagger\mathbf{v}_1-\mathbf{v}_1^T\hat{\mathbf{V}}(\mathbf{I}-\hat{\mathbf{T}})^{-1}\mathbf{e}_1|.
 \end{align*}

Where this is further split by:

\begin{align*}
    |\mathbf{v}_1^T (\mathbf{I}-\mathcal{A})^\dagger \mathbf{v}_1-\mathbf{v}_1^T \hat{\mathbf{V}}(\mathbf{I}-\hat{\mathbf{T}})^{-1}\mathbf{e}_1| 
     &\leq |\mathbf{v}_1^T (\mathbf{I}-\mathcal{A})^\dagger \mathbf{v}_1 - \mathbf{v}_1^T p(\mathcal{A}) \mathbf{v}_1 | \\
   & + | \mathbf{v}_1^T p(\mathcal{A}) \mathbf{v}_1 -\hat{\mathbf{v}}_1^T \hat{\mathbf{V}}p(\hat{\mathbf{T}})\mathbf{e}_1  | \\
    &+ |  \hat{\mathbf{v}}_1^T \hat{\mathbf{V}} p(\hat{\mathbf{T}})\mathbf{e}_1 -  \mathbf{v}_1^T \hat{\mathbf{V}}(\mathbf{I}-\hat{\mathbf{T}})^{-1}\mathbf{e}_1| .
\end{align*}

For $p$ be some given polynomial of degree at most $k$. For the first term of the right hand side of the above inequality, we show that this is smaller than $\frac{\epsilon'}{8}$ by setting the iteration number $k=O(\sqrt{\kappa}\log (\frac{\kappa}{\epsilon'}))$. For the third term, by Assumption \ref{assump:lzpush_error} (i), we also prove this is smaller than $\frac{\epsilon'}{8}$. So the above inequality is further bounded by:
\begin{align*}
    |\mathbf{v}_1^T (\mathbf{I}-\mathcal{A})^\dagger \mathbf{v}_1-\hat{\mathbf{v}}_1^T \hat{\mathbf{V}}(\mathbf{I}-\hat{\mathbf{T}})^{-1}\mathbf{e}_1| 
    \leq| \mathbf{v}_1^T p(\mathcal{A}) \mathbf{v}_1 -\hat{\mathbf{v}}_1^T \hat{\mathbf{V}}p(\hat{\mathbf{T}})\mathbf{e}_1  | +\frac{\epsilon'}{4}
\end{align*}

Next we just need to focus on the analysis for the error term $| \mathbf{v}_1^T p(\mathcal{A}) \mathbf{v}_1 -\hat{\mathbf{v}}_1^T \hat{\mathbf{V}}p(\hat{\mathbf{T}})\mathbf{e}_1  | $.

\stitle{Step 2.} To bound the error term $| \mathbf{v}_1^T p(\mathcal{A}) \mathbf{v}_1 -\hat{\mathbf{v}}_1^T \hat{\mathbf{V}}p(\hat{\mathbf{T}})\mathbf{e}_1  | $, where $p$ is the given polynomial of degree at most $k$, we utilize the technique of Chebyshev polynomials. To reach this end, we first briefly introduce the definitions of the first and second kinds of Chebyshev polynomials here.

\begin{definition}
    The first kind of Chebyshev polynomials $\{T_l(x)\}_{l\geq 0}$ is generated by the following recurrence:
    \begin{align*}
    T_0(x)&=1, T_1(x)= x \\
    T_{l+1}(x)&=2xT_l(x)-T_{l-1}(x),  \ \ \  l\geq 1.
\end{align*}
\end{definition}
\begin{definition}
    The second kind of Chebyshev polynomials $\{U_l(x)\}_{l\geq 0}$ is generated by the following recurrence:
    \begin{align*}
    U_0(x)&=1, U_1(x)= 2x \\
    U_{l+1}(x)&=2xU_l(x)-U_{l-1}(x),  \ \ \  l\geq 1.
\end{align*}
\end{definition}

For our analysis, we choose $p(x)=\sum_{l=0}^{k}{c_lT_l(x)}$ be the linear combination of Chebyshev polynomials with coefficients $\sum_l{|c_l|}\leq \tilde{O}(\kappa)$. Therefore, we can bound the deviation between $ \mathbf{v}_1^T p(\mathcal{A})\mathbf{v}_1$ and $\hat{\mathbf{v}}_1^T\hat{\mathbf{V}}p(\hat{\mathbf{T}})\mathbf{e}_1$ by the following equation:

\begin{equation}\label{equ:lzpush_polynomial_error}
    \begin{aligned}
     \mathbf{v}_1^T p(\mathcal{A})\mathbf{v}_1 - \mathbf{v}_1^T\hat{\mathbf{V}}p(\hat{\mathbf{T}})\mathbf{e}_1&=\sum_{l=0}^{k}{c_l \mathbf{v}_1^T(\hat{\mathbf{V}}T_l(\hat{\mathbf{T}})\mathbf{e}_1-T_l(\mathcal{A})\mathbf{v}_1)}.
    \end{aligned}
\end{equation}

We define $d_l= \hat{\mathbf{V}}T_l(\hat{\mathbf{T}})\mathbf{e}_1-T_l(\mathcal{A})\mathbf{v}_1$ be the deviation constraint on the Chebyshev basis $T_l(x)$. So the error term can be now simply expressed by:
\begin{align*}
    |\mathbf{v}_1^T p(\mathcal{A})\mathbf{v}_1 - \mathbf{v}_1^T\hat{\mathbf{V}}p(\hat{\mathbf{T}})\mathbf{e}_1|&\leq \sum_{l=0}^{k}|c_l||\mathbf{v}_1^Td_l|.
\end{align*}
\stitle{Step 3.} Finally we focus on the bound of $d_l$. To this end, we prove the following Lemma.

\begin{lemma}\label{lem:bound_dl}
    $|d_l(u)|\leq 6 k^{5/2} \epsilon d_u$ for $u=s,t$ and $l\leq k$.
\end{lemma}

Putting these things altogether, we finally prove that the total error term is bounded by:
\begin{align*}
|\mathbf{v}_1^T p(\mathcal{A})\mathbf{v}_1 - \hat{\mathbf{v}}_1^T\hat{\mathbf{V}}p(\hat{\mathbf{T}})\mathbf{e}_1|
&\leq \tilde{O}(\kappa^{2.25} \epsilon \Delta).
\end{align*}

Where $\Delta=d_s=d_t$ be the degree of $s,t$ (without loss of generality we assume $s,t$ be the nodes with maximum degree, see the full version of the proof for more explanations). Therefore, by setting $\epsilon=\tilde{\Omega}(\frac{\epsilon'}{\kappa^{2.25}})$, we can make $|\mathbf{v}_1^T p(\mathcal{A})\mathbf{v}_1 - \hat{\mathbf{v}}_1^T\hat{\mathbf{V}}p(\hat{\mathbf{T}})\mathbf{e}_1|\leq \frac{\epsilon'}{4}\Delta$. Equivalently, we have:
\begin{align*}
    |\hat{r}_\mathcal{G}(s,t)-r_\mathcal{G}(s,t)|/(\frac{1}{d_s}+ \frac{1}{d_t}) & \leq 
     | \mathbf{v}_1^T p(\mathcal{A}) \mathbf{v}_1 -\hat{\mathbf{v}}_1^T \hat{\mathbf{V}}p(\hat{\mathbf{T}})\mathbf{e}_1  | +\frac{\epsilon'}{4}\\
      &\leq\frac{\epsilon'}{4}\Delta + \frac{\epsilon'}{4}.
\end{align*}

And therefore
\begin{align*}
    |\hat{r}_\mathcal{G}(s,t)-r_\mathcal{G}(s,t)| \leq (\frac{\epsilon'}{4}+\frac{\epsilon'}{4}\Delta)\frac{2}{\Delta} \leq \epsilon'.
\end{align*}

This finishes the proof.\hfill\qed
}
\comment{
\subsubsection{The proof for Theorem \ref{thm:lanczos_push_runtime}.} 
We provide the proof for the time complexity of the Lanczos Push. Recall that in each iteration $i\leq k$, we perform the subset Lanczos recurrence: $\hat{\beta}_{i+1}\hat{\mathbf{v}}_{i+1}=AMV(\mathcal{A},\hat{\mathbf{v}}_i)-\hat{\alpha}_i\hat{\mathbf{v}}_i|_{S_i}-\hat{\beta}_i\hat{\mathbf{v}}_{i-1}|_{S_{i-1}}$. So the time complexity analysis for $i^{th}$ step is split into two parts: (i) bound the operation numbers for computing $AMV(\mathcal{A},\hat{\mathbf{v}}_i)$; (ii) bound the operation numbers for computing $\hat{\alpha}_i,\hat{\beta}_i$ and other addition/subtraction operations.

\stitle{Step 1.} For the computation of $AMV(\mathcal{A},\hat{\mathbf{v}}_i)$ (Line 4-8 in Algorithm \ref{algo:lanczos_local}), recall we only perform the matrix-vector multiplication $\mathcal{A}\hat{\mathbf{v}}_i$ on edge $(u,v)$ with $|\hat{\mathbf{v}}_{i}(u)|>\epsilon \sqrt{d_ud_v}$ and $\Vert \mathcal{A}\hat{\mathbf{v}}_i \Vert_1 \leq C'$. For our analysis, we define $E_i=\{(u,v)\in \mathcal{E}:\hat{\mathbf{v}}_i(u)\geq \epsilon\sqrt{d_ud_v}\}$ be the subset of edges that has been searched for the computation of $AMV(\mathcal{A},\hat{\mathbf{v}}_i)$. Moreover, we split $\hat{\mathbf{v}}_i=\hat{\mathbf{v}}_i^+-\hat{\mathbf{v}}_i^-$, where $\hat{\mathbf{v}}_i^+$ and $\hat{\mathbf{v}}_i^-$ denotes the positive and negative parts of $\hat{\mathbf{v}}_i$, respectively. Similarly, we define $E_i^+=\{(u,v)\in \mathcal{E}:\hat{\mathbf{v}}_i^+(u)\geq \epsilon\sqrt{d_ud_v}\}$ and $E_i^-=\{(u,v)\in \mathcal{E}:\hat{\mathbf{v}}_i^-(u)\geq \epsilon\sqrt{d_ud_v}\}$. Clearly $|E_i|=|E_i^+|+|E_i^-|$. Next we bound the size of $E_i^+$ and $E_i^-$ respectively. We note that:
\begin{align*}
    \Vert \mathcal{A}\hat{\mathbf{v}}_i^+\Vert_1&=\sum_{u\in\mathcal{V}}{|\mathcal{A}\hat{\mathbf{v}}_i^+(u)|}=\sum_{u\in\mathcal{V}}{\left|\sum_{v\in \mathcal{N}(u)}{\frac{\hat{\mathbf{v}}_i^+(u)}{\sqrt{d_ud_v}}}\right|}\\
    &=\sum_{u\in\mathcal{V}}{\sum_{v\in \mathcal{N}(u)}{\frac{\hat{\mathbf{v}}_i^+(u)}{\sqrt{d_ud_v}}}}=2\sum_{(u,v)\in \mathcal{E}}{\frac{\hat{\mathbf{v}}_i^+(u)}{\sqrt{d_ud_v}}}\\
    &\geq 2\sum_{(u,v)\in E_i^+}{\frac{\hat{\mathbf{v}}_i^+(u)}{\sqrt{d_ud_v}}}\geq 2\epsilon |E_i^+|.
\end{align*}
So $|E_i^+|\leq O(\frac{\Vert \mathcal{A}\hat{\mathbf{v}}_i^+\Vert_1}{\epsilon})$. Similarly, $|E_i^-|\leq O(\frac{\Vert \mathcal{A}\hat{\mathbf{v}}_i^-\Vert_1}{\epsilon})$. So the total operations for computing $AMV(\mathcal{A},\hat{\mathbf{v}}_i)$ is $|E_i|=|E_i^+|+|E_i^-|\leq O(\frac{\Vert \mathcal{A}\hat{\mathbf{v}}_i^+\Vert_1+\Vert \mathcal{A}\hat{\mathbf{v}}_i^-\Vert_1}{\epsilon})$, which is further bounded by $O(\frac{1}{\epsilon})$ by our Assumption ~\ref{assump:lzpush_error} (iii).

\stitle{Step 2.} We observe that the number of the other operations (Line 9-17 in Algorithm \ref{algo:lanczos_local}) can be bounded by $O(|S_i|+|S_{i-1}| + supp\{\hat{\mathbf{v}}_{i+1}\})$, since we only work on the subsets $S_i,S_{i-1}$ and $ supp\{\hat{\mathbf{v}}_{i+1}\}$, where $ supp\{\hat{\mathbf{v}}_{i+1}\}$ denotes the set of non-zero entries of $\hat{\mathbf{v}}_{i+1}$. However, since we set $S_i= \{u\in \mathcal{V}: |\hat{\mathbf{v}}_i(u)|> \epsilon d_u\}$, we have $|S_i|\leq \frac{\Vert \hat{\mathbf{v}}_i\Vert_1}{\epsilon}$, which is further bounded by $O(\frac{1}{\epsilon})$ by Assumption \ref{assump:lzpush_error} (iii). Similarly, $|S_{i-1}|\leq \frac{\Vert \hat{\mathbf{v}}_{i-1}\Vert_1}{\epsilon}$ and further bounded by $O(\frac{1}{\epsilon})$. In addition, we observe that $ supp\{\hat{\mathbf{v}}_{i+1}\}\leq |E_i|+|S_i|+|S_{i-1}|\leq O(\frac{1}{\epsilon})$. Therefore, the total operations of Algorithm \ref{algo:lanczos_local} in each iteration $i$ is bounded by $O(|E_i|+|S_i|+|S_{i-1}| + supp\{\hat{\mathbf{v}}_{i+1}\})=O(\frac{1}{\epsilon})$. So the number of operations of Algorithm \ref{algo:lanczos_local} in total $k$ iterations is bounded by $O(\frac{k}{\epsilon})$. Finally, the computation of the final output $\left(\frac{1}{d_s}+\frac{1}{d_t}\right)\hat{\mathbf{v}}_1^T \hat{\mathbf{V}} (\mathbf{I}-\hat{\mathbf{T}})^{-1}\mathbf{e}_1$ takes $O(k^{2.37})$ time, so the total time complexity is $O(\frac{k}{\epsilon}+k^{2.37})$. This finishes the proof. \hfill\qed

}
\comment{
\stitle{Remark.} There is an additional remark that the Lanczos Push algorithm experimentally scales better than the $O(n)$ time complexity. Actually we can remove the $O(n)$ term in the theoretical analysis when assuming the following two additional conditions:

\begin{assumption}\label{assump2}
The following two additional proposition holds:

(i) $\Vert T_l(\mathbf{P}) \mathbf{e}_u \Vert_1\leq C$ for $u=s,t$ for some constant $C$ for $\forall l\leq k$.

(ii) $\Vert \hat{\mathbf{v}}_i \Vert_1,\Vert \mathcal{A}\hat{\mathbf{v}}_i^+ \Vert_1,\Vert \mathcal{A}\hat{\mathbf{v}}_i^- \Vert_1 \leq C'$ for some constant $C'$ for $\forall i\leq k$. Where $\hat{\mathbf{v}}_i^+,\hat{\mathbf{v}}_i^-$ are the positive parts and negative parts of $\hat{\mathbf{v}}_i$, respectively.
\end{assumption}

Based on the additional assumption, we prove a stronger result for the complexity of Lanczos Push. The details of the reasonability of the additional assumption are provided in Sec. \ref{sec:exp}.

\begin{theorem}\label{thm:lzpush_err_improve}
    When assumption \ref{assump2} holds, Algorithm \ref{algo:lanczos_local} outputs an approximation $\hat{r}_\mathcal{G}(s,t)$ that satisfies the $\epsilon$-absolute error guarantee in $\tilde{O}(\kappa^{2.75}/\epsilon)$ runtime.
\end{theorem}
}

\section{Lower Bounds} ~\label{sec:lower-bounds}
In this section, we establish the lower bound of $\kappa$ dependency for the approximation of $r_\mathcal{G}(s,t)$. Our proof is based on the former result proposed by Cai et al. ~\cite{cai2023effective}. Below, we briefly introduce the result established in ~\cite{cai2023effective}. First, we consider two $d$-regular expander graphs $H_1$, $H_2$ (that is, $H_1$, $H_2$ has condition number $\kappa=O(1)$). Specifically, we choose $H_1$, $H_2$ to be two copies of Ramanujan graph ~\cite{morgenstern1994existence} on $n/2$ vertices (three regular graph with $\kappa=O(1)$). Next, we connect a node $s\in H_1$ and $t\in H_2$ by an edge $(s,t)$, generating the  graph $\mathcal{G}_1$. Clearly, $r_{\mathcal{G}_1}(s,t)=1$. For the construction of $\mathcal{G}_2$, we select a random edge $(u,u') \in H_1$ and $(v,v')\in H_2$. we remove the edges $(u,u')$ and $(v,v')$, and add two new edges $(u,v)$ and $(u',v')$. Then, ~\cite{cai2023effective} prove that after this operation, the resulting graph $\mathcal{G}_2$ has the effective resistance $r_{\mathcal{G}_2}(s,t)\leq 0.99$. However, any algorithm that distinguishes $\mathcal{G}_1$ and $\mathcal{G}_2$ with positive probability requires at least $\Omega(n)$ queries. For better understanding, we summarize this result into the following theorem.

\begin{figure}
    \centering
    \includegraphics[scale=0.2]{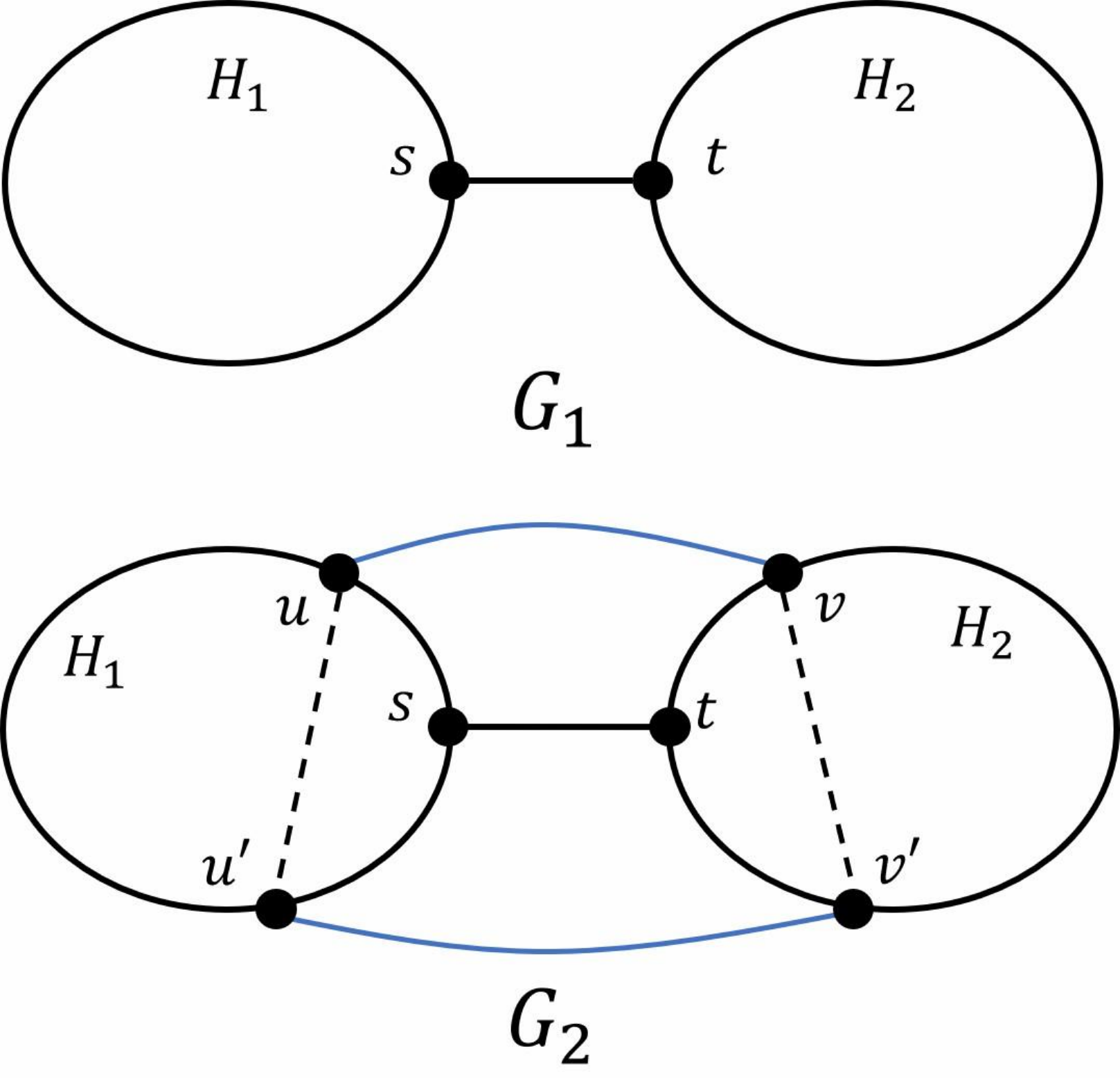} \vspace{-0.3cm}
    \caption{The construction of the lower bound}\vspace{-0.5cm}
    \label{fig:construction_lower_bound}
\end{figure}

\begin{theorem}{(Theorem 1.1 in~\cite{cai2023effective})}\label{thm:lower_bound_1}
  If $\mathcal{G}_1$ and $\mathcal{G}_2$ are constructed in the manner described above, then we have $|r_{\mathcal{G}_1}(s,t)-r_{\mathcal{G}_2}(s,t)|\geq 0.01$. But, any (randomized) algorithm that distinguish $\mathcal{G}_1$ and $\mathcal{G}_2$ with probability $\geq 0.6$ requires at least $\Omega (n)$ queries.
\end{theorem}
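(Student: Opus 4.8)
The plan is to prove the two assertions separately. For the resistance gap, I would first note that in $\mathcal{G}_1$ the edge $(s,t)$ is a bridge: deleting it disconnects $H_1$ from $H_2$, so every unit of electrical flow from $s$ to $t$ is forced through $(s,t)$ and therefore $r_{\mathcal{G}_1}(s,t)=1$ exactly. In $\mathcal{G}_2$ the edge $(s,t)$ is no longer a bridge, since $(u,v)$ together with $(u',v')$ provides an alternative crossing of the cut. By Rayleigh's monotonicity law, $r_{\mathcal{G}_2}(s,t)$ is at most the effective resistance of the subnetwork $N$ of $\mathcal{G}_2$ consisting of $(s,t)$, all edges of $H_1\setminus\{(u,u')\}$, the new edge $(u,v)$, and all edges of $H_2\setminus\{(v,v')\}$. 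Collapsing the two expander halves to two-terminal networks, $N$ reduces by series-parallel rules to the unit edge $(s,t)$ in parallel with a path of resistance $\rho_1+1+\rho_2$, where $\rho_1=r_{H_1\setminus\{(u,u')\}}(s,u)$ and $\rho_2=r_{H_2\setminus\{(v,v')\}}(v,t)$, so $r_{\mathcal{G}_2}(s,t)\le 1-\tfrac{1}{\rho_1+\rho_2+2}$. The key quantitative input is that $\rho_1,\rho_2$ are bounded by an absolute constant: for a constant-degree expander the smallest nonzero eigenvalue of $\mathbf{L}$ is bounded below by a constant, so $r(x,y)=(\mathbf{e}_x-\mathbf{e}_y)^T\mathbf{L}^\dagger(\mathbf{e}_x-\mathbf{e}_y)\le 2\|\mathbf{L}^\dagger\|_2=O(1)$; moreover deleting a single edge of leverage score strictly below $1$ inflates these resistances only by a bounded factor, since the rank-one update (parallel-law) identity gives $r_{H_1\setminus\{(u,u')\}}(s,u)\le r_{H_1}(s,u)/(1-r_{H_1}(u,u'))$, and a typical edge of the $3$-regular Ramanujan graph has leverage score a constant below $1$ (the average being $\approx 2/3$), which is what the random choice of $(u,u'),(v,v')$ secures. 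Substituting the explicit constants coming from the $3$-regular Ramanujan spectral gap makes $\tfrac{1}{\rho_1+\rho_2+2}$ exceed $0.01$, hence $r_{\mathcal{G}_2}(s,t)\le 0.99$ and the claimed gap follows.

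For the query lower bound I would run a Yao-style indistinguishability argument in the degree/neighbor/jump query model. The graphs $\mathcal{G}_1$ and $\mathcal{G}_2$ are identical except in the incidences of the four vertices $u,u',v,v'$, which are chosen uniformly at random among the $\Theta(n)$ vertices of $H_1$ and $H_2$. Fixing a (without loss of generality deterministic) algorithm that makes $q$ queries, I would couple its executions on the two instances so that the query--answer transcripts coincide until the first time the algorithm issues a degree or neighbor query at one of $u,u',v,v'$, or receives one of these vertices as the reply to a neighbor or jump query. Conditioned on the transcript produced so far, the still-unrevealed special vertices remain essentially uniform among the as-yet-unseen vertices, so each further query lands on a special vertex with probability $O(1/n)$; a union bound over $q$ queries gives transcript agreement with probability $1-O(q/n)$, and hence total-variation distance $O(q/n)$ between the algorithm's output distributions on $\mathcal{G}_1$ and $\mathcal{G}_2$. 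If $q=o(n)$ this distance is $o(1)$, so the algorithm cannot be correct on both instances with probability $\ge 0.6$; contrapositively, distinguishing with probability $\ge 0.6$ forces $q=\Omega(n)$.

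I expect the main obstacle to be making the indistinguishability argument airtight against an \emph{adaptive} algorithm that freely interleaves all three query types --- in particular, showing that neither local exploration outward from $s$ and $t$ nor a batch of random jumps can reach one of the randomly placed vertices $u,u',v,v'$ with probability $\omega(q/n)$, and that conditioning on the answers seen so far does not meaningfully bias the locations of the four switching endpoints (handled by a deferred-decisions / martingale argument that reveals each switching edge only when it is actually touched). A secondary, more technical point is the verification in the first part that deleting one random edge keeps the condition number of the Ramanujan graphs $O(1)$ (equivalently, keeps $\rho_1,\rho_2=O(1)$), which I would discharge through the leverage-score and parallel-law estimate above rather than through eigenvalue interlacing, which is too lossy at this scale.
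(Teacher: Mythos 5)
The paper does not prove this statement. It is imported verbatim as Theorem~1.1 of Cai et al.\ \cite{cai2023effective} and is used purely as a black box to derive Lemma~\ref{lem:kappa_G1} and Theorem~\ref{thm:kappa_lower_bound}; there is therefore no in-paper proof against which to compare your attempt, and the only basis for evaluation is the argument itself.

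On its own merits, the two-part plan --- the bridge observation giving $r_{\mathcal{G}_1}(s,t)=1$, Rayleigh monotonicity plus the exact two-terminal series--parallel collapse for the upper bound on $r_{\mathcal{G}_2}(s,t)$, and a Yao-style deferred-decisions coupling for the $\Omega(n)$ query lower bound --- is the natural route to a statement of this form, and the lower-bound half is standard: since degrees are unchanged between $\mathcal{G}_1$ and $\mathcal{G}_2$, the transcripts differ only when a neighbor list of one of the four switch endpoints is read or one of them is returned, which is $O(1/n)$ per query, and the coupling should close once the adaptivity bookkeeping you already flag is written out. The resistance-gap half, however, has a genuine quantitative gap. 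Your chain $r_{\mathcal{G}_2}(s,t)\le 1-\tfrac{1}{\rho_1+\rho_2+2}$ together with the Sherman--Morrison bound $\rho_i\le r_{H_i}(\cdot,\cdot)/(1-\ell_{e_i})$ needs $\rho_1+\rho_2\le 98$ to beat $0.99$, but the spectral-gap estimate you invoke gives only $r_{H_i}(\cdot,\cdot)\le 2/\lambda_2(\mathbf{L}_{H_i})\approx 12$ for a $3$-regular Ramanujan graph, and ``average leverage $\approx 2/3$'' does not force the randomly chosen edge to have leverage much below $1$; if $\ell_{e_i}\approx 0.9$ you get $\rho_i\approx 120$ and the bound fails, and even Markov at mean $2/3$ only certifies $\ell_{e_i}\le 0.76$ (the threshold where your numbers barely close) for a small constant fraction of edges. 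Securing the explicit constant $0.01$ requires an extra ingredient you have not supplied: either a concentration (or edge-transitivity) argument pinning the leverage scores of $H_i$ near $2/3$, or, more robustly, replacing the worst-case $r_{H_i}(s,u)\le O(1/\lambda_2)$ by the much tighter fact that in a bounded-degree expander the effective resistance between a fixed vertex and a random vertex is close to $\tfrac{1}{d_s}+\tfrac{1}{d_u}\approx \tfrac{2}{3}$. Because the claim is an explicit numerical separation and not a generic $O(1)$ bound, that step cannot be left as an assertion.
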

Based on Theorem \ref{thm:lower_bound_1}, we can prove that the condition number of $\mathcal{G}_1$ is $\kappa=\Theta(n)$, which is stated in the following lemma.

\begin{lemma}\label{lem:kappa_G1}
   If $\mathcal{G}_1$ is constructed in the manner described above, we have $\kappa (\mathcal{G}_1)= \Theta(n)$.
\end{lemma}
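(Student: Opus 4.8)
The plan is to translate the claim $\kappa(\mathcal{G}_1) = \Theta(n)$ into a two-sided estimate on the smallest nonzero eigenvalue $\mu_2$ of the normalized Laplacian of $\mathcal{G}_1$, via the identity $\kappa = 2/\mu_2$. Throughout I would use the variational characterization
$$\mu_2 = \min_{\substack{y \in \mathbb{R}^n \\ \sum_v d_v y_v = 0}} \frac{\sum_{(u,v)\in \mathcal{E}}(y_u - y_v)^2}{\sum_v d_v y_v^2},$$
obtained from the Rayleigh quotient of $\mathcal{L}$ after the substitution $y = \mathbf{D}^{-1/2}x$. Here $d_v$ is the degree of $v$ in $\mathcal{G}_1$, which is $3$ for every vertex except the two bridge endpoints $s,t$ where it is $4$; in particular $\mathrm{vol}(H_i) := \sum_{v\in H_i} d_v = 3n/2 + 1$ for $i=1,2$.

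For the upper bound $\mu_2 = O(1/n)$ (hence $\kappa = \Omega(n)$) I would exhibit an explicit test vector: set $y \equiv 1$ on $H_1$ and $y \equiv -1$ on $H_2$. Since $\mathrm{vol}(H_1) = \mathrm{vol}(H_2)$, this $y$ satisfies the constraint $\sum_v d_v y_v = 0$; the only edge contributing to the numerator is the bridge $(s,t)$, giving numerator $(1-(-1))^2 = 4$, while the denominator equals $\mathrm{vol}(H_1) + \mathrm{vol}(H_2) = 3n + 2$. Hence $\mu_2 \le 4/(3n+2)$ and $\kappa \ge (3n+2)/2 = \Omega(n)$.

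For the lower bound $\mu_2 = \Omega(1/n)$ (hence $\kappa = O(n)$) the idea is to exploit that each half $H_i$ is a $3$-regular Ramanujan graph and so has a constant spectral gap $c_0 > 0$ on its own. Given any feasible $y$, let $a$ (resp. $b$) be the simple mean of $y$ over $H_1$ (resp. $H_2$), and write $R(y) = R_{H_1}(y) + R_{H_2}(y) + (y_s - y_t)^2$, where $R_{H_i}$ sums over edges inside $H_i$. The constant gap of $H_i$ gives $\sum_{v\in H_i}(y_v - a)^2 \le R_{H_1}(y)/(3c_0)$ and likewise for $H_2$; since $(y_s-a)^2 \le \sum_{v\in H_1}(y_v-a)^2$ and $(y_t-b)^2 \le \sum_{v\in H_2}(y_v-b)^2$, the triangle inequality then yields $(a-b)^2 \le O(1)\,R(y)$. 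The degree-weighted constraint $\sum_v d_v y_v = 0$ unfolds to $3(n/2)(a+b) + y_s + y_t = 0$, so $(a+b)^2 = O(1/n^2)(y_s^2 + y_t^2) = O(1/n^2)\sum_v d_v y_v^2$ (using $d_v \ge 3$). Plugging the decomposition $\sum_{v\in H_i} d_v y_v^2 \le 4\big(\sum_{v\in H_i}(y_v - a)^2 + (n/2)a^2\big)$ into $\sum_v d_v y_v^2 = \sum_{v\in H_1} d_v y_v^2 + \sum_{v\in H_2} d_v y_v^2$, together with the above bounds on $(a-b)^2$ and $(a+b)^2$, produces an inequality of the form $\sum_v d_v y_v^2 \le O(n)\,R(y) + \tfrac{c}{n}\sum_v d_v y_v^2$ with $c < 1$ for all $n \ge 1$, which rearranges to $\sum_v d_v y_v^2 \le O(n)\,R(y)$; minimizing over $y$ gives $\mu_2 \ge \Omega(1/n)$ and thus $\kappa = O(n)$. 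Together with the upper bound this establishes $\kappa(\mathcal{G}_1) = \Theta(n)$.

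The routine part is the bookkeeping of constants in the final rearrangement; the conceptual crux is the lower bound. A naive Cheeger-inequality argument on the obvious bottleneck cut (the single bridge edge, of conductance $\Theta(1/n)$) only yields $\mu_2 = \Omega(1/n^2)$. Getting the tight $\Omega(1/n)$ requires precisely the observation that each half is itself a good expander, so the "slow" direction of $\mathcal{G}_1$ is only the two-block/bridge direction, which the single linear constraint $\sum_v d_v y_v = 0$ already pins down up to an $O(1/n)$-scale term.
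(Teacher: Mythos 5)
Your proof is correct and follows essentially the same strategy as the paper's: the upper bound on $\mu_2$ comes from the block test vector constant on each half (the paper phrases this via the easy direction of Cheeger's inequality, but the underlying cut and estimate are the same), and the lower bound decomposes the Dirichlet form into the two expander halves plus the bridge term, uses the constant spectral gap of $H_1,H_2$ to control within-block variance, and uses the orthogonality constraint to pin down the two block means. The only cosmetic differences are that the paper works with the unnormalized Laplacian and converts at the end using the bounded degree, arguing by contradiction with an explicit constant $c\leq 1/32$, whereas you work directly with the degree-weighted Rayleigh quotient and present the argument as a direct chain of inequalities.
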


\comment{
\begin{proof}
   The proof is partially inspired by Lemma 4.1 in ~\cite{andoni2018solving}. Recall that the condition number is defined as $\kappa=\frac{2}{\lambda_2}$, where $\lambda_2$ is the second smallest eigenvalue of the normalized Laplacian matrix. First, the upper bound of $\lambda_2$ is given by Cheeger's inequality (Theorem \ref{thm:cheeger}). By the definition of  conductance, we have $\frac{2}{3n}=\phi (V(H_1))\geq\phi_{\mathcal{G}_1}\geq \frac{1}{2}\lambda_2$, where the first equality holds because Remanujan graph is $3$ regular graph and $(s,t)$ is the only cut edge between $V(H_1)$ and $V(H_2)$. Therefore, we have $\lambda_2\leq \frac{4}{3n}=O(\frac{1}{n})$. Next, we prove that $\lambda_2\geq \Omega(\frac{1}{n})$. To reach this end, we consider the Laplacian matrix of graph $\mathcal{G}_1$. By the construction, we have
$L_{\mathcal{G}_1}=L_{H_1}+L_{H_2}+\delta_{s,t}\delta_{s,t}^T$, where $\delta_{s,t}$ is the indicator vector that takes value $1$ at $s$, $-1$ at $t$ and $0$ otherwise. Subsequently, for any vector $x\perp \mathbf{1}$, we split $x$ by $x=x_1+x_2$, where $x_1$ only takes value in $u\in V(H_1)$ and $x_2$ only takes value in $u\in V(H_2)$ respectively. Since $x\perp \mathbf{1}$, we define $m=\frac{2}{n}\sum_{u\in V(H_1)}{x_1(u)}=-\frac{2}{n}\sum_{u\in V(H_2)}{x_2(u)}$. Next, we let $x_1'=x_1-m\mathbb{I}_{V(H_1)}$ and $x_2'=x_2+m\mathbb{I}_{V(H_2)}$ ($\mathbb{I}_{V(H_1)}$ is the indicator vector that takes value $1$ for $u\in V(H_1)$ and $0$ otherwise). Now we suppose the opposite that for any small constant $c$, there exist sufficiently large $n$ such that $\lambda_2(L_{\mathcal{G}_1})\leq \frac{c}{n}$. Then there exists some $x\perp \mathbf{1}$ with $\Vert x \Vert_2=1$, such that $x^T L_{\mathcal{G}_1}x\leq \frac{c}{n}$. Therefore, we can obtain the following equation.
    \begin{equation}
        x^T L_{\mathcal{G}_1}x=x_1^T L_{H_1}x_1+ x_2^T L_{H_2} x_2 + (x(s)-x(t))^2.
    \end{equation}
    Then, the three terms of the right hand side of the above equation all $\leq \frac{c}{n}$. Using the fact that $H_1$ is an expander ($\lambda_2(L_{H_1})\geq \frac{1}{2}$ by $H_1$ Ramanujan graph), we have:
\begin{equation}
    \frac{1}{2}\Vert x_1'\Vert_2^2\leq x_1'^TL_{H_1}x_1'=x_1^TL_{H_1}x_1\leq \frac{c}{n}.
\end{equation}
Thus, $\Vert x_1'\Vert_2\leq \sqrt{\frac{2c}{n}}$. And similarly, $\Vert x_2'\Vert_2\leq \sqrt{\frac{2c}{n}}$. However, since $\Vert x\Vert_2=1$, we have:
\begin{equation}
    1=\Vert x\Vert_2\leq \Vert x_1'\Vert_2+\Vert x_2'\Vert_2+m\sqrt{n}.
\end{equation}
Therefore, we have $m\geq \frac{1}{2\sqrt{n}}$ when setting $c\leq \frac{1}{32}$. However, on the other hand, we have the following inequation.
\begin{equation}\small
    (x(s)-x(t))^2=(x_1'(s)+m-(x_2'(t)-m))^2\geq(2m-2\sqrt{\frac{2c}{n}})^2\geq \frac{1}{4n}.
\end{equation}
which contradicts to $(x(s)-x(t))^2\leq \frac{c}{n}$ since $c\leq \frac{1}{32}$. Therefore, there must exist some constant $c$, such that $\lambda_2(L_{\mathcal{G}_1})\geq \frac{c}{n}$ for any sufficiently large $n$. Therefore, for the normalized Laplacian matrix, we have $\lambda_2(\mathcal{L}_{\mathcal{G}_1})\geq \frac{1}{4}\lambda_2(L_{\mathcal{G}_1})=\Omega(\frac{1}{n})$. Putting it together, we have $\lambda_2=\lambda_2(\mathcal{L}_{\mathcal{G}_1})=\Theta (\frac{1}{n})$. This is equivalent to $\kappa=\Theta (n)$.
\end{proof}
}

Combining with Theorem \ref{thm:lower_bound_1} and Lemma \ref{lem:kappa_G1}, we can prove that given any $\epsilon< 0.01$, any algorithm computes the $\epsilon$-approximate RD value $r_\mathcal{G}(s,t)$ requires at least $\Omega(\kappa)$ queries. 

\begin{theorem}\label{thm:kappa_lower_bound}
    Any algorithm that approximates $r_\mathcal{G}(s,t)$ with absolute error $\epsilon\leq 0.01$ with success probability $\geq 0.6$ requires $\Omega(\kappa)$ queries.
\end{theorem}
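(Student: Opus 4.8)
The claim is essentially a repackaging of Theorem~\ref{thm:lower_bound_1} together with Lemma~\ref{lem:kappa_G1}, so the plan is a standard reduction. First I would argue that an $\epsilon$-absolute-error estimator for $r_\mathcal{G}(s,t)$ yields an algorithm that distinguishes the two hard instances $\mathcal{G}_1,\mathcal{G}_2$ of~\cite{cai2023effective}, and then I would translate the resulting $\Omega(n)$ query lower bound into $\Omega(\kappa)$ using that every graph in that family has condition number $\Theta(n)$. Concretely, suppose an algorithm, working in the adjacency query model on an input promised to be either $\mathcal{G}_1$ or $\mathcal{G}_2$ (with the same designated pair $s,t$), outputs $\hat r$ with $|\hat r - r_\mathcal{G}(s,t)|\le\epsilon$ and $\epsilon\le 0.01$, with probability at least $0.6$. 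Since $r_{\mathcal{G}_1}(s,t)=1$ while $r_{\mathcal{G}_2}(s,t)\le 0.99$, the two true values are separated by a gap $g=|r_{\mathcal{G}_1}(s,t)-r_{\mathcal{G}_2}(s,t)|\ge 0.01$; choosing the threshold $\tau$ at the midpoint of the two true values (equivalently $\tau=1-g/2$) and reporting ``$\mathcal{G}_1$'' exactly when $\hat r\ge\tau$ identifies the input correctly as long as the estimation error $\epsilon$ is below $g/2$. Hence this wrapper distinguishes $\mathcal{G}_1$ from $\mathcal{G}_2$ with probability at least $0.6$ while issuing exactly the same queries as the estimator, so by Theorem~\ref{thm:lower_bound_1} it must make $\Omega(n)$ queries.

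Next I would pass from $n$ to $\kappa$. By Lemma~\ref{lem:kappa_G1}, $\kappa(\mathcal{G}_1)=\Theta(n)$, and the same Cheeger-type computation applies to $\mathcal{G}_2$: it is again two Ramanujan graphs on $n/2$ vertices — each with one random interior edge deleted, which leaves the normalized-Laplacian gap $\Omega(1)$ — joined by only a constant number of crossing edges, so its conductance is $\Theta(1/n)$ and $\kappa(\mathcal{G}_2)=\Theta(n)$ as well. Therefore, for every $n$ we have exhibited two connected inputs of condition number $\kappa=\Theta(n)$ on which any algorithm meeting the stated accuracy and confidence requirements must make $\Omega(n)=\Omega(\kappa)$ queries; equivalently, no such algorithm can run in $o(\kappa)$ queries in the worst case, which is exactly the theorem.

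The only points needing care are the constants, and these are precisely what~\cite{cai2023effective} and~\cite{morgenstern1994existence} supply: one must verify that the resistance gap between $\mathcal{G}_1$ and $\mathcal{G}_2$ genuinely exceeds $2\epsilon$ in the regime $\epsilon\le 0.01$ (so the threshold test is decisive), and that deleting a single random edge from a $3$-regular Ramanujan graph does not collapse its spectral gap (so the conductance bound $\Theta(1/n)$ for $\mathcal{G}_2$, and hence $\kappa(\mathcal{G}_2)=\Theta(n)$, survives). I do not expect either to be a real obstacle: the conceptually hard ingredients — the information-theoretic $\Omega(n)$ indistinguishability of $\mathcal{G}_1$ and $\mathcal{G}_2$, and the spectral-gap estimate $\lambda_2(\mathcal{L}_{\mathcal{G}_1})=\Theta(1/n)$ — are already established in Theorem~\ref{thm:lower_bound_1} and Lemma~\ref{lem:kappa_G1}, so what remains is essentially bookkeeping on the reduction and on the $\mathcal{G}_2$ condition number.
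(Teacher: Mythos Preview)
Your proposal is correct and follows essentially the same reduction as the paper: use an $\epsilon$-accurate estimator to distinguish $\mathcal{G}_1$ from $\mathcal{G}_2$, invoke the $\Omega(n)$ query lower bound of Theorem~\ref{thm:lower_bound_1}, and convert $n$ to $\kappa$ via Lemma~\ref{lem:kappa_G1}. You are in fact more careful than the paper in two places---spelling out the midpoint threshold test and checking that $\kappa(\mathcal{G}_2)=\Theta(n)$ as well---both of which the paper glosses over; your flagged concern about the constant (needing the gap to exceed $2\epsilon$) is legitimate and applies equally to the paper's own argument.
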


\comment{
\begin{proof}
    This statement holds immediately by combining with Theorem \ref{thm:lower_bound_1} and Lemma \ref{lem:kappa_G1}. If there is an algorithm that approximates $r_\mathcal{G}(s,t)$ with absolute error $\epsilon\leq 0.01$, then this algorithm distinguishes $\mathcal{G}_1$ from $\mathcal{G}_2$ (since the $s,t$-RD value of $\mathcal{G}_1$ and $\mathcal{G}_2$ differ at least $0.01$, by Theorem \ref{thm:lower_bound_1}). However, by Theorem \ref{thm:lower_bound_1}, any algorithm that distinguishes $\mathcal{G}_1$ from $\mathcal{G}_2$ requires $\Omega(n)$ queries, and $\kappa=\Theta(n)$ by Lemma \ref{lem:kappa_G1}. So any algorithm that requires $\Omega(\kappa)$ queries to distinguish $\mathcal{G}_1$ from $\mathcal{G}_2$ .
\end{proof}
}

This indicates that the time complexity of the local
algorithms is indeed relevant to the condition number $\kappa$, which has
not been emphasized in previous works for local RD computation.  We anticipate this $\Omega(\kappa)$ is not tight. Unfortunately, to the best of our knowledge, there are no better constructions, thus we leave how to match the upper and lower bounds as an interesting future direction. 

\section{Experimental Evaluation}\label{sec:exp}

\begin{table}[t!]
\small
	\centering
	\caption{Statistics of datasets} \vspace{-0.2cm}
        \label{tab:dataset}
	{
		\label{Datasets for vertex classification}
		\centering
		
		\begin{tabular}{cccccc}
			\toprule
			Datasets&$n$&$m$& $m/n$  & condition number $\kappa$ \\
			\midrule
			\dblp & 317,080 & 1,049,866  & 3.31 &717.62\\
			\youtube&1,134,890&2,987,624 &2.63 & 907.85 \\
			\livejournal&4,846,609&42,851,237&8.84&2785.51\\
			\orkut&3,072,626&11,718,5083&38.13&381.46\\
            \midrule
            \powergrid& 4,941 &6,594 & 1.33 & 7299.27\\
            \roadca&1,965,206 &2,766,607 & 1.41 & 64516.12\\
            \roadpa&1,088,092  & 1,541,898 & 1.42 & 24390.24\\
            \roadtx&1,379,917 & 1,921,660 & 1.39 & 28985.5\\
            \midrule
            \er & $10^2\sim 10^6$ & $n\log n$ & $\log n$ & $O(1)$\\
            \ba & $10^2\sim 10^6$ & $n\log n$ & $\log n$ & $O(1)$\\
			\bottomrule
		\end{tabular}
	}\vspace{-0.4cm}
\end{table}

\subsection{Experimental Setup}
We use 8 publicly available datasets\footnote{All the $8$ datasets can be downloaded from http://snap.stanford.edu/ and http://konect.cc/networks/} 
of varying sizes (Table~\ref{tab:dataset}), including $4$ social networks (\dblp, \youtube, \livejournal, \orkut) which serve as standard benchmarks 
for various graph algorithms \cite{liao2023resistance,wang2021approximate,wang2023singlenode,yang2023efficient,wu2021unifying} and $4$ datasets that 
represent hard cases (due to large $\kappa$) for previous RD algorithms ~\cite{liao2024efficient}. Among them, \roadca, \roadpa, and \roadtx are road networks, 
and \powergrid is an infrastructure network, statistically similar to road networks. Additionally, we artificially generate the classic Erdős-Rényi (\er) and  
Barabási-Albert (\ba) graphs with size $n$ varying from $10^2\sim 10^6$ and $m=n\log n$ for the scalability testing. To compute the condition number of these 
datasets, we use the standard Power Method to compute the second largest eigenvalue $\mu_2$ of the normalized adjacency matrix, and by definition, the condition 
number is $\kappa=\frac{2}{1-\mu_2}$. We set the threshold of the Power Method to $10^{-9}$ to make the approximation of $\mu_2$ sufficiently accurate. To evaluate 
the approximation errors of different algorithms, we compute \textit{ground-truth}  RD value using Power Method (\powermethod) with a sufficiently large truncation 
step $N=20000$. We compare our proposed algorithms, Lanczos iteration (\lanczos) and Lanczos Push (\lzpush) with 6 state-of-the-art (SOTA) baselines: 
\powermethod ~\cite{yang2023efficient}, \geer ~\cite{yang2023efficient}, \push ~\cite{liao2023resistance}, \bipush ~\cite{liao2023resistance}, also two very recent 
studies \bisper ~\cite{cui2025mixing} and \fastrd ~\cite{lu2025resistance}. Since the algorithms proposed in ~\cite{yang2023efficient,liao2023resistance} outperform 
the algorithms proposed in ~\cite{peng2021local} in all datasets, we no longer compare our algorithms with the algorithms proposed in ~\cite{peng2021local}. 
We randomly generate $50$ source nodes and $50$ sink nodes as query sets and report the average performance across them for different algorithms. 
By Definition \ref{def:ER-err}, we use absolute error (denoted as Absolute Err) to evaluate the estimation error of different algorithms. 
We conduct all experiments on a Linux 20.04 server with an Intel 2.0 GHz CPU and 128GB memory. 
All algorithms are implemented in C++ and compiled using GCC 9.3.0 with -O3 optimization. 
The source code is available at  \url{https://github.com/Ychun-yang/LanczosPush}.

\vspace{-0.2cm}
\subsection{Overall Empirical Results}

\begin{figure*}[t!]
    \vspace{-0.3cm}
    \centering
 	\subfigure[\dblp]{
            \centering
		\includegraphics[scale=0.25]{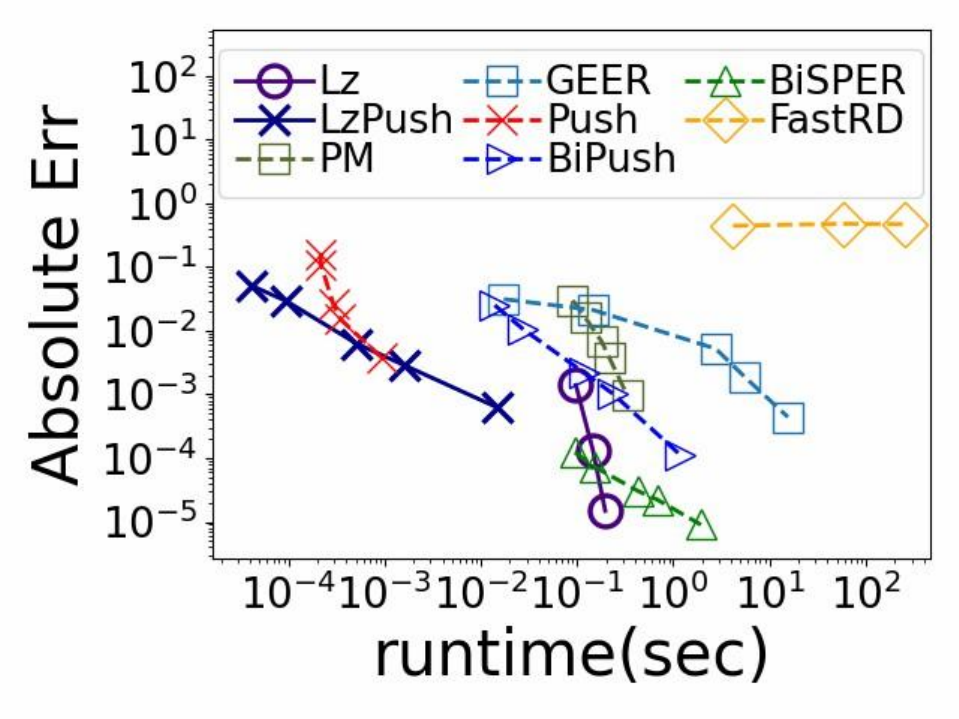}\hspace{-5mm} \label{1}
	}
	\quad
        \subfigure[\youtube]{
        \centering
		\includegraphics[scale=0.25]{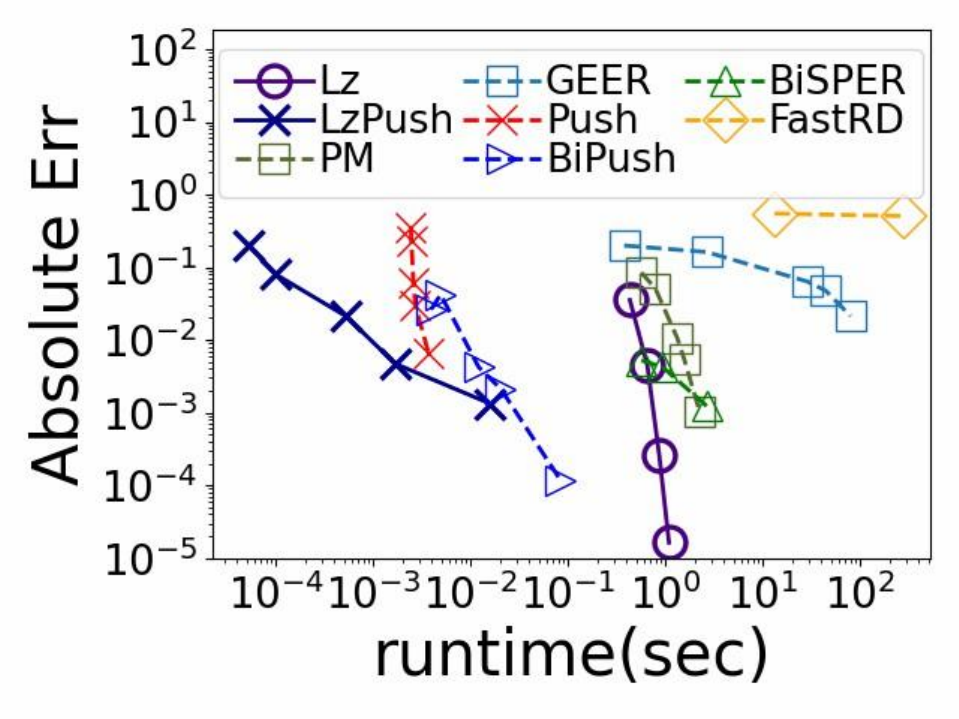}\hspace{-5mm}  \label{2}
	}
	\quad
	\subfigure[\livejournal]{
        \centering
		\includegraphics[scale=0.25]{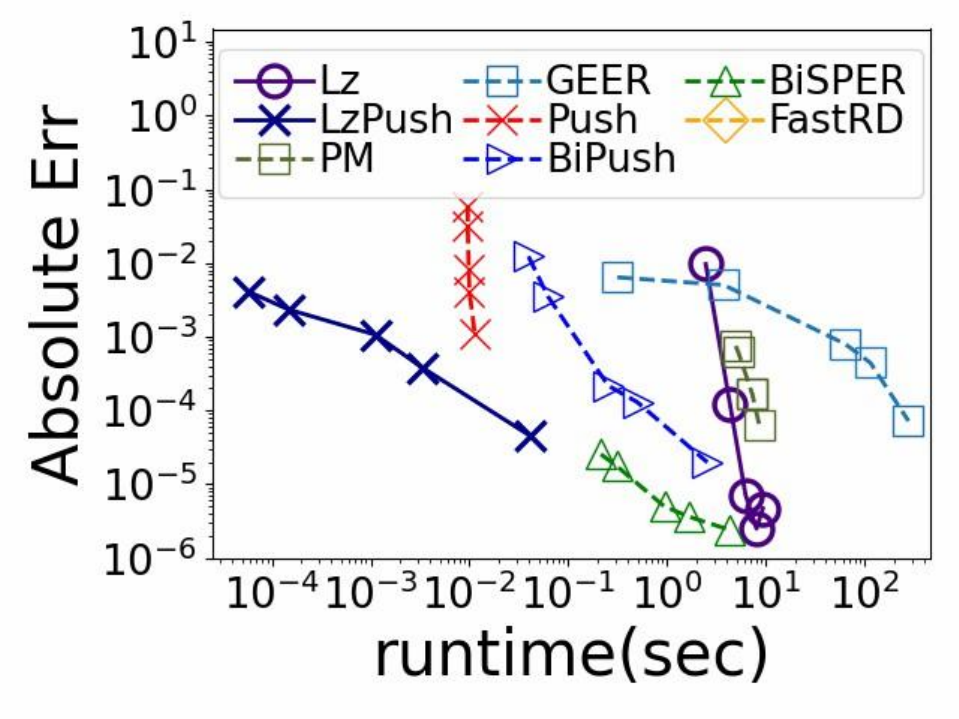}\hspace{-5mm}  \label{3}
	}
	\quad
	\subfigure[\orkut]{
        \centering
		\includegraphics[scale=0.25]{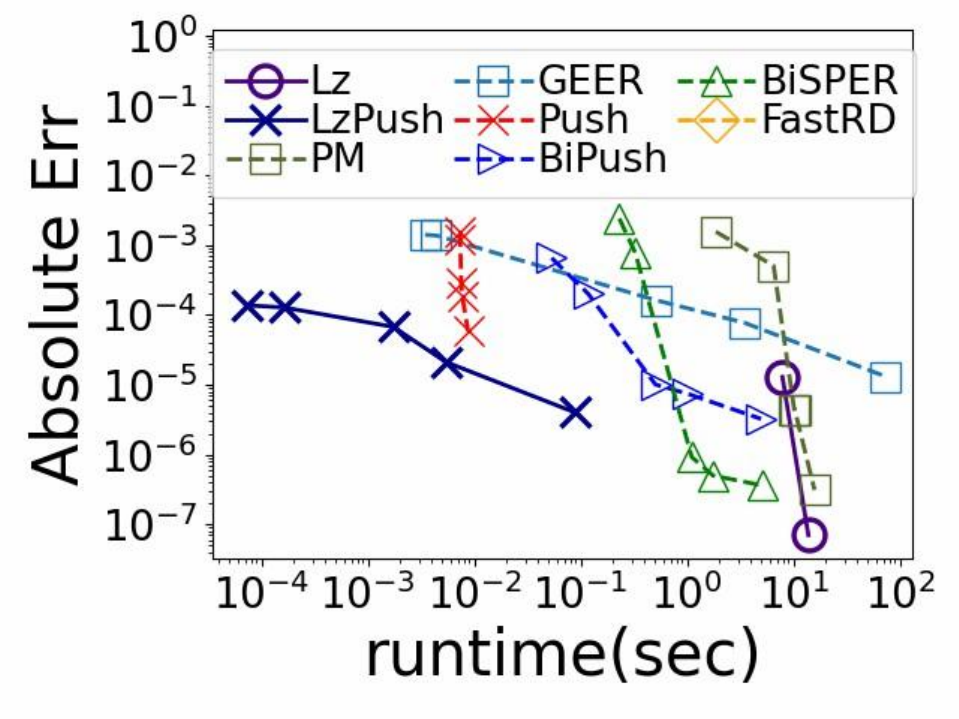}\label{4}
	}
	\caption{Query time of different algorithms for graphs with small condition number $\kappa$}\label{fig:small-condition}
\vspace{-0.2cm}
\end{figure*}

\begin{figure*}[t!]
    \vspace{-0.3cm}
    \centering
 	\subfigure[\powergrid]{
            \centering
		\includegraphics[scale=0.25]{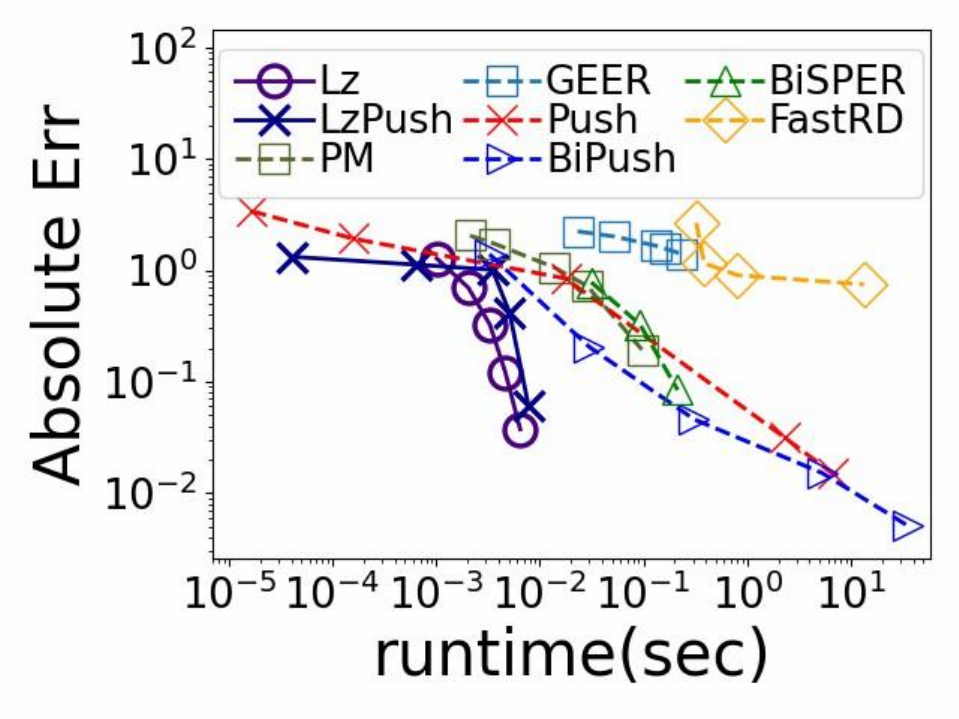}\hspace{-5mm} \label{1}
	}
	\quad
        \subfigure[\roadca]{
        \centering
		\includegraphics[scale=0.25]{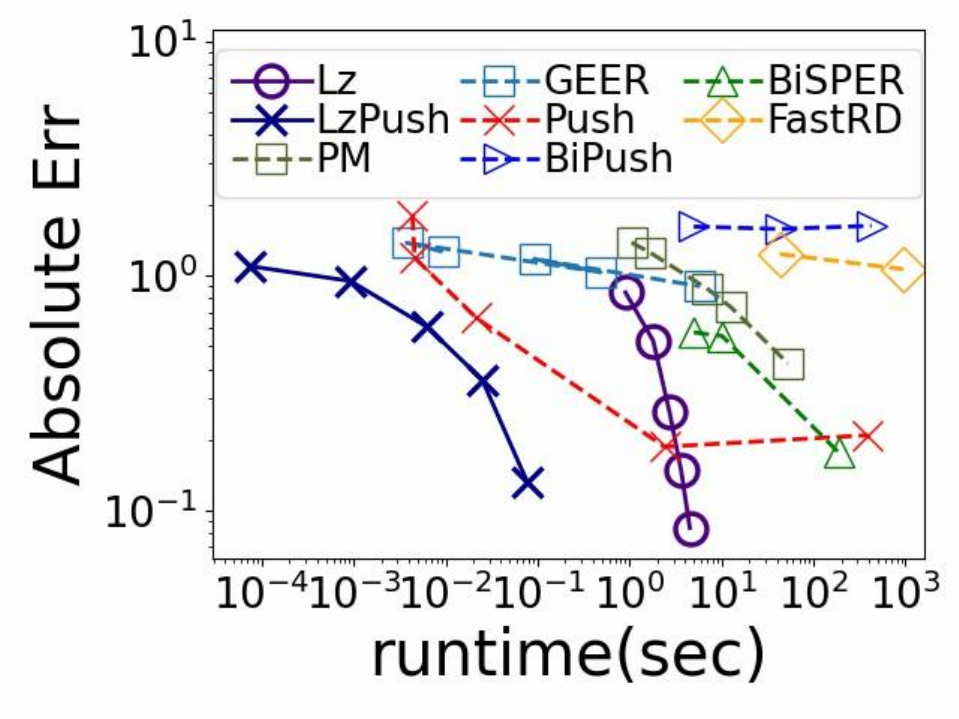}\hspace{-5mm}  \label{2}
	}
	\quad
	\subfigure[\roadpa]{
        \centering
		\includegraphics[scale=0.25]{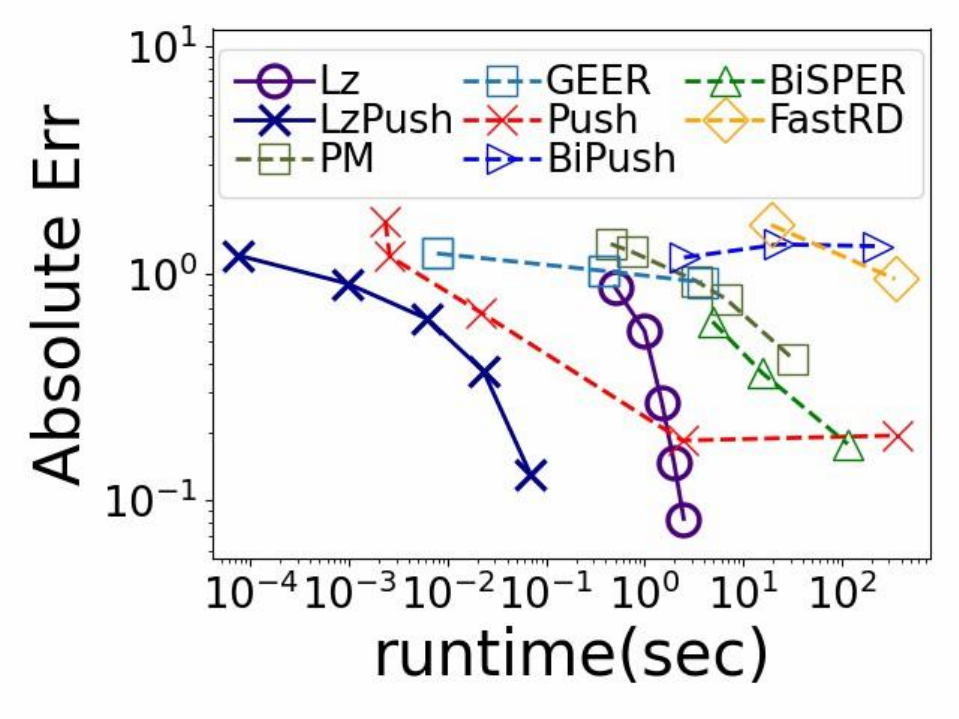}\hspace{-5mm}  \label{3}
	}
	\quad
	\subfigure[\roadtx]{
        \centering
		\includegraphics[scale=0.25]{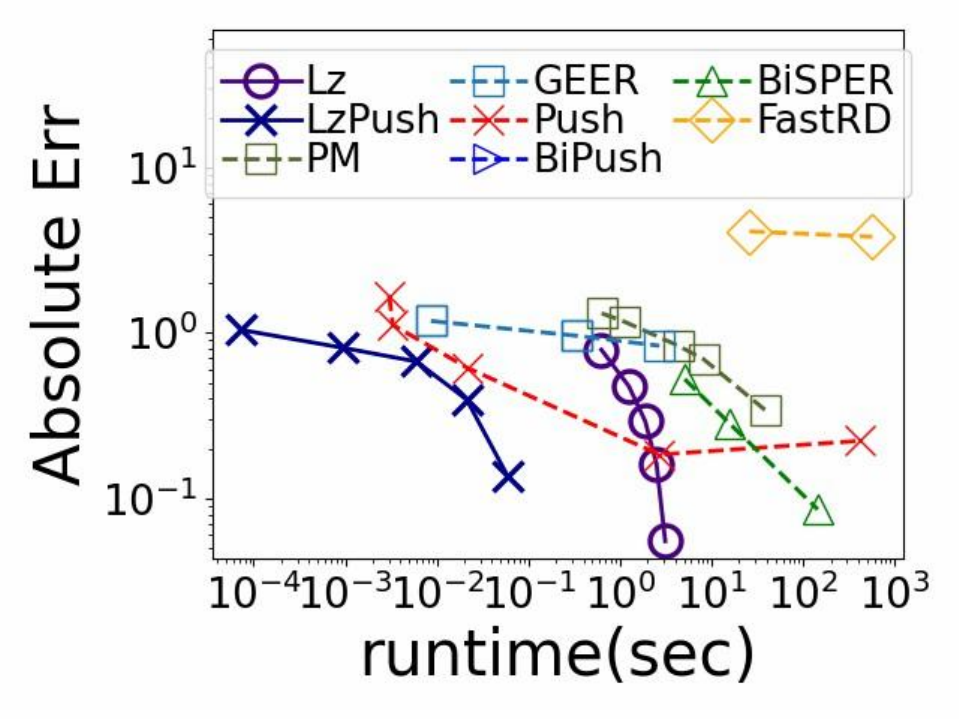}\label{4}
	}
	\caption{Query time of different algorithms for graphs with large condition number $\kappa$}\label{fig:large-condition}
\vspace{-0.2cm}
\end{figure*}

\stitle{Exp-1: Query time of different algorithms on unweighted graphs.}
In this experiment, we compare the proposed algorithms, \lanczos and \lzpush, with 6 SOTA algorithms proposed in the previous 
studies ~\cite{yang2023efficient,liao2023resistance,cui2025mixing,lu2025resistance}: 
\powermethod, \geer, \push, \bipush, \bisper and \fastrd. For \powermethod, \geer, \push, \bipush, 
following ~\cite{yang2023efficient,liao2023resistance} we vary the parameter $\epsilon$ from $10^{-1}$ to $10^{-5}$. 
For \bisper and \fastrd, following ~\cite{cui2025mixing,lu2025resistance} we vary $\epsilon$ from $0.9$ to $10^{-3}$. 
For \fastrd, since its storage is unacceptable for million-size graphs under high precision computation, 
so we only compare \fastrd under large $\epsilon$ for large datasets, e.g., $\epsilon =0.9,0.5,0.1$, following the setting as \cite{lu2025resistance}.
 For the datasets with small condition number (i.e. the four social networks), 
we set the parameters for our algorithms as follows for fair comparison: for \lanczos we vary the iteration 
step $k=[5,10,15,20,30]$; for \lzpush we vary $\epsilon=[0.1,0.05,0.01,0.005,0.001]$ and fix $k=20$. 
Fig. ~\ref{fig:small-condition} reports the query time of different algorithms for RD computation, under the Absolute Err metric. 
We make the following observations: (i) The proposed algorithm \lzpush is the most efficient and is $5\times$ to $10\times$ faster 
than all the other competitors across all datasets. (ii) Since the time complexity of \lzpush is independent of the number of vertices 
and edges, for the datasets of varying sizes, we observe that the performance of \lzpush remains almost the same. 
This is consistent with our theoretical analysis (\lzpush is a local algorithm). On the other hand, 
the performance of \lanczos and \powermethod is dependent on $m$, with their query time increasing on larger datasets (e.g., \livejournal, \orkut).

For datasets with large condition numbers (i.e. the three road networks and \powergrid), we set the parameters as follows for fair comparison: for \lanczos we vary the iteration step $k=[20,40,60,80,100]$; for \lzpush we vary $\epsilon=[10^{-3},5*10^{-4},10^{-4},5*10^{-5},10^{-5}]$ and fix $k=100$. Fig. ~\ref{fig:large-condition} reports the query time of different algorithms for RD computation under the Absolute Err metric. We make the following observations: (i) Our proposed algorithm \lzpush, is still the most efficient, while all the previous local algorithms (\geer, \push, \bipush, \bisper) fail to quickly output high-quality results (e.g.,  Absolute Err $<10^{-1}$) on three road networks. The reasons are as follows: Since the condition number $\kappa$ of the road networks is large, the random walk sampling based method requires setting a very large truncation step $l$ and very large sampling times $n_r=\tilde{O}(l^2/\epsilon ^2)$. This makes \geer inefficient. Similar observation also holds for \bisper, though the bidirectional techniques improve the efficiency of random walks. For \push and \bipush, these algorithms are heuristic, and their time complexity is related to hitting time $h(s,v)+h(t,v)$, where $v$ is a pre-selected landmark node. However, on road networks $h(s,v)$ can be $O(n)$, this makes \push and \bipush very inefficient, especially for high precision computation. Most notably, on three road networks, our \lzpush algorithm is over $1000\times$ faster than \powermethod and $50\times$ faster than \lanczos. (ii) The performance of \lzpush is similar across the four datasets, while the performance of \lanczos and \powermethod is related to $m$. Specifically, \lanczos and \powermethod are faster on small graph \powergrid, but slower on three large road networks. This is consistent with the results for the social networks.

\begin{figure}
    \centering
    \subfigure[\wdblp]{
		\includegraphics[scale=0.25]{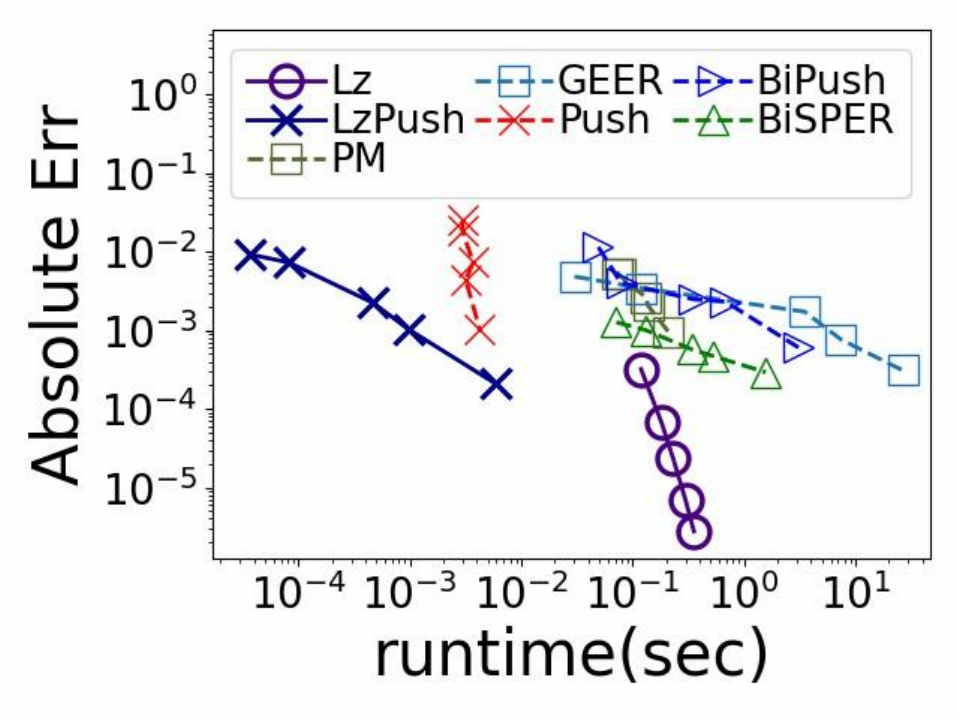}}
     \subfigure[\wlivejournal]{
		\includegraphics[scale=0.25]{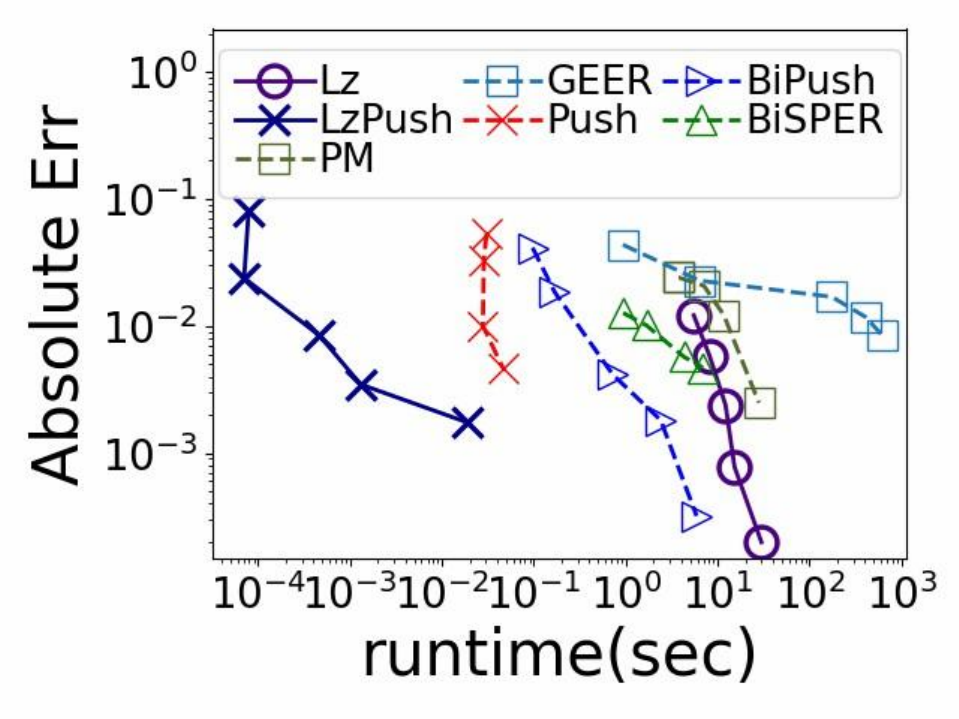}}
    \caption{Query time of different algorithms for weighted graphs}\label{fig:weight-graph}\vspace{-0.2cm}
\end{figure}

\stitle{Exp-2: Query time of different algorithms on weighted graphs.}
In this experiment, we evaluate our proposed algorithms against existing baselines on weighted graphs. Following the common practice in prior work ~\cite{wang2022edge}, we generate weighted versions of unweighted graphs by assigning each edge $e \in \mathcal{E}$ a weight equal to the number of triangles containing $e$\footnote{If $e$ is not contained in any triangle, we set $w(e)=1$ to ensure the network is connected.}. Fig. \ref{fig:weight-graph} shows the results on Dblp and LiveJournal datasets. Similar results can also be observed on the other datasets. As shown in Fig. \ref{fig:weight-graph}, the results demonstrate consistent behavior with our unweighted graph experiments: our \lzpush algorithm remains the most efficient algorithm among all compared methods.

\begin{figure}
    \centering
    \subfigure[\er]{
		\includegraphics[scale=0.20]{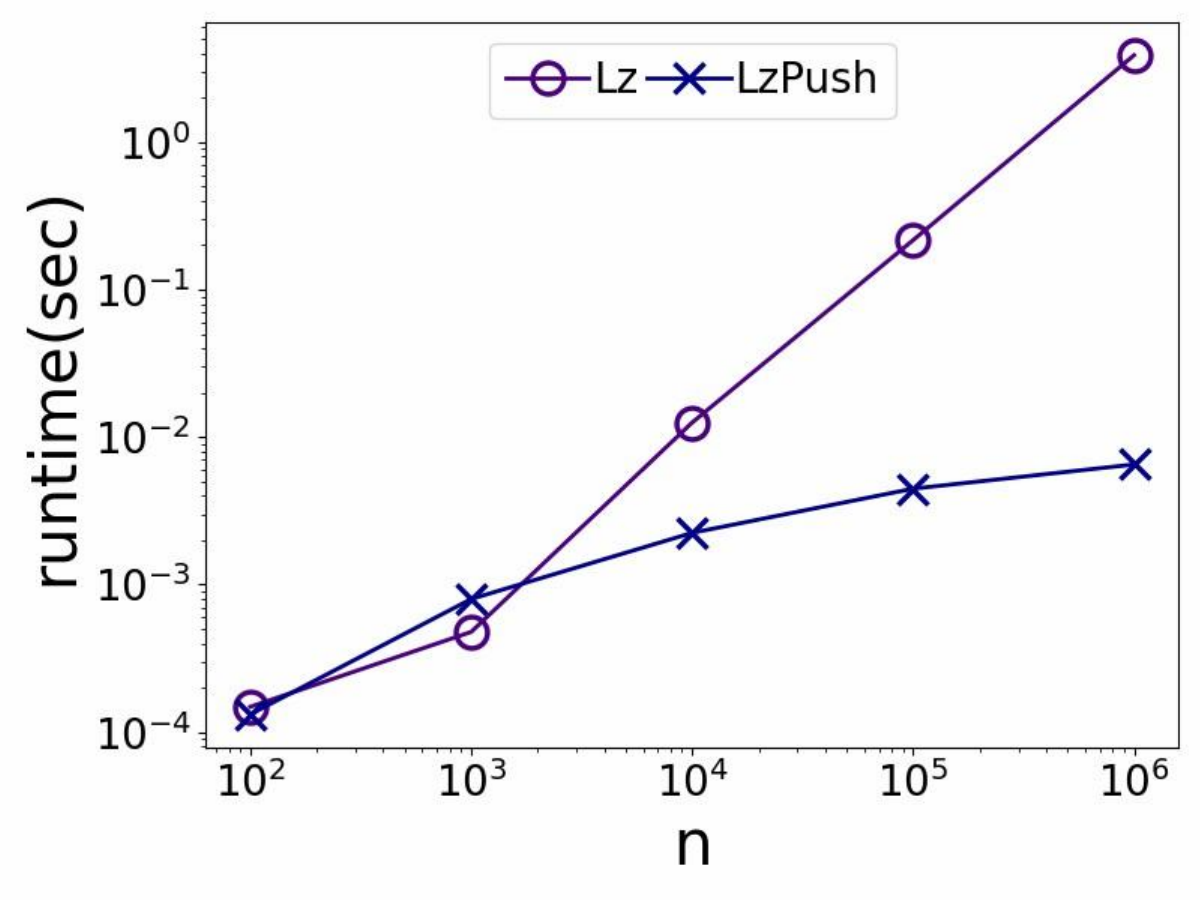}}
     \subfigure[\ba]{
		\includegraphics[scale=0.20]{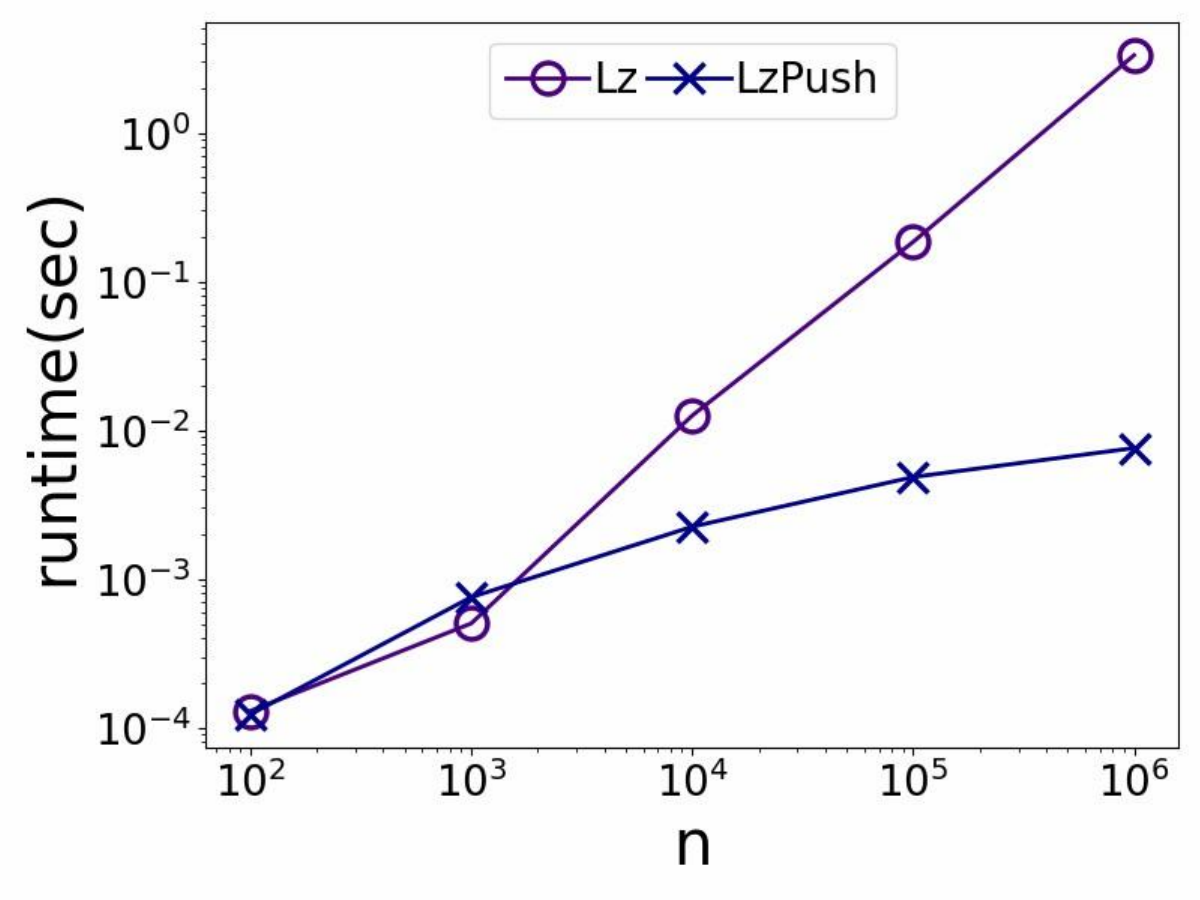}}
    \caption{Scalability testing on synthetic graphs}\label{fig:scalability}\vspace{-0.2cm}
\end{figure}

\stitle{Exp-3: Scalability testing on synthetic graphs.} In this experiment, we compare the scalability between our proposed \lanczos and \lzpush algorithms. We use the classic \er and \ba graphs with varying $n$ from $10^2$ to $10^6$ and $m=n\log n$. We set the iteration number $k=20$, threshold $\epsilon=0.005$, and compare the runtime between \lanczos and \lzpush. Fig. \ref{fig:scalability} shows the result. As can be seen, the runtime of \lanczos grows almost linear with respect to the graph size $n$, while the runtime of \lzpush is sublinear to the growth of $n$. These results further indicate that \lanczos is a nearly linear global algorithm and  \lzpush is a local algorithm, which is consistent with our theoretical analysis. Thus, these findings provide strong evidence that the proposed \lanczos and \lzpush have excellent scalability over massive graphs.

\begin{figure}
    \centering
    \subfigure[low precision]{
		\includegraphics[scale=0.20]{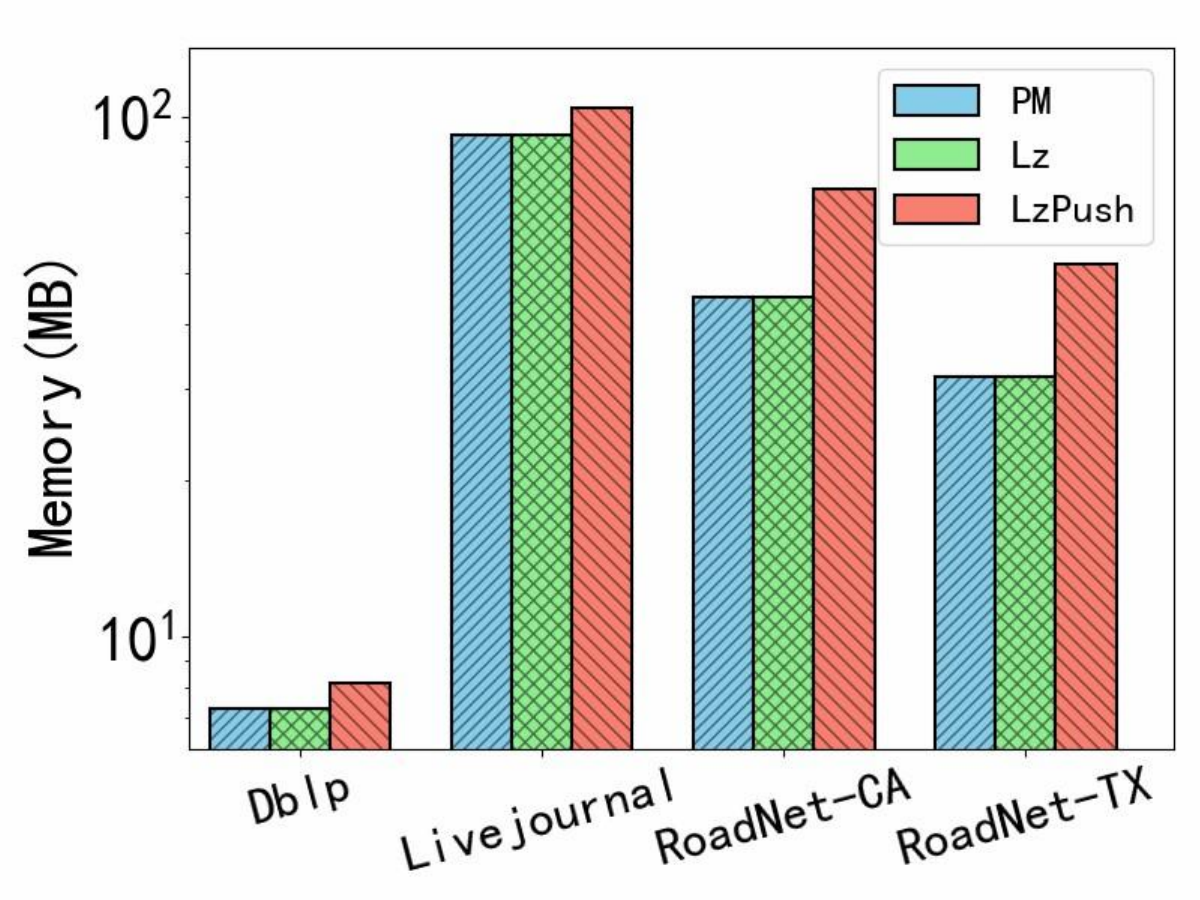}}
     \subfigure[high precision]{
		\includegraphics[scale=0.20]{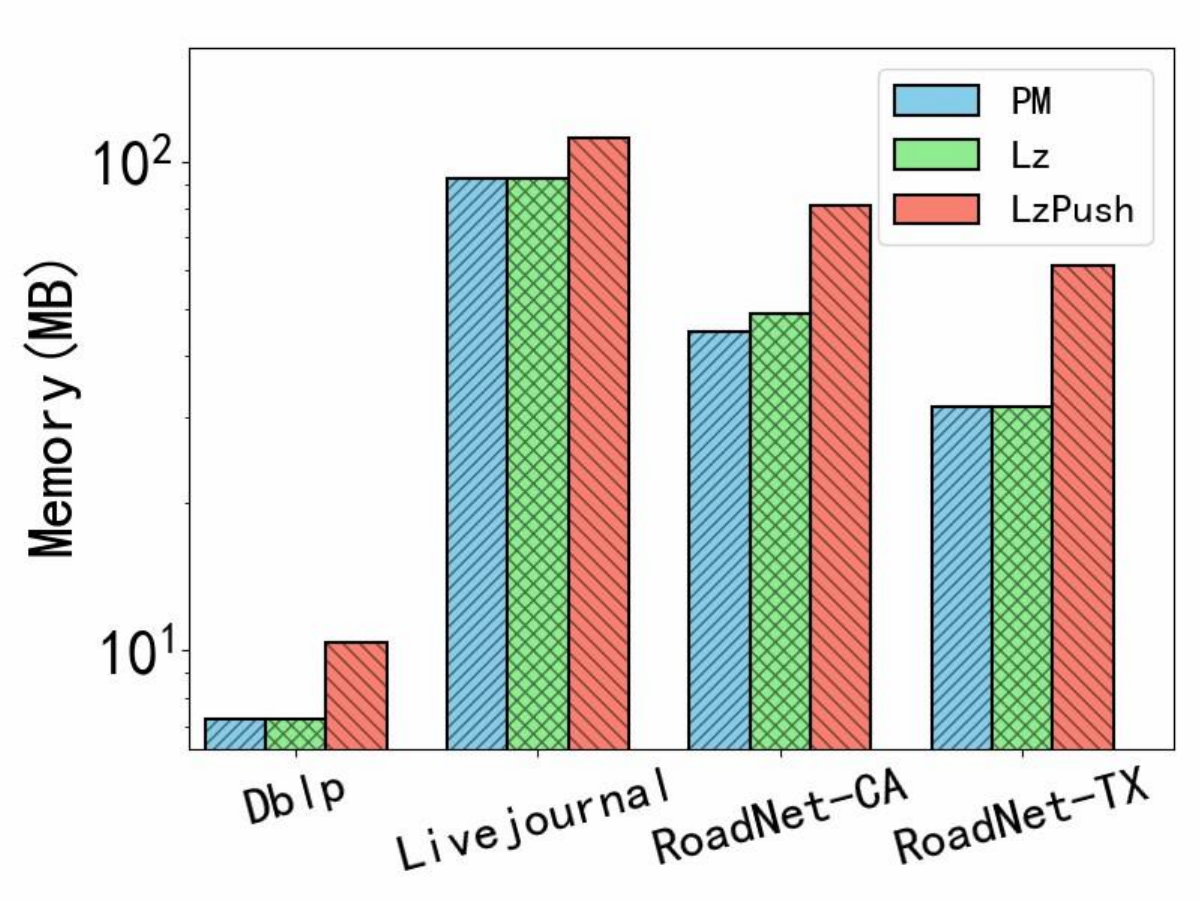}}
    \caption{Memory usage testing on different datasets}\label{fig:memory}\vspace{-0.2cm}
\end{figure}

\stitle{Exp-4: Memory usage testing on various datasets.}
In this experiment, we evaluate the memory usage of the proposed algorithms \lanczos and \lzpush with \powermethod under both low-precision case (Fig. \ref{fig:memory} (a)) and high-precision case (Fig. \ref{fig:memory} (b)). Specifically, for the low-precision case, we set $k=5,\epsilon=10^{-1}$ for social networks and $k=20,\epsilon=10^{-3}$ for road networks; for the high-precision case, we set $k=30, \epsilon=10^{-3}$ for social networks and $k=100,\epsilon=10^{-5}$ for road networks. From Fig. \ref{fig:memory}, we have the following observations: (1) Both \lanczos and \lzpush exhibit linear memory dependence on graph size $n$, comparable to \powermethod's $3n$ requirement (storing three vectors), with \lzpush showing marginally higher usage than \lanczos; The memory usage of \lzpush is slightly higher than \lanczos. (2) Memory consumption remains nearly identical across different precision settings, as our \lanczos implementation only maintains three vectors ($\mathbf{v}_{i-1},\mathbf{v}_i,\mathbf{v}_{i+1}$) per iteration rather than the full matrix $\mathbf{V}=[\mathbf{v}_0,...,\mathbf{v}_k]$. The modest additional memory requirement of \lzpush stems from storing the candidate set $S_i$ and non-zero entries of $\mathbf{v}_i$ during iterations, though the overall space complexity remains $O(n)$. Consequently, our algorithms maintain memory efficiency even for large-scale datasets, as their $O(n)$ storage requirement is dominated by the $O(m)$ space needed for the graph data itself.

\begin{figure*}[t!]
    \vspace{-0.3cm}
    \centering
 	\subfigure[\dblp]{
            \centering
		\includegraphics[scale=0.25]{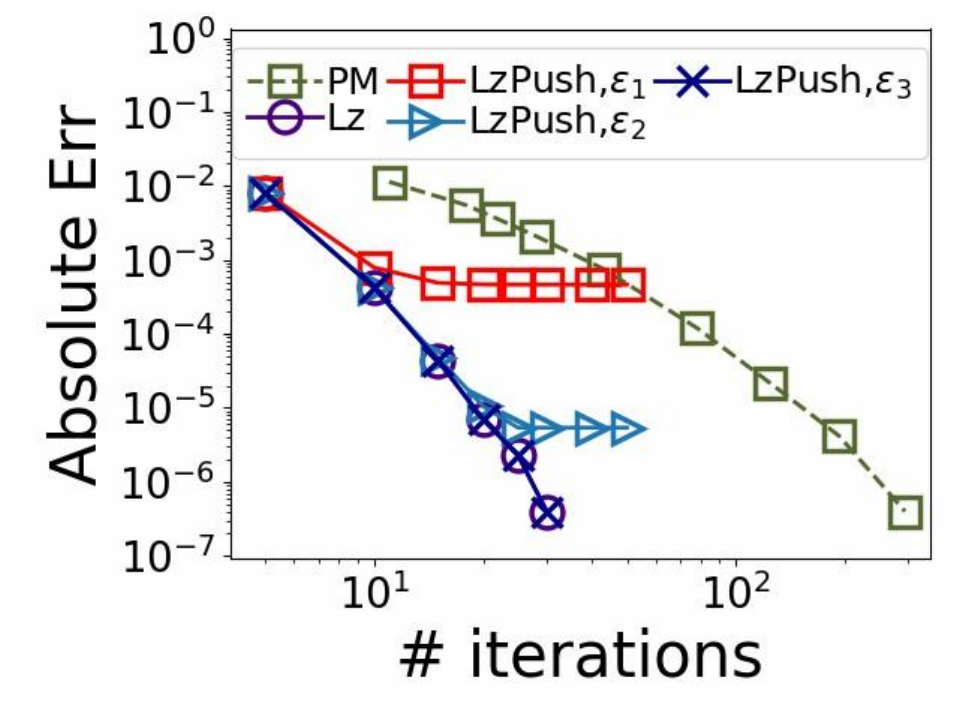}\hspace{-5mm} \label{1}
	}
	\quad
        \subfigure[\livejournal]{
        \centering
		\includegraphics[scale=0.25]{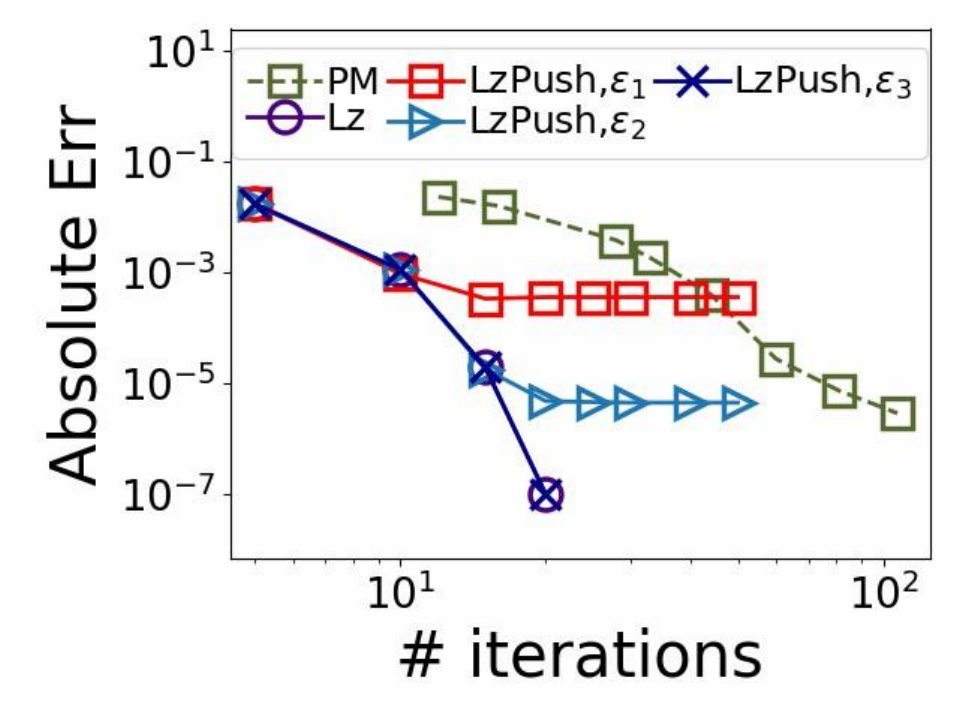}\hspace{-5mm}  \label{2}
	}
	\quad
	\subfigure[\powergrid]{
        \centering
		\includegraphics[scale=0.25]{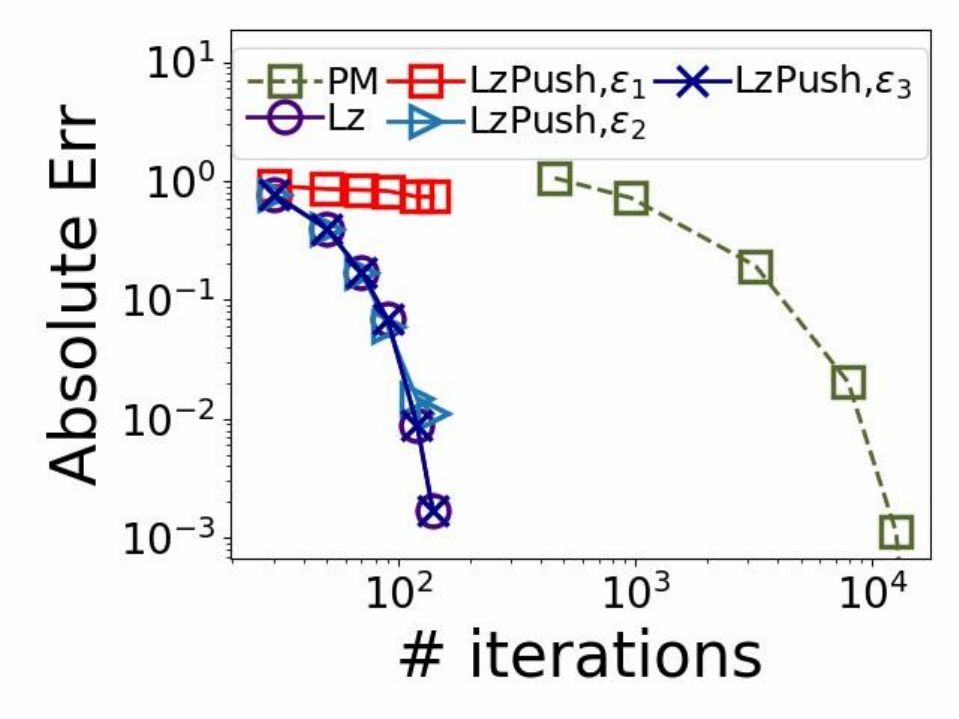}\hspace{-5mm}  \label{3}
	}
	\quad
	\subfigure[\roadca]{
        \centering
		\includegraphics[scale=0.25]{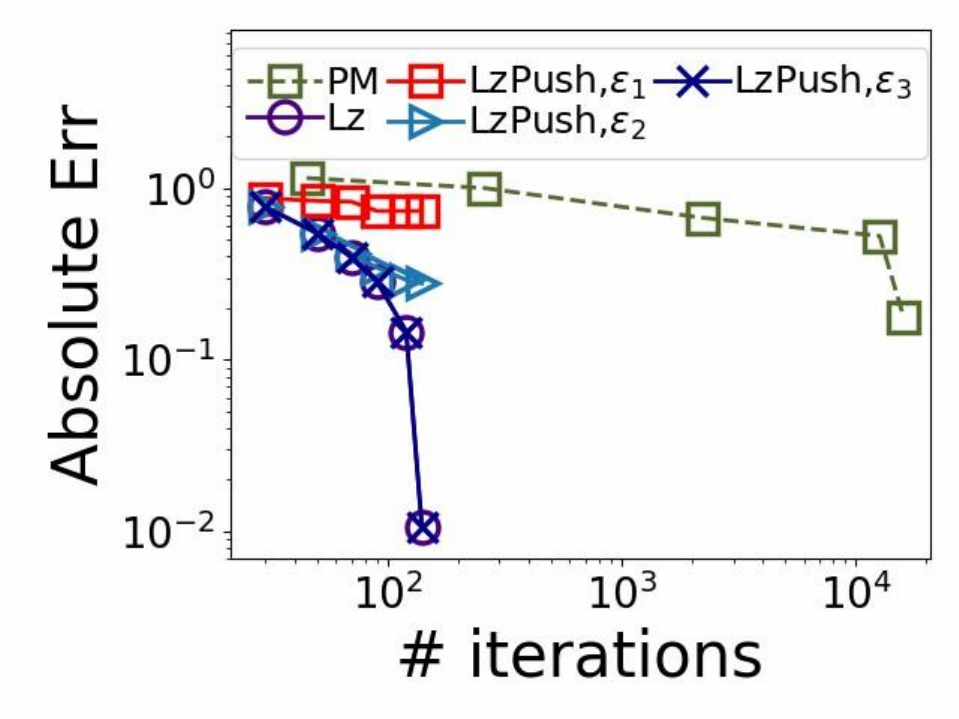}\label{4}
	}
	\caption{Stability testing of \lanczos and \lzpush with varying iteration number $k$ and $\epsilon$}\label{fig:itr}
\vspace{-0.2cm}
\end{figure*}

\stitle{Exp-5: Stability testing of \lanczos and \lzpush under high precision.}
In this experiment, we examines the error stability of our proposed \lanczos and \lzpush algorithms. Specifically, we vary the iteration number $k$ of \lanczos and \lzpush from $5$ to $150$, and set $\epsilon_1=10^{-3}$, $\epsilon_2=10^{-5}$, $\epsilon_3=10^{-7}$ for \lzpush to match different absolute errors (See Fig. \ref{fig:itr}). The results demonstrate that with sufficiently large $k$ and small $\epsilon$, both Lanczos algorithms achieve high-precision solutions while converging $\sqrt{\kappa}$ times faster than \powermethod. \footnote{Comparing with previous studies ~\cite{peng2021local,yang2023efficient,liao2023resistance,liao2024efficient,cui2025mixing}, the $10^{-7}$ absolute error for social networks and $10^{-3}$ absolute error for road networks is already very high precision for RD computation. This is beacuse \powermethod requires over $10^{3}$ iterations for social networks and $10^4$ iterations for road networks to reach this error guarantee for the ground-truth value computation.} 
As a result, \lanczos and \lzpush are both stable under high precision computation. Our findings align with previous studies ~\cite{Meurant2006lanczossurvey,musco2018stability}: Lanczos methods remain numerically stable for matrix function computation. In our case, the stability for RD computation stems from this phenomenon, since $\mathbf{L}^\dagger(\mathbf{e}_s-\mathbf{e}_t)$ is matrix function. As a result, our algorithms do not suffer unstable problem for high precision computation. Moreover, there is an additional observation that when $k$ increases, the absolute error produced by \lanczos drops exponentially, but the absolute error produced by \lzpush does not change drastically under $\epsilon=10^{-3}$ and $10^{-5}$. These results suggest that the efficiency of \lanczos mainly depends on $k$, but the error produced by \lzpush mainly depends on $\epsilon$ for larger $k$, which is consistent with our theoretical analysis.


\begin{figure}
    \centering
    \subfigure[\dblp, Absolute Err]{
		\includegraphics[scale=0.25]{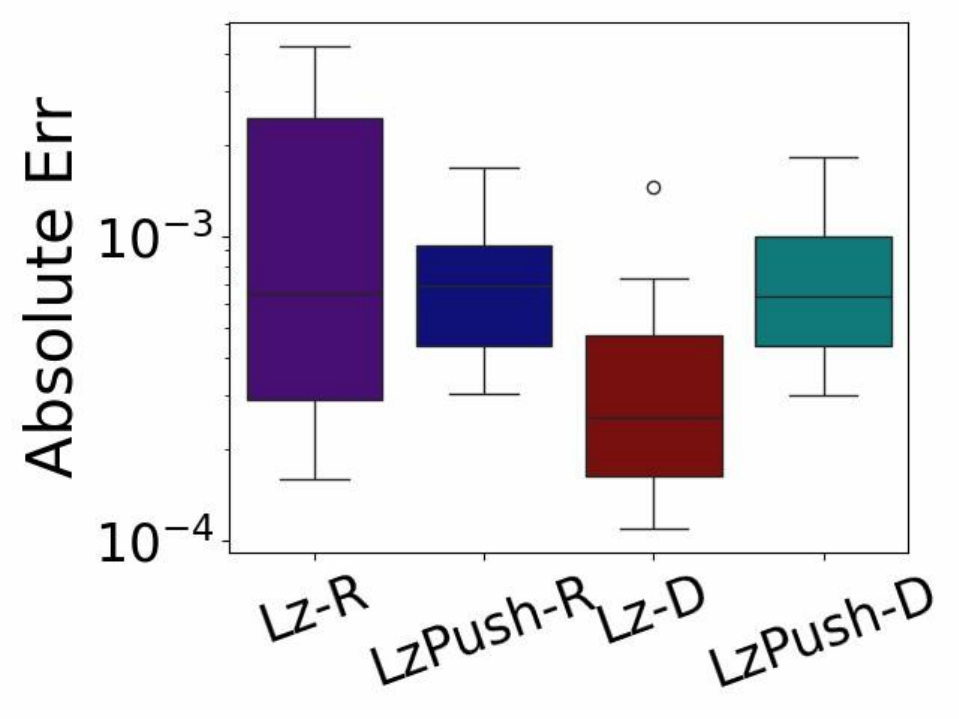}}
     \subfigure[\dblp, runtime]{
		\includegraphics[scale=0.25]{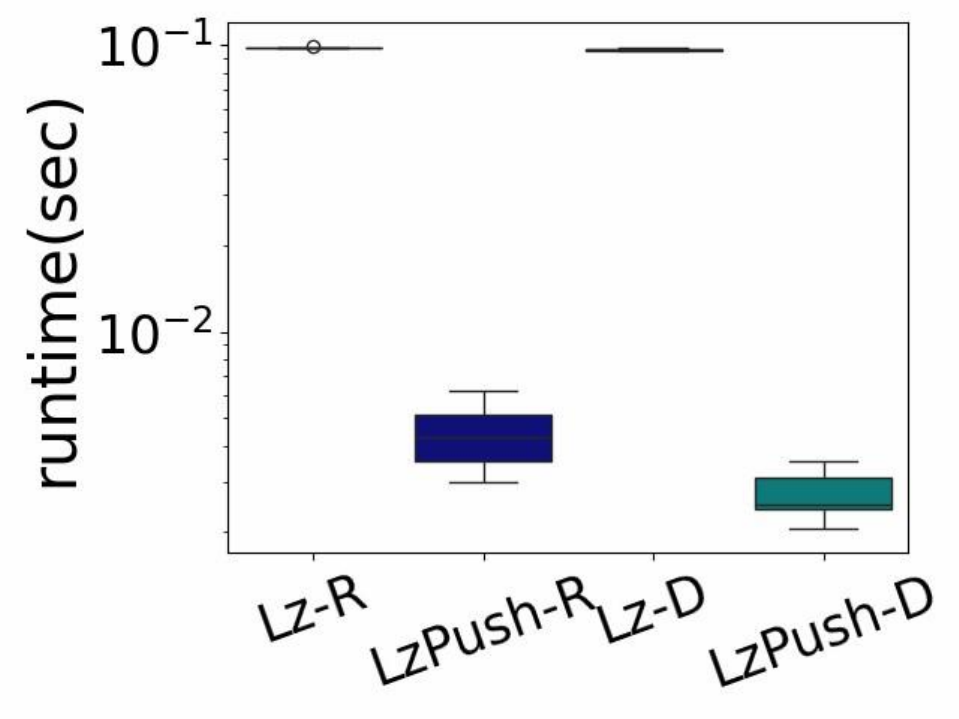}}

    \caption{The performance of \lanczos and \lzpush with different 
query node select strategies. \kw{Lz-R} and \kw{LzPush-R} denote \lanczos and \lzpush selecting source node $s$ and sink node $t$ uniformly, respectively. \kw{Lz-D} and \kw{LzPush-D} represent \lanczos and \lzpush selecting source node $s$ and sink node $t$ with the highest degree, respectively.}\label{fig:query-distribution}\vspace{-0.2cm}
\end{figure}

\begin{figure}
    \centering
    \subfigure[\er, Absolute Err]{
		\includegraphics[scale=0.2]{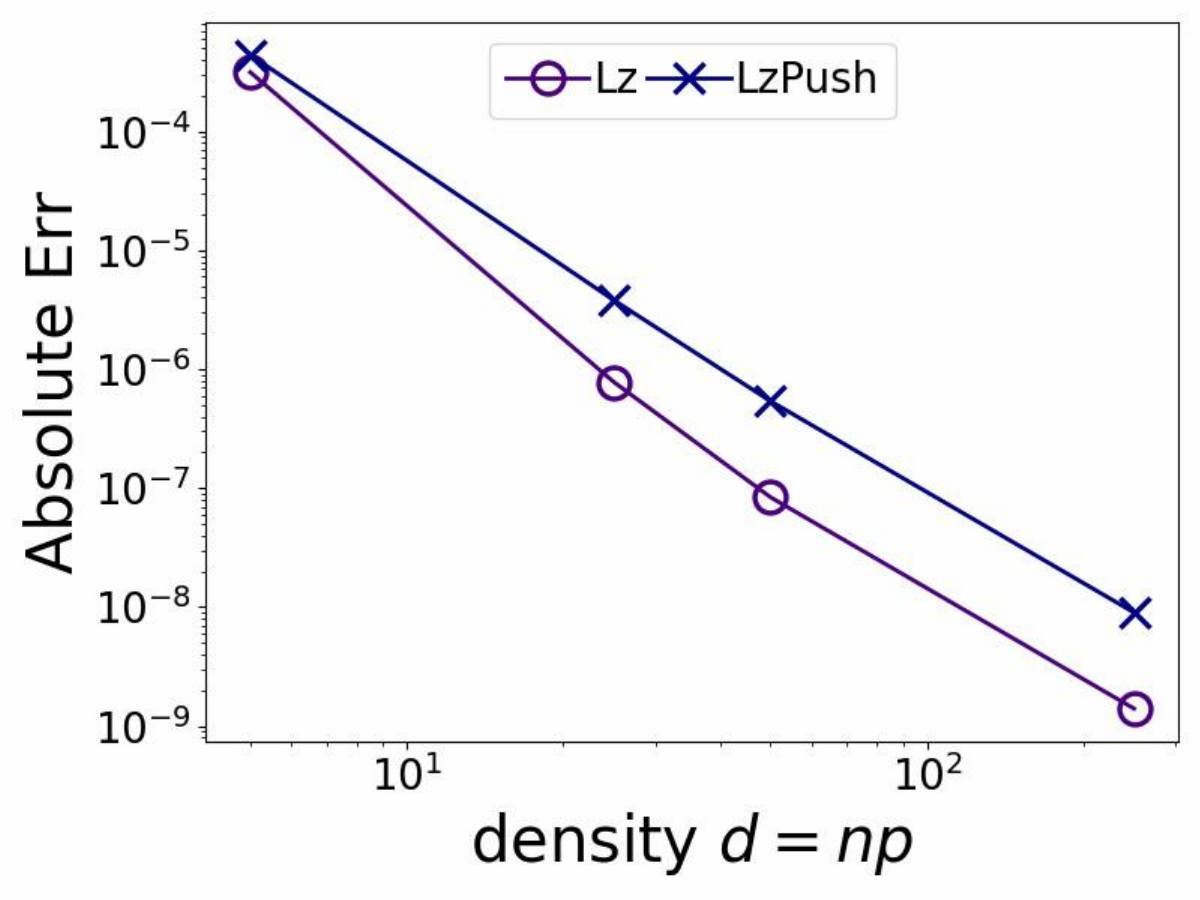}}
     \subfigure[\er, runtime]{
		\includegraphics[scale=0.2]{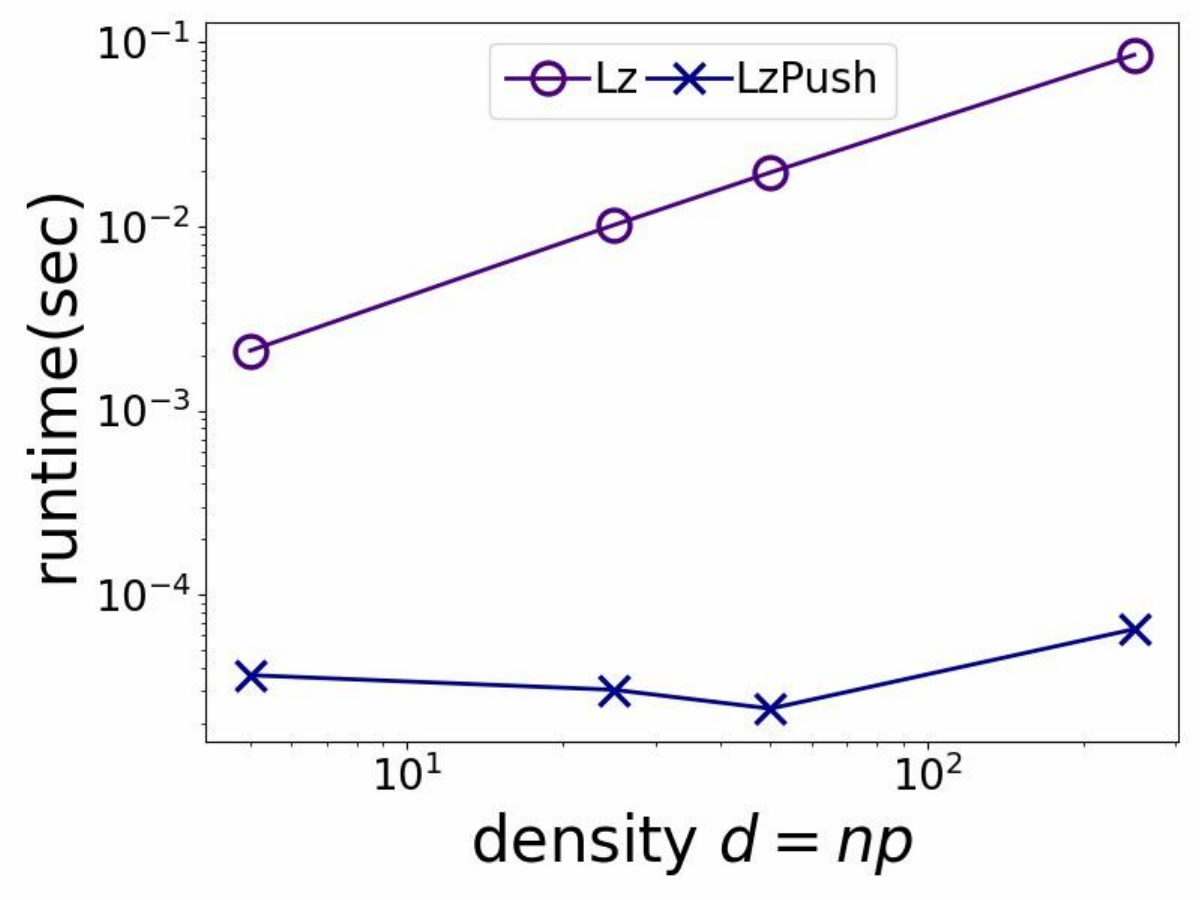}}

    \caption{The performance of \lanczos and \lzpush on $\mathbb{G}(n,p)$ with varying density, where the density is defined as $d=np$.}\label{fig:density}\vspace{-0.2cm}
\end{figure}

\stitle{Exp-6: Stability testing of \lanczos and \lzpush under various density and node selection strategies.}
This experiment evaluates the stability of \lanczos and \lzpush in terms of: (i) various node selection strategies; (ii) density of the datasets. Specifically, for the node selection strategies, we explores two strategies for selecting source node $s$ and sink node $t$: (i) choosing the 10 nodes with the highest degrees, and (ii) randomly selecting 10 nodes from the graph. 
For \lanczos and \lzpush, we set the parameter $\epsilon=10^{-3}$ and $k=15$. Fig. ~\ref{fig:query-distribution} presents the results using box plots to depict the distribution of query qualities and query times on \dblp. As can be seen, we have the following observations:  (i) The error produced by \lanczos and \lzpush is insensitive to the node selection strategies. (ii) For different source/sink node selection strategies, the query time and Absolute Err do not differ significantly. The overall observation is that the performance of both \lanczos and \lzpush do not heavily depend on the strategy of the $s,t$ node selection. For the density testing, we consider \er random graph $\mathbb{G}(n,p)$ with $n=1000$ nodes. We define the density $d=np$ and we vary $d=[5,25,50,250]$ to generate datasets with different density. We set $k=5$ and $\epsilon=0.1$ for \lanczos and \lzpush. Fig. \ref{fig:density} depicts the results. We have the following observations: the runtime of \lanczos grows linearly with the density of the graph, while \lzpush is insensitive to the density. Moreover, the absolute value drops when the density $d$ becomes larger. This is mainly because the accurate RD value is small for dense graphs.

\comment{
\stitle{Exp-5: The performance of \lzpush and \lanczos with varying the iteration number $k$.} In this experiment, we fix $\epsilon=0.005$ for \lzpush, vary the iteration number $k$ from $5$ to $30$ for \dblp and $20$ to $120$ for \roadca for both \lanczos and \lzpush algorithms. Since other datasets and parameters have similar trends, we ignore them for brevity. We compare the runtime and Absolute Err between \lanczos and \lzpush. Fig. \ref{fig:vary_k} shows this result. As can be seen, when $k$ increases, the absolute error produced by \lanczos drops exponentially, but the absolute error produced by \lzpush does not change drastically. On the other hand, the runtime of \lanczos and \lzpush both increase for larger $k$, and the runtime of \lzpush grows slightly faster for than \lanczos. These results suggest that the efficiency of \lanczos mainly depends on $k$, but the error produced by \lzpush does not mainly depend on $k$ for the fixed $\epsilon$, which is consistent with our theoretical analysis. 

\begin{figure}
    \centering
    \subfigure[\dblp,Absolute Err]{
		\includegraphics[scale=0.2]{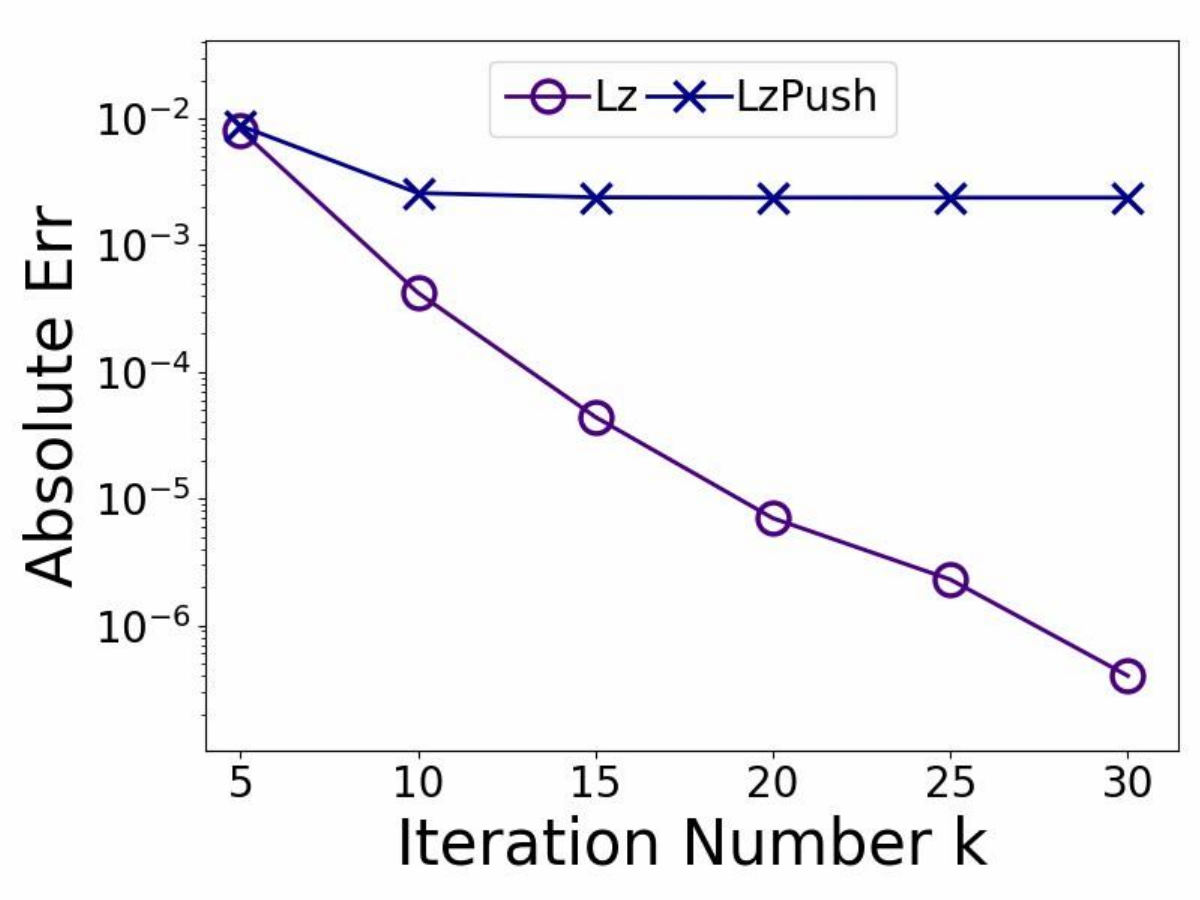}}
     \subfigure[\dblp,runtime]{
		\includegraphics[scale=0.2]{parameter_testing/dblp_varyk_time1.pdf}}

      \subfigure[\roadca,Absolute Err]{
		\includegraphics[scale=0.2]{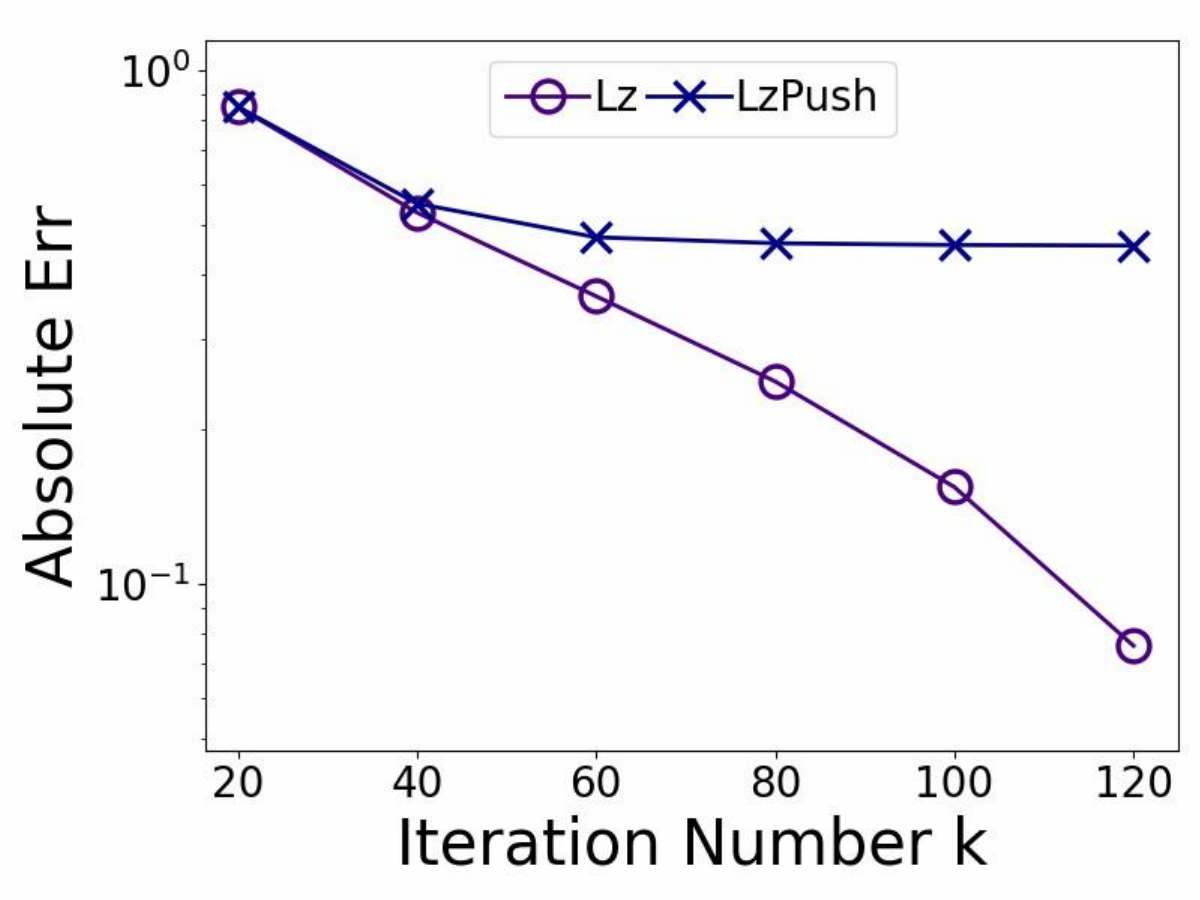}}
     \subfigure[\roadca,runtime]{
		\includegraphics[scale=0.2]{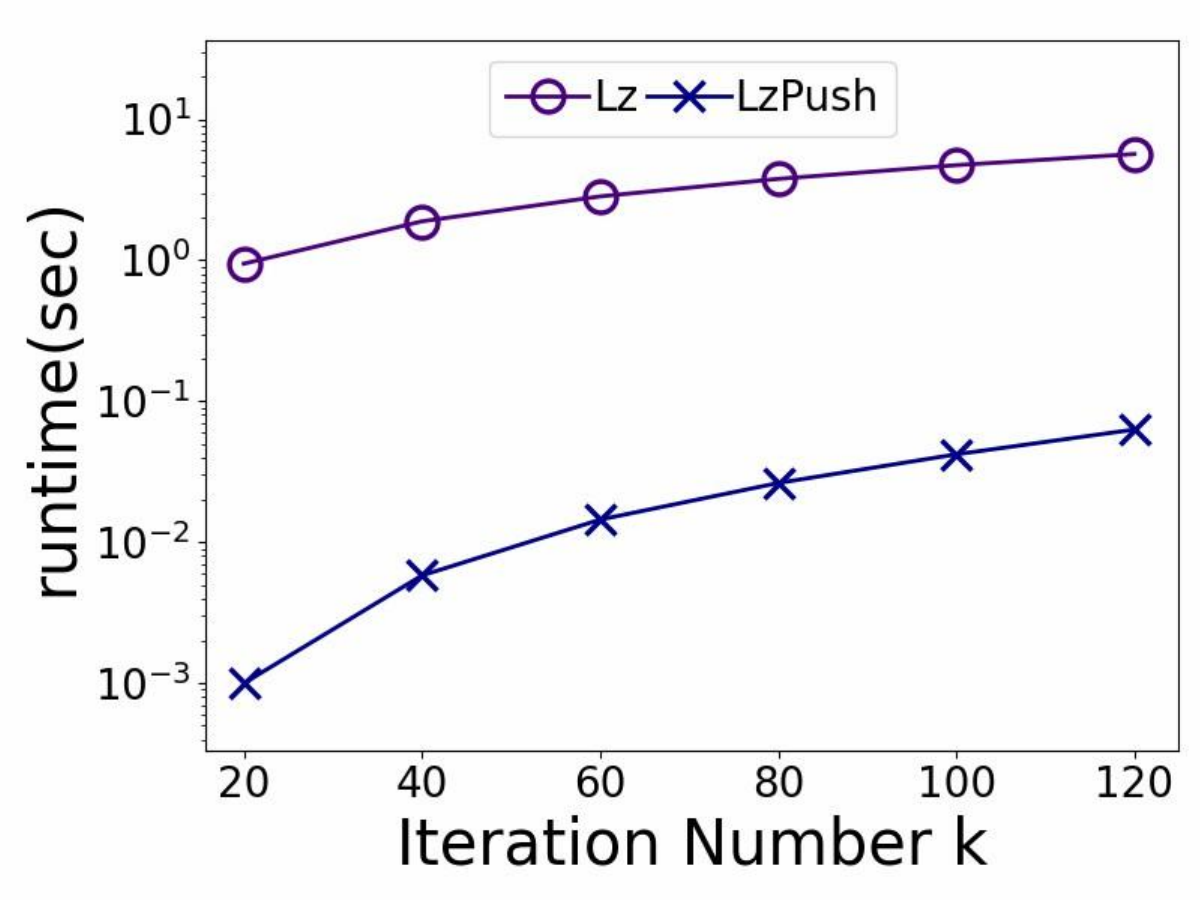}}
    \caption{The performance of \lzpush and \lanczos with varying the iteration number $k$.}\label{fig:vary_k}\vspace{-0.2cm}
\end{figure}
}

\stitle{Exp-7: The performance  of \lzpush with varying $k$ and $\epsilon$.} In this experiment, we explore the performance of \lzpush under different setting of $k$ and $\epsilon$ on two typical datasets \dblp and \roadca. Other datasets are ignored due to they have similar trends. Fig. \ref{fig:heat_plot} shows the heatmap to illustrate this result. As can be seen, we have the following observations: (1) we observe each row from Fig. \ref{fig:heat_plot} (a) and \ref{fig:heat_plot} (c). The conclusion is that on both two datasets, for each fixed $\epsilon$, the absolute error produced by \lzpush does not drastically drop when $k$ increases. However, for a specific pre-selected $k$ (e.g., $k=20$ for \dblp and $k=120$ for \roadca), the different setting of $\epsilon$ actually affect the performance of \lzpush. Specifically, the absolute error drops when $\epsilon$ becomes smaller. (2) the runtime of \lzpush (i.e., Fig. \ref{fig:heat_plot} (b) and \ref{fig:heat_plot} (d)) both influenced by $\epsilon$ and $k$. Specifically, larger $k$ and smaller $\epsilon$ resulting in longer runtime of \lzpush. Thus, these results align with our theoretical analysis.

\begin{figure}\vspace{-0.3cm}
    \centering
    \subfigure[\dblp,Absolute Err]{
		\includegraphics[scale=0.2]{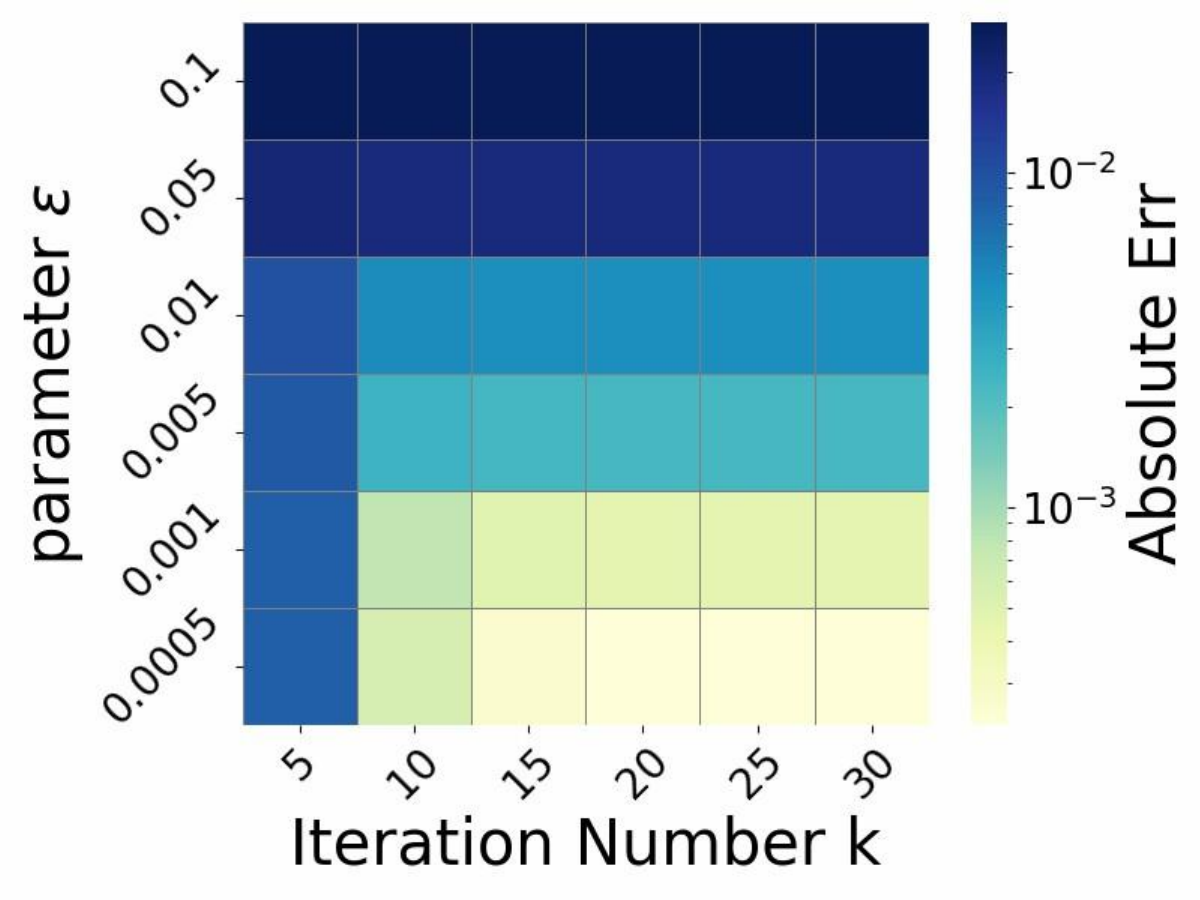}}
     \subfigure[\dblp,runtime]{
		\includegraphics[scale=0.2]{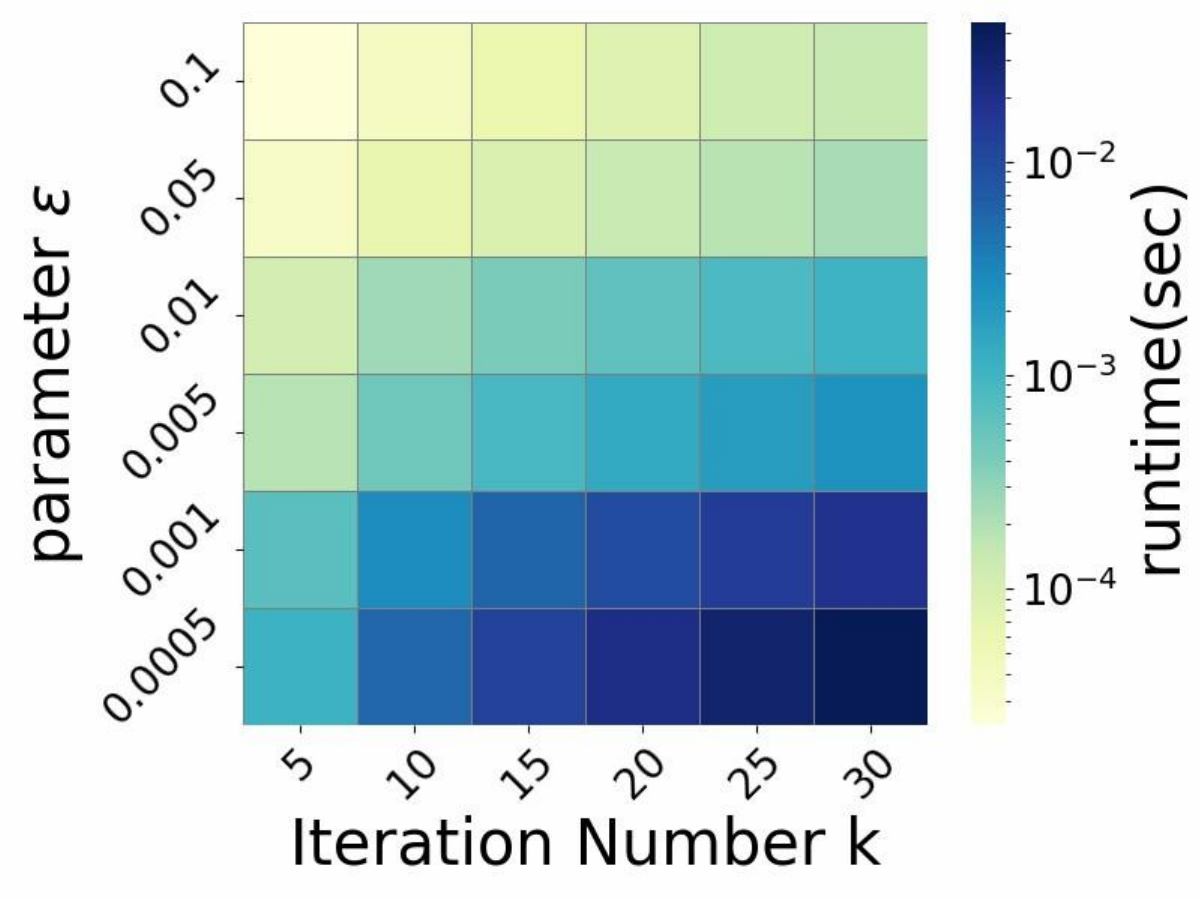}}

      \subfigure[\roadca,Absolute Err]{
		\includegraphics[scale=0.2]{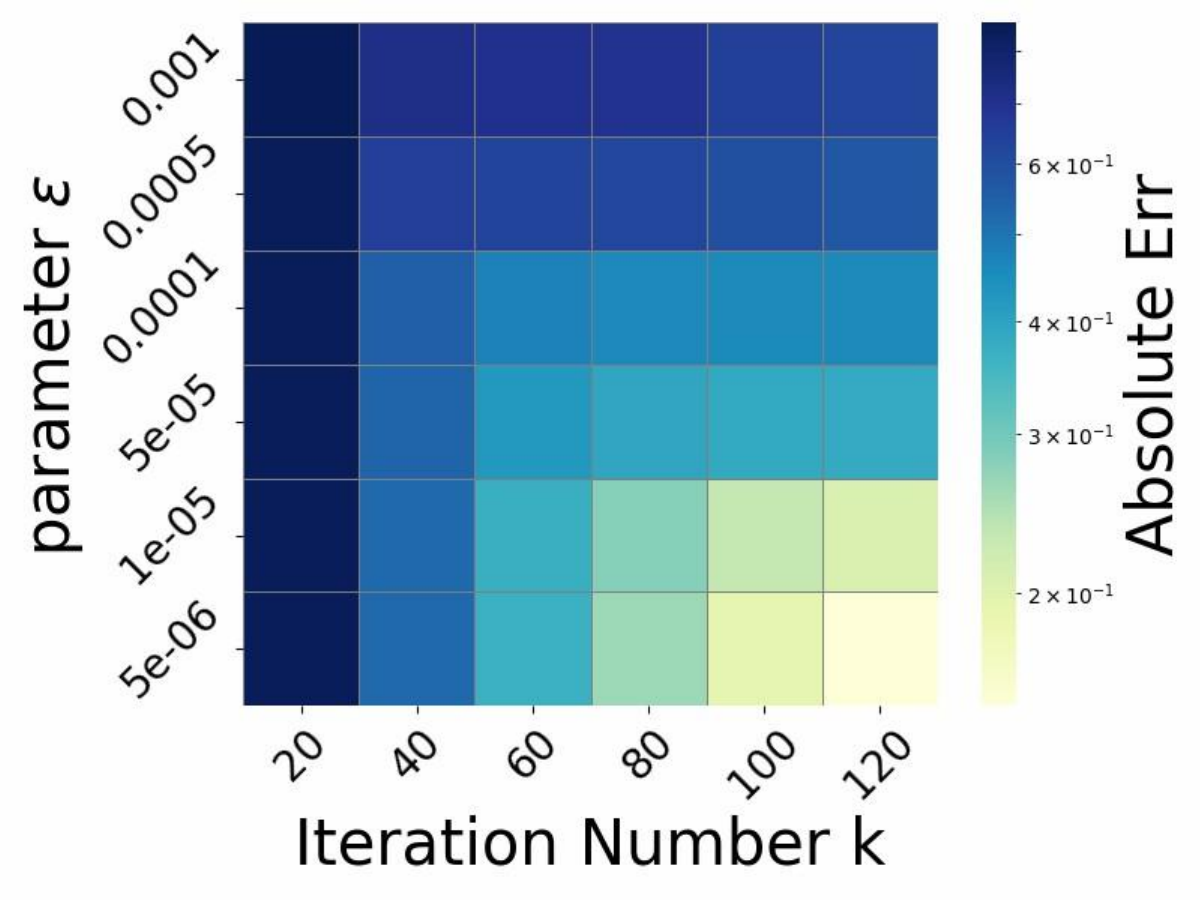}}
     \subfigure[\roadca,runtime]{
		\includegraphics[scale=0.2]{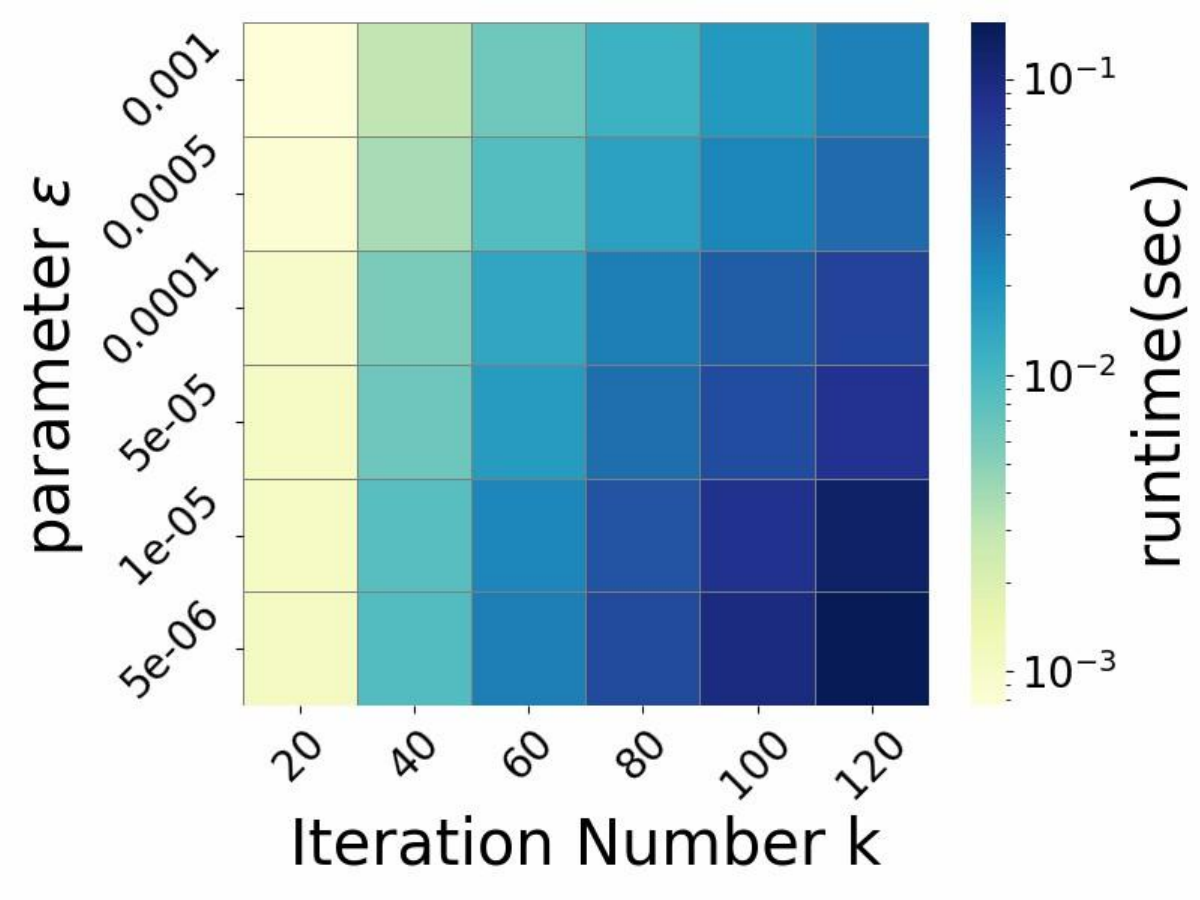}}
    \caption{The performance  of \lzpush with varying $\epsilon$ and $k$}\label{fig:heat_plot}\vspace{-0.2cm}
\end{figure}

\stitle{Exp-8: Testing the reasonability of Assumption \ref{assump:lzpush_error}.} In this experiment, we test the we test whether Assumption \ref{assump:lzpush_error} holds in numerical experiments, which requires that $\lambda(\hat{\mathbf{T}})\subset [\lambda_n(\mathcal{A}),\lambda_2(\mathcal{A})]$. For clearer visualization, we examine the eigenvalues of $\mathbf{I}-\mathbf{T}$ and find that $\lambda(\mathbf{I}-\hat{\mathbf{T}})\subset [\lambda_2(\mathcal{L}),\lambda_n(\mathcal{L})]$. For social networks we fix $k=15$, for road networks we fix $k=100$ while varying $\epsilon$ across different values. Fig. \ref{fig:assump_eigen} show that shows that the eigenvalues of $\mathbf{I}-\hat{\mathbf{T}}$ always fall within the range of $\lambda(\mathcal{L})$ with the extremal eigenvalues approaching $\lambda_2(\mathcal{L})$ and $\lambda_n(\mathcal{L})$ as $\epsilon$ decreases. These results support the reasonability of Assumption \ref{assump:lzpush_error} and suggest that the Lanczos Push algorithm may also be useful for approximating extremal eigenvalues.

\begin{figure}\vspace{-0.3cm}
    \centering
    \subfigure[Social Networks]{
		\includegraphics[scale=0.2]{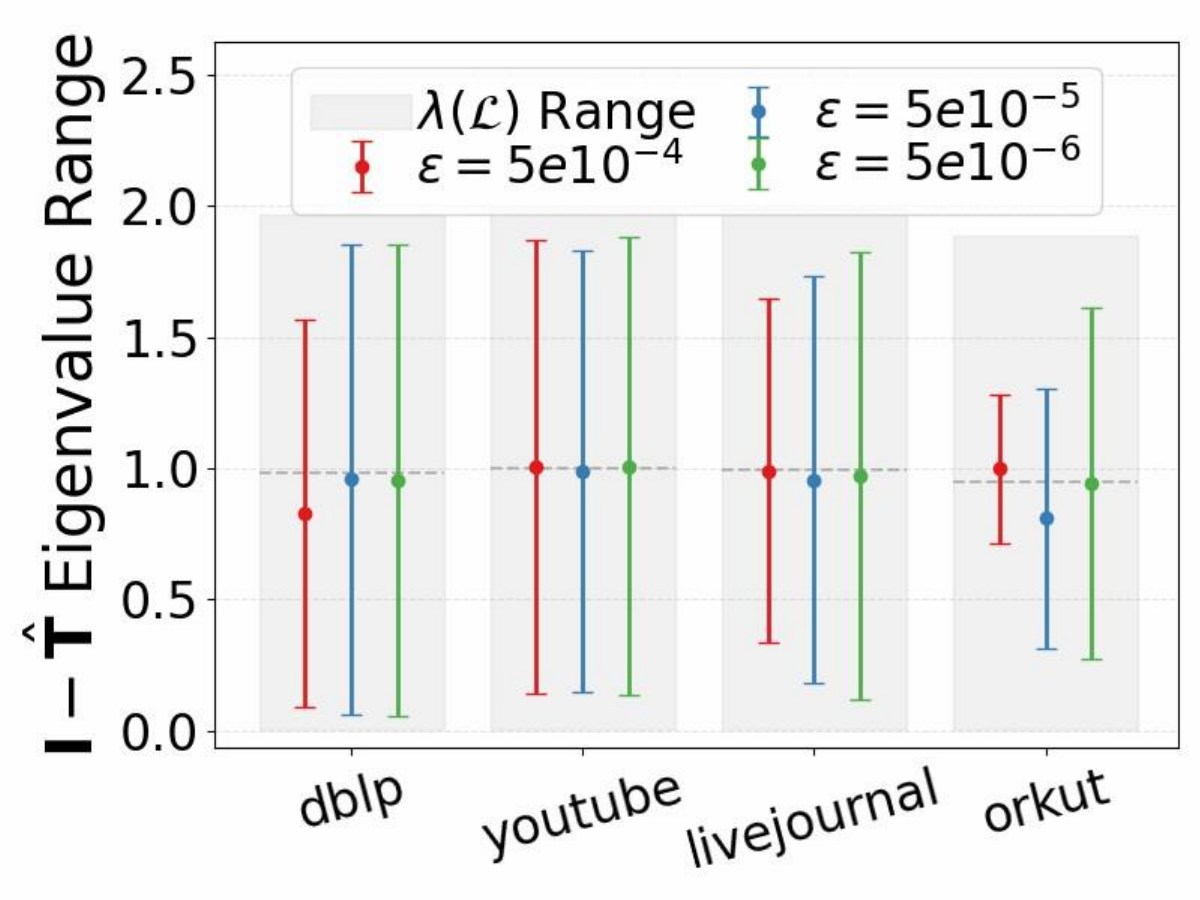}}
     \subfigure[Road Networks]{
		\includegraphics[scale=0.2]{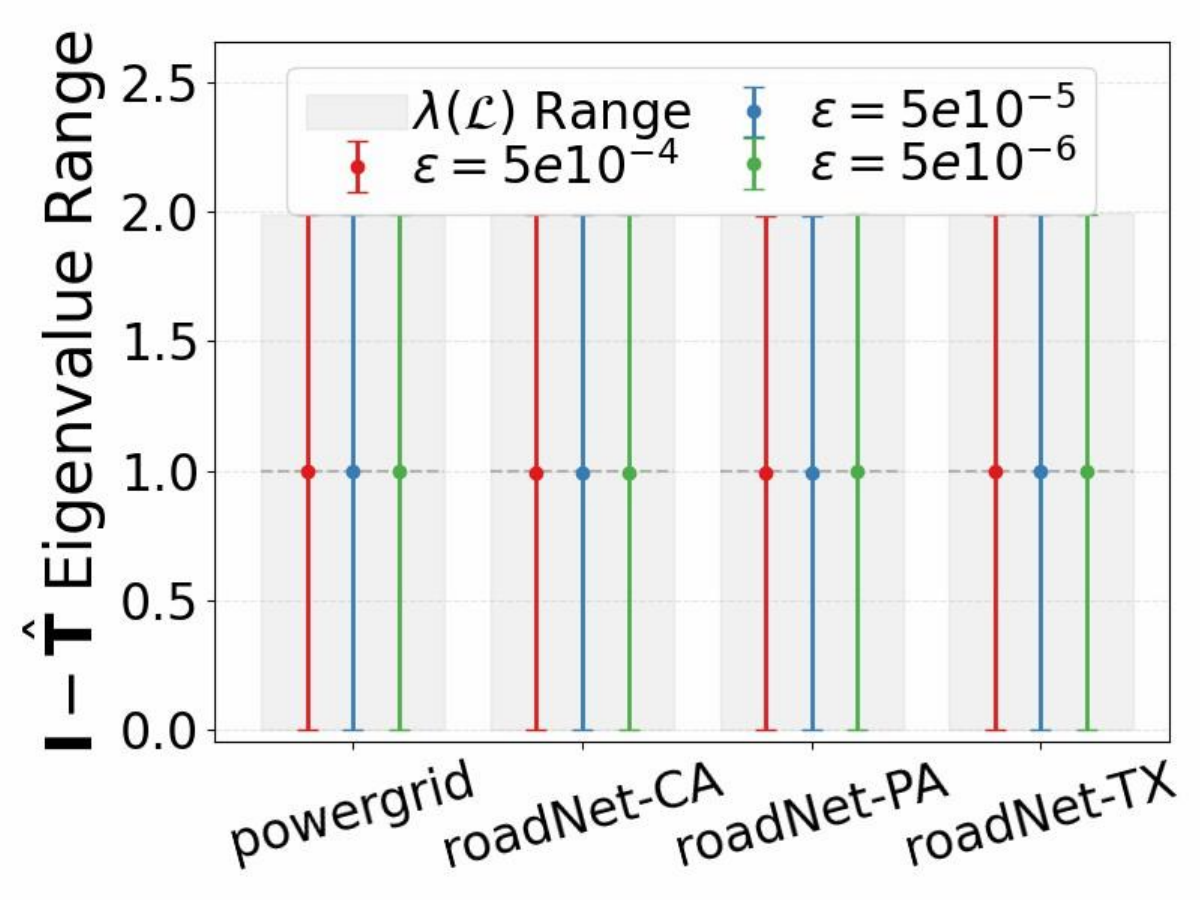}}
    \caption{Testing the eigenvalue range of $\mathbf{I}-\hat{\mathbf{T}}$ for different datasets with varing $\epsilon$}\label{fig:assump_eigen}\vspace{-0.2cm}
\end{figure}

\stitle{Exp-9: Testing the value of $C_1$ and $C_2$.} Recall that in our theoretical analysis, the runtime bound for \lzpush is $\tilde{O}(\kappa^{2.75}C_1C_2/\epsilon)$ with two additional numbers $C_1=\max_{u=s,t;i\leq k}{\Vert T_i(\mathbf{P})\mathbf{e}_u\Vert_1}$ and $C_2=\max_{i\leq k}{(\Vert \hat{\mathbf{v}}_i\Vert_1+\Vert \mathcal{A}\hat{\mathbf{v}}_i^+\Vert_1+\Vert \mathcal{A}\hat{\mathbf{v}}_i^-\Vert_1)}$. In this experiment we test the value of these two numbers with varying $k$ and $\epsilon$. Specifically, for $C_1$ we vary $k$ from $5$ to $30$ for social networks and $20$ to $120$ for road networks, with results shown in Fig. \ref{fig:assump_cheby}. We have the following observations: (i) $\Vert T_i(\mathbf{P})\mathbf{e}_u\Vert_1$ increases with both graph size and $k$, but experimentally scales better than its theoretical worst case $C_1\leq \sqrt{n}$ in Corollary \ref{coro:lzpush_time_error}. (ii) On graphs with poor expansion (road networks), the growth rate of $C_1$ is slower than the datasets with good expansion (social networks). For $C_2$, we fix $k=15$ for social networks, $k=100$ for road networks with varying $\epsilon$, with results shown in Fig. \ref{fig:assump_vi}. The observation is similar to $C_1$: $\Vert \hat{\mathbf{v}}_i\Vert_1,\Vert \mathcal{A}\hat{\mathbf{v}}_i\Vert_1$ increase with larger graph size and smaller $\epsilon$, and the growth rate on road networks is slower. These observations further demonstrate that \lzpush performs better on road networks comparing with previous algorithms.

\begin{figure}\vspace{-0.3cm}
    \centering
    \subfigure[Social Networks]{
		\includegraphics[scale=0.25]{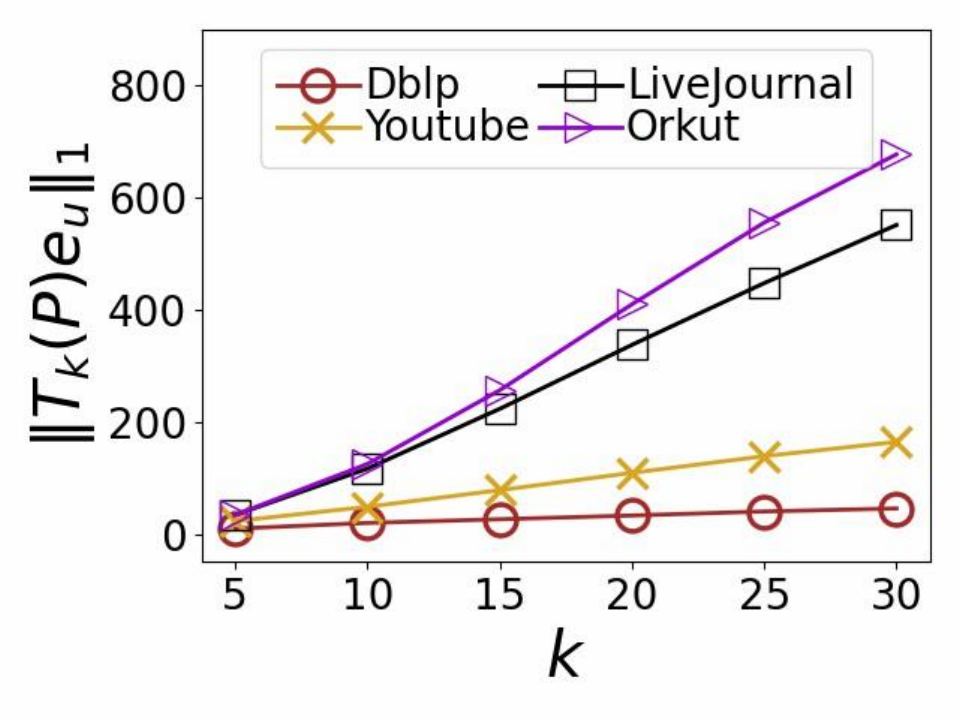}}
     \subfigure[Road Networks]{
		\includegraphics[scale=0.25]{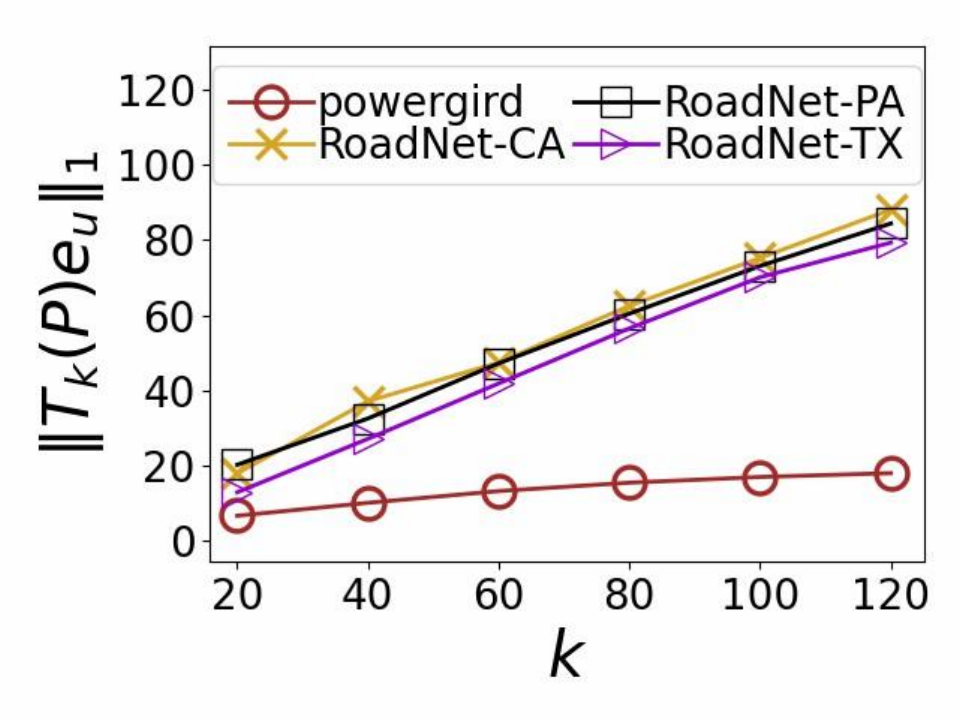}}
    \caption{Testing the value of $\max_{u=s,t;i\leq k}\Vert T_i(\mathbf{P})\mathbf{e}_u\Vert_1$ with varying $k$}\label{fig:assump_cheby}\vspace{-0.2cm}
\end{figure}

\begin{figure}\vspace{-0.3cm}
    \centering
    \subfigure[Social Networks, $\Vert \hat{\mathbf{v}}_i\Vert_1$]{
		\includegraphics[scale=0.25]{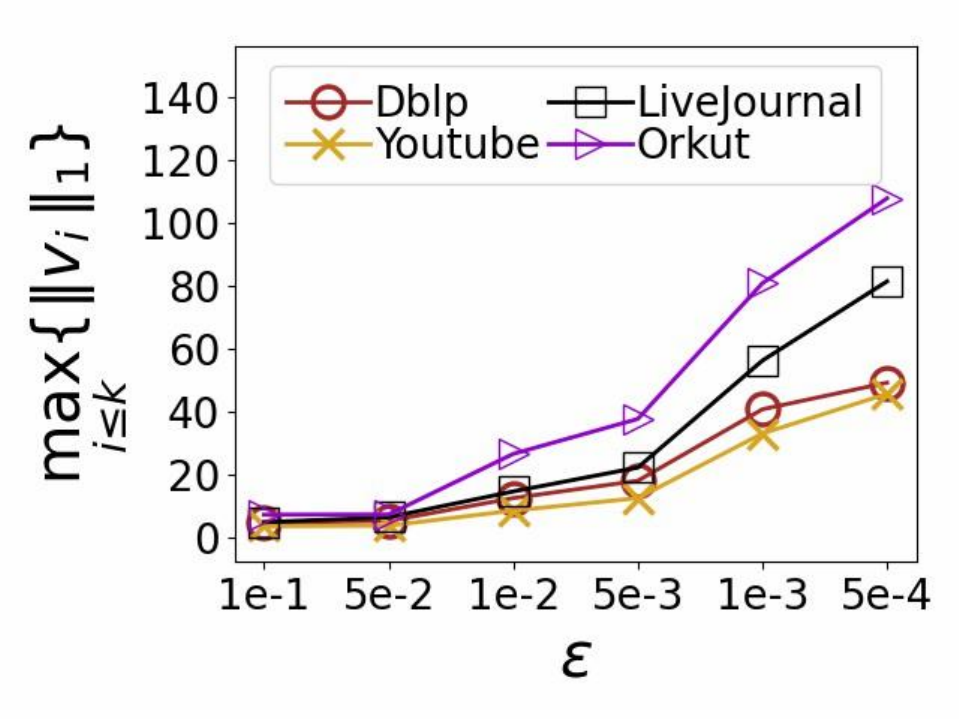}}
     \subfigure[Road Networks, $\Vert \hat{\mathbf{v}}_i\Vert_1$]{
		\includegraphics[scale=0.25]{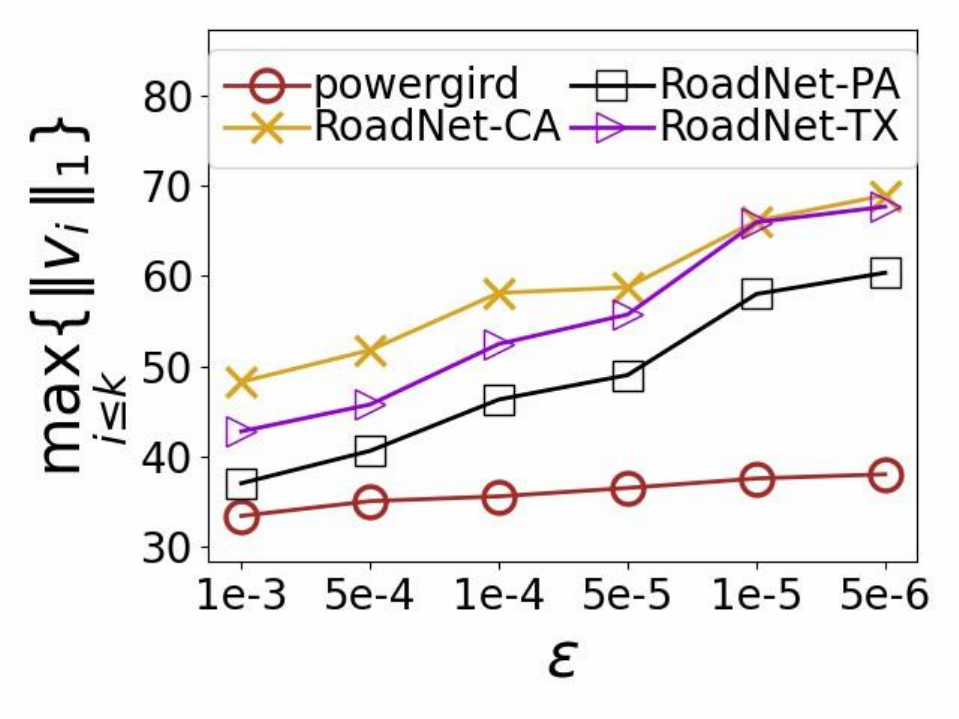}}

    \subfigure[Social Networks, $\Vert \mathcal{A}\hat{\mathbf{v}}_i\Vert_1$]{
		\includegraphics[scale=0.25]{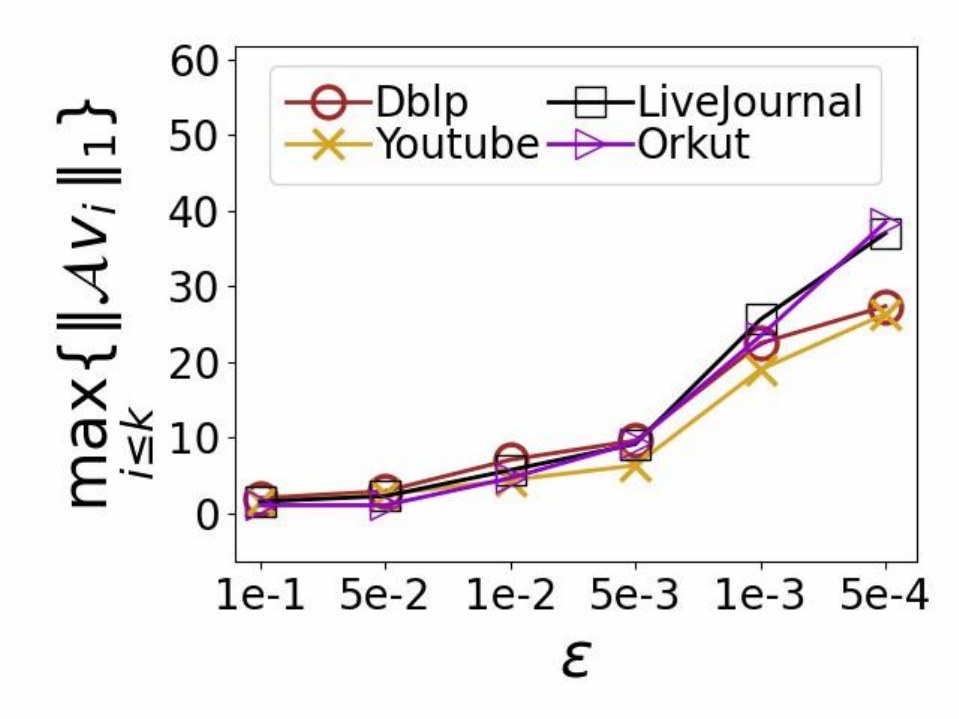}}
     \subfigure[Road Networks, $\Vert \mathcal{A}\hat{\mathbf{v}}_i\Vert_1$]{
		\includegraphics[scale=0.25]{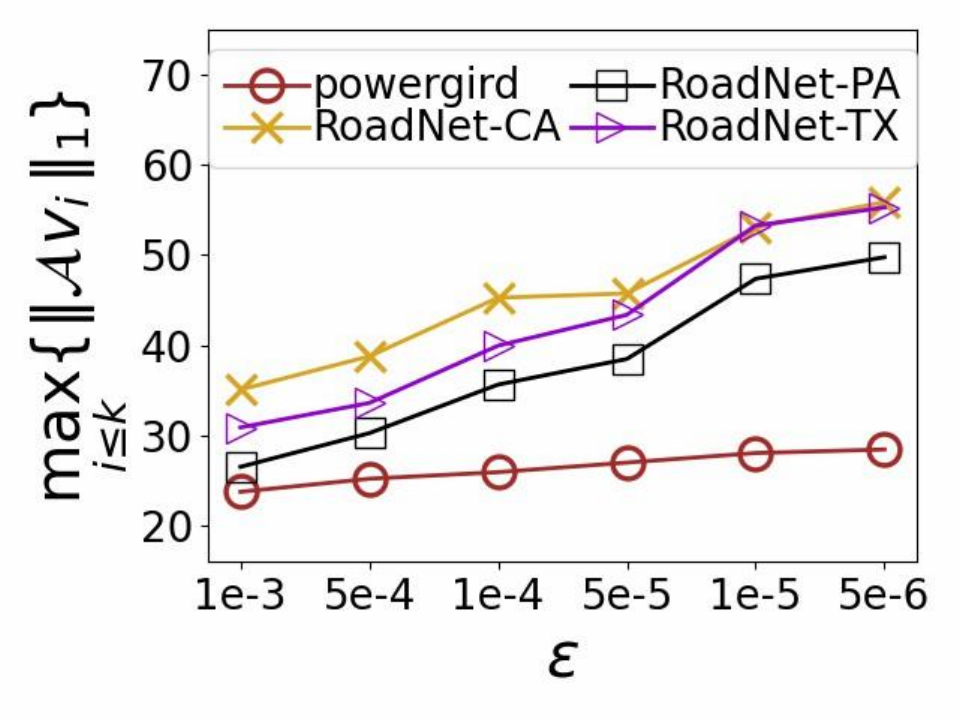}}
    \caption{Testing the value of $\max_{i\leq k} \Vert \hat{\mathbf{v}}_i\Vert_1$ and $\max_{i\leq k}\Vert \mathcal{A}\hat{\mathbf{v}}_i\Vert_1$ with varing $\epsilon$}\label{fig:assump_vi}\vspace{-0.2cm}
\end{figure}

\subsection{Case Study: Robust Routing on Road Networks}

One interesting application of the proposed methods is the robust routing on road networks. One recent research ~\cite{sinop2023robust} showing that generating a set of alternates routes via electric flows have the advantage in terms of: (i) \textit{Stretch}. Each alternate path does not too much worse than the shortest path. (ii) \textit{Diversity}. The alternate paths sufficiently differs from each other. (iii) \textit{Robustness}. Under random edge deletions, there will be a good route among the alternates with high probability. Specifically, their algorithms left the electric flow computation as a crucial subroutine, which is closely related to our RD computation problem.

\begin{definition}\label{def:electric_flow}
    Given two nodes $s,t$, the potential vector $\bm{\phi}\in \mathbb{R}^n$ is defined by $\bm{\phi}=\mathbf{L}^\dagger (\mathbf{e}_s-\mathbf{e}_t)$. The electric flow vector $\mathbf{f}\in \mathbb{R}^m$ is defined by $\mathbf{f}(e)=\bm{\phi}(u)-\bm{\phi}(v)$ for each edge $e=(u,v)\in \mathcal{E}$.
\end{definition}

\begin{algorithm}[t!]
\small
	\SetAlgoLined
	\KwIn{$\mathcal{G},s,t, k,l$}
$\hat{\bm{\phi}}\leftarrow $ Lanczos Iteration ($\mathcal{G},s,t,k$) with final output modified by $\hat{\bm{\phi}}=\sqrt{\frac{1}{d_s}+\frac{1}{d_t}}\mathbf{D}^{-1/2}\mathbf{V}(\mathbf{I}-\mathbf{T})^{-1}\mathbf{e}_1$\;
Compute $\hat{\mathbf{f}}$ by $\hat{\mathbf{f}}(e)=\hat{\bm{\phi}}(u)-\hat{\bm{\phi}}(v)$ for any $e=(u,v)\in \mathcal{E}$\;
\For{$i\in [2l]$}
{
    Find a path $\mathcal{P}$ that maximizes the minimum flow from $\hat{\mathbf{f}}$ along its edges, and remove it from $\hat{\mathbf{f}}$\;
    If no $\mathcal{P}$ exists, break\;
}
	\KwOut{$l$ paths with the smallest cost}	
    \caption{Robust routing using Lanczos Iteration.}\label{algo:routing}
\end{algorithm}

We show that the proposed Lanczos Algorithm can be directly generalized to compute $\bm{\phi}$ (and thus computing $\mathbf{f}$) by only a single line adjustment, see Algorithm \ref{algo:routing}. After this modification, we prove that the Lanczos Algorithm computes an accurate approximation of $\mathbf{f}$ \footnote{We just provide the error analysis by Lanczos Algorithm for simplicity, though the proposed Lanczos Push algorithm can also be modified to compute the electric flow.}.

\begin{theorem}\label{thm:electric_flow}
    Algorithm \ref{algo:routing} computes $\hat{\mathbf{f}}$ such that $\Vert \mathbf{f}-\hat{\mathbf{f}}\Vert_1\leq \epsilon$ when setting the iteration number $k=O(\sqrt{\kappa}\log \frac{m}{\epsilon})$.
\end{theorem}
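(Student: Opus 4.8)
The plan is to reduce the $\ell_1$ error on the flow to an $\ell_2$ error on the (normalized) potential, and then feed it into the exact polynomial‑approximation machinery already used for Theorem~\ref{thm:lanczos_err}. First I would rewrite the two potentials in normalized coordinates. Using $\mathbf{L}^\dagger=\mathbf{D}^{-1/2}(\mathbf{I}-\mathcal{A})^\dagger\mathbf{D}^{-1/2}$ and $\frac{\mathbf{e}_s}{\sqrt{d_s}}-\frac{\mathbf{e}_t}{\sqrt{d_t}}=c\,\mathbf{v}_1$ with $c=\sqrt{1/d_s+1/d_t}\le\sqrt{2}$, Definition~\ref{def:electric_flow} gives $\bm{\phi}=c\,\mathbf{D}^{-1/2}(\mathbf{I}-\mathcal{A})^\dagger\mathbf{v}_1$, while (since $\mathbf{V}^T\mathbf{v}_1=\mathbf{e}_1$) the modified output of Algorithm~\ref{algo:routing} is $\hat{\bm{\phi}}=c\,\mathbf{D}^{-1/2}\mathbf{V}(\mathbf{I}-\mathbf{T})^{-1}\mathbf{V}^T\mathbf{v}_1$. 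Hence $\bm{\phi}-\hat{\bm{\phi}}=c\,\mathbf{D}^{-1/2}\mathbf{w}$ with $\mathbf{w}:=(\mathbf{I}-\mathcal{A})^\dagger\mathbf{v}_1-\mathbf{V}(\mathbf{I}-\mathbf{T})^{-1}\mathbf{V}^T\mathbf{v}_1$, so everything reduces to bounding $\|\mathbf{w}\|_2$ and then converting to an $\ell_1$ bound on $\mathbf{f}-\hat{\mathbf{f}}$.

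For $\|\mathbf{w}\|_2$ I would repeat the argument of Lemma~\ref{lem:lz_quadratic_err} but in vector (rather than quadratic) form. For any $p\in\mathcal{P}_k$, Lemma~\ref{lem:lz_polynomial_correct} yields $p(\mathcal{A})\mathbf{v}_1=\mathbf{V}p(\mathbf{T})\mathbf{V}^T\mathbf{v}_1$, so
\begin{align*}
\mathbf{w}=\big[(\mathbf{I}-\mathcal{A})^\dagger-p(\mathcal{A})\big]\mathbf{v}_1+\mathbf{V}\big[p(\mathbf{T})-(\mathbf{I}-\mathbf{T})^{-1}\big]\mathbf{V}^T\mathbf{v}_1 .
\end{align*}
Because $\mathbf{v}_1$ is orthogonal to the top eigenvector $\mathbf{D}^{1/2}\mathbf{1}$ of $\mathcal{A}$ (a one‑line check, already used in the proof of Lemma~\ref{lem:lz_quadratic_err}), the whole Krylov subspace is orthogonal to it, so $\mathrm{spec}(\mathbf{T})\subseteq[\lambda_{\min}(\mathcal{A}),\lambda_2(\mathcal{A})]$ and the first term also only ``sees'' eigenvalues in that interval. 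Both terms are therefore bounded in $\ell_2$ by $\max_{x\in[\lambda_{\min}(\mathcal{A}),\lambda_2(\mathcal{A})]}|1/(1-x)-p(x)|$; minimizing over $p\in\mathcal{P}_k$ gives $\|\mathbf{w}\|_2\le 2\eta_k$, where $\eta_k:=\min_{p\in\mathcal{P}_k}\max_{x\in[\lambda_{\min}(\mathcal{A}),\lambda_2(\mathcal{A})]}|1/(1-x)-p(x)|$ is precisely the quantity already analyzed for Theorem~\ref{thm:lanczos_err}.

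Next I would pass to the flow via the signed incidence matrix $\mathbf{B}\in\mathbb{R}^{m\times n}$, for which $\mathbf{f}-\hat{\mathbf{f}}=\mathbf{B}(\bm{\phi}-\hat{\bm{\phi}})$ and $\mathbf{B}^T\mathbf{B}=\mathbf{L}$. Using $\bm{\phi}-\hat{\bm{\phi}}=c\,\mathbf{D}^{-1/2}\mathbf{w}$ and $\mathbf{D}^{-1/2}\mathbf{L}\mathbf{D}^{-1/2}=\mathcal{L}$ with $\|\mathcal{L}\|_2\le 2$,
\begin{align*}
\|\mathbf{f}-\hat{\mathbf{f}}\|_2^2=c^2\,\mathbf{w}^T\mathcal{L}\mathbf{w}\le 2c^2\|\mathbf{w}\|_2^2\le 4\|\mathbf{w}\|_2^2 ,
\end{align*}
hence $\|\mathbf{f}-\hat{\mathbf{f}}\|_1\le\sqrt{m}\,\|\mathbf{f}-\hat{\mathbf{f}}\|_2\le 4\sqrt{m}\,\eta_k$ by Cauchy--Schwarz. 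Finally, invoking the standard Chebyshev polynomial bound for approximating $1/(1-x)$ on an interval of condition number $\kappa$ (the same bound that underlies Theorem~\ref{thm:lanczos_err}, giving $\eta_k\le\epsilon'$ once $k=O(\sqrt{\kappa}\log(\kappa/\epsilon'))$), I take $\epsilon'=\epsilon/(4\sqrt{m})$, so $k=O(\sqrt{\kappa}\log(\kappa\sqrt{m}/\epsilon))$ suffices to force $\|\mathbf{f}-\hat{\mathbf{f}}\|_1\le\epsilon$. Since $\kappa$ is polynomially bounded in $n$ and $n\le m+1$ for a connected graph, $\log(\kappa\sqrt{m}/\epsilon)=O(\log(m/\epsilon))$, giving the claimed $k=O(\sqrt{\kappa}\log\frac{m}{\epsilon})$. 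The loop in Lines~3--5 of Algorithm~\ref{algo:routing} does not affect this estimate, which concerns only the accuracy of $\hat{\mathbf{f}}$.

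\textbf{Main obstacle.} The one genuinely new point is the $\ell_2\!\to\!\ell_1$ conversion for the flow: Cauchy--Schwarz unavoidably introduces a factor $\sqrt{m}$ (even the exact flow satisfies only $\|\mathbf{f}\|_1\le\sqrt{m}\,\|\mathbf{f}\|_2$), and absorbing this $\sqrt{m}$ into the target accuracy of the polynomial approximation is exactly what turns the $\log(\kappa/\epsilon)$ of Theorem~\ref{thm:lanczos_err} into $\log(m/\epsilon)$. Everything else --- tracking the $\mathbf{D}^{\pm1/2}$ rescalings and checking $\mathbf{w}^T\mathcal{L}\mathbf{w}=O(\|\mathbf{w}\|_2^2)$ --- is routine.
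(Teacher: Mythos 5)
Your proof is correct, and it takes a somewhat different route from the paper's (very terse) argument. The paper bounds each coordinate separately — ``using the same way as Theorem~\ref{thm:lanczos_err}'' — to obtain a per‑edge bound $|\hat{\mathbf{f}}(e)-\mathbf{f}(e)|\le\epsilon/m$, which forces the polynomial‑approximation error $\eta_k$ to be driven down to $O(\epsilon/m)$. You instead establish a single $\ell_2$ bound $\|\mathbf{w}\|_2\le 2\eta_k$ on the normalized‑potential residual, pass to the flow globally via the incidence matrix using $\mathbf{B}^T\mathbf{B}=\mathbf{L}$ (so $\|\mathbf{f}-\hat{\mathbf{f}}\|_2^2=c^2\,\mathbf{w}^T\mathcal{L}\mathbf{w}\le 4\|\mathbf{w}\|_2^2$), and only then do one $\ell_2\!\to\!\ell_1$ conversion with Cauchy--Schwarz, so you need only $\eta_k=O(\epsilon/\sqrt{m})$. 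Both estimates dissolve into the same $O(\sqrt{\kappa}\log(m/\epsilon))$ because the slack is logarithmic, but your version is more explicit (the paper does not spell out why a bilinear‑form estimate follows ``in the same way'' as the quadratic‑form estimate of Lemma~\ref{lem:lz_quadratic_err}; your decomposition $\mathbf{w}=[(\mathbf{I}-\mathcal{A})^\dagger-p(\mathcal{A})]\mathbf{v}_1+\mathbf{V}[p(\mathbf{T})-(\mathbf{I}-\mathbf{T})^{-1}]\mathbf{V}^T\mathbf{v}_1$ is the missing lemma). One small cosmetic point: the identity $\bm{\phi}=c\,\mathbf{D}^{-1/2}(\mathbf{I}-\mathcal{A})^\dagger\mathbf{v}_1$ holds only modulo an additive multiple of $\mathbf{1}$ (the paper inserts the projector $\mathbf{\Pi}=\mathbf{I}-\tfrac{1}{n}\mathbf{1}\mathbf{1}^T$ for exactly this reason); since $\mathbf{B}\mathbf{1}=\mathbf{0}$, the shift cancels in $\mathbf{f}-\hat{\mathbf{f}}$, so your argument is unaffected, but it is worth a one‑line remark.
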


\comment{
\begin{proof}
    \textcolor{blue}{We define $\mathbf{\Pi}=\mathbf{I}-\frac{1}{n}\mathbf{11}^T$ as projection matrix on the image of $\mathbf{L}$. By the definition of the potential vector $\bm{\phi}$, we have the following formula using same strategy as Eq. \ref{equ:approx_formula}:}
   \begin{equation}
\begin{aligned}
    \bm{\phi}&=\mathbf{L}^{\dagger}(\mathbf{e}_s-\mathbf{e}_t)\\
    &=\mathbf{\Pi}\mathbf{D}^{-1/2}(\mathbf{I}-\mathcal{A})^{\dagger}\mathbf{D}^{-1/2}(\mathbf{e}_s-\mathbf{e}_t)\\
    &=\mathbf{\Pi D}^{-1/2}(\mathbf{I}-\mathcal{A})^{\dagger} (\frac{\mathbf{e}_s}{\sqrt{d_s}}-\frac{\mathbf{e}_t}{\sqrt{d_t}})\\
    &\approx \sqrt{\frac{1}{d_s}+\frac{1}{d_t}}\mathbf{\Pi D}^{-1/2}\mathbf{V}(\mathbf{I}-\mathbf{T})^{-1}\mathbf{V}\mathbf{v}_1\\
    &=\sqrt{\frac{1}{d_s}+\frac{1}{d_t}}\mathbf{\Pi D}^{-1/2}\mathbf{V}(\mathbf{I}-\mathbf{T})^{-1}\mathbf{e}_1
\end{aligned}
\end{equation}
\textcolor{blue}{By definition $\mathbf{f}(e)=\bm{\phi}(u)-\bm{\phi}(v)=(\mathbf{e}_u-\mathbf{e}_v)^T\mathbf{L}^\dagger(\mathbf{e}_s-\mathbf{e}_t)$, we can approximate:}
\begin{equation}
    \begin{aligned}
        \hat{\mathbf{f}}(e)&=\sqrt{\frac{1}{d_s}+\frac{1}{d_t}}(\mathbf{e}_u-\mathbf{e}_v)^T\mathbf{\Pi D}^{-1/2}\mathbf{V}(\mathbf{I}-\mathbf{T})^{-1}\mathbf{e}_1\\
        &=\sqrt{\frac{1}{d_s}+\frac{1}{d_t}}(\mathbf{e}_u-\mathbf{e}_v)^T\mathbf{ D}^{-1/2}\mathbf{V}(\mathbf{I}-\mathbf{T})^{-1}\mathbf{e}_1
    \end{aligned}
\end{equation}
\textcolor{blue}{Which is equivalent to our formula in Line 1-2 in Algorithm \ref{algo:routing}. Using the same way as the proof of Theorem \ref{thm:lanczos_err}, we can control the error $| \hat{\mathbf{f}}(e)- \mathbf{f}(e)|\leq \epsilon/m$ when setting the iteration number $k=O(\kappa \log \frac{\kappa m}{\epsilon})=O(\kappa \log \frac{m}{\epsilon})$. As a result, $\Vert \mathbf{f}-\hat{\mathbf{f}}\Vert_1\leq \epsilon$. This finishes the proof.}
\end{proof}
}

For the next step, we use the same strategy as ~\cite{sinop2023robust} to compute $l$ robust paths using electric flow $\hat{\mathbf{f}}$: we iteratively find a path that maximizes the minimum flow from $s$ to $t$, then remove the corresponding minimum flow value from the edges it pass through. We repeat this process $2l$ times and finally find $l$ paths with the smallest cost. We provide an running example on Boston extracted from OpenStreetMap ~\cite{openstreetmap}. We randomly choose two node index $s,t$. We choose $l=10$ alternative routes and $k=150$ for Lanczos algorithm, we compare the electric flow-based routing with SOTA shortest path routing methods: \Penalty and \plateau in terms of three metrics following ~\cite{sinop2023robust}: (i) Stretch: The average length of the alternative paths over the length of $s,t$-shortest path; (ii) Diversity: The average pairwise Jaccard similarity among all alternative paths; (iii) Robustness: The probability that there is an alternative path after random edge deletions. Specifically, we delete each edge independently with probability $0.01$ in our experiments. The experimental results are reported in Table \ref{tab:routing}. As can be seen, the electric flow based routing output paths with better robustness and stretch comparing with \Penalty and with better robustness and diversity comparing with \plateau. Moreover, the runtime using Lanczos algorithm is comparable with the runtime of the SOTA methods. We note that in previous studies, the main bottleneck for the electric flow-based routing is the computation of the potential $\bm{\phi}$ ~\cite{sinop2023robust}. Our algorithm overcomes this bottleneck, enabling a fast and practical solution for alternative routing.

\begin{table}[t!]
\small
	\caption{A comparison over different methods for robust routing} \vspace{-0.2cm}
	\scalebox{1}{
		\begin{tabular}{c|c|c|c}
			\toprule
			\multicolumn{1}{c|}{Metric/Methods} & \multicolumn{1}{c|}{Electric Flow}&\multicolumn{1}{c|}{\Penalty}&\plateau\\
			\midrule
			Stretch& 1.15& 1.64& 1.09\\
             Diversity& 0.93& 0.98& 0.51\\
	       Robustness& 0.95& 0.85&0.52\\
                Runtime & 0.039& 0.036&0.099\\
            \bottomrule	
		\end{tabular}
	}
\end{table}\vspace{-0.2cm}\label{tab:routing}

\section{ Related Work}


\stitle{RD estimation with applications.} There is a long line of research regarding the estimation of RD. The current algorithms for RD estimation can be broadly classified in two categories: (i) RD computation for multiple vertex pairs ~\cite{spielman2008graph,chu2020graph,jambulapati2018efficient,li2023new,dwaraknath2024towards,yang2025improved}; (ii) RD computation for single source or single vertex pair ~\cite{peng2021local,liao2023resistance,yang2023efficient,liao2024efficient,cui2025mixing}. More specifically, for the first problem, the SOTA theoretical result is the $\tilde{O}(m+(n+|S|)/\epsilon^{1.5})$ time algorithm for $\epsilon$-approximates $S$ pairs of RD value for general undirected graphs ~\cite{chu2020graph}. For expander graphs, the time complexity is recently reduced to $\tilde{O}(\sqrt{nm}/\epsilon+|S|)$ while maintaining the same error guarantee ~\cite{yang2025improved}.
For the second problem, the SOTA algorithms ~\cite{peng2021local,yang2023efficient} estimate $r_\mathcal{G}(s,t)$ with $\epsilon$-absolute error guarantee in $\tilde{O}(1/\epsilon^2)$ time for a given pair of nodes $s,t$ for expander graphs. In this paper, we focus on improving the efficiency of the local algorithm on non-expander graphs (i.e., $\kappa$ is not $O(1)$), so our research has independent interest over these previous studies. 
The RD computation is widely used as a sub-procedure in graph sparsification ~\cite{spielman2008graph}, graph clustering ~\cite{alev2017graph}, counting random spanning trees ~\cite{chu2020graph,li2023new}, robust routing of road networks ~\cite{gao2023robust} and understanding oversquashing in GNNs \cite{black2023understanding}. 

\stitle{Lanczos iteration.} Lanczos iteration is a powerful algorithm both theoretically and experimentally. In numerical analysis, understanding the stability of Lanczos iteration in finite precision is a key issue. For example, ~\cite{meurant2006lanczos,musco2018stability} analyze the error propagation of Lanczos iteration in finite precision for matrix function approximation, ~\cite{paige1980accuracy} studies the error bound for eigenvalue approximation by Lanczos iteration, ~\cite{Chen2023StabilityOT} studies the stability of Lanczos algorithm under regular spectral distributions. However, in this paper, in contrast to just analyzing rounding errors produced by Lanczos iteration, we actively create errors by our subset Lanczos recurrence (Eq. \ref{equ:subset_lz_recurrence}) to make our algorithm implemented locally. As a result, our proposed techniques have independent interest over these previous researches. Recently, Lanczos iteration has also been applied in various graph algorithms. For example, ~\cite{orecchia2012approximating} use Lanczos iteration to compute the Heat Kernel PageRank value, ~\cite{bonchi2012fast} use Lanczos iteration to compute the Katz scores and commute times, ~\cite{tsitsulin2020just} use stochastic Lanczos quadrature to estimate the Von Neumann graph entropy, ~\cite{chen2021analysis,bhattacharjee2024improved} use stochastic Lanczos quadrature to design the linear time algorithm to approximate the spectral density of a symmetric matrix, and ~\cite{shi2017local} applies Lanczos iteration on a subgraph to computes the extreme eigenvector for local clustering. However, all these previous researches directly apply the Lanczos algorithm as a sub-procedure, while we provide the first algorithm that modifies the Lanczos algorithm by implementing it locally. Finally, note that although our algorithm is primarily designed to compute the RD value, it is also possible to be generalized to more applications. These generalizations are left to future work.

\comment{

\stitle{Local graph algorithms.} With the advent of large scale datasets, traditional linear time algorithms that require access to the entire graph have become infeasible. Therefore, more and more people are starting to study local graph algorithms. For example, ~\cite{liu2020strongly,fountoulakis2020p,fountoulakis2023flow,fountoulakis2019variational} study the local algorithms for conductance based local graph clustering, ~\cite{lofgren16bidirection,bressan2018sublinear,wang2024revisiting,lofgren2013personalized,chen2023accelerating,bautista2022local,wei2024approximating} study the local algorithms for PageRank computation, ~\cite{cohen2018approximating,jin2023moments} study the local algorithms for graph spectral density approximation, ~\cite{andoni2018solving} study the local algorithms for solving Laplacian systems. However, since their problem differs from ours, their techniques cannot be directly applied to our local RD computation problem.

}

\vspace{-0.3cm}
\section{Conclusion}
In this paper, we propose two efficient algorithms for RD computation, i.e., the Lanczos iteration and Lanczos Push. Specifically, we first show that the classical Lanczos iteration can be used to compute the RD value. We prove that Lanczos iteration can be implemented in $\tilde{O}(\sqrt{\kappa} m)$ time to approximate the RD value with $\epsilon$-absolute error guarantee, which is $\sqrt{\kappa}$ times faster than the traditional Power Method. Then, we develop a novel subset Lanczos recurrence technique to achieve a local algorithm for RD computation. We prove that under some modest assumptions, our Lanczos Push algorithm can be implemented in $\tilde{O}(\kappa^{2.75}C_1C_2/\epsilon)$ time to achieve an $\epsilon$-absolute error approximation of RD value, which achieves improvement of $\kappa^{0.25}$ over SOTA methods. Extensive experimental results demonstrate that our algorithms significantly outperform existing SOTA methods for RD computation, especially for large graphs and graphs with larger condition number $\kappa$.

\balance

\newpage

\bibliographystyle{plain}
\bibliography{ref}

\appendix



\section{Omitting proofs}

\subsection{Proof of Eq. ~\ref{equ:er_laylor_expansion}}

\begin{theorem}{(Eq. \ref{equ:er_laylor_expansion} restate)}
    \begin{align*}
r_\mathcal{G}(s,t)&=(\mathbf{e}_s-\mathbf{e}_t)^T\mathbf{L}^{\dagger}(\mathbf{e}_s-\mathbf{e}_t)\\
&=\frac{1}{2}(\mathbf{e}_s-\mathbf{e}_t)^T\sum_{k=0}^{+\infty}{\mathbf{D}^{-1}\left(\frac{1}{2}\mathbf{I}+\frac{1}{2}\mathbf{P}\right)^k}(\mathbf{e}_s-\mathbf{e}_t)
\end{align*}
\end{theorem}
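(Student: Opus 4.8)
The first equality is merely the definition of $r_\mathcal{G}(s,t)$, so all the content lies in the second equality; writing $\mathbf{b}=\mathbf{e}_s-\mathbf{e}_t$, which satisfies $\mathbf{1}^T\mathbf{b}=0$, it suffices to prove $\mathbf{b}^T\mathbf{L}^\dagger\mathbf{b}=\tfrac12\,\mathbf{b}^T\mathbf{D}^{-1}\sum_{k=0}^{\infty}\mathbf{W}^k\mathbf{b}$, where $\mathbf{W}:=\tfrac12\mathbf{I}+\tfrac12\mathbf{P}$ and $\mathbf{P}=\mathbf{A}\mathbf{D}^{-1}$. The plan is to conjugate by $\mathbf{D}^{1/2}$ so that the non-symmetric lazy-walk matrix $\mathbf{W}$ becomes the symmetric matrix $\tilde{\mathbf{W}}:=\mathbf{D}^{-1/2}\mathbf{W}\mathbf{D}^{1/2}=\tfrac12(\mathbf{I}+\mathcal{A})$, and then to recognise the geometric series as a Moore--Penrose pseudoinverse.

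First I would record the elementary identities $\mathbf{W}^k\mathbf{D}^{1/2}=\mathbf{D}^{1/2}\tilde{\mathbf{W}}^k$ and $\mathbf{I}-\tilde{\mathbf{W}}=\tfrac12(\mathbf{I}-\mathcal{A})=\tfrac12\mathcal{L}$, and set $\mathbf{z}=\mathbf{D}^{-1/2}\mathbf{b}$. Note that $\mathbf{D}^{1/2}\mathbf{1}$ spans $\ker\mathcal{L}=\ker(\mathbf{I}-\tilde{\mathbf{W}})$ (since $\mathbf{L}\mathbf{1}=\mathbf{0}$ and $\mathcal{G}$ is connected), and that $(\mathbf{D}^{1/2}\mathbf{1})^T\mathbf{z}=\mathbf{1}^T\mathbf{b}=0$, so $\mathbf{z}$ lies in the orthogonal complement of that kernel, i.e.\ in the image of $\mathbf{I}-\tilde{\mathbf{W}}$. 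The key analytic step is convergence of $\sum_{k\ge0}\tilde{\mathbf{W}}^k\mathbf{z}$: the laziness shifts the spectrum of $\mathcal{A}$ (contained in $[-1,1]$) into $[0,1]$, and connectedness makes the eigenvalue $1$ of $\mathcal{A}$ simple, so on $(\mathbf{D}^{1/2}\mathbf{1})^\perp$ the eigenvalues of $\tilde{\mathbf{W}}$ are at most $\tfrac{1+\lambda_2(\mathcal{A})}{2}<1$; hence the series converges and equals $(\mathbf{I}-\tilde{\mathbf{W}})^\dagger\mathbf{z}=2\mathcal{L}^\dagger\mathbf{z}$ (the restricted inverse agrees with the pseudoinverse precisely because $\mathbf{z}$ is in the image of $\mathbf{I}-\tilde{\mathbf{W}}$). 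Assembling, $\tfrac12\mathbf{D}^{-1}\sum_{k\ge0}\mathbf{W}^k\mathbf{b}=\tfrac12\mathbf{D}^{-1}\mathbf{D}^{1/2}\cdot 2\mathcal{L}^\dagger\mathbf{z}=\mathbf{D}^{-1/2}\mathcal{L}^\dagger\mathbf{D}^{-1/2}\mathbf{b}$.

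It then remains to show $\mathbf{b}^T\mathbf{D}^{-1/2}\mathcal{L}^\dagger\mathbf{D}^{-1/2}\mathbf{b}=\mathbf{b}^T\mathbf{L}^\dagger\mathbf{b}$. I would argue this at the level of the quadratic form rather than at the level of matrices: letting $\mathbf{y}^\ast=\mathcal{L}^\dagger\mathbf{z}$ be the unique solution in $(\mathbf{D}^{1/2}\mathbf{1})^\perp$ of $\mathcal{L}\mathbf{y}=\mathbf{z}$, one checks $\mathbf{L}(\mathbf{D}^{-1/2}\mathbf{y}^\ast)=\mathbf{D}^{1/2}\mathcal{L}\mathbf{D}^{1/2}\mathbf{D}^{-1/2}\mathbf{y}^\ast=\mathbf{D}^{1/2}\mathbf{z}=\mathbf{b}$ (using $\mathbf{L}=\mathbf{D}^{1/2}\mathcal{L}\mathbf{D}^{1/2}$), so $\mathbf{D}^{-1/2}\mathbf{y}^\ast$ and $\mathbf{L}^\dagger\mathbf{b}$ differ by an element of $\ker\mathbf{L}=\mathrm{span}(\mathbf{1})$, which is annihilated when paired with $\mathbf{b}\perp\mathbf{1}$. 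Hence $\mathbf{b}^T\mathbf{D}^{-1/2}\mathcal{L}^\dagger\mathbf{D}^{-1/2}\mathbf{b}=\mathbf{b}^T(\mathbf{D}^{-1/2}\mathbf{y}^\ast)=\mathbf{b}^T\mathbf{L}^\dagger\mathbf{b}=r_\mathcal{G}(s,t)$, which finishes the proof.

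The main obstacle I anticipate is the subspace/pseudoinverse bookkeeping: one must track carefully that $\mathbf{b}$ lies orthogonal to $\mathbf{1}$ for $\mathbf{L}$ but that $\mathbf{z}=\mathbf{D}^{-1/2}\mathbf{b}$ lies orthogonal to $\mathbf{D}^{1/2}\mathbf{1}$ for $\mathcal{L}$ and $\tilde{\mathbf{W}}$, and that connectedness of $\mathcal{G}$ together with the laziness genuinely delivers a spectral radius strictly below $1$ on the relevant subspace (so the Neumann series converges and is not merely manipulated formally). Once these points are pinned down, the remaining manipulations are routine linear algebra.
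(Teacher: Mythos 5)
Your proof is correct and takes essentially the same route as the paper: conjugate by $\mathbf{D}^{1/2}$ so that the lazy walk matrix $\frac{1}{2}\mathbf{I}+\frac{1}{2}\mathbf{P}$ becomes the symmetric $\frac{1}{2}(\mathbf{I}+\mathcal{A})$, exploit orthogonality of $\mathbf{D}^{-1/2}(\mathbf{e}_s-\mathbf{e}_t)$ to the null space spanned by $\mathbf{D}^{1/2}\mathbf{1}$, expand the pseudoinverse as a geometric series, and convert $\mathcal{A}$ back to $\mathbf{P}$. The one substantive difference is the last step. The paper writes the step $\mathbf{L}^\dagger=\mathbf{D}^{-1/2}(\mathbf{I}-\mathcal{A})^\dagger\mathbf{D}^{-1/2}$ as if it were a matrix identity, but it is not a Moore--Penrose identity in general: $\mathbf{D}^{-1/2}\mathcal{L}^\dagger\mathbf{D}^{-1/2}\mathbf{L}=\mathbf{D}^{-1/2}\tilde{\mathbf{\Pi}}\mathbf{D}^{1/2}$ is not symmetric (unless $\mathbf{D}$ is a scalar), so the ranges and kernels do not line up to make the two pseudoinverses coincide as matrices. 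What is true, and what the paper silently uses, is that the two quadratic forms agree on any $\mathbf{b}\perp\mathbf{1}$. You make exactly this observation explicit, replacing the asserted matrix identity with a quadratic-form argument via $\mathbf{y}^\ast=\mathcal{L}^\dagger\mathbf{z}$ and the fact that $\mathbf{D}^{-1/2}\mathbf{y}^\ast$ and $\mathbf{L}^\dagger\mathbf{b}$ differ only by a multiple of $\mathbf{1}$, which $\mathbf{b}$ annihilates. This makes your write-up slightly more careful than the paper's, while the paper's is slightly shorter because it goes through the explicit eigendecomposition $\sum_{j\ge 2}(1-\lambda_j)^{-1}\mathbf{u}_j\mathbf{u}_j^T$ rather than treating $(\mathbf{I}-\tilde{\mathbf{W}})^\dagger$ as an operator restricted to $(\mathbf{D}^{1/2}\mathbf{1})^\perp$; both are routine and give the same convergence guarantee (spectral radius at most $\frac{1+\lambda_2(\mathcal{A})}{2}<1$ on that subspace for a connected graph).
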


\begin{proof}
    We use the similar argument as Lemma 4.3 in ~\cite{peng2021local}. First, we note that $\mathbf{L}=\mathbf{D}-\mathbf{A}=\mathbf{D}^{1/2}(\mathbf{I}-\mathcal{A})\mathbf{D}^{1/2}$, where $\mathcal{A}$ is the normalized adjacency matrix. Therefore, we have:
 \begin{align*}
    \mathbf{L}^\dagger&=\mathbf{D}^{-1/2}(\mathbf{I}-\mathcal{A})^\dagger\mathbf{D}^{-1/2}
    =\mathbf{D}^{-1/2}\sum_{j=2}^{n}{\frac{1}{1-\lambda_j(\mathcal{A})}\mathbf{u}_j\mathbf{u}_j^T}\mathbf{D}^{-1/2}\\
    &=\frac{1}{2}\mathbf{D}^{-1/2}\sum_{j=2}^{n}{\sum_{l=0}^{\infty}{\left(\frac{1}{2}+\frac{1}{2}\lambda_j(\mathcal{A})\right)^l}\mathbf{u}_j\mathbf{u}_j^T}\mathbf{D}^{-1/2}\\
    &=\frac{1}{2}\mathbf{D}^{-1/2}\sum_{l=0}^{\infty}{\sum_{j=2}^{n}{\left(\frac{1}{2}+\frac{1}{2}\lambda_j(\mathcal{A})\right)^l}\mathbf{u}_j\mathbf{u}_j^T}\mathbf{D}^{-1/2}\\
    &=\frac{1}{2}\mathbf{D}^{-1/2}\sum_{l=0}^{\infty}{\left(\left(\frac{1}{2}\mathbf{I}+\frac{1}{2}\mathcal{A}\right)^l-\mathbf{u}_1\mathbf{u}_1^T\right)}\mathbf{D}^{-1/2}\\
 \end{align*}
 Where $\mathbf{u}_j$ denotes the eigenvector of $\mathcal{A}$ corresponding to the eigenvalue $\lambda_j(\mathcal{A})$. Next, we notice the fact that $\mathbf{u}_1=\mathbf{D}^{1/2}\mathbf{1}\perp \mathbf{D}^{-1/2}(\mathbf{e}_s-\mathbf{e}_t)$ for any $s,t\in \mathcal{V}$, where $\mathbf{1}$ is the all-one vector. Therefore,
 \begin{align*}
     r_\mathcal{G}(s,t)&=(\mathbf{e}_s-\mathbf{e}_t)^T\mathbf{L}^{\dagger}(\mathbf{e}_s-\mathbf{e}_t)\\
     &=\frac{1}{2}(\mathbf{e}_s-\mathbf{e}_t)^T\mathbf{D}^{-1/2}\sum_{l=0}^{\infty}{\left(\left(\frac{1}{2}\mathbf{I}+\frac{1}{2}\mathcal{A}\right)^l-\mathbf{u}_1\mathbf{u}_1^T\right)}\mathbf{D}^{-1/2}(\mathbf{e}_s-\mathbf{e}_t)\\
     &=\frac{1}{2}(\mathbf{e}_s-\mathbf{e}_t)^T\mathbf{D}^{-1/2}\sum_{l=0}^{+\infty}{\left(\frac{1}{2}\mathbf{I}+\frac{1}{2}\mathcal{A}\right)^l}\mathbf{D}^{-1/2}(\mathbf{e}_s-\mathbf{e}_t)\\
     &=\frac{1}{2}(\mathbf{e}_s-\mathbf{e}_t)^T\mathbf{D}^{-1}\sum_{l=0}^{+\infty}{\left(\frac{1}{2}\mathbf{I}+\frac{1}{2}\mathbf{P}\right)^l}(\mathbf{e}_s-\mathbf{e}_t)
 \end{align*}
 This finishes the proof.
\end{proof}

\subsection{Proof of Theorem \ref{thm:pm_guarantee}}

\begin{theorem}{(Theorem \ref{thm:pm_guarantee} restate)}
    When setting $l=2\kappa\log \frac{\kappa}{\epsilon}$, the approximation error holds that $|\overline{r}_\mathcal{G}(s,t)-r_\mathcal{G}(s,t)|\leq \epsilon$. In addition, the time complexity of \powermethod is $O(lm)=O(\kappa m \log \frac{\kappa}{\epsilon})$.
\end{theorem}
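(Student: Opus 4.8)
The plan is to read off the approximation error directly from the Taylor expansion in Eq.~(\ref{equ:er_laylor_expansion}). Since $\overline{r}_\mathcal{G}(s,t)$ is exactly the partial sum of that series truncated at $k=l$, the error equals the tail $\frac{1}{2}(\mathbf{e}_s-\mathbf{e}_t)^T\sum_{k=l+1}^{\infty}\mathbf{D}^{-1}\left(\frac12\mathbf{I}+\frac12\mathbf{P}\right)^k(\mathbf{e}_s-\mathbf{e}_t)$. To estimate it, I would first pass to the normalized matrices: writing $\mathbf{y}=\mathbf{D}^{-1/2}(\mathbf{e}_s-\mathbf{e}_t)$ and using the similarity $\mathbf{P}=\mathbf{D}^{1/2}\mathcal{A}\mathbf{D}^{-1/2}$ (hence $\mathbf{D}^{-1}\left(\frac12\mathbf{I}+\frac12\mathbf{P}\right)^k=\mathbf{D}^{-1/2}\left(\frac12\mathbf{I}+\frac12\mathcal{A}\right)^k\mathbf{D}^{-1/2}$), the error becomes $\tfrac12\,\mathbf{y}^T\sum_{k\ge l+1}\left(\tfrac12\mathbf{I}+\tfrac12\mathcal{A}\right)^k\mathbf{y}$.

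Next I would diagonalize in the eigenbasis of $\mathcal{A}$, with eigenvalues $1=\lambda_1>\lambda_2\ge\cdots\ge\lambda_n\ge -1$ and eigenvectors $\mathbf{u}_1,\dots,\mathbf{u}_n$. Since $\mathbf{u}_1\propto\mathbf{D}^{1/2}\mathbf{1}$ and $\mathbf{1}^T(\mathbf{e}_s-\mathbf{e}_t)=0$, we have $\mathbf{y}\perp\mathbf{u}_1$, so $\mathbf{y}=\sum_{i\ge 2}c_i\mathbf{u}_i$ and the error equals $\tfrac12\sum_{i=2}^n c_i^2\sum_{k=l+1}^{\infty}\left(\tfrac{1+\lambda_i}{2}\right)^k$. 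For every $i\ge2$ we have $-1\le\lambda_i\le\lambda_2=1-\mu_2=1-2/\kappa$, so the ratio $\tfrac{1+\lambda_i}{2}$ lies in $[0,\,1-1/\kappa]$; summing the geometric series gives $\sum_{k\ge l+1}\left(\tfrac{1+\lambda_i}{2}\right)^k\le\kappa\,(1-1/\kappa)^{l}$. Combining with $\sum_{i\ge2}c_i^2=\|\mathbf{y}\|_2^2=\tfrac{1}{d_s}+\tfrac{1}{d_t}\le2$ yields $|\overline{r}_\mathcal{G}(s,t)-r_\mathcal{G}(s,t)|\le\kappa(1-1/\kappa)^l\le\kappa e^{-l/\kappa}$. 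Plugging in $l=2\kappa\log\tfrac{\kappa}{\epsilon}$ makes this $\le\kappa\,(\epsilon/\kappa)^2\le\epsilon$ (in fact $l=\kappa\log\tfrac{\kappa}{\epsilon}$ already suffices).

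For the running time, I would simply observe that each of the $l$ iterations of Algorithm~\ref{algo:pm} performs one sparse matrix–vector product with $\tfrac12\mathbf{I}+\tfrac12\mathbf{P}$ — which costs $O(m)$ since $\mathbf{P}=\mathbf{A}\mathbf{D}^{-1}$ has $O(m)$ nonzero entries — plus $O(1)$ work to update $\overline{r}_\mathcal{G}(s,t)$; hence the total is $O(lm)=O(\kappa m\log\tfrac{\kappa}{\epsilon})$. The only mildly delicate points are the similarity rewriting $\mathbf{D}^{-1}\left(\tfrac12\mathbf{I}+\tfrac12\mathbf{P}\right)^k=\mathbf{D}^{-1/2}\left(\tfrac12\mathbf{I}+\tfrac12\mathcal{A}\right)^k\mathbf{D}^{-1/2}$ and noting that $\tfrac{1+\lambda_i}{2}\ge0$ (using $\lambda_n\ge-1$ for a connected graph) so that the geometric tails converge; everything else is a routine geometric-series estimate, so I do not expect any genuine obstacle.
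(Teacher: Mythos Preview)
Your proposal is correct and follows essentially the same approach as the paper: both express the error as the tail of the Taylor expansion in Eq.~(\ref{equ:er_laylor_expansion}), pass to the symmetric form via $\mathbf{D}^{-1/2}$, diagonalize $\mathcal{A}$, use orthogonality of $\mathbf{D}^{-1/2}(\mathbf{e}_s-\mathbf{e}_t)$ to the top eigenvector, bound $\sum_i c_i^2\le 2$, and sum the geometric tail with ratio $1-1/\kappa$ to obtain $\kappa(1-1/\kappa)^l\le\epsilon$. The time-complexity argument is likewise identical.
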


\begin{proof}
    By $\overline{r}_\mathcal{G}(s,t)=\frac{1}{2}(\mathbf{e}_s-\mathbf{e}_t)^T\sum_{k=0}^{l}{\mathbf{D}^{-1}\left(\frac{1}{2}\mathbf{I}+\frac{1}{2}\mathbf{P}\right)^k}(\mathbf{e}_s-\mathbf{e}_t)$ is the output of \powermethod, the approximation error holds that:
    \begin{align*}
        &|\overline{r}_\mathcal{G}(s,t)-r_\mathcal{G}(s,t)|\\
        &= \left|\frac{1}{2}(\mathbf{e}_s-\mathbf{e}_t)^T\mathbf{D}^{-1}\sum_{l=L+1}^{\infty}{\left(\frac{1}{2}\mathbf{I}+\frac{1}{2}\mathbf{P}\right)^l}(\mathbf{e}_s-\mathbf{e}_t)\right|\\
        &=\left|\frac{1}{2}(\mathbf{e}_s-\mathbf{e}_t)^T\mathbf{D}^{-1/2}\sum_{l=L+1}^{\infty}{\left(\frac{1}{2}\mathbf{I}+\frac{1}{2}\mathcal{A}\right)^l}\mathbf{D}^{-1/2}(\mathbf{e}_s-\mathbf{e}_t)\right|\\
        &=\left|\frac{1}{2}(\mathbf{e}_s-\mathbf{e}_t)^T\mathbf{D}^{-1/2}\sum_{l=L+1}^{\infty}{\sum_{j=2}^{n}{\left(\frac{1}{2}+\frac{1}{2}\lambda_j(\mathcal{A})\right)^l}\mathbf{u}_j\mathbf{u}_j^T}\mathbf{D}^{-1/2}(\mathbf{e}_s-\mathbf{e}_t)\right|
    \end{align*}
    Where $\mathbf{u}_j$ denotes the eigenvector of $\mathcal{A}$ corresponding to the eigenvalue $\lambda_j(\mathcal{A})$. Next, we denote $\alpha_j=\mathbf{u}_j^T\mathbf{D}^{-1/2}(\mathbf{e}_s-\mathbf{e}_t)$, and $\alpha=(\alpha_1,...,\alpha_n)^T=(\mathbf{u}_1,...,\mathbf{u}_n)^T\mathbf{D}^{-1/2}(\mathbf{e}_s-\mathbf{e}_t)$. Therefore, we can bound
    \begin{align*}
    \Vert\alpha\Vert_2&=\Vert(\mathbf{u}_1,...,\mathbf{u}_n)\mathbf{D}^{-1/2}(\mathbf{e}_s-\mathbf{e}_t)\Vert_2
    \leq \Vert\mathbf{D}^{-1/2}(\mathbf{e}_s-\mathbf{e}_t)\Vert_2\leq \sqrt{2}
    \end{align*}
    And thus $\sum_{j=1}^{n}{\alpha_j^2}\leq 2$. Therefore, 
    \begin{align*}
       |\overline{r}_\mathcal{G}(s,t)-r_\mathcal{G}(s,t)|&=\left|\frac{1}{2}\sum_{l=L+1}^{\infty}{\sum_{j=2}^{n}{\left(\frac{1}{2}+\frac{1}{2}\lambda_j(\mathcal{A})\right)^l}}\alpha_j^2\right|\\
        &\leq \sum_{l=L+1}^{\infty}{\left(\frac{1}{2}+\frac{1}{2}\lambda_2(\mathcal{A})\right)^l}\\
    \end{align*}
    Where $\lambda_2(\mathcal{A})$ is the second largest eigenvalue of $\mathcal{A}$. By the property of the eigenvalues of normalized Laplacian matrix, $\lambda_2(\mathcal{L})=1-\lambda_2(\mathcal{A})$. Therefore,
    \begin{align*}
        |\overline{r}_\mathcal{G}(s,t)-r_\mathcal{G}(s,t)|
        &\leq \sum_{l=L+1}^{\infty}{\left(1-\frac{1}{2}\lambda_2(\mathcal{L})\right)^l}
        \leq \left(1-\frac{1}{2}\lambda_2(\mathcal{L})\right)^L\frac{2}{\lambda_2(\mathcal{L})} \\
        &\leq \left(1-\frac{1}{2}\lambda_2(\mathcal{L})\right)^L\kappa(\mathcal{L}) \leq \epsilon
    \end{align*}
    
    The final inequality holds when setting $L=\log \frac{\epsilon}{\kappa(\mathcal{L})}/ \log \left(1-\frac{1}{2}\lambda_2(\mathcal{L})\right)$. We notice the fact $\log \left(1-\frac{1}{2}\lambda_2(\mathcal{L})\right)\leq  -\frac{1}{2}\lambda_2(\mathcal{L})$ and $\kappa(\mathcal{L})\leq \frac{2}{\lambda_2(\mathcal{L})}$, so we set $L\geq 2\kappa(\mathcal{L}) \log \frac{n}{\epsilon}$ is enough. Finally, we note that for each iteration of \powermethod we need to compute the matrix-vector multiplication $\left(\frac{1}{2}\mathbf{I}+\frac{1}{2}\mathbf{P}\right) \mathbf{r}$, which requires $O(m)$ operations. Therefore the total time complexity of \powermethod is $O(lm)=O(\kappa m \log \frac{\kappa}{\epsilon})$.
\end{proof}

\subsection{Proof of Lemma \ref{lem:lz_polynomial_correct}}

\begin{lemma}{(Lemma \ref{lem:lz_polynomial_correct} restate)}
    Given $\mathbf{V}=[\mathbf{v}_1,...,\mathbf{v}_k]$ is the $n\times k$ dimension orthogonal matrix spanning the subspace $\mathcal{K}_k(\mathbf{v}_1, \mathcal{A})=span\langle\mathbf{v}_1, \mathcal{A}\mathbf{v}_1,..., \mathcal{A}^k\mathbf{v}_1\rangle$ and $\mathbf{T}=\mathbf{V}^T\mathcal{A}\mathbf{V}$. Then for any polynomial $p_k$ with degree at most $k$, we have $p_k(\mathcal{A})\mathbf{v}_1=\mathbf{V}p_k(\mathbf{T})\mathbf{V}^T\mathbf{v}_1$.
\end{lemma}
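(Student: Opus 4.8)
The plan is to reduce the polynomial identity to the monomial case by linearity, and to prove $\mathcal{A}^i\mathbf{v}_1=\mathbf{V}\mathbf{T}^i\mathbf{V}^T\mathbf{v}_1$ for every $i\le k$ by induction on $i$. The structural fact I would lean on is that, since $\mathbf{V}=[\mathbf{v}_1,\dots,\mathbf{v}_k]$ has orthonormal columns, the matrix $\mathbf{\Pi}:=\mathbf{V}\mathbf{V}^T$ is the orthogonal projector onto the Krylov subspace $\mathcal{K}_k(\mathbf{v}_1,\mathcal{A})$, and moreover $\mathbf{V}^T\mathbf{V}=\mathbf{I}_k$. Both are immediate from $\mathbf{V}$ being a column-orthonormal basis of $\mathcal{K}_k(\mathbf{v}_1,\mathcal{A})$.

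First I would handle the base case $i=0$: since $\mathbf{v}_1\in\mathcal{K}_k(\mathbf{v}_1,\mathcal{A})$ we have $\mathbf{\Pi}\mathbf{v}_1=\mathbf{v}_1$, i.e. $\mathbf{V}\mathbf{V}^T\mathbf{v}_1=\mathbf{V}\mathbf{T}^0\mathbf{V}^T\mathbf{v}_1$. For the inductive step, assume $\mathcal{A}^{i-1}\mathbf{v}_1=\mathbf{V}\mathbf{T}^{i-1}\mathbf{V}^T\mathbf{v}_1$ for some $1\le i\le k$. Because $i\le k$, the vector $\mathcal{A}^{i}\mathbf{v}_1$ still lies in $\mathcal{K}_k(\mathbf{v}_1,\mathcal{A})=\mathrm{span}\langle\mathbf{v}_1,\mathcal{A}\mathbf{v}_1,\dots,\mathcal{A}^k\mathbf{v}_1\rangle$, so $\mathcal{A}^{i}\mathbf{v}_1=\mathbf{\Pi}\,\mathcal{A}^{i}\mathbf{v}_1=\mathbf{V}\mathbf{V}^T\mathcal{A}(\mathcal{A}^{i-1}\mathbf{v}_1)$. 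Substituting the inductive hypothesis and then using $\mathbf{T}=\mathbf{V}^T\mathcal{A}\mathbf{V}$ together with $\mathbf{V}^T\mathbf{V}=\mathbf{I}_k$ gives
\[
\mathcal{A}^{i}\mathbf{v}_1=\mathbf{V}\mathbf{V}^T\mathcal{A}\mathbf{V}\mathbf{T}^{i-1}\mathbf{V}^T\mathbf{v}_1=\mathbf{V}\mathbf{T}\,\mathbf{T}^{i-1}\mathbf{V}^T\mathbf{v}_1=\mathbf{V}\mathbf{T}^{i}\mathbf{V}^T\mathbf{v}_1,
\]
which closes the induction. (Alternatively one can expand $\mathcal{A}^i\mathbf{v}_1=(\mathbf{\Pi}\mathcal{A})^i\mathbf{v}_1$ by inserting $\mathbf{\Pi}$ between each factor, each insertion being justified by $\mathcal{A}^j\mathbf{v}_1\in\mathcal{K}_k$ for $j\le i\le k$; I find the induction cleaner to write.)

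Finally, for an arbitrary $p_k(x)=\sum_{i=0}^{k}c_i x^i$ of degree at most $k$, summing $c_i$ times the monomial identities yields $p_k(\mathcal{A})\mathbf{v}_1=\sum_{i=0}^{k}c_i\,\mathbf{V}\mathbf{T}^i\mathbf{V}^T\mathbf{v}_1=\mathbf{V}\big(\textstyle\sum_{i=0}^{k}c_i\mathbf{T}^i\big)\mathbf{V}^T\mathbf{v}_1=\mathbf{V}p_k(\mathbf{T})\mathbf{V}^T\mathbf{v}_1$, as claimed. The one delicate point, and the place I would be most careful, is the membership $\mathcal{A}^{i}\mathbf{v}_1\in\mathcal{K}_k(\mathbf{v}_1,\mathcal{A})$ used in the inductive step: it holds precisely because $i\le k$ and fails for $i=k+1$ in general, so the degree restriction on $p_k$ is essential and cannot be relaxed. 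I would also remark in passing that the statement tacitly assumes $\dim\mathcal{K}_k(\mathbf{v}_1,\mathcal{A})=k$ so that $\mathbf{V}$ really has $k$ orthonormal columns; in the deficient case one simply works with the true dimension and the same argument carries over verbatim.
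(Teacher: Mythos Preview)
Your proof is correct and follows essentially the same approach as the paper: both rely on the fact that $\mathbf{V}\mathbf{V}^T$ is the orthogonal projector onto $\mathcal{K}_k(\mathbf{v}_1,\mathcal{A})$ and that $\mathcal{A}^i\mathbf{v}_1\in\mathcal{K}_k(\mathbf{v}_1,\mathcal{A})$ for $i\le k$, allowing projectors to be inserted between successive factors of $\mathcal{A}$. The paper writes out the identity $\mathcal{A}^i\mathbf{v}_1=(\mathbf{V}\mathbf{V}^T)\mathcal{A}(\mathbf{V}\mathbf{V}^T)\cdots\mathcal{A}(\mathbf{V}\mathbf{V}^T)\mathbf{v}_1$ directly rather than by induction, which is exactly the alternative you yourself noted.
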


\begin{proof}
    Since $\mathcal{A}^i\mathbf{v}_1\in \mathcal{K}_k(\mathbf{v}_1, \mathcal{A})$ for $i\leq k$ and $\mathbf{VV}^T$ is the projection matrix onto the subspace $\mathcal{K}_k(\mathbf{v}_1, \mathcal{A})$, we have $\mathcal{A}^i\mathbf{v}_1=\mathbf{VV}^T\mathcal{A}^i\mathbf{v}_1$. Therefore
    \begin{align*}
        \mathcal{A}^i\mathbf{v}_1&=(\mathbf{VV}^T)\mathcal{A}(\mathbf{VV}^T)\mathcal{A}... \mathcal{A}(\mathbf{VV}^T)\mathbf{v}_1\\
        &=\mathbf{V}(\mathbf{V}^T\mathcal{A}\mathbf{V})...(\mathbf{V}^T\mathcal{A}\mathbf{V})\mathbf{V}^T\mathbf{v}_1=\mathbf{V}\mathbf{T}^i\mathbf{V}^T\mathbf{v}_1.
    \end{align*}
    holds for any $i\leq k$. Thus $p_k(\mathcal{A})\mathbf{v}_1=\mathbf{V}p_k(\mathbf{T})\mathbf{V}^T\mathbf{v}_1$ for any polynomial $p_k$ with degree at most $k$.
\end{proof}

\subsection{Proof of Lemma \ref{lem:lz_quadratic_err}}

\begin{lemma}{(Lemma \ref{lem:lz_quadratic_err} restate)}
    Let $\mathbf{V}\in \mathbb{R}^{n\times k}$, $\mathbf{T}\in \mathbb{R}^{k\times k}$ defined in Lemma \ref{lem:lz_polynomial_correct}. Let $\mathcal{P}_k$ be the set of all polynomials with degree $\leq k$. Then, the error satisfy:
\begin{align*}
    |\mathbf{v}_1^T&\mathbf{V}(\mathbf{I}-\mathbf{T})^{-1}\mathbf{V}^T\mathbf{v}_1-\mathbf{v}_1^T(\mathbf{I}-\mathcal{A})^\dagger\mathbf{v}_1|\\
   & \leq 2 \min_{p\in \mathcal{P}_k}{\max_{x \in [\lambda_{min}(\mathcal{A}),\lambda_{2}(\mathcal{A})]}{|1/(1-x)-p(x)|}}
    \end{align*}
    
    Where $\lambda_{min}(\mathcal{A})$ and $\lambda_{2}(\mathcal{A})$ denotes the smallest and second largest eigenvalue of $\mathcal{A}$, respectively.
\end{lemma}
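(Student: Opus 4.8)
The plan is to run the classical Lanczos-quadrature argument, with the one twist that the relevant spectral window is $[\lambda_{min}(\mathcal{A}),\lambda_{2}(\mathcal{A})]$ rather than the full range $[\lambda_{min}(\mathcal{A}),1]$. I would begin by recording three structural facts. First, since $\mathbf{v}_1$ is a scalar multiple of $\frac{\mathbf{e}_s}{\sqrt{d_s}}-\frac{\mathbf{e}_t}{\sqrt{d_t}}$, it is orthogonal to the top eigenvector $\mathbf{u}_1=\mathbf{D}^{1/2}\mathbf{1}/\Vert\mathbf{D}^{1/2}\mathbf{1}\Vert_2$ of $\mathcal{A}$ (whose eigenvalue is $1$); moreover $\mathbf{u}_1^T\mathcal{A}^i\mathbf{v}_1=\mathbf{u}_1^T\mathbf{v}_1=0$ for every $i$, so the entire Krylov subspace $\mathcal{K}_k(\mathbf{v}_1,\mathcal{A})$, hence every vector $\mathbf{V}\mathbf{z}$, is orthogonal to $\mathbf{u}_1$. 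By the Rayleigh-quotient characterization of eigenvalues this forces $\mathrm{spec}(\mathbf{T})\subset[\lambda_{min}(\mathcal{A}),\lambda_{2}(\mathcal{A})]$, and in particular $\mathbf{I}-\mathbf{T}$ is invertible because $\lambda_{2}(\mathcal{A})<1$ for a connected graph. Second, the columns of $\mathbf{V}$ are orthonormal with $\mathbf{v}_1$ the first column, so $\mathbf{V}^T\mathbf{v}_1=\mathbf{e}_1$. Third, writing the eigendecompositions $\mathcal{A}=\sum_{j}\lambda_j(\mathcal{A})\mathbf{u}_j\mathbf{u}_j^T$ and $\mathbf{T}=\sum_i\theta_i\mathbf{y}_i\mathbf{y}_i^T$, and using $\mathbf{v}_1\perp\mathbf{u}_1$,
\begin{align*}
\mathbf{v}_1^T(\mathbf{I}-\mathcal{A})^\dagger\mathbf{v}_1&=\sum_{j\ge2}\frac{(\mathbf{u}_j^T\mathbf{v}_1)^2}{1-\lambda_j(\mathcal{A})},\\
\mathbf{v}_1^T\mathbf{V}(\mathbf{I}-\mathbf{T})^{-1}\mathbf{V}^T\mathbf{v}_1&=\mathbf{e}_1^T(\mathbf{I}-\mathbf{T})^{-1}\mathbf{e}_1=\sum_{i}\frac{(\mathbf{y}_i^T\mathbf{e}_1)^2}{1-\theta_i},
\end{align*}
where $\sum_{j\ge2}(\mathbf{u}_j^T\mathbf{v}_1)^2=\Vert\mathbf{v}_1\Vert_2^2=1$ and $\sum_i(\mathbf{y}_i^T\mathbf{e}_1)^2=\Vert\mathbf{e}_1\Vert_2^2=1$.

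Next I would introduce an arbitrary $p\in\mathcal{P}_k$ and split the target difference by the triangle inequality as $|\mathbf{e}_1^T(\mathbf{I}-\mathbf{T})^{-1}\mathbf{e}_1-\mathbf{v}_1^T(\mathbf{I}-\mathcal{A})^\dagger\mathbf{v}_1|\le A+B+C$, where $A=|\mathbf{v}_1^T p(\mathcal{A})\mathbf{v}_1-\mathbf{v}_1^T(\mathbf{I}-\mathcal{A})^\dagger\mathbf{v}_1|$, $B=|\mathbf{v}_1^T p(\mathcal{A})\mathbf{v}_1-\mathbf{e}_1^T p(\mathbf{T})\mathbf{e}_1|$, and $C=|\mathbf{e}_1^T p(\mathbf{T})\mathbf{e}_1-\mathbf{e}_1^T(\mathbf{I}-\mathbf{T})^{-1}\mathbf{e}_1|$. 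The term $B$ vanishes: Lemma \ref{lem:lz_polynomial_correct} gives $p(\mathcal{A})\mathbf{v}_1=\mathbf{V}p(\mathbf{T})\mathbf{V}^T\mathbf{v}_1$, hence $\mathbf{v}_1^T p(\mathcal{A})\mathbf{v}_1=(\mathbf{V}^T\mathbf{v}_1)^T p(\mathbf{T})(\mathbf{V}^T\mathbf{v}_1)=\mathbf{e}_1^T p(\mathbf{T})\mathbf{e}_1$. For $A$, the third fact above gives $A=|\sum_{j\ge2}(\mathbf{u}_j^T\mathbf{v}_1)^2(p(\lambda_j(\mathcal{A}))-\tfrac1{1-\lambda_j(\mathcal{A})})|\le\max_{x\in[\lambda_{min}(\mathcal{A}),\lambda_{2}(\mathcal{A})]}|p(x)-\tfrac1{1-x}|$, since all relevant $\lambda_j(\mathcal{A})$ lie in that interval and the weights sum to $1$. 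The term $C$ is bounded identically, using $\mathrm{spec}(\mathbf{T})\subset[\lambda_{min}(\mathcal{A}),\lambda_{2}(\mathcal{A})]$ and $\sum_i(\mathbf{y}_i^T\mathbf{e}_1)^2=1$. Adding, $A+B+C\le2\max_{x\in[\lambda_{min}(\mathcal{A}),\lambda_{2}(\mathcal{A})]}|1/(1-x)-p(x)|$; since $p\in\mathcal{P}_k$ was arbitrary, taking the infimum over $p$ yields the stated bound.

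I expect the only real obstacle to be the first structural fact, namely confining $\mathrm{spec}(\mathbf{T})$ to $[\lambda_{min}(\mathcal{A}),\lambda_{2}(\mathcal{A})]$ rather than the crude $[\lambda_{min}(\mathcal{A}),1]$. This is not a technicality: it is precisely what guarantees $\mathbf{I}-\mathbf{T}$ is invertible, and it is what allows the right-hand side to be a $\min$--$\max$ over the narrow window, which is indispensable downstream since $1/(1-x)$ diverges at $x=1$ and no degree-$k$ polynomial approximates it uniformly on an interval touching $1$. The fact itself rests entirely on the orthogonality $\mathbf{v}_1\perp\mathbf{u}_1$ propagating through every power of $\mathcal{A}$; the remainder is the textbook three-term split. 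One minor bookkeeping point I would verify is the precise degree allowed in Lemma \ref{lem:lz_polynomial_correct} (degree $\le k$ vs.\ $\le k-1$), but this affects only the later choice of $p$, not the form of the bound proved here.
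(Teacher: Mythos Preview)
Your proposal is correct and follows essentially the same approach as the paper: both split the error into the same three terms via an arbitrary $p\in\mathcal{P}_k$, use Lemma~\ref{lem:lz_polynomial_correct} to kill the middle term, bound the outer two terms via the eigendecompositions of $\mathcal{A}$ and $\mathbf{T}$ with unit-mass spectral weights, and invoke the orthogonality $\mathbf{v}_1\perp\mathbf{u}_1$ (propagated through the Krylov subspace) to confine $\mathrm{spec}(\mathbf{T})\subset[\lambda_{min}(\mathcal{A}),\lambda_2(\mathcal{A})]$. Your Rayleigh-quotient justification for the spectral confinement is slightly more explicit than the paper's one-line ``$\mathbf{T}$ is the projection of $\mathcal{A}$ onto $\mathcal{K}_k(\mathcal{A},\mathbf{v}_1)$ with $\mathbf{v}_1\perp\mathbf{u}_1$,'' but the content is identical.
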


\begin{proof}

We note that for any polynomial $p$ of degree $\leq k$, the following inequality holds:
\begin{equation}\label{equ:lanczos_bound}
\begin{aligned}
    |\mathbf{v}_1^T\mathbf{V}&(\mathbf{I}-\mathbf{T})^{-1}\mathbf{V}^T\mathbf{v}_1-\mathbf{v}_1^T(\mathbf{I}-\mathcal{A})^\dagger\mathbf{v}_1| =  |\mathbf{e}_1^T(\mathbf{I}-\mathbf{T})^{-1}\mathbf{e}_1-\mathbf{v}_1^T(\mathbf{I}-\mathcal{A})^\dagger\mathbf{v}_1| \\
    & \leq |\mathbf{v}_1^T (\mathbf{I}-\mathcal{A})^\dagger \mathbf{v}_1 - \mathbf{v}_1^T p(\mathcal{A}) \mathbf{v}_1 | 
    + | \mathbf{v}_1^T p(\mathcal{A}) \mathbf{v}_1 -\mathbf{e}_1^T p(\mathbf{T})\mathbf{e}_1  | \\
    &+ |  \mathbf{e}_1^T p(\mathbf{T})\mathbf{e}_1 -  \mathbf{e}_1^T (\mathbf{I}-\mathbf{T})^{-1}\mathbf{e}_1| \\
\end{aligned}
\end{equation}

For the second term of the right hand side of Eq. (\ref{equ:lanczos_bound}), by Lemma \ref{lem:lz_polynomial_correct} this is $0$. Now we bound the first term and third term of the above inequality. We notice that $\mathcal{A} $ has the eigen-decomposition $\mathcal{A}=\sum_{i=1}^{n}{\lambda_i \mathbf{u}_i\mathbf{u}_i^T}$, where $\lambda_i$ is the $i$-th largest eigenvalue of $\mathcal{A}$ and $\mathbf{u}_i$ is the corresponding eigenvector. By the definition of pseudo-inverse, $(\mathbf{I}-\mathcal{A})^\dagger$ has the eigen-decomposition $(\mathbf{I}-\mathcal{A})^\dagger=\sum_{i=2}^{n}{\frac{1}{1-\lambda_i} \mathbf{u}_i\mathbf{u}_i^T}$, and $p(\mathcal{A})$ has the eigen-decomposition $p(\mathcal{A})=\sum_{i=1}^{n}{p(\lambda_i) \mathbf{u}_i\mathbf{u}_i^T}$. By the fact that the first eigenvector $\mathbf{u}_1=\mathbf{D}^{1/2} \mathbf{1}\perp \mathbf{v}_1$, so the first term of the second line of Equ. (\ref{equ:lanczos_bound}) can be expressed below:
\begin{align*}
    |\mathbf{v}_1^T (\mathbf{I}-\mathcal{A})^\dagger \mathbf{v}_1 - \mathbf{v}_1^T p(\mathcal{A}) \mathbf{v}_1 | = |\sum_{i=2}^{n}{(\frac{1}{1-\lambda_i}-p(\lambda_i))(\mathbf{v}_1^T\mathbf{u}_i)^2  } |
\end{align*}
Next, we define another vector $c=\mathbf{v}_1^T (\mathbf{u}_1,...,\mathbf{u}_n)$. We notice that $\Vert c \Vert_2 = \Vert \mathbf{v}_1^T (\mathbf{u}_1,...,\mathbf{u}_n) \Vert_2=\Vert \mathbf{v}_1  \Vert_2 = 1$, and thus $\sum_{i=1}^{n}{c(i)^2}=\sum_{i=1}^{n}{(\mathbf{v}_1^T\mathbf{u}_i)^2}=1$. Therefore, we can bound the above equality:
\begin{align*}
    |\sum_{i=2}^{n}{(\frac{1}{1-\lambda_i}-p(\lambda_i))(\mathbf{v}_1^T\mathbf{u}_i)^2  } | \leq \max_{x \in [\lambda_{min}(\mathcal{A}),\lambda_{2}(\mathcal{A})]}{|1/(1-x)-p(x)|}
\end{align*}
Using the similar argument on $\mathbf{T}$, we have:
\begin{align*}
    |  \mathbf{e}_1^T p(\mathbf{T})\mathbf{e}_1 -  \mathbf{e}_1^T (\mathbf{I}-\mathbf{T})^{-1}\mathbf{e}_1| \leq \max_{x \in [\lambda_{min}(\mathbf{T}),\lambda_{max}(\mathbf{T})]}{|1/(1-x)-p(x)|}
\end{align*}
Finally, we observe that $\mathbf{T}$ is the projection of $\mathcal{A}$ onto the subspace $\mathcal{K}_k(\mathcal{A},\mathbf{v}_1)=span\langle \mathbf{v}_1,\mathcal{A}\mathbf{v}_1,...,\mathcal{A}^k\mathbf{v}_1\rangle$ with $\mathbf{v}_1$ orthogonal to the eigenspace of the largest eigenvalue $\lambda_1(\mathcal{A})$. So $\lambda_{min}(\mathbf{T})\geq \lambda_{min}(\mathcal{A})$ and $\lambda_{max}(\mathbf{T})\leq \lambda_2 (\mathcal{A})$. Therefore we have that:
\begin{align*}
    \max_{x \in [\lambda_{min}(\mathbf{T}),\lambda_{max}(\mathbf{T})]}&{|1/(1-x)-p(x)|} \\
    &\leq \max_{x \in [\lambda_{min}(\mathcal{A}),\lambda_{2}(\mathcal{A})]}{|1/(1-x)-p(x)|}
\end{align*}
Putting these inequalities together, we have the result of Lemma \ref{lem:lz_quadratic_err}:
\begin{align*}
    |\mathbf{v}_1^T&\mathbf{V}(\mathbf{I}-\mathbf{T})^{-1}\mathbf{V}^T\mathbf{v}_1-\mathbf{v}_1^T(\mathbf{I}-\mathcal{A})^\dagger\mathbf{v}_1|\\
   & \leq 2 \min_{p\in \mathcal{P}_k}{\max_{x \in [\lambda_{min}(\mathcal{A}),\lambda_{2}(\mathcal{A})]}{|1/(1-x)-p(x)|}}
    \end{align*}
    This finishes the proof. 

\end{proof}

\subsection{Proof of Theorem \ref{thm:lanczos_err}}

\begin{theorem}{(Theorem \ref{thm:lanczos_err} restate)}
    Let $\hat{r}_\mathcal{G}(s,t)$ output by Lanczos iteration (Algorithm \ref{algo:lanczos}). Then the error satisfies
\begin{align*}
    |\hat{r}_\mathcal{G}(s,t)&-r_\mathcal{G}(s,t)|\leq \epsilon
    \end{align*}
    When setting the iteration number $k=O(\sqrt{\kappa}\log \frac{\kappa}{\epsilon})$. Furthermore, the time complexity of Algorithm \ref{algo:lanczos} is $O(km+k^{2.37})=O (\sqrt{\kappa} m \log \frac{\kappa}{\epsilon})$.
\end{theorem}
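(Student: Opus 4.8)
The plan is to combine Lemma~\ref{lem:lz_quadratic_err} with a Chebyshev-type polynomial approximation bound for the function $1/(1-x)$ on the interval $[\lambda_{\min}(\mathcal{A}),\lambda_2(\mathcal{A})]$, and then convert the eigenvalue gap into a statement about $\kappa$. First I would recall from Lemma~\ref{lem:lz_quadratic_err} that the Lanczos output satisfies
\begin{align*}
|\hat r_\mathcal{G}(s,t)-r_\mathcal{G}(s,t)|
&\le \left(\tfrac{1}{d_s}+\tfrac{1}{d_t}\right)\cdot 2\min_{p\in\mathcal{P}_k}\max_{x\in[\lambda_{\min}(\mathcal{A}),\lambda_2(\mathcal{A})]}\left|\tfrac{1}{1-x}-p(x)\right|,
\end{align*}
and note that $\tfrac{1}{d_s}+\tfrac{1}{d_t}\le 2$, so it suffices to make the min-max polynomial approximation term at most $\epsilon/4$. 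The key observation is that on $[\lambda_{\min}(\mathcal{A}),\lambda_2(\mathcal{A})]\subseteq[-1,1-\mu_2]$ where $\mu_2=\lambda_2(\mathcal{L})=2/\kappa$, the function $1/(1-x)$ is analytic since $1-x\ge \mu_2>0$ stays bounded away from $0$. This is the standard setting for Chebyshev approximation of a shifted/scaled resolvent.

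Next I would invoke the classical result that a function analytic in a Bernstein ellipse can be approximated by degree-$k$ polynomials with error decaying geometrically in $k$. Concretely, after affinely rescaling $[\lambda_{\min}(\mathcal{A}),\lambda_2(\mathcal{A})]$ (an interval of length at most $2$) to $[-1,1]$, the pole of $1/(1-x)$ sits at distance $\Theta(\mu_2)=\Theta(1/\kappa)$ from the interval, which places it on a Bernstein ellipse with parameter $\rho = 1+\Theta(\sqrt{\mu_2}) = 1+\Theta(1/\sqrt{\kappa})$. The degree-$k$ Chebyshev truncation error is then $O(\rho^{-k}/\mu_2) = O(\kappa\,\rho^{-k})$, and setting $\rho^{-k}\le \epsilon/(4\kappa)$ requires $k = O\!\big(\tfrac{1}{\log\rho}\log\tfrac{\kappa}{\epsilon}\big) = O\!\big(\sqrt{\kappa}\log\tfrac{\kappa}{\epsilon}\big)$, using $\log\rho = \Theta(1/\sqrt{\kappa})$. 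This gives the claimed iteration count. (An alternative, if one wants to avoid the Bernstein-ellipse machinery, is to build the polynomial $p$ explicitly from the geometric-series truncation $p(x)=\sum_{j=0}^{k}\big(\tfrac12+\tfrac12\lambda\big)^{\,\cdot}$-style shifted Chebyshev iterates used elsewhere in the paper, in which case the $\sqrt\kappa$ saving comes from Chebyshev acceleration of the linear recurrence; either route yields the same bound.)

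For the time complexity: Algorithm~\ref{algo:lanczos} performs $k$ iterations, each dominated by one matrix–vector product $\mathcal{A}\mathbf{v}_i$ (which costs $O(m)$ since $\mathcal{A}=\mathbf{D}^{-1/2}\mathbf{A}\mathbf{D}^{-1/2}$ has $O(m)$ nonzeros) plus $O(n)=O(m)$ work for the inner products and vector updates in Lines~4--7; this totals $O(km)$. Forming $\mathbf{T}$ is free (its entries $\alpha_i,\beta_i$ are already computed), and the final output $\big(\tfrac1{d_s}+\tfrac1{d_t}\big)\mathbf{e}_1^T(\mathbf{I}-\mathbf{T})^{-1}\mathbf{e}_1$ requires solving a $k\times k$ (tridiagonal) system, at most $O(k^{2.37})$ by fast matrix multiplication (in fact $O(k)$ for a tridiagonal solve, but the stated bound suffices). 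Hence total time $O(km+k^{2.37})$, which is $O(km)=O(\sqrt{\kappa}\,m\log\tfrac{\kappa}{\epsilon})$ once $k^{1.37}\le m$.

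The main obstacle I anticipate is pinning down the polynomial-approximation rate cleanly enough to extract the exact $\sqrt{\kappa}$ dependence, i.e. verifying that the relevant Bernstein-ellipse parameter (equivalently, the convergence factor of Chebyshev-accelerated iteration on a matrix with spectral gap $\mu_2$) is $1+\Theta(\sqrt{\mu_2})$ rather than $1+\Theta(\mu_2)$ — this square-root is precisely what distinguishes Lanczos/Chebyshev from the Power Method and must be argued carefully, e.g. via the substitution $x=\cos\theta$ and the known extremal property of Chebyshev polynomials on an interval whose endpoint is at distance $\mu_2$ from a singularity. A secondary technical point is confirming $\lambda_{\min}(\mathbf{T})\ge\lambda_{\min}(\mathcal{A})$ and $\lambda_{\max}(\mathbf{T})\le\lambda_2(\mathcal{A})$ so that the approximation interval used for $\mathbf{T}$ is contained in that used for $\mathcal{A}$ — but this already follows from the proof of Lemma~\ref{lem:lz_quadratic_err} (the $\mathbf{v}_1\perp\mathbf{u}_1$ argument), so it can simply be cited.
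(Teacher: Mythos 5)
Your proposal is correct and the overall architecture matches the paper: you reduce to the min--max polynomial approximation problem via Lemma~\ref{lem:lz_quadratic_err}, then argue that degree $k=O(\sqrt{\kappa}\log\frac{\kappa}{\epsilon})$ suffices for $1/(1-x)$ on $[\lambda_{\min}(\mathcal{A}),\lambda_2(\mathcal{A})]$, and the runtime accounting is identical. Where you diverge is precisely at the step you flagged as the main obstacle — pinning down the $\sqrt{\kappa}$ rate. You invoke the classical Bernstein-ellipse argument: $1/(1-x)$ is analytic in an ellipse with parameter $\rho=1+\Theta(1/\sqrt{\kappa})$ around the rescaled interval, giving a geometric decay $O(\kappa\rho^{-k})$ and hence the claimed $k$. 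The paper instead builds the approximating polynomial explicitly (Theorem~\ref{thm:polynomial_approx}): it truncates the geometric series for $1/(1-x)$ at degree $L=O(\kappa\log\frac{\kappa}{\epsilon})$, expands each monomial $x^t$ in Chebyshev polynomials using the identity $x^t=\sum_{l\in\mathbb{Z}}q_t(l)T_{|l|}(x)$ where $q_t$ is the $t$-step simple random walk distribution on $\mathbb{Z}$, and then truncates the Chebyshev tail via Azuma--Hoeffding concentration, which compresses degree $L$ to degree $k=O(\sqrt{L}\log^{1/2}(L/\epsilon))$. Both routes yield the same degree bound, and your Bernstein route is arguably the more standard piece of approximation theory. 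What the paper's explicit construction buys, and what you would lose by replacing it, is the concrete representation $p_k=\sum_l c_l T_l(x)$ with $c_l\ge 0$ and $\sum_l c_l\le \kappa\log\frac{\kappa}{\epsilon}$ — this bound on the Chebyshev-coefficient mass is reused essentially verbatim in the Lanczos Push error analysis (Theorem~\ref{thm:lanczos_push_error}), where the error propagation is decomposed along Chebyshev modes and the coefficient sum controls the total. So if you only need Theorem~\ref{thm:lanczos_err} in isolation, your route is fine; but for the paper's downstream purposes, the explicit construction does real work. One small point: your remark that $\lambda(\mathbf{T})\subseteq[\lambda_{\min}(\mathcal{A}),\lambda_2(\mathcal{A})]$ needs checking is correct, and indeed that containment is established inside the proof of Lemma~\ref{lem:lz_quadratic_err} exactly as you anticipate, so citing it suffices.
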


We observe that $\hat{r}_\mathcal{G}(s,t)$ output by Algorithm \ref{algo:lanczos} satisfies $\hat{r}_\mathcal{G}(s,t)=\left(\frac{1}{d_s}+\frac{1}{d_t}\right)\mathbf{e}_1^T (\mathbf{I}-\mathbf{T})^{-1} \mathbf{e}_1$ and $r_\mathcal{G}(s,t)=\left(\frac{1}{d_s}+\frac{1}{d_t}\right)\mathbf{v}_1(\mathbf{I}-\mathcal{A})^\dagger \mathbf{v}_1$, so by Lemma \ref{lem:lz_quadratic_err}, we have that:
\begin{align*}
    |\hat{r}_\mathcal{G}(s,t)-&r_\mathcal{G}(s,t)|\leq \\
    &\left(\frac{1}{d_s}+\frac{1}{d_t}\right)2 \min_{p\in \mathcal{P}_k}{\max_{x \in [\lambda_{min}(\mathcal{A}),\lambda_{2}(\mathcal{A})]}{|1/(1-x)-p(x)|}}
\end{align*}
So all the remaining works is to bound the right hand side of the above inequalities. To reach this end, we prove that there exist a polynomial of degree at most $k=O(\sqrt{\kappa}\log \frac{\kappa}{\epsilon})$ that $\epsilon$-approximates $1/(1-x)$ in the range $[\lambda_{min}(\mathcal{A}),\lambda_{2}(\mathcal{A})]$.

\begin{theorem}\label{thm:polynomial_approx}
    For any given infinitely differentiable function $f$, parameter $\epsilon \in (0,1)$, matrix $\mathcal{A}\in \mathbb{R}^{n\times n}$ with condition number $\kappa$. Set $k=O(\sqrt{\kappa}\log \frac{\kappa}{\epsilon})$. There exists a polynomial $p_k$ of degree $\leq k$, satisfy: $$\max_{x \in [\lambda_{min}(\mathcal{A}),\lambda_{2}(\mathcal{A})]}{|1/(1-x)-p_k(x)|}< \frac{\epsilon}{2}$$
    Specifically, $p_k(x)=\sum_{l=0}^{k}{c_l T_l(x)}$ is the linear combination of Chebyshev polynomials $T_l(x)$ with coefficients $c_l>0$, and $\sum_{l}{c_l}\leq \kappa \log (\frac{\kappa}{\epsilon})$.
\end{theorem}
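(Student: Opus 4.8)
The plan is to prove the statement by an explicit construction, using the classical shifted‑and‑scaled Chebyshev polynomials that underlie Chebyshev semi‑iterative methods for linear systems. First I would normalize the problem: on the interval of interest we have $\lambda_2(\mathcal{A}) = 1-\mu_2 = 1-2/\kappa$ and $\lambda_{\min}(\mathcal{A})\ge -1$, so the substitution $z = 1-x$ turns the target $1/(1-x)$ into $1/z$ and the interval $[\lambda_{\min}(\mathcal{A}),\lambda_2(\mathcal{A})]$ into $[\,2/\kappa,\,1-\lambda_{\min}(\mathcal{A})\,]\subseteq[2/\kappa,\,2]$. Writing $\mu := 2/\kappa$, it suffices to find a polynomial of degree $k$ that approximates $1/z$ uniformly on $[\mu,2]$ to within $\epsilon/2$ and then take $p_k(x)$ to be it composed with $z\mapsto 1-x$.

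For the construction, let $v\colon[\mu,2]\to[-1,1]$ be the increasing affine map with $v(\mu)=-1$, $v(2)=1$, so that $v(0) = -\tfrac{\mu+2}{2-\mu}$, which lies strictly to the left of $-1$ with $|v(0)|^2-1 = \tfrac{8\mu}{(2-\mu)^2} = \Theta(\mu)$. Define the degree‑$d$ residual polynomial $q_d(z) := T_d(v(z))/T_d(v(0))$. Since $q_d(0)=1$ and $|T_d|\le 1$ on $[-1,1]$, we get $|q_d(z)|\le 1/|T_d(v(0))|$ for all $z\in[\mu,2]$; and since $|v(0)|>1$, $|T_d(v(0))| = \cosh\!\big(d\,\operatorname{arccosh}|v(0)|\big) \ge \tfrac12 e^{\,d\,\Theta(\sqrt\mu)}$. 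Because $q_d(0)=1$, the polynomial $p_k(z) := \big(1-q_d(z)\big)/z$ has degree exactly $k := d-1$, and $|1/z - p_k(z)| = |q_d(z)|/z \le \tfrac1\mu\,e^{-d\,\Theta(\sqrt\mu)}$ on $[\mu,2]$. Taking $d = \Theta\!\big(\sqrt\kappa\log\tfrac\kappa\epsilon\big)$ drives this below $\epsilon/2$, and pulling back through $z=1-x$ yields a polynomial of degree $k = O\!\big(\sqrt\kappa\log\tfrac\kappa\epsilon\big)$ with $\max_{x\in[\lambda_{\min}(\mathcal{A}),\lambda_2(\mathcal{A})]}\big|1/(1-x)-p_k(x)\big|\le \epsilon/2$.

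Finally I would expand $p_k(x)=\sum_{l=0}^{k}c_l T_l(x)$ and control $\sum_l c_l$. Since $T_l(1)=1$ for every $l$, one has $\sum_l c_l = p_k(1) = \lim_{z\to0}\big(1-q_d(z)\big)/z = -q_d'(0)$. Differentiating $q_d$ and using $|T_d'(y)/T_d(y)|\le d/\sqrt{y^2-1}$ for $|y|>1$, together with $|v(0)|^2-1=\Theta(\mu)$ and $v'\equiv\Theta(1)$, gives $|q_d'(0)| = O\!\big(d/\sqrt\mu\big) = O\!\big(\kappa\log\tfrac\kappa\epsilon\big)$, which is the claimed bound on $\sum_l c_l$. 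The remaining piece — strict positivity of each $c_l$, which does not follow from the uniform‑error estimates and is the point I expect to be the main obstacle — I would obtain by identifying the family $\{p_k\}$ (equivalently $\{q_d\}$) with the iterates of the Chebyshev iteration for $(I-\mathcal{A})^{-1}$ on the spectral interval $[\lambda_{\min}(\mathcal{A}),\lambda_2(\mathcal{A})]\subset(-\infty,1)$: these satisfy a fixed three‑term recurrence whose coefficients are nonnegative precisely because the interval lies to the left of the singularity $x=1$, and an induction on $d$ over this recurrence shows that the Chebyshev‑coefficient vector of $p_k$ stays entrywise positive. Everything else — the degree count and the $\epsilon/2$ error — is routine approximation theory once the shifted‑Chebyshev polynomial has been written down.
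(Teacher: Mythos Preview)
Your construction via shifted Chebyshev polynomials is a genuinely different route from the paper's. The paper does not use the Chebyshev semi-iterative residual at all; instead it proceeds in two steps: first truncate the lazy geometric series $\tfrac{1}{1-x}=\tfrac12\sum_{t\ge0}(\tfrac12+\tfrac12 x)^t$ at $L=\kappa\log(\kappa/\epsilon)$ terms, and then compress this degree-$L$ polynomial down to degree $k=O(\sqrt{L\log(L/\epsilon)})$ by writing each monomial as $x^t=\sum_{l}q_t(l)T_{|l|}(x)$, where $q_t(\cdot)$ is the step-$t$ distribution of simple random walk on $\mathbb{Z}$, and dropping the tail $|l|>k$ via Azuma--Hoeffding. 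Your approach gives the degree bound and the error bound more directly, and your identity $\sum_l c_l=P_k(1)=-q_d'(0)=O(d/\sqrt{\mu})$ is a clean way to get the coefficient-sum estimate.

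The gap is exactly where you flagged it: positivity of the $c_l$. Your proposed mechanism --- induction along the three-term recurrence of the Chebyshev iterates --- does not work as stated, because that recurrence is \emph{not} sign-preserving. Writing the iteration in the standard form
\[
P_{d+1}(x)=\omega_{d+1}\bigl((1-\gamma)+\gamma x\bigr)P_d(x)+(1-\omega_{d+1})P_{d-1}(x)+\omega_{d+1}\gamma,
\]
one has $\gamma\in(0,1]$ but $\omega_{d+1}>1$ for all $d\ge1$ in Chebyshev acceleration (indeed $\omega_d\to 2/(1+\sqrt{1-\rho^2})\in(1,2)$), so the coefficient $1-\omega_{d+1}$ on $P_{d-1}$ is strictly negative. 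Thus even if $P_{d-1}$ and $P_d$ have nonnegative Chebyshev (or monomial) coefficients, nothing in the recurrence forces $P_{d+1}$ to. You would need an additional monotonicity-in-$d$ statement for the coefficients to absorb that subtraction, and you have not supplied one. By contrast, the paper's construction makes positivity automatic: each $c_l$ is a finite sum of the probabilities $q_t(l)\ge0$, so $c_l\ge0$ with no further work, and consequently $\sum_l|c_l|=\sum_l c_l\le L=\kappa\log(\kappa/\epsilon)$ --- which is precisely the form used downstream in the proof of Theorem~\ref{thm:lanczos_push_error}.
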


We observe that Theorem \ref{thm:lanczos_err} immediately follows by Theorem \ref{thm:polynomial_approx}. Actually, by Theorem \ref{thm:polynomial_approx}, we set the iteration number $k=O(\sqrt{\kappa}\log \frac{\kappa}{\epsilon})$ to reach the $\epsilon$-absolute value guarantee. In each iteration we need $O(m)$ operations and finally we need the additional $O(k^{2.37})$ time to compute $\mathbf{e}_1^T(\mathbf{I}-\mathbf{T})^{-1}\mathbf{e}_1$. So the total time complexity is bounded by $O (km+k^{2.37})=O (\sqrt{\kappa} m \log \frac{\kappa}{\epsilon})$. The remaining parts is the proof for Theorem \ref{thm:polynomial_approx}. To prove Theorem \ref{thm:polynomial_approx}, we use the following polynomial expansion by the first kind of Chebyshev polynomials. We recall that the first kind of Chebyshev polynomials $\{T_l(x)\}_{l\geq 0}$ is generated by the following recurrence:
\begin{equation}
\begin{aligned}
    T_0(x)&=1, T_1(x)= x \\
    T_{l+1}(x)&=2xT_l(x)-T_{l-1}(x),  \ \ \  l\geq 1
\end{aligned}
\end{equation}\label{equ:chebyshev}

\begin{lemma}\label{lem:xk_chebyshev_expansion}
    For any $k\geq 0$, $x^k=\sum_{l\in \mathbb{Z}}{q_k(l)T_{|l|}(x)}$ for $x\in [-1,1]$. Where $q_k(l)$ is the probability that simple random walk on $\mathbb{Z}$ starting at $0$ is at $l$ after $k$ steps.
\end{lemma}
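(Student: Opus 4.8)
The plan is to use the substitution $x = \cos\theta$ with $\theta \in [0,\pi]$, under which the first-kind Chebyshev polynomials satisfy the classical identity $T_n(\cos\theta) = \cos(n\theta)$; since $\cos$ is an even function, $T_{|l|}(\cos\theta) = \cos(l\theta)$ for every $l \in \mathbb{Z}$. Under this substitution the right-hand side of the claimed identity becomes $\sum_{l\in\mathbb{Z}} q_k(l)\cos(l\theta)$, which I recognize as the real part of the characteristic function $\sum_{l\in\mathbb{Z}} q_k(l) e^{il\theta} = \mathbb{E}\big[e^{i\theta S_k}\big]$, where $S_k = X_1 + \cdots + X_k$ is the position after $k$ steps of the simple random walk, with $X_1,\dots,X_k$ independent and equal to $\pm 1$ with probability $\tfrac12$ each.

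First I would compute this characteristic function directly: by independence, $\mathbb{E}\big[e^{i\theta S_k}\big] = \prod_{m=1}^{k}\mathbb{E}\big[e^{i\theta X_m}\big] = (\cos\theta)^k = x^k$. Since $x^k$ is real it equals its own real part, so $x^k = \mathrm{Re}\big(\sum_{l} q_k(l) e^{il\theta}\big) = \sum_{l} q_k(l)\cos(l\theta) = \sum_{l} q_k(l) T_{|l|}(x)$ for all $x = \cos\theta \in [-1,1]$, which is exactly the assertion. This is essentially the entire proof; the only bookkeeping is to observe that $q_k(l)$ is supported on the finitely many integers $l$ with $|l| \le k$ and $l \equiv k \pmod 2$, so every sum involved is finite and the exchange of $\mathrm{Re}$ with summation is trivially justified.

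As a purely algebraic alternative, I would instead induct on $k$. The base case $k = 0$ reads $x^0 = 1 = T_0(x)$ together with $q_0(0) = 1$. For the inductive step I would multiply $x^k = \sum_l q_k(l) T_{|l|}(x)$ by $x$ and apply the symmetric form of the three-term recurrence, $x\,T_{|l|}(x) = \tfrac12 T_{|l+1|}(x) + \tfrac12 T_{|l-1|}(x)$, valid for every $l \in \mathbb{Z}$ (including $l = 0$, where both sides equal $T_1(x)$ by the even-reflection convention $T_{|l|}$). Reindexing the resulting sum collects, in front of $T_{|l|}(x)$, the coefficient $\tfrac12 q_k(l-1) + \tfrac12 q_k(l+1)$, which is precisely the one-step transition law $q_{k+1}(l)$ of the simple random walk, completing the induction.

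I do not expect a real obstacle here: the statement is essentially a reformulation of the binomial/De Moivre expansion of $(\cos\theta)^k$. The only mild subtleties are handling the $l = 0$ term consistently so that the even-reflection convention $T_{|l|}$ does not double-count, and noting the finite support of $q_k(\cdot)$ before any interchange of summation and $\mathrm{Re}$ — both of which are immediate.
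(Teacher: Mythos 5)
Your primary argument is essentially the paper's proof in slightly different dress: the paper sets $w = e^{i\theta}$ and expands $x^k = \bigl((w+w^{-1})/2\bigr)^k = \sum_{l} q_k(l)\,w^l$, then symmetrizes to $\sum_l q_k(l)\cos(l\theta)$ --- this is exactly your identity $\mathbb{E}\bigl[e^{i\theta S_k}\bigr] = (\cos\theta)^k$ (binomial expansion versus factoring by independence) followed by taking real parts. Your secondary, inductive argument using the symmetric recurrence $x\,T_{|l|}(x) = \tfrac12 T_{|l+1|}(x) + \tfrac12 T_{|l-1|}(x)$ together with the one-step law $q_{k+1}(l) = \tfrac12 q_k(l-1) + \tfrac12 q_k(l+1)$ is correct and is a genuinely different, purely algebraic route not in the paper; it avoids the trigonometric substitution entirely, at the cost of a short case check at $l=0$.
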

\begin{proof}
    Lemma \ref{lem:xk_chebyshev_expansion} follows by computation. Set $x=\cos \theta$ and $w=e^{i\theta}$, we have that:
    \begin{align*}
        x^k&=[(w+w^{-1})/2]^k=\sum_{l\in \mathbb{Z}}{q_k(l)w^l}\\
        &=\sum_{l\in \mathbb{Z}}{q_k(l)(w^l+w^{-l})/2}=\sum_{l\in \mathbb{Z}}{q_k(l)\cos (l\theta)}\\
        &=\sum_{l\in \mathbb{Z}}{q_k(l)T_{|l|}(x)}
    \end{align*}
    This proves Lemma \ref{lem:xk_chebyshev_expansion}.
\end{proof}

Next, we use Lemma \ref{lem:xk_chebyshev_expansion} to prove Theorem \ref{thm:polynomial_approx}. The sketch of the proof is as follows: we first use the Taylor expansion to express $1/(1-x)$: $1/(1-x)=\frac{1}{2}\sum_{t=0}^{\infty}{(\frac{1}{2}+\frac{1}{2}x)^t}$. Next, we choose a sufficinetly large truncation step $L$, and consider the $L$ step truncation polynomial $\frac{1}{2}\sum_{t=0}^{L}{(\frac{1}{2}+\frac{1}{2}x)^t}$. We prove that $\frac{1}{2}\sum_{t=0}^{L}{(\frac{1}{2}+\frac{1}{2}x)^t}$ is close enough to $1/(1-x)$ when setting $L=\kappa\log (\frac{\kappa}{\epsilon})$. Then, we approximate the degree $L$ polynomial $\frac{1}{2}\sum_{t=0}^{L}{(\frac{1}{2}+\frac{1}{2}x)^t}$ by first $k$ Chebyshev polynomials (i.e. using Lemma \ref{lem:xk_chebyshev_expansion} ), we prove that setting $k=\sqrt{L}\log \frac{\kappa}{\epsilon}$ is enough. To reach this end, we have the following two claims:
    \begin{claim}\label{claim1}
        $\max_{x \in [\lambda_{min}(\mathcal{A}),\lambda_{2}(\mathcal{A})]}{|1/(1-x)-\frac{1}{2}\sum_{t=0}^{L}{(\frac{1}{2}+\frac{1}{2}x)^t}|}< \frac{\epsilon}{4}$ when setting $L=O(\kappa\log (\frac{\kappa}{\epsilon}))$.
    \end{claim}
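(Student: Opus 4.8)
The plan is to recognize the partial sum $\frac{1}{2}\sum_{t=0}^{L}(\frac{1}{2}+\frac{x}{2})^t$ as a truncated geometric series for $1/(1-x)$ and to bound its tail uniformly over the spectral interval. First I would note that the right endpoint of the interval equals $\lambda_{2}(\mathcal{A}) = 1 - \mu_2 = 1 - 2/\kappa$, since $\mathcal{L} = \mathbf{I}-\mathcal{A}$ and $\kappa = 2/\mu_2$; hence $[\lambda_{min}(\mathcal{A}),\lambda_{2}(\mathcal{A})]\subseteq[-1,\,1-2/\kappa]$, and on this interval $\frac{1}{2}+\frac{x}{2}\in[0,\,1-\frac{1}{\kappa}]\subsetneq[0,1)$. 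Therefore the geometric series identity $\frac{1}{1-x}=\frac{1}{2}\sum_{t=0}^{\infty}(\frac{1}{2}+\frac{x}{2})^t$ holds pointwise on the interval, and the truncation error is exactly the tail
\begin{align*}
\frac{1}{1-x}-\frac{1}{2}\sum_{t=0}^{L}\Bigl(\tfrac{1}{2}+\tfrac{x}{2}\Bigr)^t
&=\frac{1}{2}\sum_{t=L+1}^{\infty}\Bigl(\tfrac{1}{2}+\tfrac{x}{2}\Bigr)^t \\
&=\frac{1}{1-x}\Bigl(\tfrac{1}{2}+\tfrac{x}{2}\Bigr)^{L+1}.
\end{align*}
The boundary value $x=-1$, possible when $\mathcal{G}$ is bipartite, is harmless: only the $t=0$ term survives and both sides equal $\tfrac{1}{2}$.

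Second, I would bound the two factors separately and uniformly in $x$ on $[\lambda_{min}(\mathcal{A}),\lambda_{2}(\mathcal{A})]$: since $1-x\ge 1-\lambda_{2}(\mathcal{A})=\mu_2=2/\kappa$, we get $\frac{1}{1-x}\le\kappa/2$; and since $\frac{1}{2}+\frac{x}{2}\le\frac{1}{2}+\frac{1}{2}\lambda_{2}(\mathcal{A})=1-\frac{1}{\kappa}$, we get $(\frac{1}{2}+\frac{x}{2})^{L+1}\le(1-\frac{1}{\kappa})^{L+1}\le e^{-(L+1)/\kappa}$. Combining, the truncation error is at most $\frac{\kappa}{2}e^{-(L+1)/\kappa}$ for every $x$ in the interval.

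Finally, I would choose $L$ large enough that $\frac{\kappa}{2}e^{-(L+1)/\kappa}<\epsilon/4$; solving gives $L+1>\kappa\log(2\kappa/\epsilon)$, so taking $L=O(\kappa\log(\kappa/\epsilon))$ suffices, which matches the claim. There is essentially no real obstacle here — this is a standard geometric-series truncation, and the same computation already underlies the proof of Theorem \ref{thm:pm_guarantee}. The only points that require a little care are identifying the right endpoint of the interval as $1-2/\kappa$ (so that both $1/(1-x)$ and the ratio $(1+x)/2$ are controlled by $\kappa$), checking that the bound on $(1+x)/2$ is attained at the largest point $\lambda_{2}(\mathcal{A})$ while the smallest point only shrinks the tail, and disposing of the degenerate bipartite case $x=-1$.
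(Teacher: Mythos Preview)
Your proposal is correct and follows essentially the same approach as the paper: write the difference as the tail of the geometric series, bound the factor $1/(1-x)$ by $\kappa/2$ and the ratio $(\tfrac{1}{2}+\tfrac{x}{2})^{L+1}$ using $\lambda_2(\mathcal{A})=1-2/\kappa$, and then choose $L=O(\kappa\log(\kappa/\epsilon))$. The only cosmetic differences are that you carry the exponent $L+1$ rather than $L$ and pass through $e^{-(L+1)/\kappa}$, and you add the harmless check of the bipartite endpoint $x=-1$; none of this changes the argument.
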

\begin{proof}
    By the property of Taylor expansion, we know that $1/(1-x)-\frac{1}{2}\sum_{t=0}^{L}{(\frac{1}{2}+\frac{1}{2}x)^t}=\frac{1}{2}\sum_{t=L+1}^{\infty}{(\frac{1}{2}+\frac{1}{2}x)^t}$. Next, for $x \in [\lambda_{min}(\mathcal{A}),\lambda_{2}(\mathcal{A})]$, we can bound $\frac{1}{2}\sum_{t=L+1}^{\infty}{(\frac{1}{2}+\frac{1}{2}x)^t}=\frac{(\frac{1}{2}+\frac{1}{2}x)^L}{1-x}\leq\frac{(\frac{1}{2}+\frac{1}{2}\lambda_2(\mathcal{A}))^L}{1-\lambda_2(\mathcal{A})}=\frac{\kappa}{2}(\frac{1}{2}+\frac{1}{2}\lambda_2(\mathcal{A}))^L \leq \frac{\epsilon}{4}$. The last inequality holds by setting $L=\kappa\log (\frac{\kappa}{\epsilon})$. This proves Claim \ref{claim1}. 
    \end{proof}

    \begin{claim}\label{claim2}
       There exists a polynomial $p_k$ of degree at most $k$, such that $\max_{x \in [-1,1]}{|\frac{1}{2}\sum_{t=0}^{L}{(\frac{1}{2}+\frac{1}{2}x)^t}-p_k(x)|}< \frac{\epsilon}{4}$ when setting $k=\sqrt{2L}\log ^{1/2} (\frac{8L}{\epsilon})$.
    \end{claim}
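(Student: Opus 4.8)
The plan is to approximate the degree-$L$ polynomial $q(x) := \frac{1}{2}\sum_{t=0}^{L}\big(\frac{1}{2}+\frac{1}{2}x\big)^t$ by a low-degree truncation of its Chebyshev expansion, exploiting the fact that $\frac{1}{2}+\frac{1}{2}x$ maps $[-1,1]$ into $[0,1]\subseteq[-1,1]$. First I would set $z = \frac{1}{2}+\frac{1}{2}x$, so that $q(x) = \frac{1}{2}\sum_{t=0}^{L} z^t$ with $z\in[-1,1]$ whenever $x\in[-1,1]$. Applying Lemma \ref{lem:xk_chebyshev_expansion} to each monomial gives $z^t = \sum_{l\in\mathbb{Z}} q_t(l)\,T_{|l|}(z)$, where $q_t(l)$ is the probability that a simple random walk on $\mathbb{Z}$ started at $0$ sits at $l$ after $t$ steps. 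Since $T_{|l|}(z) = T_{|l|}\!\big(\frac{1}{2}+\frac{1}{2}x\big)$ is a polynomial in $x$ of degree exactly $|l|$, truncating the double sum to $|l|\le k$ yields a polynomial $p_k(x) := \frac{1}{2}\sum_{t=0}^{L}\sum_{|l|\le k} q_t(l)\,T_{|l|}\!\big(\frac{1}{2}+\frac{1}{2}x\big)$ of degree at most $k$.

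Next I would bound the truncation error. For $x\in[-1,1]$ we have $\frac{1}{2}+\frac{1}{2}x\in[-1,1]$, so $|T_{|l|}(\cdot)|\le 1$ there, and hence
\begin{align*}
\Big|q(x)-p_k(x)\Big| &\le \frac{1}{2}\sum_{t=0}^{L}\sum_{|l|>k} q_t(l) \;=\; \frac{1}{2}\sum_{t=0}^{L}\Pr\big[\,|S_t|>k\,\big],
\end{align*}
where $S_t$ denotes the position after $t$ steps of the simple random walk. A Chernoff/Hoeffding bound for a sum of $t$ independent $\pm 1$ variables gives $\Pr[|S_t|>k]\le 2\exp(-k^2/(2t))\le 2\exp(-k^2/(2L))$ for every $t\le L$, so $|q(x)-p_k(x)|\le (L+1)\exp(-k^2/(2L))$ uniformly on $[-1,1]$.

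Finally I would choose $k$ to make this at most $\epsilon/4$: with $k = \sqrt{2L}\,\log^{1/2}(8L/\epsilon)$ we get $\exp(-k^2/(2L)) = \epsilon/(8L)$, so the bound becomes $\epsilon(L+1)/(8L)\le \epsilon/4$ for $L\ge 1$, which is exactly the value claimed in Claim \ref{claim2}. The only mildly delicate points are (i) confirming that truncating at $|l|\le k$ really caps the $x$-degree at $k$ — which holds because $T_{|l|}(\tfrac12+\tfrac12 x)$ has degree $|l|$ in $x$ — and (ii) pinning down the constant in the random-walk tail bound; both are routine. Combining Claim \ref{claim1} with this claim via the triangle inequality then gives the polynomial approximation of $1/(1-x)$ on $[\lambda_{min}(\mathcal{A}),\lambda_2(\mathcal{A})]$ needed for Theorem \ref{thm:polynomial_approx}, with total degree $O(\sqrt{L}\log\frac{L}{\epsilon}) = O(\sqrt{\kappa}\log\frac{\kappa}{\epsilon})$.
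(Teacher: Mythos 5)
Your proposal is correct and follows essentially the same route as the paper's proof: expand each power via Lemma~\ref{lem:xk_chebyshev_expansion}, truncate the Chebyshev series at $|l|\le k$, and bound the tail with a Hoeffding-type estimate on the simple random walk. The only difference is that you make explicit the change of variable $z=\tfrac12+\tfrac12 x$ and the degree check for $T_{|l|}(\tfrac12+\tfrac12 x)$, details the paper leaves implicit.
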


    \begin{proof}
    First, for each moment $x^t$, we use Lemma \ref{lem:xk_chebyshev_expansion} : $x^t=\sum_{l\in \mathbb{Z}}{q_t(l)T_{|l|}(x)}$. By Azuma-Hoeffding inequality, $\sum_{|l|\geq k}{q_t(l)}\leq 2 \exp (-k^2/2t)$. We choose $k=\sqrt{2L}\log ^{1/2} (\frac{8L}{\epsilon})$, then $\sum_{|l|\geq k}{q_t(l)}\leq \frac{\epsilon}{4L}$. So, we define the degree $k$ Chebyshev truncation polynomial: $q_t(x)=\sum_{|l|\leq k}{q_t(l)T_{|l|}(x)}$, we have:
    $$\max_{x \in [-1,1]}{|x^t-q_t(x)|}
    \leq \sum_{|l|\geq k}{q_t(l)}
    \leq \frac{\epsilon}{4L}$$
    Then, we define the degree $k$ polynomial $p_k$ as $p_k(x)=\sum_{t=0}^{L}{q_t(x)}$, we have:
    \begin{equation}
    \begin{aligned}
    \max_{x \in [-1,1]}{|\frac{1}{2}\sum_{t=0}^{L}{(\frac{1}{2}+\frac{1}{2}x)^t}-p_k(x)|}
    &\leq \sum_{t=0}^{L}{ \max_{x \in [-1,1]}{|x^t-q_t(x)|}} \\
    &\leq L \frac{\epsilon}{4L}
    = \frac{\epsilon}{4}
    \end{aligned}
    \end{equation}
    This proves Claim \ref{claim2}.
    \end{proof}

By Claim \ref{claim1} and Claim \ref{claim2}, we know that $1/(1-x)$ can be approximated by a polynomial $p_k$ with error at most $\frac{\epsilon}{2}$, and $k=\sqrt{2L}\log ^{1/2} (\frac{8L}{\epsilon})=O(\sqrt{\kappa}\log\frac{\kappa}{\epsilon})$. Finally, by $p_k(x)=\sum_{t=0}^{L}{q_t(x)}$ and $q_t(x)$ is the linear combination of Chebyshev polynomials for $t\leq L$, so $p_k(x)$ is the linear combination of Chebyshev polynomials. We consider this expansion $p_k(x)=\sum_{t=0}^{L}{q_t(x)}=\sum_{l=0}^{k}{c_lT_l(x)}$ with coefficients $c_l$. Since $q_t(l)$ is a probability distribution (i.e., $\sum_{l\in \mathbb{Z}}{q_t(l)}=1$ and $q_t(l)\geq 0$), so the coefficients $c_l\geq 0$ and the sum is at most: $\sum_{l=0}^{k}{c_l}\leq \sum_{l=0}^{L}{\sum_{l\in \mathbb{Z}}{q_t(l)}}=L=\kappa \log (\frac{\kappa}{\epsilon})$.

\comment{
\subsection{Proof of Proposition \ref{prop:lanczos_push_guarantee}}
\begin{proposition}{(Proposition \ref{prop:lanczos_push_guarantee} restate)}
    The Lanczos Push algorithm (Algorithm ~\ref{algo:lanczos_local}) computes $\hat{\mathbf{V}}\in \mathbb{R}^{n\times k}$, $\hat{\mathbf{T}}\in \mathbb{R}^{k\times k}$ and an additional vector $\hat{\mathbf{v}}_{k+1}$, a scalar $\hat{\beta}_{k+1}$ such that:
$$\mathcal{A}\hat{\mathbf{V}}=\hat{\mathbf{V}}\hat{\mathbf{T}}+\hat{\beta}_{k+1}\hat{\mathbf{v}}_{k+1}\mathbf{e}_k^T+\mathbf{E}$$
Where, $\mathbf{E}$ is defined as the error matrix with $\mathbf{E}=[\delta_1,\delta_2,...,\delta_k]$ and $\delta_i\in \mathbb{R}^n$ is defined as $\delta_i=(AMV(\mathcal{A},\hat{\mathbf{v}}_i)-\mathcal{A}\hat{\mathbf{v}}_i)-\hat{\alpha}_i\hat{\mathbf{v}}_i|_{\mathcal{V}-S_i}-\hat{\beta}_i\hat{\mathbf{v}}_{i-1}|_{\mathcal{V}-S_{i-1}}$ for each $i\leq k$.
\end{proposition}

\begin{proof}
    Recall that in each iteration of Algorithm \ref{algo:lanczos_local} (i.e., Line 3-17), we perform subset Lanczos recurrence:
    \begin{align*}
    \hat{\beta}_{i+1}\hat{\mathbf{v}}_{i+1}&=AMV(\mathcal{A},\hat{\mathbf{v}}_i)-\hat{\alpha}_i\hat{\mathbf{v}}_i|_{S_i}-\hat{\beta}_i\hat{\mathbf{v}}_{i-1}|_{S_{i-1}}  \\
    &=(\mathcal{A}-\hat{\alpha}_i)\hat{\mathbf{v}}_i-\hat{\beta}_i\hat{\mathbf{v}}_{i-1}+ \delta_i
    \end{align*}
Where $S_i=\{u\in \mathcal{V}: |\hat{\mathbf{v}}_i(u)|>\epsilon d_u\}$ and $\delta_i=(AMV(\mathcal{A},\hat{\mathbf{v}}_i)-\mathcal{A}\hat{\mathbf{v}}_i)-\hat{\alpha}_i\hat{\mathbf{v}}_i|_{\mathcal{V}-S_i}-\hat{\beta}_i\hat{\mathbf{v}}_{i-1}|_{\mathcal{V}-S_{i-1}}$. Since the equation holds for any $i\leq k$, by the definition of $\hat{\mathbf{V}}$, $\hat{\mathbf{T}}$ and $\mathbf{E}$, this is equivalent to:$$\mathcal{A}\hat{\mathbf{V}}=\hat{\mathbf{V}}\hat{\mathbf{T}}+\hat{\beta}_{k+1}\hat{\mathbf{v}}_{k+1}\mathbf{e}_k^T+\mathbf{E}$$
This finishes the proof.
\end{proof}
}



\subsection{Proof of Theorem \ref{thm:lanczos_push_error}}

\begin{theorem}{(Theorem \ref{thm:lanczos_push_error} restate)}
   Given $\epsilon'\in (0,1)$ , setting iteration number $k=\sqrt{\kappa}\log \frac{\kappa}{\epsilon'}$, parameter $\epsilon=\tilde{\Omega}(\frac{\epsilon'\sqrt{d}}{\kappa^{2.25}C_1})$, the error between the approximation $r'_\mathcal{G} (s,t) $ output by Algorithm \ref{algo:lanczos_local} and accurate ER value $r_\mathcal{G} (s,t) $ can be bounded by:
    $$|r_\mathcal{G} (s,t)-r'_\mathcal{G} (s,t)| \leq \epsilon'$$
\end{theorem}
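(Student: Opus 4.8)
\textbf{Proof plan for Theorem \ref{thm:lanczos_push_error}.}
The plan is to bound the total error $|r_\mathcal{G}(s,t)-r'_\mathcal{G}(s,t)|$ by a triangle-inequality decomposition into (a) the polynomial approximation error of the \emph{exact} Lanczos iteration, (b) the perturbation error introduced by the subset (pruned) recurrence, and (c) a small residual term relating $(\mathbf{I}-\hat{\mathbf{T}})^{-1}$ to the truncated polynomial $p(\hat{\mathbf{T}})$. After dividing through by $\frac{1}{d_s}+\frac{1}{d_t}$, I would write
\begin{align*}
|\mathbf{v}_1^T(\mathbf{I}-\mathcal{A})^\dagger\mathbf{v}_1 - \mathbf{v}_1^T\hat{\mathbf{V}}(\mathbf{I}-\hat{\mathbf{T}})^{-1}\mathbf{e}_1|
&\leq |\mathbf{v}_1^T(\mathbf{I}-\mathcal{A})^\dagger\mathbf{v}_1 - \mathbf{v}_1^T p(\mathcal{A})\mathbf{v}_1| \\
&\quad + |\mathbf{v}_1^T p(\mathcal{A})\mathbf{v}_1 - \hat{\mathbf{v}}_1^T\hat{\mathbf{V}}p(\hat{\mathbf{T}})\mathbf{e}_1| \\
&\quad + |\hat{\mathbf{v}}_1^T\hat{\mathbf{V}}p(\hat{\mathbf{T}})\mathbf{e}_1 - \mathbf{v}_1^T\hat{\mathbf{V}}(\mathbf{I}-\hat{\mathbf{T}})^{-1}\mathbf{e}_1|,
\end{align*}
where $p$ is the degree-$k$ polynomial from Theorem \ref{thm:polynomial_approx} written in the Chebyshev basis $p(x)=\sum_{l=0}^{k}c_l T_l(x)$ with $\sum_l |c_l|\leq \tilde{O}(\kappa)$. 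By Theorem \ref{thm:polynomial_approx} the first term is at most $\epsilon'/8$ once $k=O(\sqrt{\kappa}\log\frac{\kappa}{\epsilon'})$; by Assumption \ref{assump:lzpush_error} (which puts $\lambda(\hat{\mathbf{T}})$ in the same interval $[\lambda_n(\mathcal{A}),\lambda_2(\mathcal{A})]$ over which $p$ approximates $1/(1-x)$) the third term is also at most $\epsilon'/8$. So the crux is the middle perturbation term.

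For the middle term I would expand along the Chebyshev basis: since $\mathbf{v}_1=\hat{\mathbf{v}}_1$ and $\hat{\mathbf{v}}_1^T\hat{\mathbf{V}}=\mathbf{e}_1^T$, we get
\[
\mathbf{v}_1^T p(\mathcal{A})\mathbf{v}_1 - \hat{\mathbf{v}}_1^T\hat{\mathbf{V}}p(\hat{\mathbf{T}})\mathbf{e}_1 = \sum_{l=0}^{k} c_l\, \mathbf{v}_1^T\big(T_l(\mathcal{A})\mathbf{v}_1 - \hat{\mathbf{V}}T_l(\hat{\mathbf{T}})\mathbf{e}_1\big),
\]
so defining $d_l := \hat{\mathbf{V}}T_l(\hat{\mathbf{T}})\mathbf{e}_1 - T_l(\mathcal{A})\mathbf{v}_1$ it suffices to bound $\sum_l |c_l|\cdot|\mathbf{v}_1^T d_l|$. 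Because $\mathbf{v}_1$ is supported only on $\{s,t\}$ (with entries $\pm 1/\sqrt{d_s\cdot(1/d_s+1/d_t)}$ etc.), $|\mathbf{v}_1^T d_l|$ reduces to controlling $|d_l(s)|$ and $|d_l(t)|$. The key structural lemma I would establish is a three-term recurrence for $d_l$ mirroring the Chebyshev recurrence: using $\mathcal{A}\hat{\mathbf{V}} = \hat{\mathbf{V}}\hat{\mathbf{T}} + \hat{\beta}_{k+1}\hat{\mathbf{v}}_{k+1}\mathbf{e}_k^T + \mathbf{E}$ (the perturbed Lanczos identity, with error matrix $\mathbf{E}=[\delta_1,\dots,\delta_k]$ whose columns satisfy $\delta_i = (AMV(\mathcal{A},\hat{\mathbf{v}}_i)-\mathcal{A}\hat{\mathbf{v}}_i) - \hat{\alpha}_i\hat{\mathbf{v}}_i|_{\mathcal{V}\setminus S_i} - \hat{\beta}_i\hat{\mathbf{v}}_{i-1}|_{\mathcal{V}\setminus S_{i-1}}$), together with the Chebyshev recurrence $T_{l+1}=2xT_l-T_{l-1}$, one shows $d_{l+1} = 2\mathcal{A}d_l - d_{l-1} + (\text{error terms from }\mathbf{E}\text{ and the }\mathbf{e}_k^T\text{ boundary})$. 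Unrolling this recurrence $l\leq k$ times, the accumulated error is a sum over products of $\mathcal{A}$-powers (bounded in $\ell_\infty$/weighted norm via the Chebyshev-of-$\mathbf{P}$ quantity $C_1$) applied to the entrywise-small perturbations. The per-step perturbation is small by construction: Definitions \ref{def:AMV(A,v_i)} and \ref{def:vi_constraint} guarantee $|(AMV(\mathcal{A},\hat{\mathbf{v}}_i)-\mathcal{A}\hat{\mathbf{v}}_i)(v)| \leq \sum_{u\in\mathcal{N}(v)} \epsilon = \epsilon d_v$ (only edges with $|\hat{\mathbf{v}}_i(u)|\leq\epsilon\sqrt{d_ud_v}$ are dropped) and $|\hat{\mathbf{v}}_i|_{\mathcal{V}\setminus S_i}(u)| \leq \epsilon d_u$; combined with $|\hat{\alpha}_i|,|\hat{\beta}_i|\leq O(1)$ (from $\|\hat{\mathbf{v}}_i\|_2=1$), each $\delta_i$ has $|\delta_i(v)|\leq O(\epsilon d_v)$. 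Carrying the $k\leq\sqrt{\kappa}\log\frac{\kappa}{\epsilon'}$ iterations and the $\sum_l|c_l|\leq\tilde{O}(\kappa)$ coefficient sum, one arrives at a bound of the form $|\mathbf{v}_1^T p(\mathcal{A})\mathbf{v}_1 - \hat{\mathbf{v}}_1^T\hat{\mathbf{V}}p(\hat{\mathbf{T}})\mathbf{e}_1| \leq \tilde{O}(\kappa^{2.25} C_1 \epsilon / \sqrt{d})\cdot(\frac{1}{d_s}+\frac{1}{d_t})$, so choosing $\epsilon = \tilde{\Omega}(\epsilon'\sqrt{d}/(\kappa^{2.25}C_1))$ makes this at most $\frac{\epsilon'}{4}(\frac{1}{d_s}+\frac{1}{d_t})$. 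Summing the three pieces and multiplying back by $\frac{1}{d_s}+\frac{1}{d_t}$ gives $|r_\mathcal{G}(s,t)-r'_\mathcal{G}(s,t)|\leq\epsilon'$.

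The main obstacle is the second step: tracking how the per-step entrywise perturbations $\delta_i$ propagate through the (non-orthogonal) subset Lanczos recurrence and through the Chebyshev polynomial evaluation $\hat{\mathbf{V}}T_l(\hat{\mathbf{T}})\mathbf{e}_1$. Unlike the clean orthogonal case, $\hat{\mathbf{V}}$ has no exact projection property, so Lemma \ref{lem:lz_polynomial_correct} fails and one cannot simply say $T_l(\mathcal{A})\mathbf{v}_1 = \hat{\mathbf{V}}T_l(\hat{\mathbf{T}})\mathbf{e}_1$. I expect the delicate part to be proving the cleanest possible version of the $d_l$-recurrence and showing the $\mathcal{A}$-power factors acting on the $\delta_i$'s can be absorbed into $C_1$ (the $\ell_1$ norm of $T_i(\mathbf{P})\mathbf{e}_u$) rather than blowing up — this is where the polynomial degree $k^{5/2}$-type factors in Lemma \ref{lem:bound_dl} enter, and why the final $\kappa$-exponent lands at $2.25$ inside (hence $2.75$ overall after multiplying by the extra $\sqrt{\kappa}$ from the $\sum|c_l|$ bound). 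A secondary subtlety is handling the boundary term $\hat{\beta}_{k+1}\hat{\mathbf{v}}_{k+1}\mathbf{e}_k^T$: since we only evaluate polynomials of degree $\leq k$ and $\mathbf{e}_k^T T_l(\hat{\mathbf{T}})\mathbf{e}_1$ vanishes for $l<k$, this term contributes only at $l=k$ and can be controlled, but it must be explicitly tracked.
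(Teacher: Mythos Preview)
Your proposal is correct and follows essentially the same route as the paper. The paper's proof uses exactly your three-term decomposition, bounds the first and third terms via Theorem~\ref{thm:polynomial_approx} and Assumption~\ref{assump:lzpush_error}, defines the same deviation $d_l=\hat{\mathbf{V}}T_l(\hat{\mathbf{T}})\mathbf{e}_1-T_l(\mathcal{A})\mathbf{v}_1$, derives the same perturbed recurrence $d_{l+1}=2\mathcal{A}d_l-d_{l-1}-\xi_l$ with $\xi_l=2\mathbf{E}T_l(\hat{\mathbf{T}})\mathbf{e}_1$, and establishes the same entrywise bound $|\delta_i(u)|\leq 3\epsilon d_u$. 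The only details the paper makes explicit that your outline leaves implicit are (i) the closed-form solution of the $d_l$ recurrence via second-kind Chebyshev polynomials, $d_l=-\tfrac{1}{2}U_{l-1}(\mathcal{A})\xi_1-\sum_{j=2}^{l-1}U_{l-j}(\mathcal{A})\xi_j$, (ii) the $\sqrt{k}$ factor in $|\xi_l(u)|\leq 6\sqrt{k}\epsilon d_u$ arising from Cauchy--Schwarz across the $k$ columns of $\mathbf{E}$ combined with $\|T_l(\hat{\mathbf{T}})\mathbf{e}_1\|_2\leq 1$, and (iii) the bound $\|\mathbf{D}^{1/2}U_l(\mathbf{P})\mathbf{e}_u\|_1\leq C_1 k$ obtained from the identity $U_l=xU_{l-1}+T_l$; together these give $|d_l(u)|\leq 6C_1 k^{5/2}\epsilon\sqrt{d_u}$, which is the $k^{5/2}$ factor you anticipated.
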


For theoretical analysis, recall that we need assumption \ref{assump:lzpush_error}. Now we turn to the proof for Theorem \ref{thm:lanczos_push_error}. Similar as Eq. \ref{equ:lanczos_bound} in the proof of Theorem \ref{thm:lanczos_err}, we split the error into three terms:
\begin{equation}\label{equ:lzpush_bound}
\begin{aligned}
    |\hat{r}_\mathcal{G}(s,t)-r_\mathcal{G}(s,t)|/(\frac{1}{d_s}+ \frac{1}{d_t}) & \leq |\mathbf{v}_1^T (\mathbf{I}-\mathcal{A})^\dagger \mathbf{v}_1 - \mathbf{v}_1^T p(\mathcal{A}) \mathbf{v}_1 | \\
    &+ | \mathbf{v}_1^T p(\mathcal{A}) \mathbf{v}_1 -\hat{\mathbf{v}}_1^T \hat{\mathbf{V}}p(\hat{\mathbf{T}})\mathbf{e}_1  | \\
    &+ |  \hat{\mathbf{v}}_1^T \hat{\mathbf{V}} p(\hat{\mathbf{T}})\mathbf{e}_1 -  \mathbf{e}_1^T (\mathbf{I}-\hat{\mathbf{T}})^{-1}\mathbf{e}_1| \\
\end{aligned}
\end{equation}

For the first term of the right hand side of Eq. \ref{equ:lzpush_bound}, by the same argument as the proof for Lemma \ref{lem:lz_quadratic_err} this is smaller than $\frac{\epsilon'}{8}$ by setting the iteration number $k=O(\sqrt{\kappa}\log (\frac{\kappa}{\epsilon'}))$. For the third term of the right hand side of Eq. \ref{equ:lzpush_bound}, by Assumption \ref{assump:lzpush_error} and the same argument as the proof for Lemma \ref{lem:lz_quadratic_err}, this is also smaller than $\frac{\epsilon'}{8}$. So we have the error bound:
\begin{align*}
    |\mathbf{v}_1^T (\mathbf{I}-\mathcal{A})^\dagger \mathbf{v}_1-\hat{\mathbf{v}}_1^T \hat{\mathbf{V}}(\mathbf{I}-\hat{\mathbf{T}})^{-1}\mathbf{e}_1| 
    \leq| \mathbf{v}_1^T p(\mathcal{A}) \mathbf{v}_1 -\hat{\mathbf{v}}_1^T \hat{\mathbf{V}}p(\hat{\mathbf{T}})\mathbf{e}_1  | +\frac{\epsilon'}{4}
\end{align*}

Next we focus on the bound to the middle term $| \mathbf{v}_1^T p(\mathcal{A}) \mathbf{v}_1 -\hat{\mathbf{v}}_1^T \hat{\mathbf{V}}p(\hat{\mathbf{T}})\mathbf{e}_1  |$ in Eq. \ref{equ:lzpush_bound}. Recall that $p$ is the polynomial chosen by Lemma \ref{thm:polynomial_approx}, which is expanded by $p(x)=\sum_{l=0}^{k}{c_lT_l(x)}$, where $\{T_l(x)\}$ are the first kind of Chebyshev polynomial series and the coefficients $\sum_l{|c_l|}\leq \tilde{O}(\kappa)$. Therefore the polynomial deviation series can be further expressed by \begin{equation}\label{equ:lzpush_polynomial_error}
    \begin{aligned}
     \mathbf{v}_1^T p(\mathcal{A})\mathbf{v}_1 - \mathbf{v}_1^T\hat{\mathbf{V}}p(\hat{\mathbf{T}})\mathbf{e}_1&=\sum_{l=0}^{k}{c_l \mathbf{v}_1^T(\hat{\mathbf{V}}T_l(\hat{\mathbf{T}})\mathbf{e}_1-T_l(\mathcal{A})\mathbf{v}_1)}
    \end{aligned}
\end{equation}

We define $d_l= \hat{\mathbf{V}}T_l(\hat{\mathbf{T}})\mathbf{e}_1-T_l(\mathcal{A})\mathbf{v}_1$ be the deviation constraint on the Chebyshev basis $T_l(x)$. So the error term can be now simply expressed by:
\begin{align*}
    |\mathbf{v}_1^T p(\mathcal{A})\mathbf{v}_1 - \mathbf{v}_1^T\hat{\mathbf{V}}p(\hat{\mathbf{T}})\mathbf{e}_1|&\leq \sum_{l=0}^{k}|c_l||\mathbf{v}_1^Td_l|
\end{align*}

Now we focus on the bound of $d_l$. For the next step, we follow the same strategy as the analysis for the stability of the Lanczos algorithm ~\cite{musco2018stability}. First, by comparing with Proposition \ref{prop:lanczos_guarantee}, we observe that Lanczos Push also satisfies the similar equation with an additional error term.

\begin{proposition}\label{prop:lanczos_push_guarantee}
    The Lanczos Push algorithm (Algorithm ~\ref{algo:lanczos_local}) computes $\hat{\mathbf{V}}\in \mathbb{R}^{n\times k}$, $\hat{\mathbf{T}}\in \mathbb{R}^{k\times k}$ and an additional vector $\hat{\mathbf{v}}_{k+1}$, a scalar $\hat{\beta}_{k+1}$ such that:
$$\mathcal{A}\hat{\mathbf{V}}=\hat{\mathbf{V}}\hat{\mathbf{T}}+\hat{\beta}_{k+1}\hat{\mathbf{v}}_{k+1}\mathbf{e}_k^T+\mathbf{E}$$
Where, $\mathbf{E}$ is defined as the error matrix with $\mathbf{E}=[\delta_1,\delta_2,...,\delta_k]$ and $\delta_i\in \mathbb{R}^n$ is defined as $\delta_i=(AMV(\mathcal{A},\hat{\mathbf{v}}_i)-\mathcal{A}\hat{\mathbf{v}}_i)-\hat{\alpha}_i\hat{\mathbf{v}}_i|_{\mathcal{V}-S_i}-\hat{\beta}_i\hat{\mathbf{v}}_{i-1}|_{\mathcal{V}-S_{i-1}}$ for each $i\leq k$.
\end{proposition}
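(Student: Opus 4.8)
The plan is to prove the identity entirely by unfolding the subset Lanczos recurrence of Eq.~(\ref{equ:subset_lz_recurrence}) column by column; the argument is purely algebraic bookkeeping and needs no spectral or probabilistic input. The key trick is that each of the two sparsification operations used inside the loop of Algorithm~\ref{algo:lanczos_local} is an \emph{exact} linear operation minus a correction: for the approximate matrix--vector product we write $AMV(\mathcal{A},\hat{\mathbf{v}}_i)=\mathcal{A}\hat{\mathbf{v}}_i+\bigl(AMV(\mathcal{A},\hat{\mathbf{v}}_i)-\mathcal{A}\hat{\mathbf{v}}_i\bigr)$, and for the truncation we use the trivial splitting $\hat{\mathbf{v}}_i|_{S_i}=\hat{\mathbf{v}}_i-\hat{\mathbf{v}}_i|_{\mathcal{V}-S_i}$ (and likewise for $\hat{\mathbf{v}}_{i-1}|_{S_{i-1}}$), valid because $S_i$ and $\mathcal{V}\setminus S_i$ partition $\mathcal{V}$ (Definition~\ref{def:vi_constraint}).

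First I would substitute these identities into the recurrence $\hat{\beta}_{i+1}\hat{\mathbf{v}}_{i+1}=AMV(\mathcal{A},\hat{\mathbf{v}}_i)-\hat{\alpha}_i\hat{\mathbf{v}}_i|_{S_i}-\hat{\beta}_i\hat{\mathbf{v}}_{i-1}|_{S_{i-1}}$ that the body of Algorithm~\ref{algo:lanczos_local} realizes in iteration $i$. Regrouping the ``exact'' pieces reconstitutes the classical three-term Lanczos recurrence $\mathcal{A}\hat{\mathbf{v}}_i-\hat{\alpha}_i\hat{\mathbf{v}}_i-\hat{\beta}_i\hat{\mathbf{v}}_{i-1}$, and the leftover pieces are precisely (with the sign convention fixed in the statement) the vector $\delta_i=(AMV(\mathcal{A},\hat{\mathbf{v}}_i)-\mathcal{A}\hat{\mathbf{v}}_i)-\hat{\alpha}_i\hat{\mathbf{v}}_i|_{\mathcal{V}-S_i}-\hat{\beta}_i\hat{\mathbf{v}}_{i-1}|_{\mathcal{V}-S_{i-1}}$. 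Hence for every $i$ we obtain the columnwise relation $\mathcal{A}\hat{\mathbf{v}}_i=\hat{\beta}_i\hat{\mathbf{v}}_{i-1}+\hat{\alpha}_i\hat{\mathbf{v}}_i+\hat{\beta}_{i+1}\hat{\mathbf{v}}_{i+1}+\delta_i$, using the convention $\hat{\mathbf{v}}_0=\mathbf{0}$, $\hat{\beta}_1=0$ taken from the initialization line.

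Next I would assemble these $k$ columnwise relations into a single matrix equation. Reading off the $i$th column of $\hat{\mathbf{V}}\hat{\mathbf{T}}$ for the tridiagonal matrix $\hat{\mathbf{T}}$ with diagonal $(\hat{\alpha}_1,\dots,\hat{\alpha}_k)$ and off-diagonal entries $(\hat{\beta}_2,\dots,\hat{\beta}_k)$ gives $\hat{\mathbf{V}}\hat{\mathbf{T}}\mathbf{e}_i=\hat{\beta}_i\hat{\mathbf{v}}_{i-1}+\hat{\alpha}_i\hat{\mathbf{v}}_i+\hat{\beta}_{i+1}\hat{\mathbf{v}}_{i+1}$ for $1\le i\le k-1$, while for $i=k$ the term $\hat{\beta}_{k+1}\hat{\mathbf{v}}_{k+1}$ cannot appear in $\hat{\mathbf{V}}\hat{\mathbf{T}}$ because $\hat{\mathbf{v}}_{k+1}$ is not a column of $\hat{\mathbf{V}}$; it is restored by the rank-one term $\hat{\beta}_{k+1}\hat{\mathbf{v}}_{k+1}\mathbf{e}_k^{T}$. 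Matching these pieces against the columnwise relations and collecting the $\delta_i$ into $\mathbf{E}=[\delta_1,\dots,\delta_k]$ yields $\mathcal{A}\hat{\mathbf{V}}=\hat{\mathbf{V}}\hat{\mathbf{T}}+\hat{\beta}_{k+1}\hat{\mathbf{v}}_{k+1}\mathbf{e}_k^{T}+\mathbf{E}$, as claimed.

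The only delicate part --- routine but worth doing carefully --- is the boundary bookkeeping: at $i=1$ one checks that the $\hat{\mathbf{v}}_0$ and $\hat{\beta}_1$ contributions vanish by initialization, so $\delta_1$ reduces to $(AMV(\mathcal{A},\hat{\mathbf{v}}_1)-\mathcal{A}\hat{\mathbf{v}}_1)-\hat{\alpha}_1\hat{\mathbf{v}}_1|_{\mathcal{V}-S_1}$; and at $i=k$ one must confirm that the overflow iterate $\hat{\mathbf{v}}_{k+1}$, which Algorithm~\ref{algo:lanczos_local} still computes on its final pass, lands in the rank-one correction rather than in $\hat{\mathbf{T}}$. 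I would also point out explicitly that, in contrast to Proposition~\ref{prop:lanczos_guarantee}, this statement does \emph{not} assert $\hat{\mathbf{T}}=\hat{\mathbf{V}}^{T}\mathcal{A}\hat{\mathbf{V}}$ nor that the columns of $\hat{\mathbf{V}}$ are orthogonal --- both properties fail once the sparsification is switched on --- so the proof stays at the level of the perturbed recurrence and makes no Galerkin/projection claim about $\hat{\mathbf{T}}$.
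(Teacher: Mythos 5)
Your proof takes essentially the same route as the paper's: unfold the subset Lanczos recurrence Eq.~(\ref{equ:subset_lz_recurrence}) column by column and then assemble the columns into the matrix identity. The paper's version is terser --- it asserts the intermediate equality $\hat{\beta}_{i+1}\hat{\mathbf{v}}_{i+1}=(\mathcal{A}-\hat{\alpha}_i)\hat{\mathbf{v}}_i-\hat{\beta}_i\hat{\mathbf{v}}_{i-1}+\delta_i$ without exhibiting the splitting --- whereas you make the decomposition $AMV(\mathcal{A},\hat{\mathbf{v}}_i)=\mathcal{A}\hat{\mathbf{v}}_i+(AMV(\mathcal{A},\hat{\mathbf{v}}_i)-\mathcal{A}\hat{\mathbf{v}}_i)$ and $\hat{\mathbf{v}}_i|_{S_i}=\hat{\mathbf{v}}_i-\hat{\mathbf{v}}_i|_{\mathcal{V}-S_i}$ explicit, add the boundary bookkeeping at $i=1$ and $i=k$, and correctly note that neither $\hat{\mathbf{T}}=\hat{\mathbf{V}}^{T}\mathcal{A}\hat{\mathbf{V}}$ nor orthogonality of $\hat{\mathbf{V}}$'s columns survives sparsification. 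Same idea, slightly more care.

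One thing is worth flagging, precisely because your splitting is the right tool to see it. Carrying the algebra through gives
\begin{align*}
\mathcal{A}\hat{\mathbf{v}}_i=\hat{\beta}_i\hat{\mathbf{v}}_{i-1}+\hat{\alpha}_i\hat{\mathbf{v}}_i+\hat{\beta}_{i+1}\hat{\mathbf{v}}_{i+1}
&-\bigl(AMV(\mathcal{A},\hat{\mathbf{v}}_i)-\mathcal{A}\hat{\mathbf{v}}_i\bigr)\\
&\;-\hat{\alpha}_i\hat{\mathbf{v}}_i|_{\mathcal{V}-S_i}-\hat{\beta}_i\hat{\mathbf{v}}_{i-1}|_{\mathcal{V}-S_{i-1}},
\end{align*}
so the $i$th column of $\mathbf{E}$ that is actually compatible with $\mathcal{A}\hat{\mathbf{V}}=\hat{\mathbf{V}}\hat{\mathbf{T}}+\hat{\beta}_{k+1}\hat{\mathbf{v}}_{k+1}\mathbf{e}_k^{T}+\mathbf{E}$ has the sign of the $AMV$ term flipped relative to the $\delta_i$ written in the proposition. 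Your parenthetical ``with the sign convention fixed in the statement'' glosses over this: the stated $\delta_i$ is not what your own unfolding produces, nor is it its negative. The mismatch is already present in the paper --- the asserted intermediate identity and the stated definition of $\delta_i$ are mutually incompatible --- and it is harmless downstream because only $|\delta_i(u)|$ is ever used (Lemma~\ref{lem:delta_bound}); but having chosen exactly the right decomposition, you should have caught the discrepancy rather than deferred to the statement.
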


\begin{proof}
    Recall that in each iteration of Algorithm \ref{algo:lanczos_local} (i.e., Line 3-17), we perform subset Lanczos recurrence:
    \begin{align*}
    \hat{\beta}_{i+1}\hat{\mathbf{v}}_{i+1}&=AMV(\mathcal{A},\hat{\mathbf{v}}_i)-\hat{\alpha}_i\hat{\mathbf{v}}_i|_{S_i}-\hat{\beta}_i\hat{\mathbf{v}}_{i-1}|_{S_{i-1}}  \\
    &=(\mathcal{A}-\hat{\alpha}_i)\hat{\mathbf{v}}_i-\hat{\beta}_i\hat{\mathbf{v}}_{i-1}+ \delta_i
    \end{align*}
Where $S_i=\{u\in \mathcal{V}: |\hat{\mathbf{v}}_i(u)|>\epsilon d_u\}$ and $\delta_i=(AMV(\mathcal{A},\hat{\mathbf{v}}_i)-\mathcal{A}\hat{\mathbf{v}}_i)-\hat{\alpha}_i\hat{\mathbf{v}}_i|_{\mathcal{V}-S_i}-\hat{\beta}_i\hat{\mathbf{v}}_{i-1}|_{\mathcal{V}-S_{i-1}}$. Since the equation holds for any $i\leq k$, by the definition of $\hat{\mathbf{V}}$, $\hat{\mathbf{T}}$ and $\mathbf{E}$, this is equivalent to:$$\mathcal{A}\hat{\mathbf{V}}=\hat{\mathbf{V}}\hat{\mathbf{T}}+\hat{\beta}_{k+1}\hat{\mathbf{v}}_{k+1}\mathbf{e}_k^T+\mathbf{E}$$
This finishes the proof.
\end{proof}

Next, we prove that the deviation $d_l$ can be expressed by the linear combination of the second kind of Chebyshev polynomials over $\xi_j$ for $j\leq l$, where $\xi_j$ is defined as $\xi_j=2\mathbf{E}T_j(\hat{\mathbf{T}})\mathbf{e}_1$. Before we formally provide the result, we first recall that the second kind of Chebyshev polynomials $\{U_l(x)\}_{l\geq 0}$ is generated by the following recurrence:
\begin{align*}
    U_0(x)&=1, U_1(x)= 2x \\
    U_{l+1}(x)&=2xU_l(x)-U_{l-1}(x),  \ \ \  l\geq 1
\end{align*}

\begin{lemma}\label{lem:deviation_chebyshev}
    Let $\hat{\mathbf{V}}$, $\hat{\mathbf{T}}$ output by Lanczos Push, and $\mathbf{E}$ be the error matrix defined in Proposition \ref{prop:lanczos_push_guarantee}. We define $\xi_l=2\mathbf{E}T_l(\hat{\mathbf{T}})\mathbf{e}_1$ for each integer $l\leq k$, and $d_l= \hat{\mathbf{V}}T_l(\hat{\mathbf{T}})\mathbf{e}_1-T_l(\mathcal{A})\mathbf{v}_1$ be the deviation produced by Lanczos Push constraint on $T_l(\mathcal{A})$. Then, $d_l$ can be expressed by $d_l=-\frac{1}{2}U_{l-1}(\mathcal{A})\xi_1-\sum_{j=2}^{l-1}{U_{l-j}(\mathcal{A})\xi_j}$ for each $l\leq k$, where $U_l(x)$ is the second kind of Chebyshev polynomials.
\end{lemma}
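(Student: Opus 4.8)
\textbf{Proof plan for Lemma \ref{lem:deviation_chebyshev}.}
The plan is to prove the claimed formula $d_l = -\frac{1}{2}U_{l-1}(\mathcal{A})\xi_1 - \sum_{j=2}^{l-1}U_{l-j}(\mathcal{A})\xi_j$ by induction on $l$, exploiting the fact that both $T_l(\hat{\mathbf{T}})\mathbf{e}_1$ (when multiplied by $\hat{\mathbf{V}}$) and $T_l(\mathcal{A})\mathbf{v}_1$ satisfy the \emph{same} three-term Chebyshev recurrence, but the former picks up a defect term on each step coming from the error matrix $\mathbf{E}$ of Proposition \ref{prop:lanczos_push_guarantee}. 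Concretely, I would first establish the key recursion for the quantity $\hat{\mathbf{V}}T_l(\hat{\mathbf{T}})\mathbf{e}_1$: starting from $\mathcal{A}\hat{\mathbf{V}} = \hat{\mathbf{V}}\hat{\mathbf{T}} + \hat{\beta}_{k+1}\hat{\mathbf{v}}_{k+1}\mathbf{e}_k^T + \mathbf{E}$, and noting that $T_l(\hat{\mathbf{T}})\mathbf{e}_1$ has zero last coordinate for $l < k$ (so the rank-one boundary term $\hat{\beta}_{k+1}\hat{\mathbf{v}}_{k+1}\mathbf{e}_k^T$ annihilates it), I get $\mathcal{A}\hat{\mathbf{V}}T_l(\hat{\mathbf{T}})\mathbf{e}_1 = \hat{\mathbf{V}}\hat{\mathbf{T}}T_l(\hat{\mathbf{T}})\mathbf{e}_1 + \mathbf{E}T_l(\hat{\mathbf{T}})\mathbf{e}_1$. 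Writing $\mathbf{p}_l := \hat{\mathbf{V}}T_l(\hat{\mathbf{T}})\mathbf{e}_1$ and using $T_{l+1}(\hat{\mathbf{T}}) = 2\hat{\mathbf{T}}T_l(\hat{\mathbf{T}}) - T_{l-1}(\hat{\mathbf{T}})$, this yields $\mathbf{p}_{l+1} = 2\mathcal{A}\mathbf{p}_l - \mathbf{p}_{l-1} - \xi_l$, i.e. the Chebyshev recurrence in $\mathcal{A}$ with a forcing term $\xi_l = 2\mathbf{E}T_l(\hat{\mathbf{T}})\mathbf{e}_1$. Meanwhile $\mathbf{q}_l := T_l(\mathcal{A})\mathbf{v}_1$ satisfies $\mathbf{q}_{l+1} = 2\mathcal{A}\mathbf{q}_l - \mathbf{q}_{l-1}$ exactly, and $\mathbf{p}_0 = \mathbf{q}_0 = \mathbf{v}_1$, $\mathbf{p}_1 = \mathbf{q}_1 = \mathcal{A}\mathbf{v}_1$ (one should check the $l=1$ case separately since the recurrence at the second step involves the $\hat{\beta}_1 = 0$ initialization and no error has accumulated yet).

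Subtracting, $d_l := \mathbf{p}_l - \mathbf{q}_l$ obeys the driven recurrence $d_{l+1} = 2\mathcal{A}d_l - d_{l-1} - \xi_l$ with $d_0 = d_1 = 0$. The remaining step is to solve this linear recurrence; its homogeneous solutions are spanned by $T_l(\mathcal{A})$ and $U_{l-1}(\mathcal{A})$ (the two Chebyshev families), so by the standard discrete variation-of-constants / Green's function argument for the Chebyshev recurrence, the particular solution driven by a unit impulse $\xi_j$ applied at step $j$ propagates forward as $U_{l-j}(\mathcal{A})\xi_j$ for $l > j$ (with the appropriate $\frac{1}{2}$ factor for the $j=1$ term arising because $\xi_1$ enters at the boundary of the recurrence where the two-sided versus one-sided indexing differs). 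Summing these impulse responses over $j = 1,\dots,l-1$ gives exactly $d_l = -\frac{1}{2}U_{l-1}(\mathcal{A})\xi_1 - \sum_{j=2}^{l-1}U_{l-j}(\mathcal{A})\xi_j$. I would verify this either by a clean induction — assume the formula for $d_{l-1}, d_l$, plug into $d_{l+1} = 2\mathcal{A}d_l - d_{l-1} - \xi_l$, and use $U_{m+1}(x) = 2xU_m(x) - U_{m-1}(x)$ together with $U_0 = 1$ to collect terms — or by appealing directly to the corresponding lemma in \cite{musco2018stability}, of which this is the exact analogue with $\mathbf{E}$ now being our deliberately-introduced pruning error rather than floating-point roundoff.

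The main obstacle I anticipate is getting the boundary/initial terms exactly right: the coefficient $\frac{1}{2}$ in front of $U_{l-1}(\mathcal{A})\xi_1$ versus the coefficient $1$ on the later terms, and making sure the rank-one term $\hat{\beta}_{k+1}\hat{\mathbf{v}}_{k+1}\mathbf{e}_k^T$ genuinely drops out at every step — this requires that $T_l(\hat{\mathbf{T}})\mathbf{e}_1$ is supported on the first $l+1$ coordinates, which holds because $\hat{\mathbf{T}}$ is tridiagonal, so $\hat{\mathbf{T}}^j\mathbf{e}_1$ is supported on the first $j+1$ coordinates and hence $\mathbf{e}_k^T T_l(\hat{\mathbf{T}})\mathbf{e}_1 = 0$ as long as $l < k$. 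One must be careful that the lemma is stated for $l \le k$; the edge case $l = k$ either needs the boundary term retained or the statement restricted, so I would double-check whether the intended range is $l \le k-1$ or whether $\xi_k$ is defined to absorb the boundary contribution. Apart from this bookkeeping, the argument is a routine induction once the driven recurrence $d_{l+1} = 2\mathcal{A}d_l - d_{l-1} - \xi_l$ is in hand.
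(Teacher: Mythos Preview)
Your approach is exactly the paper's: derive the driven recurrence $d_{l+1}=2\mathcal{A}d_l-d_{l-1}-\xi_l$ from Proposition~\ref{prop:lanczos_push_guarantee} (using tridiagonality of $\hat{\mathbf{T}}$ to kill the boundary term for $l\le k-1$), then verify the closed form by induction via $U_{m+1}=2xU_m-U_{m-1}$. One correction to your base case: $d_1\neq 0$, because the very first Lanczos Push step already incurs pruning error, so $\mathbf{p}_1=\hat{\mathbf{V}}\hat{\mathbf{T}}\mathbf{e}_1=\mathcal{A}\hat{\mathbf{v}}_1-\delta_1$ and hence $d_1=-\delta_1$; this nonzero initial condition is exactly the source of the anomalous $\tfrac{1}{2}$ coefficient you flagged, not a two-sided-indexing artifact.
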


\stitle{Proof of Lemma \ref{lem:deviation_chebyshev}.}
    The key observation here is that $d_l$ satisfy the three term recurrence, and the recurrence coefficients are the same as Chebyshev polynomials. To formalize this observation, we state the following Fact:
    \begin{fact}\label{fact:dl_recurrence}
        $d_{l+1}=2 \mathcal{A}d_l-d_{l-1} - \xi_l$ for $l\leq k-1$.
    \end{fact}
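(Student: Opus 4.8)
The plan is to prove Fact~\ref{fact:dl_recurrence} by a short direct computation: expand both the ``exact'' iterate $T_{l+1}(\mathcal{A})\mathbf{v}_1$ and the ``computed'' iterate $\hat{\mathbf{V}}T_{l+1}(\hat{\mathbf{T}})\mathbf{e}_1$ with the three-term Chebyshev recurrence $T_{l+1}(x)=2xT_l(x)-T_{l-1}(x)$, then invoke the perturbed Lanczos identity of Proposition~\ref{prop:lanczos_push_guarantee} to replace $\hat{\mathbf{V}}\hat{\mathbf{T}}$ by $\mathcal{A}\hat{\mathbf{V}}$ plus correction terms, and finally subtract. The intuition is that $T_l(\mathcal{A})\mathbf{v}_1$ obeys the Chebyshev recurrence exactly, whereas $\hat{\mathbf{V}}T_l(\hat{\mathbf{T}})\mathbf{e}_1$ obeys it only up to the error $\mathbf{E}$ injected by the pruned recurrence at each step; the vector $\xi_l=2\mathbf{E}T_l(\hat{\mathbf{T}})\mathbf{e}_1$ is exactly this injected error read off along the $l$-th Chebyshev basis vector, so the deviations $d_l$ inherit the Chebyshev three-term recurrence with $-\xi_l$ as the inhomogeneous term.

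Concretely, I would first write, using that $T_l$ is a genuine polynomial so the recurrence applies termwise in either matrix,
\begin{align*}
T_{l+1}(\mathcal{A})\mathbf{v}_1 &= 2\mathcal{A}\,T_l(\mathcal{A})\mathbf{v}_1 - T_{l-1}(\mathcal{A})\mathbf{v}_1, \\
\hat{\mathbf{V}}T_{l+1}(\hat{\mathbf{T}})\mathbf{e}_1 &= 2\,\hat{\mathbf{V}}\hat{\mathbf{T}}\,T_l(\hat{\mathbf{T}})\mathbf{e}_1 - \hat{\mathbf{V}}T_{l-1}(\hat{\mathbf{T}})\mathbf{e}_1 .
\end{align*}
I would then substitute $\hat{\mathbf{V}}\hat{\mathbf{T}} = \mathcal{A}\hat{\mathbf{V}} - \hat{\beta}_{k+1}\hat{\mathbf{v}}_{k+1}\mathbf{e}_k^T - \mathbf{E}$ from Proposition~\ref{prop:lanczos_push_guarantee}, giving
\begin{align*}
2\,\hat{\mathbf{V}}\hat{\mathbf{T}}\,T_l(\hat{\mathbf{T}})\mathbf{e}_1 = 2\mathcal{A}\,\hat{\mathbf{V}}T_l(\hat{\mathbf{T}})\mathbf{e}_1 - 2\hat{\beta}_{k+1}\hat{\mathbf{v}}_{k+1}\bigl(\mathbf{e}_k^T T_l(\hat{\mathbf{T}})\mathbf{e}_1\bigr) - 2\mathbf{E}\,T_l(\hat{\mathbf{T}})\mathbf{e}_1 .
\end{align*}
Subtracting the $\mathcal{A}$-expansion from the $\hat{\mathbf{T}}$-expansion, grouping the $\mathcal{A}$-weighted terms into $2\mathcal{A}d_l$ and the degree-$(l-1)$ terms into $-d_{l-1}$, what remains is $-2\mathbf{E}T_l(\hat{\mathbf{T}})\mathbf{e}_1 = -\xi_l$ (the factor $2$ in the Chebyshev recurrence matching the factor $2$ in the definition of $\xi_l$), together with the lone stray term $-2\hat{\beta}_{k+1}\hat{\mathbf{v}}_{k+1}\bigl(\mathbf{e}_k^T T_l(\hat{\mathbf{T}})\mathbf{e}_1\bigr)$. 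For $l$ in the stated range this stray term vanishes, since $\hat{\mathbf{T}}$ is tridiagonal and hence $T_l(\hat{\mathbf{T}})\mathbf{e}_1$ carries no mass on coordinate $k$; this yields $d_{l+1} = 2\mathcal{A}d_l - d_{l-1} - \xi_l$.

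The only genuinely delicate point I anticipate is the vanishing of the boundary correction $\hat{\beta}_{k+1}\hat{\mathbf{v}}_{k+1}\mathbf{e}_k^T T_l(\hat{\mathbf{T}})\mathbf{e}_1$: this is the single place where the truncation of the Krylov space at iteration $k$ enters, and it rests on the bandedness of $\hat{\mathbf{T}}$; everything else is routine sign and coefficient bookkeeping. Once Fact~\ref{fact:dl_recurrence} is established, the closed form of Lemma~\ref{lem:deviation_chebyshev} follows by induction on $l$: unrolling the three-term recurrence with forcing $-\xi_l$ and matching its homogeneous part to the second-kind Chebyshev recurrence $U_{l+1}(x)=2xU_l(x)-U_{l-1}(x)$ converts the accumulated errors $\xi_1,\dots,\xi_{l-1}$ into the combination $-\tfrac{1}{2}U_{l-1}(\mathcal{A})\xi_1-\sum_{j=2}^{l-1}U_{l-j}(\mathcal{A})\xi_j$, with the base cases obtained from the same substitution at small $l$.
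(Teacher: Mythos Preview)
Your proposal is correct and follows essentially the same route as the paper: expand $d_{l+1}$ via the Chebyshev three-term recurrence, replace $\hat{\mathbf{V}}\hat{\mathbf{T}}$ using the perturbed Lanczos identity of Proposition~\ref{prop:lanczos_push_guarantee}, and kill the boundary term $\hat{\beta}_{k+1}\hat{\mathbf{v}}_{k+1}\mathbf{e}_k^T T_l(\hat{\mathbf{T}})\mathbf{e}_1$ using the tridiagonality of $\hat{\mathbf{T}}$. Your explicit mention of why bandedness forces $\mathbf{e}_k^T T_l(\hat{\mathbf{T}})\mathbf{e}_1=0$ is in fact a bit more detailed than the paper, which simply asserts this vanishing.
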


\begin{proof}
By the difinition of $d_{l+1}$ and Proposition \ref{prop:lanczos_push_guarantee}, we have:
\begin{align*}
    d_{l+1}&= \hat{\mathbf{V}}T_{l+1}(\hat{\mathbf{T}})\mathbf{e}_1- T_{l+1}(\mathcal{A})\mathbf{v}_1\\
    &= \hat{\mathbf{V}}
    (2\hat{\mathbf{T}}T_{l}(\hat{\mathbf{T}})-T_{l-1}(\hat{\mathbf{T}}))
  \mathbf{e}_1
    -(2\mathcal{A}T_{l}(\mathcal{A})-T_{l-1}(\mathcal{A}))
    \mathbf{v}_1 \\
    &=2(\mathcal{A}\hat{\mathbf{V}}-\hat{\beta}_{k+1}\hat{\mathbf{v}}_{k+1}\mathbf{e}_k^T-\mathbf{E}) T_l(\mathbf{T})\mathbf{e}_1 - 2\mathcal{A}T_k(\mathcal{A})\mathbf{v}_1-d_{l-1}\\
    &=2\mathcal{A}d_{l} - d_{l-1}-2\hat{\beta}_{k+1}\hat{\mathbf{v}}_{k+1}\mathbf{e}_k^TT_l(\mathbf{T})\mathbf{e}_1 - 2\mathbf{E}T_l(\mathbf{T})\mathbf{e}_1
\end{align*}
We notice that $\mathbf{e}_k^TT_l(\mathbf{T})\mathbf{e}_1=0$ for $l\leq k-1$ and $\xi_l=2\mathbf{E}T_l(\hat{\mathbf{T}})\mathbf{e}_1$ by definition. So we have $d_{l+1}=2 \mathcal{A}d_l-d_{l-1} - \xi_l$ for $l\leq k-1$. This completes the proof of Fact \ref{fact:dl_recurrence} . 
\end{proof}

The next observation is that the recurrence of $d_l$ (i.e., Fact \ref{fact:dl_recurrence}) exactly satisfies the recurrence of the second kind of Chebyshev polynomials. Specifically, we claim that $d_l=-U_{l-1}(\mathcal{A})\xi_1-\sum_{j=2}^{l}{U_{l-j}(\mathcal{A})\xi_j}$ exactly satisfy the recurrence stated in Fact \ref{fact:dl_recurrence}.

\begin{fact}\label{fact:dl_value}
    $d_l=-\frac{1}{2}U_{l-1}(\mathcal{A})\xi_1-\sum_{j=2}^{l-1}{U_{l-j}(\mathcal{A})\xi_j}$ for $1\leq l\leq k$.
\end{fact}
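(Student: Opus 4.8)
The plan is to prove Fact~\ref{fact:dl_value} by induction on $l$, verifying that the proposed closed form $d_l = -\frac{1}{2}U_{l-1}(\mathcal{A})\xi_1 - \sum_{j=2}^{l-1}U_{l-j}(\mathcal{A})\xi_j$ satisfies the recurrence of Fact~\ref{fact:dl_recurrence} together with the correct base cases. First I would pin down the base cases. By the definition $d_l = \hat{\mathbf{V}}T_l(\hat{\mathbf{T}})\mathbf{e}_1 - T_l(\mathcal{A})\mathbf{v}_1$ and the facts $T_0 = 1$, $T_1(x) = x$, $\hat{\mathbf{V}}\mathbf{e}_1 = \hat{\mathbf{v}}_1 = \mathbf{v}_1$ (the starting vector is not pruned), one gets $d_0 = \hat{\mathbf{v}}_1 - \mathbf{v}_1 = 0$. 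For $d_1$, we have $d_1 = \hat{\mathbf{V}}\hat{\mathbf{T}}\mathbf{e}_1 - \mathcal{A}\mathbf{v}_1 = \hat{\mathbf{V}}\hat{\mathbf{T}}\mathbf{e}_1 - \mathcal{A}\hat{\mathbf{V}}\mathbf{e}_1 = -(\hat{\beta}_{k+1}\hat{\mathbf{v}}_{k+1}\mathbf{e}_k^T + \mathbf{E})\mathbf{e}_1$ using Proposition~\ref{prop:lanczos_push_guarantee}; since $\mathbf{e}_k^T\mathbf{e}_1 = 0$ when $k \geq 2$, this is $-\mathbf{E}\mathbf{e}_1 = -\mathbf{E}T_0(\hat{\mathbf{T}})\mathbf{e}_1 = -\frac{1}{2}\xi_0$. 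A small bookkeeping point: the claimed formula for $l=1$ should read $-\frac{1}{2}\xi_0$ (interpreting $U_0 = 1$ and empty sum), so I would state the base cases explicitly as $d_0 = 0$ and $d_1 = -\frac{1}{2}\xi_0$ and make sure the indexing in the statement is consistent with this; then the inductive step handles $l \geq 1$.

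Next I would carry out the inductive step. Assume the closed form holds for $d_{l-1}$ and $d_l$ with $1 \leq l \leq k-1$. By Fact~\ref{fact:dl_recurrence}, $d_{l+1} = 2\mathcal{A}d_l - d_{l-1} - \xi_l$. Substituting the inductive hypotheses,
\begin{align*}
d_{l+1} &= 2\mathcal{A}\left(-\tfrac{1}{2}U_{l-1}(\mathcal{A})\xi_1 - \sum_{j=2}^{l-1}U_{l-j}(\mathcal{A})\xi_j\right) - \left(-\tfrac{1}{2}U_{l-2}(\mathcal{A})\xi_1 - \sum_{j=2}^{l-2}U_{l-1-j}(\mathcal{A})\xi_j\right) - \xi_l.
\end{align*}
Grouping the coefficient of $\xi_1$ gives $-\tfrac{1}{2}(2\mathcal{A}U_{l-1}(\mathcal{A}) - U_{l-2}(\mathcal{A})) = -\tfrac{1}{2}U_l(\mathcal{A})$ by the second-kind Chebyshev recurrence $U_{l} = 2xU_{l-1} - U_{l-2}$. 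For each $j$ with $2 \leq j \leq l-2$, the coefficient of $\xi_j$ is $-(2\mathcal{A}U_{l-j}(\mathcal{A}) - U_{l-1-j}(\mathcal{A})) = -U_{l+1-j}(\mathcal{A})$, again by the Chebyshev recurrence. The term $j = l-1$ appears only from the $2\mathcal{A}d_l$ piece with coefficient $-2\mathcal{A}U_1(\mathcal{A})$; since $U_1(x) = 2x$ and $U_0 = 1$, $U_2 = 2xU_1 - U_0$, so I need $-2\mathcal{A}U_1(\mathcal{A}) = -U_2(\mathcal{A}) - U_0(\mathcal{A}) = -U_2(\mathcal{A}) - I$; a cleaner route is to adopt the convention $U_{-1} = 0$ so that the $d_{l-1}$ expansion formally includes a $j=l-1$ term $-U_{-1}(\mathcal{A})\xi_{l-1} = 0$, making the coefficient of $\xi_{l-1}$ uniformly $-(2\mathcal{A}U_1 - U_{-1}) = -U_2(\mathcal{A})$, consistent with the $j=l-1$ entry of the target sum. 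Finally, the term $j = l$ comes only from $-\xi_l$, with coefficient $-1 = -U_0(\mathcal{A}) = -U_{l+1-l}(\mathcal{A}) \cdot$\,(wait, $U_0$): indeed $-U_0(\mathcal{A})\xi_l = -\xi_l$, matching the $j=l$ term of $\sum_{j=2}^{l}U_{l+1-j}(\mathcal{A})\xi_j$. Collecting everything yields $d_{l+1} = -\tfrac{1}{2}U_l(\mathcal{A})\xi_1 - \sum_{j=2}^{l}U_{l+1-j}(\mathcal{A})\xi_j$, which is exactly the claimed form at index $l+1$.

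The main obstacle I anticipate is not any deep mathematics but getting the index bookkeeping exactly right at the two "boundary" terms of the sum — the $\xi_1$ term (which carries the extra factor $\tfrac12$ because $d_1 = -\tfrac12\xi_0$ rather than $-\xi_0$) and the newest term $\xi_{l-1}$ that enters the sum — and ensuring the conventions $U_{-1} = 0$, $U_0 = 1$ make the single Chebyshev three-term recurrence apply uniformly across all summands. A secondary subtlety is the use of $\mathbf{e}_k^T T_l(\hat{\mathbf{T}})\mathbf{e}_1 = 0$ for $l \leq k-1$, which holds because $\hat{\mathbf{T}}$ is tridiagonal, so $T_l(\hat{\mathbf{T}})$ has bandwidth $l$ and hence a zero $(k,1)$ entry whenever $l < k$; this is already invoked in Fact~\ref{fact:dl_recurrence} and I would simply cite it. Once Fact~\ref{fact:dl_value} is established, combining it with $|c_l| \leq \tilde O(\kappa)$ and the bound on $\|\xi_j\|$ (controlled through $\mathbf{E}$, Definition~\ref{def:AMV(A,v_i)}, and $C_1$) leads to Lemma~\ref{lem:bound_dl} and thence to the error bound $\tilde O(\kappa^{2.25}\epsilon\,\Delta)$ of Theorem~\ref{thm:lanczos_push_error}; but the statement to prove here is just the closed form for $d_l$, so the induction above suffices.
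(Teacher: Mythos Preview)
Your approach is identical to the paper's: induction on $l$, feeding Fact~\ref{fact:dl_recurrence} into the second-kind Chebyshev recurrence $U_{m+1}=2xU_m-U_{m-1}$ and collapsing the coefficient of each $\xi_j$ term by term.

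One concrete bookkeeping issue worth flagging. You correctly compute $d_1=-\mathbf{E}\mathbf{e}_1=-\tfrac12\xi_0$, which does \emph{not} match the stated formula at $l=1$ (that would give $-\tfrac12 U_0(\mathcal A)\xi_1=-\tfrac12\xi_1$). This is an off-by-one in the paper's statement; the closed form consistent with the base cases and the recurrence is
\[
d_l=-\tfrac12 U_{l-1}(\mathcal A)\xi_0-\sum_{j=1}^{l-1}U_{l-1-j}(\mathcal A)\xi_j.
\]
Your $U_{-1}=0$ trick for the $\xi_{l-1}$ boundary term is exactly right for \emph{this} corrected sum --- the would-be $j=l-1$ term in the $d_{l-1}$ expansion then carries index $U_{(l-2)-(l-1)}=U_{-1}=0$, and the coefficient of $\xi_{l-1}$ becomes $-(2\mathcal A U_0-U_{-1})=-U_1(\mathcal A)$ as desired. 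It fails against the paper's indexing because there the missing index would be $U_0$, not $U_{-1}$, which is precisely where your calculation stalled. The paper's own inductive step has the mirror slip: it writes $-2\mathcal A\xi_{l-1}$ where the $j=l-1$ contribution from $2\mathcal A d_l$ is actually $-2\mathcal A\,U_1(\mathcal A)\xi_{l-1}=-4\mathcal A^2\xi_{l-1}$. Shift the closed form by one as above and both your argument and the paper's go through cleanly; the downstream bounds (Lemmas~\ref{lem:xi_bound}--\ref{lem:bound_dl}) are insensitive to this index shift.
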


\begin{proof}
We use induction method to prove Fact \ref{fact:dl_value}. For $l'=0$, $d_0=0$. For $l'=1$, we have $d_1=-\frac{1}{2}\xi_1$, so Fact \ref{fact:dl_value} clearly holds. We now assume that for $l'\leq l$, Fact \ref{fact:dl_value} holds. Then, for $l'=l+1$, by Fact \ref{fact:dl_recurrence}, we have the following equation:
\begin{align*}
    d_{l+1}&=2 \mathcal{A}d_l-d_{l-1} - \xi_l \\
    &=-\mathcal{A}U_{l-1}(\mathcal{A})\xi_1 -\sum_{j=2}^{l-1}{ 2 \mathcal{A}U_{l-j}(\mathcal{A})\xi_j}\\
    &+\frac{1}{2}U_{l-2}(\mathcal{A})\xi_1+\sum_{j=2}^{l-2}{U_{l-j-1}(\mathcal{A})\xi_j}-\xi_l \\
    &=-\frac{1}{2}U_{l}(\mathcal{A})\xi_1-\sum_{j=2}^{l-2}{( 2 \mathcal{A}U_{l-j}(\mathcal{A})-U_{l-j-1}(\mathcal{A}))\xi_j} -2\mathcal{A}\xi_{l-1}-\xi_l \\
    &=-\frac{1}{2}U_{l}(\mathcal{A})\xi_1-\sum_{j=2}^{l-2}{U_{l-j+1}(\mathcal{A})\xi_j}- U_1(\mathcal{A})\xi_{l-1}-U_0(\mathcal{A})\xi_l \\
    &=-\frac{1}{2}U_{l}(\mathcal{A})\xi_1-\sum_{j=2}^{l}{U_{l+1-j}(\mathcal{A})\xi_j}
\end{align*}
So Fact \ref{fact:dl_value} holds for $l'=l+1$. This finishes the proof of Fact \ref{fact:dl_value}. 
\end{proof}

Lemma \ref{lem:deviation_chebyshev} immediately follows by Fact \ref{fact:dl_value}. 

Recall that in Eq. \ref{equ:lzpush_polynomial_error}, we express this term by the following form:
    \begin{align*}
     \mathbf{v}_1^T p(\mathcal{A})\mathbf{v}_1 - \hat{\mathbf{v}}_1^T\hat{\mathbf{V}}p(\hat{\mathbf{T}})\mathbf{e}_1&=\sum_{l=0}^{k}{c_l \mathbf{v}_1^Td_l}
    \end{align*}
Where $d_l=-\frac{1}{2}U_{l-1}(\mathcal{A})\xi_1-\sum_{j=2}^{l-1}{U_{l-j}(\mathcal{A})\xi_j}$ by Lemma \ref{lem:deviation_chebyshev} and $\xi_l=2\mathbf{E}T_l(\hat{\mathbf{T}})\mathbf{e}_1$ by definition. Again, by definition, $\mathbf{E}=[\delta_1,\delta_2,...,\delta_k]$ with $\delta_i=(AMV(\mathcal{A},\hat{\mathbf{v}}_i)-\mathcal{A}\hat{\mathbf{v}}_i)-\hat{\alpha}_i\hat{\mathbf{v}}_i|_{\mathcal{V}-S_i}-\hat{\beta}_i\hat{\mathbf{v}}_{i-1}|_{\mathcal{V}-S_{i-1}}$ for each $i\leq k$. In our Algorithm \ref{algo:lanczos_local}, we choose $S_i=\{u\in \mathcal{V}:|\hat{\mathbf{v}}_i(u)|> \epsilon d_u\}$. Therefore, to get the upper bound of $d_l$, we should first give the upper bound of $\delta_i$.

\begin{lemma}\label{lem:delta_bound}
    $|\delta_i(u)|\leq 3\epsilon d_u$ for all $i\leq k$, $u\in \mathcal{V}$.
\end{lemma}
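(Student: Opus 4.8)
The plan is to expand $\delta_i$ into the three terms in its definition (Proposition~\ref{prop:lanczos_push_guarantee}) and bound each of them coordinatewise at a fixed node $u$ by $\epsilon d_u$; the triangle inequality then yields $|\delta_i(u)|\le 3\epsilon d_u$. Concretely, I would write
\[
\delta_i(u)=\bigl(AMV(\mathcal{A},\hat{\mathbf{v}}_i)(u)-(\mathcal{A}\hat{\mathbf{v}}_i)(u)\bigr)-\hat{\alpha}_i\,\hat{\mathbf{v}}_i|_{\mathcal{V}-S_i}(u)-\hat{\beta}_i\,\hat{\mathbf{v}}_{i-1}|_{\mathcal{V}-S_{i-1}}(u)
\]
and treat the three summands in turn.

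For the first summand, I would use the identity $(\mathcal{A}\hat{\mathbf{v}}_i)(u)=\sum_{w\in\mathcal{N}(u)}\hat{\mathbf{v}}_i(w)/\sqrt{d_w d_u}$ together with Definition~\ref{def:AMV(A,v_i)} to see that the difference is exactly $-\sum_{w\in\mathcal{N}(u)}\frac{\hat{\mathbf{v}}_i(w)}{\sqrt{d_w d_u}}\,\mathbb{I}_{|\hat{\mathbf{v}}_i(w)|\le\epsilon\sqrt{d_w d_u}}$, i.e.\ precisely the contribution of the edges pruned in the matrix-vector step (Lines~4--8) of Algorithm~\ref{algo:lanczos_local}. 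Every surviving term obeys $\bigl|\hat{\mathbf{v}}_i(w)/\sqrt{d_w d_u}\bigr|\le\epsilon$ by the indicator, and there are at most $d_u$ of them, so this summand is at most $\epsilon d_u$ in absolute value.

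For the second and third summands, I would invoke Definition~\ref{def:vi_constraint}: since $\hat{\mathbf{v}}_i|_{\mathcal{V}-S_i}$ is supported off $S_i=\{w:|\hat{\mathbf{v}}_i(w)|>\epsilon d_w\}$, we have $|\hat{\mathbf{v}}_i|_{\mathcal{V}-S_i}(u)|\le\epsilon d_u$ for every $u$, and likewise $|\hat{\mathbf{v}}_{i-1}|_{\mathcal{V}-S_{i-1}}(u)|\le\epsilon d_u$ (for $i=1$ the third term vanishes, since $\hat{\beta}_1=0$). It then remains only to bound the scalars $\hat{\alpha}_i,\hat{\beta}_i$ by $1$. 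Here I would use that $\hat{\alpha}_i$ and $\hat{\beta}_i$ are, by construction in Algorithm~\ref{algo:lanczos_local}, the diagonal and off-diagonal entries of the symmetric tridiagonal matrix $\hat{\mathbf{T}}$, and that the eigenvalues of the normalized adjacency matrix $\mathcal{A}$ lie in $[-1,1]$. Assumption~\ref{assump:lzpush_error} then gives $\lambda(\hat{\mathbf{T}})\subset[\lambda_n(\mathcal{A}),\lambda_2(\mathcal{A})]\subseteq[-1,1]$, hence $\Vert\hat{\mathbf{T}}\Vert_2\le1$, and therefore $|\hat{\alpha}_i|=|\mathbf{e}_i^T\hat{\mathbf{T}}\mathbf{e}_i|\le1$ and $|\hat{\beta}_i|=|\mathbf{e}_i^T\hat{\mathbf{T}}\mathbf{e}_{i-1}|\le1$. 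Combining the three bounds gives $|\delta_i(u)|\le\epsilon d_u+|\hat{\alpha}_i|\epsilon d_u+|\hat{\beta}_i|\epsilon d_u\le3\epsilon d_u$.

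The only step that is not pure bookkeeping is the estimate $|\hat{\alpha}_i|,|\hat{\beta}_i|\le1$: unlike in exact Lanczos, the vectors $\hat{\mathbf{v}}_1,\dots,\hat{\mathbf{v}}_k$ produced by Lanczos Push are no longer orthonormal, so this bound cannot be read off directly from $\Vert\mathcal{A}\Vert_2\le1$, and it is precisely at this point that Assumption~\ref{assump:lzpush_error} is used, by forcing $\hat{\mathbf{T}}$ to have spectral norm at most one. Everything else reduces to the elementary thresholding inequalities built into Definitions~\ref{def:AMV(A,v_i)} and~\ref{def:vi_constraint}; note also the sanity check that when $\epsilon=0$ the pruning is vacuous, $\hat{\mathbf{T}}=\mathbf{T}$, and $\delta_i=\mathbf{0}$, consistent with the stated bound.
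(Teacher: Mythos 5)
Your proof follows exactly the paper's route: split $\delta_i(u)$ into the three terms from the definition, bound the pruned matrix--vector error by $\epsilon d_u$ (at most $d_u$ pruned edges, each contributing at most $\epsilon$ in magnitude), bound the two truncation terms via the definition of $S_i$, and invoke $|\hat{\alpha}_i|,|\hat{\beta}_i|\le 1$. The one place you go beyond the paper is that you actually derive $|\hat{\alpha}_i|,|\hat{\beta}_i|\le 1$ from Assumption~\ref{assump:lzpush_error} (spectrum of $\hat{\mathbf{T}}$ in $[\lambda_n(\mathcal{A}),\lambda_2(\mathcal{A})]\subset[-1,1]$ implies $\Vert\hat{\mathbf{T}}\Vert_2\le 1$, and entries of a symmetric matrix are bounded by its spectral norm), whereas the paper simply asserts this bound ``just for simplicity''; your justification is the cleaner version of the same idea.
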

\begin{proof}
    Recall that in Line 4-8 of Algorithm \ref{algo:lanczos_local}, we only perform the local matrix-vector multiplication $AMV(\mathcal{A},\hat{\mathbf{v}}_i)$ on edge $(u,v)\in \mathcal{E}$ with $|\hat{\mathbf{v}}_{i}(u)|>\epsilon \sqrt{d_ud_v}$, so the error $ |AMV(\mathcal{A},\hat{\mathbf{v}}_i)(u)-\mathcal{A}\hat{\mathbf{v}}_i(u)|$ only cause from $(u,v)\in \mathcal{E}$ with $|\hat{\mathbf{v}}_{i}(u)|\leq \epsilon \sqrt{d_ud_v}$. Additionally, since we choose $S_i=\{u\in \mathcal{V}:|\hat{\mathbf{v}}_i(u)|>\epsilon d_u\}$ in each iteration $i\leq k$, so $|\hat{\mathbf{v}}_i|_{\mathcal{V}-S_i}(u)|\leq \epsilon d_u$ for all $i\leq k$, $u\in \mathcal{V}$. Therefore, we have:
    \begin{align*}
        |\delta_i(u)|&\leq|AMV(\mathcal{A},\hat{\mathbf{v}}_i)(u)-\mathcal{A}\hat{\mathbf{v}}_i(u)|\\
        &+|\hat{\alpha}_i\hat{\mathbf{v}}_i|_{\mathcal{V}-S_i}(u)|
        +|\hat{\beta}_i\hat{\mathbf{v}}_{i-1}|_{\mathcal{V}-S_{i-1}}(u)| \\
        & \leq \sum_{v\in \mathcal{N}(u)}{\frac{\epsilon \sqrt{d_ud_v}}{\sqrt{d_ud_v}}} +\epsilon d_u +\epsilon d_u 
        \leq 3\epsilon d_u
    \end{align*}
    Where the second inequality holds because $\hat{\alpha}_l$, $\hat{\beta}_l$ bounded by constants by Assumption \ref{assump:lzpush_error}, and we let $\hat{\alpha}_l$, $\hat{\beta}_l\leq 1$ just for simplicity. This proves Lemma \ref{lem:delta_bound}.
\end{proof}
Next, we use Lemma \ref{lem:delta_bound} and the definition $\xi_l=2\mathbf{E}T_l(\hat{\mathbf{T}})\mathbf{e}_1$ to provide the upper bound for $\xi_l$.

\begin{lemma}\label{lem:xi_bound}
    $|\xi_l(u)|\leq 6\sqrt{k}\epsilon d_u$ for all $l\leq k$, $u\in \mathcal{V}$.
\end{lemma}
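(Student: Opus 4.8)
The plan is to bound $\xi_l = 2\mathbf{E}T_l(\hat{\mathbf{T}})\mathbf{e}_1$ entry-wise by combining two facts: the column-wise bound on the error matrix $\mathbf{E}$ from Lemma~\ref{lem:delta_bound} (each column $\delta_i$ satisfies $|\delta_i(u)| \le 3\epsilon d_u$), and an $\ell_1$ bound on the coordinates of the vector $T_l(\hat{\mathbf{T}})\mathbf{e}_1 \in \mathbb{R}^k$. Write $\mathbf{y} = T_l(\hat{\mathbf{T}})\mathbf{e}_1$, so that $\xi_l(u) = 2\sum_{i=1}^{k} y_i\, \delta_i(u)$. Taking absolute values and using Lemma~\ref{lem:delta_bound} coordinate-wise gives $|\xi_l(u)| \le 2\sum_{i=1}^k |y_i|\cdot 3\epsilon d_u = 6\epsilon d_u \|\mathbf{y}\|_1$. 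Thus the whole lemma reduces to showing $\|T_l(\hat{\mathbf{T}})\mathbf{e}_1\|_1 \le \sqrt{k}$ for all $l \le k$.

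To control $\|T_l(\hat{\mathbf{T}})\mathbf{e}_1\|_1$, first I would pass to the $\ell_2$ norm via Cauchy--Schwarz, since $\mathbf{y}\in\mathbb{R}^k$: $\|\mathbf{y}\|_1 \le \sqrt{k}\,\|\mathbf{y}\|_2$. It then suffices to show $\|T_l(\hat{\mathbf{T}})\mathbf{e}_1\|_2 \le 1$. This follows from the spectral properties of $\hat{\mathbf{T}}$: under Assumption~\ref{assump:lzpush_error} the eigenvalues of $\hat{\mathbf{T}}$ lie in $[\lambda_n(\mathcal{A}),\lambda_2(\mathcal{A})] \subseteq [-1,1]$, and the Chebyshev polynomials of the first kind satisfy $|T_l(x)| \le 1$ on $[-1,1]$. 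Since $\hat{\mathbf{T}}$ is symmetric (it is tridiagonal with symmetric off-diagonal entries $\hat\beta_i$), the operator norm $\|T_l(\hat{\mathbf{T}})\|_2 = \max_{\lambda \in \lambda(\hat{\mathbf{T}})}|T_l(\lambda)| \le 1$, hence $\|T_l(\hat{\mathbf{T}})\mathbf{e}_1\|_2 \le \|\mathbf{e}_1\|_2 = 1$. Combining, $\|\mathbf{y}\|_1 \le \sqrt{k}$, and therefore $|\xi_l(u)| \le 6\sqrt{k}\,\epsilon d_u$, which is the claim.

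The main subtlety — though not really an obstacle — is making sure the spectral bound $\|T_l(\hat{\mathbf{T}})\|_2 \le 1$ is legitimately available: it rests on Assumption~\ref{assump:lzpush_error} that $\lambda(\hat{\mathbf{T}}) \subset [\lambda_n(\mathcal{A}), \lambda_2(\mathcal{A})]$, together with the elementary fact $[\lambda_n(\mathcal{A}),\lambda_2(\mathcal{A})] \subset [-1,1]$ (since $\|\mathcal{A}\|_2 \le 1$ for the normalized adjacency matrix). One should also note that Lemma~\ref{lem:delta_bound} was proved under the same Assumption~\ref{assump:lzpush_error} (which bounds $\hat\alpha_i,\hat\beta_i$), so both ingredients are consistent. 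Everything else is a short chain of triangle inequality, Cauchy--Schwarz, and the standard $\sup$-norm bound on Chebyshev polynomials, so no step should require extended computation.
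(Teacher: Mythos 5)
Your argument is essentially identical to the paper's: both write $\xi_l(u)=2\sum_i (T_l(\hat{\mathbf{T}})\mathbf{e}_1)_i\,\delta_i(u)$, apply the entrywise bound $|\delta_i(u)|\le 3\epsilon d_u$, convert $\ell_1$ to $\ell_2$ by Cauchy--Schwarz, and close with $\|T_l(\hat{\mathbf{T}})\mathbf{e}_1\|_2\le 1$ via Assumption~\ref{assump:lzpush_error} and $\max_{x\in[-1,1]}|T_l(x)|\le 1$. The only difference is notational (you call the vector $\mathbf{y}$ where the paper calls it $c_l$), so the proposal matches the paper's proof.
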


\begin{proof}
    By definition, $\xi_l=2\mathbf{E}T_l(\hat{\mathbf{T}})\mathbf{e}_1$ and $\mathbf{E}=[\delta_1,\delta_2,...,\delta_k]$. We define $c_l=T_l(\hat{\mathbf{T}})e_1$, then $\Vert c_l \Vert_2=\Vert T_l(\hat{\mathbf{T}})\mathbf{e}_1 \Vert_2\leq \Vert \mathbf{e}_1 \Vert_2=1$. Where the first inequality holds by the fact that $\Vert T_l(\hat{\mathbf{T}}) \Vert_2 \leq 1$ since $\lambda(\hat{\mathbf{T}})\subset [\lambda_{min}(\mathcal{A}), \lambda_{2}(\mathcal{A})]\subset [-1, 1]$ by Assumption \ref{assump:lzpush_error} and $\max_{x\in [-1,1]}{|T_l(x)|}\leq 1$ by the property of Chebyshev polynomials. Therefore, by $\xi_l=2\mathbf{E}c_l$ and $|\delta_i(u)|\leq 3\epsilon d_u$, we have:
    \begin{align*}
        |\xi_l(u)|&=|\sum_{j=1}^{k}{2c_l(j)\delta_j(u)}| \leq 6\epsilon d_u \Vert c_l \Vert_1 \\
        &\leq 6\sqrt{k}\epsilon d_u \Vert c_l \Vert_2 \leq 6\sqrt{k}\epsilon d_u 
    \end{align*}
    This proves Lemma \ref{lem:xi_bound}.
\end{proof}

Next, we use Lemma \ref{lem:xi_bound} and the equation $d_l=-\frac{1}{2}U_{l-1}(\mathcal{A})\xi_1-\sum_{j=2}^{l-1}{U_{l-j}(\mathcal{A})\xi_j}$ from Lemma \ref{lem:deviation_chebyshev} to prove the upper bound for $d_l(s)$ and $d_l(t)$. However, to reach this end, we first need the upper bound on the second kind of Chebyshev polynomials $\{U_l(x)\}_{l\leq k}$. 

\begin{lemma}\label{lem:bound_Ul}
    $\Vert \mathbf{D}^{1/2}U_l(\mathbf{P})\mathbf{e}_u\Vert_1\leq  Ck$ for $u=s,t$, $\forall l\leq k$.
\end{lemma}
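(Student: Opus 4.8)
\textbf{Proof proposal for Lemma \ref{lem:bound_Ul}.}

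The plan is to relate the second-kind Chebyshev polynomials $U_l$ evaluated at the symmetric matrix $\mathcal{A}$ to the same polynomials evaluated at the (non-symmetric but similar) transition matrix $\mathbf{P} = \mathbf{A}\mathbf{D}^{-1}$, and then bound the $\ell_1$-norm of the resulting vector using a random-walk / stochasticity argument. The starting observation is the similarity $\mathbf{P} = \mathbf{D}^{1/2}\mathcal{A}\mathbf{D}^{-1/2}$, hence $U_l(\mathbf{P}) = \mathbf{D}^{1/2}U_l(\mathcal{A})\mathbf{D}^{-1/2}$ for any polynomial, so that $\mathbf{D}^{1/2}U_l(\mathbf{P})\mathbf{e}_u = \mathbf{D}^{1/2}\cdot\mathbf{D}^{1/2}U_l(\mathcal{A})\mathbf{D}^{-1/2}\mathbf{e}_u = \mathbf{D}\,U_l(\mathcal{A})\,\mathbf{e}_u/\sqrt{d_u}$; I would set up the bookkeeping so the statement I actually prove is a clean bound on $\Vert \mathbf{D}^{1/2} U_l(\mathbf{P})\mathbf{e}_u\Vert_1$ directly in terms of the recurrence. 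First I would record the three-term recurrence $U_{l+1}(\mathbf{P}) = 2\mathbf{P}U_l(\mathbf{P}) - U_{l-1}(\mathbf{P})$ with $U_0 = \mathbf{I}$, $U_1 = 2\mathbf{P}$, and define $w_l \triangleq \mathbf{D}^{1/2}U_l(\mathbf{P})\mathbf{e}_u \in \mathbb{R}^n$, so that $w_{l+1} = 2\mathbf{D}^{1/2}\mathbf{P}\mathbf{D}^{-1/2} w_l - w_{l-1} = 2\mathcal{A}w_l - w_{l-1}$.

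Next I would control $\Vert w_l\Vert_1$ via two routes and take whichever is cleaner. Route (a): since $\mathcal{A}$ is symmetric with $\Vert\mathcal{A}\Vert_2 \le 1$, and $U_l$ has $\max_{x\in[-1,1]}|U_l(x)| = l+1$, we get $\Vert U_l(\mathcal{A})\Vert_2 \le l+1$; then $\Vert w_l\Vert_1 = \Vert \mathbf{D}U_l(\mathcal{A})\mathbf{e}_u\Vert_1/\sqrt{d_u}$, and Cauchy–Schwarz against the all-ones vector plus $\Vert\mathbf{D}\mathbf{x}\Vert_1 \le 2m\Vert\mathbf{x}\Vert_\infty$ type estimates give a bound — but this route produces $\sqrt{n}$ or $m$ factors that are too weak for the claimed $Ck$. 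Route (b), which I expect to be the right one: expand $U_l(x) = \sum_{j} a_{l,j} x^j$ where the coefficients come from $U_l(x) = \sum_{|i|\le l}\tilde q_l(i)\,x^{|i|}\cdot(\text{signs})$ — more precisely, mirror Lemma \ref{lem:xk_chebyshev_expansion}: writing $x = \cos\theta$, $U_l(\cos\theta) = \sin((l+1)\theta)/\sin\theta = \sum_{|i|\le l, i \equiv l (2)} w^{i}$ up to a telescoping, so $U_l(x)$ is a $\pm 1$-signed sum of at most $l+1$ monomials $x^{l}, x^{l-2},\dots$. Then $\Vert w_l\Vert_1 \le \sum (\text{coeff magnitudes})\cdot \max_j \Vert \mathbf{D}^{1/2}\mathbf{P}^{j}\mathbf{e}_u\Vert_1$, and since $\mathbf{P}^j\mathbf{e}_u$ is a substochastic/stochastic vector (columns of $\mathbf{P}$ sum to $1$), one has $\Vert \mathbf{D}^{1/2}\mathbf{P}^{j}\mathbf{e}_u\Vert_1 \le \sqrt{\max_v d_v}$ or better, a bound of the form $C$ under the paper's running assumptions. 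The number of monomials is $O(l) = O(k)$, giving $\Vert w_l\Vert_1 \le Ck$.

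The cleanest packaging: use the same Chebyshev-expansion machinery already in the appendix. By Lemma \ref{lem:xk_chebyshev_expansion}, $x^j = \sum_i q_j(i) T_{|i|}(x)$; combined with the standard identity $2T_m = U_m - U_{m-2}$ one can re-express $U_l$ in the monomial basis with total coefficient mass $O(l)$, and then bound $\Vert \mathbf{D}^{1/2}T_m(\mathbf{P})\mathbf{e}_u\Vert_1 \le C_1$ by the definition of $C_1$ in the statement of Theorem \ref{thm:lanczos_push_error} (recall $C_1 = \max_{u=s,t;\,i\le k}\Vert \mathbf{D}^{1/2}T_i(\mathbf{P})\mathbf{e}_u\Vert_1$ — here I would double-check whether the $\mathbf{D}^{1/2}$ is present in the paper's $C_1$; if not I would absorb the discrepancy into the constant). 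So $\Vert w_l\Vert_1 \le \sum_{m\le l}|\text{coeff}_m|\cdot C_1 \le O(k)\cdot C_1$, which under the convention that $C_1$ is a constant $C$ (or folding $C_1$ into $C$) gives exactly $\Vert \mathbf{D}^{1/2}U_l(\mathbf{P})\mathbf{e}_u\Vert_1 \le Ck$.

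The main obstacle I anticipate is getting the coefficient mass of $U_l$ in the right basis to be $O(l)$ rather than exponential: naively the monomial coefficients of Chebyshev polynomials are huge (they are bounded on $[-1,1]$ only through massive cancellation), so I must \emph{not} pass through the naive monomial basis, but instead keep everything inside the $\{T_m\}$ (or $\{U_m\}$) basis where the relevant operator norms $\Vert T_m(\mathbf{P})\mathbf{e}_u\Vert$ are controlled and the basis-change coefficients between $U_l$ and $\{T_m\}_{m\le l}$ are $O(1)$ each with $O(l)$ of them. Concretely, from $U_l - U_{l-2} = 2T_l$ one telescopes $U_l = 2(T_l + T_{l-2} + \cdots) - (\text{boundary})$, a signed sum of $O(l)$ Chebyshev terms each with coefficient $\le 2$, so $\Vert w_l\Vert_1 \le 2l \cdot \max_{m\le l}\Vert\mathbf{D}^{1/2}T_m(\mathbf{P})\mathbf{e}_u\Vert_1 \le 2l\,C_1 \le Ck$ with $C = 2C_1$ (or $C$ a genuine constant under the paper's boundedness assumption on $C_1$). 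I would then just cite Assumption \ref{assump:lzpush_error} where needed to ensure $\lambda(\hat{\mathbf{T}}) \subset [-1,1]$ so the same argument transfers to $U_l(\hat{\mathbf{T}})$ if that is what downstream lemmas actually need. $\hfill\qed$
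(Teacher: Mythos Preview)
Your final approach is correct and lands on essentially the same idea as the paper: write $U_l$ as an $O(l)$-coefficient combination of first-kind Chebyshev polynomials $T_j$, then invoke the definition $C_1=\max_{u=s,t;\,j\le k}\Vert\mathbf{D}^{1/2}T_j(\mathbf{P})\mathbf{e}_u\Vert_1$ to obtain $\Vert\mathbf{D}^{1/2}U_l(\mathbf{P})\mathbf{e}_u\Vert_1\le C_1 k$ (so the constant $C$ in the lemma is indeed $C_1$). The only difference is which identity is used: the paper applies $U_l(x)=xU_{l-1}(x)+T_l(x)$ and unrolls it recursively together with $\Vert\mathbf{P}\Vert_1=1$, while you telescope $U_l-U_{l-2}=2T_l$ to get $U_l$ directly as a sum of at most $l{+}1$ terms $T_j$ with coefficients bounded by $2$. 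Your variant is arguably cleaner, since it avoids having to pass $\mathbf{D}^{1/2}$ through the extra factor of $\mathbf{P}$ at each step. The earlier routes (a) and (b) in your proposal are dead ends for exactly the reason you identified (monomial coefficients of Chebyshev polynomials blow up), and you correctly discard them.
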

\begin{proof}
    We prove this Lemma by a property that states the relationship between $\{T_l(x)\}$ and $\{U_l(x)\}$: $U_l(x)=xU_{l-1}(x)+T_l(x)$. Using this property and by definition $C_1=\max_{u=s,t;l\leq k}{\Vert \mathbf{D}^{1/2}T_l(\mathbf{P})\mathbf{e}_u\Vert_1}$, for $u=s,t$ we have:
    \begin{align*}
        \Vert \mathbf{D}^{1/2}U_l(\mathbf{P})\mathbf{e}_u\Vert_1 &=\Vert \mathbf{D}^{1/2}\mathbf{P}U_{l-1}(\mathbf{P})\mathbf{e}_u+\mathbf{D}^{1/2}T_l(\mathbf{P})\mathbf{e}_u\Vert_1 \\
        &\leq \Vert \mathbf{P} \Vert_1\Vert \mathbf{D}^{1/2}U_{l-1}(\mathbf{P}) \mathbf{e}_u\Vert_1+\Vert \mathbf{D}^{1/2}T_l(\mathbf{P})\mathbf{e}_u\Vert_1 \\
        &=\Vert \mathbf{D}^{1/2}U_{l-1}(\mathbf{P})\mathbf{e}_u \Vert_1+\Vert \mathbf{D}^{1/2}T_l(\mathbf{P})\mathbf{e}_u\Vert_1 \\
        &\leq \sum_{j=1}^{l}{\Vert \mathbf{D}^{1/2}T_j(\mathbf{P})\mathbf{e}_u\Vert_1} \\
        &\leq C_1l\leq C_1k
    \end{align*}
    Where the second inequality holds by recursively repeat this process. This proves Lemma \ref{lem:bound_Ul}.
\end{proof}

Armed with Lemma \ref{lem:bound_Ul}, we are now ready to prove the upper bound of $d_l$.

\begin{lemma}\label{lem:bound_dl}
    $|d_l(u)|\leq 6C_1 k^{5/2} \epsilon \sqrt{d_u}$ for $u=s,t$ and $l\leq k$.
\end{lemma}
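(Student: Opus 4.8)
\textbf{Proof plan for Lemma~\ref{lem:bound_dl}.}
The plan is to start from the closed-form expression for the deviation vector established in Lemma~\ref{lem:deviation_chebyshev}, namely
$d_l=-\tfrac12 U_{l-1}(\mathcal{A})\xi_1-\sum_{j=2}^{l-1}U_{l-j}(\mathcal{A})\xi_j$,
and evaluate it at $u=s$ (the case $u=t$ is identical). Since $\mathcal{A}=\mathbf{D}^{-1/2}\mathbf{A}\mathbf{D}^{-1/2}=\mathbf{D}^{1/2}\mathbf{P}\mathbf{D}^{-1/2}$, we have the similarity relation $U_{l-j}(\mathcal{A})=\mathbf{D}^{1/2}U_{l-j}(\mathbf{P})\mathbf{D}^{-1/2}$, so that $d_l(s)=\mathbf{e}_s^T d_l = \mathbf{e}_s^T\mathbf{D}^{1/2}U_{l-j}(\mathbf{P})\mathbf{D}^{-1/2}\xi_j$ term-by-term, i.e.\ $|d_l(s)|\le \tfrac12\bigl\|\mathbf{D}^{1/2}U_{l-1}(\mathbf{P})\mathbf{e}_s\bigr\|_1\cdot\|\mathbf{D}^{-1/2}\xi_1\|_\infty\cdot(\text{something})$ — more precisely I would write $\mathbf{e}_s^T U_{l-j}(\mathcal{A})\xi_j = (\mathbf{D}^{1/2}U_{l-j}(\mathbf{P})^T\mathbf{e}_s)^T(\mathbf{D}^{-1/2}\xi_j)$ and bound it by Hölder as $\|\mathbf{D}^{1/2}U_{l-j}(\mathbf{P})\mathbf{e}_s\|_1\cdot\max_v |\xi_j(v)|/d_v$, using symmetry of $\mathbf{P}$-conjugates appropriately.

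The three ingredients are then: (i) Lemma~\ref{lem:bound_Ul}, which gives $\|\mathbf{D}^{1/2}U_{l-j}(\mathbf{P})\mathbf{e}_s\|_1\le C_1 k$; (ii) Lemma~\ref{lem:xi_bound}, which gives $|\xi_j(v)|\le 6\sqrt{k}\,\epsilon\, d_v$, hence $\max_v|\xi_j(v)|/d_v\le 6\sqrt{k}\,\epsilon$; and (iii) the fact that the sum over $j$ has at most $l\le k$ terms. Multiplying these together, each term contributes at most $C_1 k\cdot 6\sqrt{k}\,\epsilon = 6 C_1 k^{3/2}\epsilon$, and summing over $j\le l\le k$ gives $|d_l(s)|\le 6 C_1 k^{5/2}\epsilon$. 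To get the stated $\sqrt{d_u}$ factor rather than $d_u$, I would be slightly more careful in the Hölder step: writing $d_l(s)=\mathbf{e}_s^T d_l$ directly (without the $\mathbf{D}^{1/2}$ conjugation absorbing a full $d_s$) and instead pairing $\mathbf{e}_s^T\mathbf{D}^{1/2}$ against $U_{l-j}(\mathbf{P})\mathbf{D}^{-1/2}\xi_j$, one factor of $\sqrt{d_s}$ comes out of $\mathbf{e}_s^T\mathbf{D}^{1/2}$ and the remaining $\ell_1$--$\ell_\infty$ pairing of $\|U_{l-j}(\mathbf{P})\mathbf{e}_s\|_1$ (which is $\le C_1$ after the normalization in the definition of $C_1$) against $\|\mathbf{D}^{-1/2}\xi_j\|_\infty\le 6\sqrt{k}\,\epsilon\,\sqrt{\max_v d_v}$ must be reorganized so the surviving degree factor is $\sqrt{d_u}$; this is the bookkeeping that matches the normalization $C_1=\max\|\mathbf{D}^{1/2}T_i(\mathbf{P})\mathbf{e}_u\|_1$ used elsewhere.

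The main obstacle I anticipate is precisely this last degree-factor bookkeeping: keeping track of exactly where the $\mathbf{D}^{1/2}$ and $\mathbf{D}^{-1/2}$ factors land when moving between $\mathcal{A}$ and $\mathbf{P}$, so that the final bound carries $\sqrt{d_u}$ and not $d_u$ or $d_u^{3/2}$, while remaining consistent with the definitions of $C_1$, $\xi_j$ (which scale like $\epsilon d_v$ through $S_i=\{u:|\hat{\mathbf{v}}_i(u)|>\epsilon d_u\}$), and $\delta_i$. A secondary but routine point is justifying that $\mathbf{e}_s^T U_{l-j}(\mathcal{A})\xi_j$ can be split the way I described, which follows from $U_{l-j}(\mathcal{A})$ being symmetric and from $\xi_j$ being supported on the (local) explored set; once the algebra of the $\mathbf{D}^{\pm 1/2}$ conjugations is pinned down, everything else is a direct substitution of Lemmas~\ref{lem:bound_Ul} and~\ref{lem:xi_bound} and a geometric-type count of $l\le k$ terms, giving the claimed $6 C_1 k^{5/2}\epsilon\sqrt{d_u}$.
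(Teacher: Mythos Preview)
Your plan is correct and matches the paper's proof: start from Lemma~\ref{lem:deviation_chebyshev}, expand $\mathbf{e}_s^T U_{l-j}(\mathcal{A})\xi_j$ entrywise, convert $\mathcal{A}$ to $\mathbf{P}$ via similarity, apply Lemma~\ref{lem:xi_bound} to $|\xi_j(u)|$, apply Lemma~\ref{lem:bound_Ul} to the resulting weighted $\ell_1$ norm, and sum over $j\le l\le k$.

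The degree bookkeeping you flag as the obstacle is resolved exactly as follows (and this is what the paper does). First, a small correction: with $\mathbf{P}=\mathbf{A}\mathbf{D}^{-1}$ the similarity is $\mathcal{A}=\mathbf{D}^{-1/2}\mathbf{P}\mathbf{D}^{1/2}$, not $\mathbf{D}^{1/2}\mathbf{P}\mathbf{D}^{-1/2}$. Using this together with the symmetry of $U_{l-j}(\mathcal{A})$ gives the entrywise identity $\mathbf{e}_s^T U_{l-j}(\mathcal{A})\mathbf{e}_u=\mathbf{e}_u^T U_{l-j}(\mathcal{A})\mathbf{e}_s=\tfrac{\sqrt{d_s}}{\sqrt{d_u}}\,[U_{l-j}(\mathbf{P})\mathbf{e}_s]_u$. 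Now plug in $|\xi_j(u)|\le 6\sqrt{k}\,\epsilon\, d_u$: the $d_u$ cancels the $1/\sqrt{d_u}$ to leave $\sqrt{d_u}$, so
\[
|\mathbf{e}_s^T U_{l-j}(\mathcal{A})\xi_j|\le 6\sqrt{k}\,\epsilon\,\sqrt{d_s}\sum_{u}\sqrt{d_u}\,\bigl|[U_{l-j}(\mathbf{P})\mathbf{e}_s]_u\bigr|
=6\sqrt{k}\,\epsilon\,\sqrt{d_s}\,\bigl\|\mathbf{D}^{1/2}U_{l-j}(\mathbf{P})\mathbf{e}_s\bigr\|_1,
\]
which is precisely the quantity bounded by $C_1 k$ in Lemma~\ref{lem:bound_Ul}. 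Summing over $j$ gives $6C_1k^{5/2}\epsilon\sqrt{d_s}$ as claimed. So the $\sqrt{d_u}$ factor falls out automatically once you write the entrywise expansion with the correct similarity direction; no separate $\ell_1$--$\ell_\infty$ reorganization is needed.
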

\begin{proof}
    By Lemma \ref{lem:deviation_chebyshev}, we have $d_l=-\frac{1}{2}U_{l-1}(\mathcal{A})\xi_1-\sum_{j=2}^{l-1}{U_{l-j}(\mathcal{A})\xi_j}$. Therefore,
    \begin{align*}
        |d_l(s)|&=\left|\frac{1}{2}U_{l-1}(\mathcal{A})\xi_1(s)+\sum_{j=2}^{l-1}{U_{l-j}(\mathcal{A})\xi_j(s)}\right|\\
        &\leq \sum_{j=1}^{l-1}{\left|U_{l-j}(\mathcal{A})\xi_j(s)\right|} \\
        & \leq\sum_{j=1}^{l-1}{\sum_{u\in \mathcal{V}}{\left| \mathbf{e}_s^T U_{l-j}(\mathcal{A})\mathbf{e}_u\right||\xi_j(u)|}} \\
        & \leq  \sum_{j=1}^{l-1}{ \sum_{u\in \mathcal{V}}{\frac{\sqrt{d_s}}{\sqrt{d_u}}\left| \mathbf{e}_u^T U_{l-j}(\mathbf{P})\mathbf{e}_s\right||\xi_j(u)|}} \\
        & \leq 6\sqrt{k}\epsilon \sum_{j=1}^{l-1}{ \sum_{u\in \mathcal{V}}{\sqrt{d_sd_u}\left| \mathbf{e}_u^T U_{l-j}(\mathbf{P})\mathbf{e}_s\right|}} \\
        & \leq 6\sqrt{k}\epsilon \sum_{j=1}^{l-1}{\sqrt{d_s} \Vert \mathbf{D}^{1/2}U_{l-j}(\mathbf{P})\mathbf{e}_s\Vert_1}\leq 6C_1 k^{5/2} \epsilon \sqrt{d_s}
    \end{align*}
    Where the last inequality holds by $\Vert \mathbf{D}^{1/2}U_l(\mathbf{P})\mathbf{e}_s\Vert_1\leq C_1 k $ and $l\leq k$. By the same argument, we can also prove that $|d_l(t)|\leq 6C_1 k^{5/2} \epsilon \sqrt{d_t}$. This finishes the proof of Lemma \ref{lem:bound_dl}.
\end{proof}

Finally, by $\mathbf{v}_1^T p(\mathcal{A})\mathbf{v}_1 - \hat{\mathbf{v}}_1^T\hat{\mathbf{V}}p(\hat{\mathbf{T}})\mathbf{e}_1=\sum_{l=0}^{k}{c_l \mathbf{v}_1^Td_l}$, and $\mathbf{v}_1=\hat{\mathbf{v}}_1=\frac{\mathbf{e}_s}{\sqrt{d_s}}-\frac{\mathbf{e}_t}{\sqrt{d_t}}/\sqrt{\frac{1}{d_s}+ \frac{1}{d_t}}$, we have the following inequality:
\begin{align*}
    |\mathbf{v}_1^T p(\mathcal{A})\mathbf{v}_1 &- \hat{\mathbf{v}}_1^T\hat{\mathbf{V}}p(\hat{\mathbf{T}})\mathbf{e}_1|=|\sum_{l=0}^{k}{c_l \mathbf{v}_1^Td_l}|\\
    &\leq \sum_{l=0}^{k}{c_l \left|\frac{d_l(s)}{\sqrt{d_s}}-\frac{d_l(t)}{\sqrt{d_t}}\right| /\sqrt{\frac{1}{d_s}+ \frac{1}{d_t}}} \\
    &\leq 6C_1 k^{5/2} \epsilon\sum_{l=0}^{k}{c_l  /\sqrt{\frac{1}{d_s}+ \frac{1}{d_t}}} \\
    &\leq 6C_1 k^{5/2} \epsilon \sum_{l=0}^{k}{2c_l\sqrt{d}} \\
    & = 12C_1 k^{5/2} \epsilon  \left(\sum_{l=0}^{k}{c_l}\right)\sqrt{d}
\end{align*}
Where $d=\min\{d_s,d_t\}$ by our definition. Next, by $\sum_{l=0}^{k}{c_l}\leq \kappa \log (\frac{\kappa}{\epsilon})$ and $k=\sqrt{\kappa}\log (\frac{\kappa}{\epsilon})$, $|\mathbf{v}_1^T p(\mathcal{A})\mathbf{v}_1 - \hat{\mathbf{v}}_1^T\hat{\mathbf{V}}p(\hat{\mathbf{T}})\mathbf{e}_1|$ can be bounded by:
\begin{align*}
|\mathbf{v}_1^T p(\mathcal{A})\mathbf{v}_1 - \hat{\mathbf{v}}_1^T\hat{\mathbf{V}}p(\hat{\mathbf{T}})\mathbf{e}_1|
&\leq 12 C_1 \kappa^{2.25} \epsilon \sqrt{d} \log^{7/2} (\frac{\kappa}{\epsilon})\\
&=\tilde{O}(C_1\kappa^{2.25} \epsilon \sqrt{d})
\end{align*}
Therefore, by setting $\epsilon=\tilde{\Omega}(\frac{\epsilon'\sqrt{d}}{\kappa^{2.25}C_1})$, we can make $|\mathbf{v}_1^T p(\mathcal{A})\mathbf{v}_1 - \hat{\mathbf{v}}_1^T\hat{\mathbf{V}}p(\hat{\mathbf{T}})\mathbf{e}_1|\leq \frac{\epsilon'}{4}d$. Finally, by Eq. \ref{equ:lzpush_bound}, we have:
\begin{align*}
    |\hat{r}_\mathcal{G}(s,t)-r_\mathcal{G}(s,t)|/(\frac{1}{d_s}+ \frac{1}{d_t}) & \leq |\mathbf{v}_1^T (\mathbf{I}-\mathcal{A})^\dagger \mathbf{v}_1 - \mathbf{v}_1^T p(\mathcal{A}) \mathbf{v}_1 | \\
    &+ | \mathbf{v}_1^T p(\mathcal{A}) \mathbf{v}_1 -\hat{\mathbf{v}}_1^T \hat{\mathbf{V}}p(\hat{\mathbf{T}})\mathbf{e}_1  | \\
    &+ |  \hat{\mathbf{v}}_1^T \hat{\mathbf{V}} p(\hat{\mathbf{T}})\mathbf{e}_1 -  \mathbf{e}_1^T (\mathbf{I}-\hat{\mathbf{T}})^{-1}\mathbf{e}_1| \\
    & \leq \frac{\epsilon'}{4}+\frac{\epsilon'}{4}d
\end{align*}
And equivalently,
\begin{align*}
    |\hat{r}_\mathcal{G}(s,t)-r_\mathcal{G}(s,t)| \leq (\frac{\epsilon'}{4}+\frac{\epsilon'}{4}d)\frac{2}{d} \leq \epsilon'
\end{align*}
This finishes the proof.\hfill\qed

%
\subsection{Proof of Theorem \ref{thm:lanczos_push_runtime}.}

\begin{theorem}{(Theorem \ref{thm:lanczos_push_runtime} restate)}
    The total operations by Algorithm \ref{algo:lanczos_local} is bounded by $O(\frac{1}{\epsilon}\sum_{i\leq k}{(\Vert \hat{\mathbf{v}}_i\Vert_1+\Vert \mathcal{A}\hat{\mathbf{v}}_i^+\Vert_1+\Vert \mathcal{A}\hat{\mathbf{v}}_i^-\Vert_1)}+k^{2.37})$. When setting $k=\sqrt{\kappa}\log \frac{\kappa}{\epsilon'}$, $\epsilon=\tilde{\Omega}(\frac{\epsilon'\sqrt{d}}{\kappa^{2.25}C_1})$, this is $\tilde{O}(\kappa^{2.75}C_1C_2/(\epsilon'\sqrt{d}))$.
    \end{theorem}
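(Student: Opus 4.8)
The plan is to bound the runtime iteration by iteration, separating the two kinds of work performed inside the main loop of Algorithm~\ref{algo:lanczos_local}: the local matrix-vector product $AMV(\mathcal{A},\hat{\mathbf{v}}_i)$ (Lines 4--8), and the constrained vector updates together with the inner-product and norm computations (Lines 9--17). First I would record the structural invariant that every quantity touched in iteration $i$ is supported either on the set of edges $E_i$ searched during the $AMV$ step, or on the subsets $S_i$ and $S_{i-1}$ from Definition~\ref{def:vi_constraint}. Hence it suffices to upper bound $|E_i|$, $|S_i|$, and the support size $|\mathrm{supp}(\hat{\mathbf{v}}_{i+1})|$, and then sum over the $k$ iterations, finally adding the cost of forming the output.

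For the $AMV$ step, write $\hat{\mathbf{v}}_i = \hat{\mathbf{v}}_i^+ - \hat{\mathbf{v}}_i^-$ and let $E_i^{\pm}=\{(u,v)\in\mathcal{E}:\hat{\mathbf{v}}_i^{\pm}(u)\ge \epsilon\sqrt{d_ud_v}\}$ be the edges along which a push is executed (by Definition~\ref{def:AMV(A,v_i)}). The key identity is
\[
\Vert \mathcal{A}\hat{\mathbf{v}}_i^+\Vert_1=\sum_{u\in\mathcal{V}}\sum_{v\in\mathcal{N}(u)}\frac{\hat{\mathbf{v}}_i^+(u)}{\sqrt{d_ud_v}}=2\sum_{(u,v)\in\mathcal{E}}\frac{\hat{\mathbf{v}}_i^+(u)}{\sqrt{d_ud_v}}\ge 2\epsilon|E_i^+|,
\]
so that $|E_i^+|\le \Vert \mathcal{A}\hat{\mathbf{v}}_i^+\Vert_1/\epsilon$ and, symmetrically, $|E_i^-|\le \Vert \mathcal{A}\hat{\mathbf{v}}_i^-\Vert_1/\epsilon$, whence $|E_i|\le (\Vert \mathcal{A}\hat{\mathbf{v}}_i^+\Vert_1+\Vert \mathcal{A}\hat{\mathbf{v}}_i^-\Vert_1)/\epsilon$. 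For the constrained updates, since $S_i=\{u:|\hat{\mathbf{v}}_i(u)|>\epsilon d_u\}$ we get $\epsilon|S_i|\le\sum_{u\in S_i}|\hat{\mathbf{v}}_i(u)|\le \Vert\hat{\mathbf{v}}_i\Vert_1$, i.e.\ $|S_i|\le \Vert\hat{\mathbf{v}}_i\Vert_1/\epsilon$, and likewise for $S_{i-1}$. Because only entries at endpoints of searched edges or in $S_i\cup S_{i-1}$ are ever written into $\hat{\mathbf{v}}_{i+1}$, we have $|\mathrm{supp}(\hat{\mathbf{v}}_{i+1})|\le 2|E_i|+|S_i|+|S_{i-1}|$, so the inner product $\hat\alpha_i$, the norm $\hat\beta_{i+1}$, and the normalization each cost $O(|\mathrm{supp}(\hat{\mathbf{v}}_{i+1})|)$. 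Collecting terms (and absorbing the $\Vert\hat{\mathbf{v}}_{i-1}\Vert_1$ contribution by re-indexing the sum), iteration $i$ costs $O\big((\Vert\hat{\mathbf{v}}_i\Vert_1+\Vert \mathcal{A}\hat{\mathbf{v}}_i^+\Vert_1+\Vert \mathcal{A}\hat{\mathbf{v}}_i^-\Vert_1)/\epsilon\big)$; summing over $i=1,\dots,k$ and adding the $O(k^{2.37})$ cost of forming $(\mathbf{I}-\hat{\mathbf{T}})^{-1}\mathbf{e}_1$ (note $\hat{\mathbf{v}}_1^T\hat{\mathbf{V}}$ costs only $O(k)$ since $\mathrm{supp}(\hat{\mathbf{v}}_1)=\{s,t\}$) yields the first displayed bound.

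For the second claim I would simply upper bound each summand by $C_2=\max_{i\le k}(\Vert\hat{\mathbf{v}}_i\Vert_1+\Vert \mathcal{A}\hat{\mathbf{v}}_i^+\Vert_1+\Vert \mathcal{A}\hat{\mathbf{v}}_i^-\Vert_1)$, giving total cost $O(kC_2/\epsilon+k^{2.37})$. Substituting the parameter settings $k=\sqrt{\kappa}\log(\kappa/\epsilon')$ and $\epsilon=\tilde{\Omega}(\epsilon'\sqrt{d}/(\kappa^{2.25}C_1))$ from Theorem~\ref{thm:lanczos_push_error} gives $kC_2/\epsilon=\tilde{O}\big(\sqrt{\kappa}\cdot \kappa^{2.25}C_1C_2/(\epsilon'\sqrt{d})\big)=\tilde{O}(\kappa^{2.75}C_1C_2/(\epsilon'\sqrt{d}))$, while $k^{2.37}=\tilde{O}(\kappa^{1.19})$ is dominated (since $C_1,C_2\ge 1$ and $2.75>1.19$), so the total runtime is $\tilde{O}(\kappa^{2.75}C_1C_2/(\epsilon'\sqrt{d}))$ as claimed. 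The main obstacle I anticipate is the bookkeeping required to justify the support invariant rigorously — that $\mathrm{supp}(\hat{\mathbf{v}}_{i+1})$ never grows beyond the edge set and subsets actually visited, so the pruning is self-sustaining across iterations and no hidden $O(n)$ or $O(m)$ term sneaks in through the vector arithmetic; once that invariant is pinned down, the remaining arguments are the elementary counting estimates above together with the parameter substitution.
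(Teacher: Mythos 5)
Your proposal follows the paper's proof essentially verbatim: the same split into the $AMV$ step (Lines 4--8) and the constrained updates (Lines 9--17), the same bound $|E_i^\pm|\le\Vert\mathcal{A}\hat{\mathbf{v}}_i^\pm\Vert_1/\epsilon$ via the $\ell_1$-mass argument, the same $|S_i|\le\Vert\hat{\mathbf{v}}_i\Vert_1/\epsilon$, the same support-size bookkeeping for $\hat{\mathbf{v}}_{i+1}$, and the same substitution of $k$ and $\epsilon$ at the end. The only deviations are cosmetic (e.g.\ a factor $2$ in the support bound), so there is nothing to flag.
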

    
We provide the proof for the time complexity of the Lanczos Push. Recall that in each iteration $i\leq k$, we perform the subset Lanczos recurrence: $\hat{\beta}_{i+1}\hat{\mathbf{v}}_{i+1}=AMV(\mathcal{A},\hat{\mathbf{v}}_i)-\hat{\alpha}_i\hat{\mathbf{v}}_i|_{S_i}-\hat{\beta}_i\hat{\mathbf{v}}_{i-1}|_{S_{i-1}}$. So the time complexity analysis for $i^{th}$ step is split into two parts: (i) bound the operation numbers for computing $AMV(\mathcal{A},\hat{\mathbf{v}}_i)$; (ii) bound the operation numbers for computing $\hat{\alpha}_i,\hat{\beta}_i$ and other addition/subtraction operations.

\stitle{Step 1.} For the computation of $AMV(\mathcal{A},\hat{\mathbf{v}}_i)$ (Line 4-8 in Algorithm \ref{algo:lanczos_local}), recall we only perform the matrix-vector multiplication $\mathcal{A}\hat{\mathbf{v}}_i$ on edge $(u,v)$ with $|\hat{\mathbf{v}}_{i}(u)|>\epsilon \sqrt{d_ud_v}$ and $\Vert \mathcal{A}\hat{\mathbf{v}}_i \Vert_1 \leq C'$. For our analysis, we define $E_i=\{(u,v)\in \mathcal{E}:\hat{\mathbf{v}}_i(u)\geq \epsilon\sqrt{d_ud_v}\}$ be the subset of edges that has been searched for the computation of $AMV(\mathcal{A},\hat{\mathbf{v}}_i)$. Moreover, we split $\hat{\mathbf{v}}_i=\hat{\mathbf{v}}_i^+-\hat{\mathbf{v}}_i^-$, where $\hat{\mathbf{v}}_i^+$ and $\hat{\mathbf{v}}_i^-$ denotes the positive and negative parts of $\hat{\mathbf{v}}_i$, respectively. Similarly, we define $E_i^+=\{(u,v)\in \mathcal{E}:\hat{\mathbf{v}}_i^+(u)\geq \epsilon\sqrt{d_ud_v}\}$ and $E_i^-=\{(u,v)\in \mathcal{E}:\hat{\mathbf{v}}_i^-(u)\geq \epsilon\sqrt{d_ud_v}\}$. Clearly $|E_i|=|E_i^+|+|E_i^-|$. Next we bound the size of $E_i^+$ and $E_i^-$ respectively. We note that:
\begin{align*}
    \Vert \mathcal{A}\hat{\mathbf{v}}_i^+\Vert_1&=\sum_{u\in\mathcal{V}}{|\mathcal{A}\hat{\mathbf{v}}_i^+(u)|}=\sum_{u\in\mathcal{V}}{\left|\sum_{v\in \mathcal{N}(u)}{\frac{\hat{\mathbf{v}}_i^+(u)}{\sqrt{d_ud_v}}}\right|}\\
    &=\sum_{u\in\mathcal{V}}{\sum_{v\in \mathcal{N}(u)}{\frac{\hat{\mathbf{v}}_i^+(u)}{\sqrt{d_ud_v}}}}=2\sum_{(u,v)\in \mathcal{E}}{\frac{\hat{\mathbf{v}}_i^+(u)}{\sqrt{d_ud_v}}}\\
    &\geq 2\sum_{(u,v)\in E_i^+}{\frac{\hat{\mathbf{v}}_i^+(u)}{\sqrt{d_ud_v}}}\geq 2\epsilon |E_i^+|.
\end{align*}
So $|E_i^+|\leq O(\frac{\Vert \mathcal{A}\hat{\mathbf{v}}_i^+\Vert_1}{\epsilon})$. Similarly, $|E_i^-|\leq O(\frac{\Vert \mathcal{A}\hat{\mathbf{v}}_i^-\Vert_1}{\epsilon})$. So the total operations for computing $AMV(\mathcal{A},\hat{\mathbf{v}}_i)$ is $|E_i|=|E_i^+|+|E_i^-|\leq O(\frac{\Vert \mathcal{A}\hat{\mathbf{v}}_i^+\Vert_1+\Vert \mathcal{A}\hat{\mathbf{v}}_i^-\Vert_1}{\epsilon})$.

\stitle{Step 2.} We observe that the number of the other operations (Line 9-17 in Algorithm \ref{algo:lanczos_local}) can be bounded by $O(|S_i|+|S_{i-1}| + supp\{\hat{\mathbf{v}}_{i+1}\})$, since we only work on the subsets $S_i,S_{i-1}$ and $ supp\{\hat{\mathbf{v}}_{i+1}\}$, where $ supp\{\hat{\mathbf{v}}_{i+1}\}$ denotes the set of non-zero entries of $\hat{\mathbf{v}}_{i+1}$. We set $S_i= \{u\in \mathcal{V}: |\hat{\mathbf{v}}_i(u)|> \epsilon d_u\}$, so we can upper bound $|S_i|\leq \frac{\Vert \hat{\mathbf{v}}_i\Vert_1}{\epsilon}$. Similarly, $|S_{i-1}|\leq \frac{\Vert \hat{\mathbf{v}}_{i-1}\Vert_1}{\epsilon}$. In addition, we observe that $ supp\{\hat{\mathbf{v}}_{i+1}\}\leq |E_i|+|S_i|+|S_{i-1}|$ by the subset Lanczos iteration \ref{equ:subset_lz_recurrence}. Therefore, the total operations of Algorithm \ref{algo:lanczos_local} in each iteration $i$ is bounded by $O(|E_i|+|S_i|+|S_{i-1}| + supp\{\hat{\mathbf{v}}_{i+1}\})=O(\frac{1}{\epsilon}(\Vert \hat{\mathbf{v}}_i\Vert_1+\Vert \mathcal{A}\hat{\mathbf{v}}^+_i\Vert_1+\Vert \mathcal{A}\hat{\mathbf{v}}^-_i\Vert_1))$. So the number of operations of Algorithm \ref{algo:lanczos_local} in total $k$ iterations is bounded by $O(\frac{1}{\epsilon}\sum_{i\leq k}{(\Vert \hat{\mathbf{v}}_i\Vert_1+\Vert \mathcal{A}\hat{\mathbf{v}}_i^+\Vert_1+\Vert \mathcal{A}\hat{\mathbf{v}}_i^-\Vert_1)})$. Finally, the computation of the final output $\left(\frac{1}{d_s}+\frac{1}{d_t}\right)\hat{\mathbf{v}}_1^T \hat{\mathbf{V}} (\mathbf{I}-\hat{\mathbf{T}})^{-1}\mathbf{e}_1$ takes $O(k^{2.37})$ time. This finishes the proof. \hfill\qed

\subsection{Proof of Corollary \ref{coro:lzpush_time_error}}

\begin{corollary}{(Corollary \ref{coro:lzpush_time_error} restate)}
    The runtime of Algorithm \ref{algo:lanczos_local} is $\tilde{O}(\kappa^{2.75}\sqrt{nm}/(\epsilon\sqrt{d}))$ to compute the $\epsilon$-absolute error approximation of $r_\mathcal{G} (s,t) $. Specifically, $C_1\leq \sqrt{m}$ and $C_2\leq 3\sqrt{n}$.
\end{corollary}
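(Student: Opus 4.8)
The plan is to derive Corollary~\ref{coro:lzpush_time_error} directly from Theorem~\ref{thm:lanczos_push_runtime} by supplying explicit worst-case upper bounds for the two data-dependent constants $C_1$ and $C_2$ and substituting. Theorem~\ref{thm:lanczos_push_runtime} already shows that, under Assumption~\ref{assump:lzpush_error} with $k=\sqrt{\kappa}\log\frac{\kappa}{\epsilon}$ and the prescribed threshold, Algorithm~\ref{algo:lanczos_local} runs in $\tilde{O}(\kappa^{2.75}C_1C_2/(\epsilon\sqrt{d}))$ time, so everything reduces to proving $C_1\le\sqrt{m}$ (up to a constant) and $C_2\le 3\sqrt{n}$.

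For $C_2$, I would use the fact that Algorithm~\ref{algo:lanczos_local} renormalizes its iterate at the end of every step, so the invariant $\|\hat{\mathbf{v}}_i\|_2=1$ holds for all $i\le k$. Writing $\hat{\mathbf{v}}_i=\hat{\mathbf{v}}_i^{+}-\hat{\mathbf{v}}_i^{-}$ with $\hat{\mathbf{v}}_i^{+},\hat{\mathbf{v}}_i^{-}$ the nonnegative coordinate parts, we get $\|\hat{\mathbf{v}}_i^{\pm}\|_2\le\|\hat{\mathbf{v}}_i\|_2=1$; and since the eigenvalues of $\mathcal{A}=\mathbf{I}-\mathcal{L}$ lie in $[-1,1]$ we have $\|\mathcal{A}\|_2\le 1$, hence $\|\mathcal{A}\hat{\mathbf{v}}_i^{\pm}\|_2\le 1$. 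Converting each $\ell_2$ estimate into an $\ell_1$ estimate via Cauchy--Schwarz, $\|\mathbf{x}\|_1\le\sqrt{n}\,\|\mathbf{x}\|_2$, bounds each of the three summands defining $C_2$ by $\sqrt{n}$, whence $C_2\le 3\sqrt{n}$.

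For $C_1$, the naive bound through the triangle inequality over the monomial expansion of the Chebyshev polynomial is useless (the coefficients grow exponentially), so I would instead pass to the symmetrically normalized picture via the similarity $\mathbf{P}=\mathbf{D}^{1/2}\mathcal{A}\mathbf{D}^{-1/2}$, which gives $T_i(\mathbf{P})=\mathbf{D}^{1/2}T_i(\mathcal{A})\mathbf{D}^{-1/2}$ and hence $T_i(\mathbf{P})\mathbf{e}_u=d_u^{-1/2}\,\mathbf{D}^{1/2}T_i(\mathcal{A})\mathbf{e}_u$ for $u\in\{s,t\}$. Setting $\mathbf{y}=T_i(\mathcal{A})\mathbf{e}_u$, the first-kind Chebyshev bound $|T_i(x)|\le 1$ on $[-1,1]\supseteq\operatorname{spec}(\mathcal{A})$ gives $\|\mathbf{y}\|_2\le\|T_i(\mathcal{A})\|_2\le 1$, while a degree-weighted Cauchy--Schwarz together with the handshake identity $\sum_v d_v=2m$ yields $\|\mathbf{D}^{1/2}\mathbf{y}\|_1\le(\sum_v d_v)^{1/2}\|\mathbf{y}\|_2\le\sqrt{2m}$; combining, $C_1\le\sqrt{2m}/\sqrt{d_u}=O(\sqrt{m})$. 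Substituting $C_1=O(\sqrt{m})$ and $C_2\le 3\sqrt{n}$ into the time bound of Theorem~\ref{thm:lanczos_push_runtime} produces $\tilde{O}(\kappa^{2.75}\sqrt{nm}/(\epsilon\sqrt{d}))$, which is the claim.

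The step I expect to require the most care is the bound on $C_1$: the point is to avoid the exponential blow-up of the monomial coefficients by working with $T_i(\mathcal{A})$ (where the spectral bound $\|T_i(\mathcal{A})\|_2\le 1$ comes for free from $\lambda(\mathcal{A})\subseteq[-1,1]$) rather than with $T_i(\mathbf{P})$ directly, and then to pay only a $\sqrt{2m}$ factor when going from the $\ell_2$ norm of $\mathbf{D}^{1/2}\mathbf{y}$ back to $\ell_1$. The connectedness of $\mathcal{G}$ (so that $d_v\ge 1$ and $\|\mathbf{D}^{-1/2}\|_2\le 1$) is what keeps the degree bookkeeping clean; the bound on $C_2$ and the final substitution into Theorem~\ref{thm:lanczos_push_runtime} are then routine.
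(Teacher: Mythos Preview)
Your proposal is correct and follows essentially the same route as the paper: for $C_2$ you use $\|\hat{\mathbf{v}}_i\|_2=1$, $\|\mathcal{A}\|_2\le 1$, and Cauchy--Schwarz exactly as the paper does; for $C_1$ both you and the paper combine the spectral bound $|T_i(x)|\le 1$ on $[-1,1]$ with a degree-weighted Cauchy--Schwarz and the handshake identity. Your detour through the symmetric matrix $\mathcal{A}$ (via $T_i(\mathbf{P})=\mathbf{D}^{1/2}T_i(\mathcal{A})\mathbf{D}^{-1/2}$) is in fact slightly cleaner than the paper's direct claim that $\|T_i(\mathbf{P})\|_2\le 1$, since $\mathbf{P}$ is not symmetric; otherwise the arguments coincide.
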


First, by the property of Chebyshev polynomials, we have that $\max_{x\in [-1,1]}{|T_i(x)|}=1$ for any $i\in \mathbb{N}^+$. By the fact that the eigenvalues of the probability transition matrix $\mathbf{P}$ relies in $[-1,1]$, we have $\Vert T_i(\mathbf{P})\Vert_2\leq 1$. We denote $\mathbf{x}_i=T_i(\mathbf{P})\mathbf{e}_u$, then clearly $\Vert \mathbf{x}_i\Vert_2=\Vert T_i(\mathbf{P})\mathbf{e}_u\Vert_2\leq1$. By Cauchy-Schwarz inequality, $\Vert \mathbf{D}^{1/2}T_i(\mathbf{P})\mathbf{e}_u\Vert_1=\sum_{u\in \mathcal{V}}{|{d_u}^{1/2}\mathbf{x_i}(u)|} \leq (\sum_{u\in \mathcal{V}}{d_u})^{1/2}(\sum_{u\in \mathcal{V}}{\mathbf{x}_i(u)^2})^{1/2}\leq \sqrt{m}$. As a result, $C_1\leq \sqrt{m}$.

    In each iteration we maintain $\Vert \hat{\mathbf{v}}_i\Vert_2=1$. So again, by Cauchy-Schwarz inequality and by $\Vert \mathcal{A}\Vert_2\leq 1$, we have $\Vert \mathcal{A}\hat{\mathbf{v}}_i^+\Vert_1\leq \sqrt{n}\Vert \mathcal{A}\hat{\mathbf{v}}_i^+\Vert_2\leq \sqrt{n}\Vert \hat{\mathbf{v}}_i^+\Vert_2\leq \sqrt{n}\Vert \hat{\mathbf{v}}_i\Vert_2=\sqrt{n}$, and the same upper bound also holds for $\Vert \mathcal{A}\hat{\mathbf{v}}_i^-\Vert_1$. As a result, $C_2\leq 3\sqrt{n}$. So the runtime bound of Algorithm \ref{algo:lanczos_local} can be simplified by $\tilde{O}(\kappa^{2.75}\sqrt{mn}/(\epsilon\sqrt{d}))$.

\subsection{Proof of Lemma \ref{lem:kappa_G1}}

\begin{lemma}{(Lemma \ref{lem:kappa_G1} restate)}
   If $\mathcal{G}_1$ is constructed in the manner described in section \ref{sec:lower-bounds}, we have $\kappa (\mathcal{G}_1)= \Theta(n)$.
\end{lemma}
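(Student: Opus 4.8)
The plan is to reduce the claim $\kappa(\mathcal{G}_1)=\Theta(n)$ to the spectral estimate $\lambda_2(\mathcal{L}_{\mathcal{G}_1})=\Theta(1/n)$, since $\kappa = 2/\lambda_2$ by definition. Because every vertex of $\mathcal{G}_1$ has degree $3$, except $s$ and $t$ which have degree $4$, the degree matrix is $\Theta(1)$ entrywise, so $\mathcal{L}_{\mathcal{G}_1}=\mathbf{D}^{-1/2}\mathbf{L}_{\mathcal{G}_1}\mathbf{D}^{-1/2}$ is spectrally equivalent to the combinatorial Laplacian up to absolute constants, i.e. $\lambda_2(\mathcal{L}_{\mathcal{G}_1})=\Theta(\lambda_2(\mathbf{L}_{\mathcal{G}_1}))$. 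Hence it suffices to bound $\lambda_2(\mathbf{L}_{\mathcal{G}_1})$ from both sides.

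For the upper bound, I would apply Cheeger's inequality to the set $V(H_1)$. Since $(s,t)$ is the unique edge crossing between $V(H_1)$ and $V(H_2)$ and $\mathrm{vol}(V(H_1))=\Theta(n)$, the conductance is $\phi(V(H_1))=\Theta(1/n)$, so $\lambda_2(\mathcal{L}_{\mathcal{G}_1})\le 2\phi(V(H_1))=O(1/n)$. For the lower bound I would argue by contradiction using the block decomposition $\mathbf{L}_{\mathcal{G}_1}=\mathbf{L}_{H_1}+\mathbf{L}_{H_2}+\delta_{s,t}\delta_{s,t}^T$, where $\delta_{s,t}$ is the $\pm1$ indicator of $s$ and $t$ and the two graph Laplacians act on disjoint coordinate blocks. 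Take any $x\perp\mathbf{1}$ with $\Vert x\Vert_2=1$ and split $x=x_1+x_2$ with $x_i$ supported on $V(H_i)$; then $x^T\mathbf{L}_{\mathcal{G}_1}x = x_1^T\mathbf{L}_{H_1}x_1 + x_2^T\mathbf{L}_{H_2}x_2 + (x(s)-x(t))^2$. Assuming this is $\le c/n$ for a small absolute constant $c$, each of the three summands is $\le c/n$. Let $m=\frac{2}{n}\sum_{u\in V(H_1)}x_1(u)=-\frac{2}{n}\sum_{u\in V(H_2)}x_2(u)$ (the two are equal by $\mathbf{1}^Tx=0$), and set $x_1'=x_1-m\,\mathbb{I}_{V(H_1)}$, $x_2'=x_2+m\,\mathbb{I}_{V(H_2)}$, which are orthogonal to the constants on their blocks. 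Since $H_i$ is a $3$-regular Ramanujan graph we have $\lambda_2(\mathbf{L}_{H_i})\ge \tfrac12$, hence $\tfrac12\Vert x_i'\Vert_2^2\le x_i'^T\mathbf{L}_{H_i}x_i' = x_i^T\mathbf{L}_{H_i}x_i\le c/n$, so $\Vert x_i'\Vert_2\le\sqrt{2c/n}$. Because $x=x_1'+x_2'+m(\mathbb{I}_{V(H_1)}-\mathbb{I}_{V(H_2)})$ and $\Vert x\Vert_2=1$, the triangle inequality gives $|m|\sqrt n\ge 1-2\sqrt{2c/n}$, so $|m|\ge\tfrac{1}{2\sqrt n}$ for $c$ small and $n$ large; then $x(s)-x(t)=2m+x_1'(s)-x_2'(t)$ with $|x_i'(\cdot)|\le\Vert x_i'\Vert_2\le\sqrt{2c/n}$ yields $|x(s)-x(t)|\ge 2|m|-2\sqrt{2c/n}\ge\tfrac{1}{2\sqrt n}$, so $(x(s)-x(t))^2\ge\tfrac{1}{4n}>c/n$, a contradiction. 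Thus $\lambda_2(\mathbf{L}_{\mathcal{G}_1})\ge c/n$ and, together with the upper bound, $\lambda_2(\mathcal{L}_{\mathcal{G}_1})=\Theta(1/n)$, i.e. $\kappa(\mathcal{G}_1)=\Theta(n)$.

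The delicate part is the lower bound: one must choose the constant $c$ small enough that both deductions (first $|m|\ge\tfrac{1}{2\sqrt n}$, then $|x(s)-x(t)|\ge\tfrac{1}{2\sqrt n}$) survive the subtracted $\sqrt{2c/n}$ error terms, and verify that the quadratic form really decomposes exactly — that the rank-one term $\delta_{s,t}\delta_{s,t}^T$ contributes precisely $(x(s)-x(t))^2$ and that the $m$-shifts remove the constant components on each block without changing the Laplacian quadratic forms $x_i^T\mathbf{L}_{H_i}x_i$. This argument closely follows Lemma 4.1 in the local Laplacian-solver literature (cf. \cite{andoni2018solving}).
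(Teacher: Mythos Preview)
Your proposal is correct and follows essentially the same approach as the paper: Cheeger's inequality on the cut $V(H_1)$ for the upper bound, and for the lower bound the block decomposition $\mathbf{L}_{\mathcal{G}_1}=\mathbf{L}_{H_1}+\mathbf{L}_{H_2}+\delta_{s,t}\delta_{s,t}^T$ together with the same contradiction argument (center on each half, use the Ramanujan spectral gap $\lambda_2(\mathbf{L}_{H_i})\ge\tfrac12$, force $|m|\ge\tfrac{1}{2\sqrt n}$, and contradict the bound on $(x(s)-x(t))^2$), exactly as in the paper and in \cite{andoni2018solving}.
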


   \comment{
   For our analysis, we first provide the definition of graph conductance and the classical Cheeger's inequality.

\begin{definition}
    Given a graph $\mathcal{G}=(\mathcal{V},\mathcal{E})$, the conductance of a subset $S\subset V$ is defined as $\phi(S)\triangleq \frac{|\partial(S)|}{vol(S)}$, where $\partial(S)=\{(u,v)\in \mathcal{E}: u\in S,v\in \mathcal{V}-S \}$ denotes the set of cut edge of the subset $S$, and $vol(S)=\sum_{u\in S}{d_u}$ denotes the volumn of the subset $S$. The conductance of graph $\mathcal{G}$ is defined as $\phi_\mathcal{G}\triangleq\min_{0< vol(S)\leq m/2}{\phi(S)}$.
\end{definition}

The classical Cheeger's inequality ~\cite{cheeger1970lower} shows that the second eigenvalue of the normalized Laplacian matrix $\mathcal{L}$ is closely related to the conductance of the graph $\mathcal{G}$.

\begin{theorem}\label{thm:cheeger}
    The following inequality holds: $\frac{\lambda_2}{2}\leq \phi_\mathcal{G}\leq \sqrt{2\lambda_2}$, where $\lambda_2$ is the second eigenvalue of  $\mathcal{L}$.
\end{theorem}
}

    The proof is inspired by Lemma 4.1 in ~\cite{andoni2018solving}. Recall that the condition number is defined as $\kappa=\frac{2}{\lambda_2}$, where $\lambda_2$ is the second smallest eigenvalue of the normalized Laplacian matrix. First, the upper bound of $\lambda_2$ is given by Cheeger's inequality. By the definition of conductance, we have $\frac{2}{3n}=\phi (V(H_1))\geq\phi_{\mathcal{G}_1}\geq \frac{1}{2}\lambda_2$, where the first equality holds because Remanujan graph is $3$ regular graph and $(s,t)$ is the only cut edge between $V(H_1)$ and $V(H_2)$. Therefore, we have $\lambda_2\leq \frac{4}{3n}=O(\frac{1}{n})$. Next, we prove that $\lambda_2\geq \Omega(\frac{1}{n})$. To reach this end, we consider the Laplacian matrix of graph $\mathcal{G}_1$. By the construction, we have
$L_{\mathcal{G}_1}=L_{H_1}+L_{H_2}+\delta_{s,t}\delta_{s,t}^T$, where $\delta_{s,t}$ is the indicator vector that takes value $1$ at $s$, $-1$ at $t$ and $0$ otherwise. Subsequently, for any vector $x\perp \mathbf{1}$, we split $x$ by $x=x_1+x_2$, where $x_1$ only takes value in $u\in V(H_1)$ and $x_2$ only takes value in $u\in V(H_2)$ respectively. Since $x\perp \mathbf{1}$, we define $m=\frac{2}{n}\sum_{u\in V(H_1)}{x_1(u)}=-\frac{2}{n}\sum_{u\in V(H_2)}{x_2(u)}$. Next, we let $x_1'=x_1-m\mathbb{I}_{V(H_1)}$ and $x_2'=x_2+m\mathbb{I}_{V(H_2)}$ ($\mathbb{I}_{V(H_1)}$ is the indicator vector that takes value $1$ for $u\in V(H_1)$ and $0$ otherwise). Now we suppose the opposite that for any small constant $c$, there exist sufficiently large $n$ such that $\lambda_2(L_{\mathcal{G}_1})\leq \frac{c}{n}$. Then there exists some $x\perp \mathbf{1}$ with $\Vert x \Vert_2=1$, such that $x^T L_{\mathcal{G}_1}x\leq \frac{c}{n}$. Therefore, we can obtain the following equation.
    \begin{equation}
        x^T L_{\mathcal{G}_1}x=x_1^T L_{H_1}x_1+ x_2^T L_{H_2} x_2 + (x(s)-x(t))^2.
    \end{equation}
    Then, the three terms of the right hand side of the above equation all $\leq \frac{c}{n}$. Using the fact that $H_1$ is an expander ($\lambda_2(L_{H_1})\geq \frac{1}{2}$ by $H_1$ Ramanujan graph), we have:
\begin{equation}
    \frac{1}{2}\Vert x_1'\Vert_2^2\leq x_1'^TL_{H_1}x_1'=x_1^TL_{H_1}x_1\leq \frac{c}{n}.
\end{equation}
Thus, $\Vert x_1'\Vert_2\leq \sqrt{\frac{2c}{n}}$. And similarly, $\Vert x_2'\Vert_2\leq \sqrt{\frac{2c}{n}}$. However, since $\Vert x\Vert_2=1$, we have:
\begin{equation}
    1=\Vert x\Vert_2\leq \Vert x_1'\Vert_2+\Vert x_2'\Vert_2+m\sqrt{n}.
\end{equation}
Therefore, we have $m\geq \frac{1}{2\sqrt{n}}$ when setting $c\leq \frac{1}{32}$. However, on the other hand, we have the following inequation.
\begin{equation}\small
    (x(s)-x(t))^2=(x_1'(s)+m-(x_2'(t)-m))^2\geq(2m-2\sqrt{\frac{2c}{n}})^2\geq \frac{1}{4n}.
\end{equation}
which contradicts to $(x(s)-x(t))^2\leq \frac{c}{n}$ since $c\leq \frac{1}{32}$. Therefore, there must exist some constant $c$, such that $\lambda_2(L_{\mathcal{G}_1})\geq \frac{c}{n}$ for any sufficiently large $n$. Therefore, for the normalized Laplacian matrix, we have $\lambda_2(\mathcal{L}_{\mathcal{G}_1})\geq \frac{1}{4}\lambda_2(L_{\mathcal{G}_1})=\Omega(\frac{1}{n})$. Putting it together, we have $\lambda_2=\lambda_2(\mathcal{L}_{\mathcal{G}_1})=\Theta (\frac{1}{n})$. This is equivalent to $\kappa=\Theta (n)$.

\subsection{Proof of Theorem \ref{thm:kappa_lower_bound}}

\begin{theorem}{(Theorem \ref{thm:kappa_lower_bound} restate)}
    Any algorithm that approximates $r_\mathcal{G}(s,t)$ with absolute error $\epsilon\leq 0.01$ with success probability $\geq 0.6$ requires $\Omega(\kappa)$ queries.
\end{theorem}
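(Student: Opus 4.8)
The plan is to derive Theorem~\ref{thm:kappa_lower_bound} as a direct corollary of Theorem~\ref{thm:lower_bound_1} and Lemma~\ref{lem:kappa_G1}, via a standard reduction argument. Suppose, for contradiction, that there is a (randomized) algorithm $\mathcal{B}$ that approximates $r_\mathcal{G}(s,t)$ with absolute error $\epsilon \leq 0.01$ with success probability $\geq 0.6$, using $o(\kappa)$ queries on any input graph. The idea is to run $\mathcal{B}$ on the pair $(\mathcal{G}_1, \mathcal{G}_2)$ from the construction of Section~\ref{sec:lower-bounds} and argue that $\mathcal{B}$ can be turned into a distinguisher, contradicting the query lower bound in Theorem~\ref{thm:lower_bound_1}.

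First I would recall from Theorem~\ref{thm:lower_bound_1} that $|r_{\mathcal{G}_1}(s,t) - r_{\mathcal{G}_2}(s,t)| \geq 0.01$; concretely $r_{\mathcal{G}_1}(s,t) = 1$ while $r_{\mathcal{G}_2}(s,t) \leq 0.99$. Hence if $\hat{r}$ is any $0.01$-absolute approximation of the true resistance, then $\hat{r} \geq 1 - 0.01 = 0.99 > 0.98 \geq 0.99 + 0.01$... more carefully: on $\mathcal{G}_1$ we have $\hat{r} \geq 0.99$, and on $\mathcal{G}_2$ we have $\hat{r} \leq 0.99 + 0.01 = 1.00$; these windows overlap at the single point $0.99$, so I would instead use the cleaner gap: set the decision threshold at, say, $0.995$, noting that on $\mathcal{G}_1$, $\hat r \ge 1 - 0.01 = 0.99$ is not quite enough, so I would sharpen by invoking that the gap is at least $0.01$ and the approximation error is \emph{strictly} less than (or at most) $0.01$; more robustly, one runs $\mathcal{B}$ with error parameter $\epsilon = 0.005 < 0.01$ (still a constant, still $o(\kappa)$ queries), so that the approximation windows $[0.995, 1.005]$ for $\mathcal{G}_1$ and $[r_{\mathcal{G}_2}-0.005,\, r_{\mathcal{G}_2}+0.005] \subseteq [\cdot,\, 0.995]$ are disjoint. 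Then thresholding $\hat r$ at $0.995$ distinguishes $\mathcal{G}_1$ from $\mathcal{G}_2$ with the same success probability $\geq 0.6$.

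By Theorem~\ref{thm:lower_bound_1}, any such distinguisher requires $\Omega(n)$ queries, where $n$ is the number of vertices of the (common) vertex set of $\mathcal{G}_1, \mathcal{G}_2$. By Lemma~\ref{lem:kappa_G1}, the condition number of $\mathcal{G}_1$ satisfies $\kappa(\mathcal{G}_1) = \Theta(n)$, and the same argument shows $\kappa(\mathcal{G}_2) = \Theta(n)$ as well (the construction only swaps $O(1)$ edges, which perturbs the conductance and hence $\lambda_2$ by at most a constant factor, so the Cheeger-based bounds in Lemma~\ref{lem:kappa_G1} go through verbatim for $\mathcal{G}_2$). Therefore $\Omega(n) = \Omega(\kappa)$, and we conclude that $\mathcal{B}$ must use $\Omega(\kappa)$ queries on at least one of $\mathcal{G}_1, \mathcal{G}_2$, contradicting the assumption. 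This completes the proof.

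The only subtlety — and the step I would be most careful with — is matching the constants in the error parameter against the resistance gap of $0.01$: one must ensure the algorithm's approximation windows on $\mathcal{G}_1$ and $\mathcal{G}_2$ are genuinely disjoint (not merely touching), which is why I would phrase the statement for $\epsilon \leq 0.01$ but internally instantiate the reduction with a slightly smaller constant, or equivalently observe that the problem for error $0.01$ is at least as hard as for error $0.005$. Everything else is a routine amplification-free reduction. I do not anticipate any real obstacle beyond this bookkeeping; the heavy lifting is entirely contained in the cited Theorem~\ref{thm:lower_bound_1} and Lemma~\ref{lem:kappa_G1}.
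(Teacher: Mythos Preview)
Your proposal is correct and follows essentially the same route as the paper: combine Theorem~\ref{thm:lower_bound_1} with Lemma~\ref{lem:kappa_G1} to conclude that an $\epsilon\le 0.01$ approximator yields a distinguisher for $\mathcal{G}_1$ versus $\mathcal{G}_2$, hence needs $\Omega(n)=\Omega(\kappa)$ queries. You are actually more careful than the paper on two points it glosses over---the threshold bookkeeping (gap exactly $0.01$ versus error at most $0.01$) and verifying $\kappa(\mathcal{G}_2)=\Theta(n)$---both of which you handle correctly.
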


This statement holds immediately by combining with Theorem \ref{thm:lower_bound_1} and Lemma \ref{lem:kappa_G1}. If there is an algorithm that approximates $r_\mathcal{G}(s,t)$ with absolute error $\epsilon\leq 0.01$, then this algorithm distinguishes $\mathcal{G}_1$ from $\mathcal{G}_2$ (since the $s,t$-RD value of $\mathcal{G}_1$ and $\mathcal{G}_2$ differ at least $0.01$, by Theorem \ref{thm:lower_bound_1}). However, by Theorem \ref{thm:lower_bound_1}, any algorithm that distinguishes $\mathcal{G}_1$ from $\mathcal{G}_2$ requires $\Omega(n)$ queries, and $\kappa=\Theta(n)$ by Lemma \ref{lem:kappa_G1}. So any algorithm that requires $\Omega(\kappa)$ queries to distinguish $\mathcal{G}_1$ from $\mathcal{G}_2$.

\subsection{Proof of Theorem \ref{thm:electric_flow}}

\begin{theorem}{(Theorem \ref{thm:electric_flow} restate)}
    Algorithm \ref{algo:routing} computes $\hat{\mathbf{f}}$ such that $\Vert \mathbf{f}-\hat{\mathbf{f}}\Vert_1\leq \epsilon$ when setting the iteration number $k=O(\sqrt{\kappa}\log \frac{m}{\epsilon})$.
\end{theorem}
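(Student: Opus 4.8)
The plan is to reduce the $\ell_1$-error bound on the flow vector $\hat{\mathbf{f}}$ to a collection of per-edge $\ell_\infty$-type error bounds, each of which is an instance of the quadratic-form approximation already established for Lanczos iteration (Lemma~\ref{lem:lz_quadratic_err} and Theorem~\ref{thm:lanczos_err}). First I would observe that $\mathbf{f}(e) = \bm{\phi}(u) - \bm{\phi}(v) = (\mathbf{e}_u - \mathbf{e}_v)^T \mathbf{L}^\dagger (\mathbf{e}_s - \mathbf{e}_t)$ for $e = (u,v)$, which is a \emph{bilinear} form rather than a quadratic one. To handle this, I would use the standard polarization identity: a bilinear form $\mathbf{a}^T \mathbf{M} \mathbf{b}$ can be written as $\tfrac14\big((\mathbf{a}+\mathbf{b})^T\mathbf{M}(\mathbf{a}+\mathbf{b}) - (\mathbf{a}-\mathbf{b})^T\mathbf{M}(\mathbf{a}-\mathbf{b})\big)$, so controlling the bilinear error reduces to controlling two quadratic-form errors of exactly the type in Lemma~\ref{lem:lz_quadratic_err}. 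Alternatively — and this is what the commented-out proof sketch in the excerpt does — one can directly write $\hat{\mathbf{f}}(e) = \sqrt{\tfrac{1}{d_s}+\tfrac{1}{d_t}}\,(\mathbf{e}_u-\mathbf{e}_v)^T \mathbf{D}^{-1/2}\mathbf{V}(\mathbf{I}-\mathbf{T})^{-1}\mathbf{e}_1$, using the Lanczos identity $p(\mathcal{A})\mathbf{v}_1 = \mathbf{V}p(\mathbf{T})\mathbf{e}_1$ from Lemma~\ref{lem:lz_polynomial_correct} together with the fact that the projection $\mathbf{\Pi} = \mathbf{I} - \tfrac1n\mathbf{1}\mathbf{1}^T$ acts trivially here because the relevant vector already lies in the image of $\mathbf{L}$.

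The key steps, in order, would be: (i) derive the exact formula for $\hat{\mathbf{f}}(e)$ produced by Algorithm~\ref{algo:routing}, showing it equals the Lanczos approximation of the bilinear form; (ii) bound the per-edge error $|\mathbf{f}(e) - \hat{\mathbf{f}}(e)|$ by invoking the polynomial-approximation machinery of Theorem~\ref{thm:polynomial_approx}: since $\mathbf{L}^\dagger = \mathbf{D}^{-1/2}(\mathbf{I}-\mathcal{A})^\dagger\mathbf{D}^{-1/2}$ and the spectrum of $\mathcal{A}$ on the relevant subspace lies in $[\lambda_{\min}(\mathcal{A}),\lambda_2(\mathcal{A})]$, a degree-$k$ Chebyshev polynomial with $k = O(\sqrt{\kappa}\log\tfrac{\kappa}{\delta})$ approximates $1/(1-x)$ to within $\delta$, giving $|\mathbf{f}(e) - \hat{\mathbf{f}}(e)| \le \delta$ for each edge (up to the constant factor $\tfrac{1}{d_s}+\tfrac{1}{d_t} \le 2$); (iii) set $\delta = \epsilon/m$ so that summing over all $m$ edges yields $\|\mathbf{f} - \hat{\mathbf{f}}\|_1 \le \epsilon$; and (iv) conclude that the required iteration count is $k = O(\sqrt{\kappa}\log\tfrac{\kappa m}{\epsilon}) = O(\sqrt{\kappa}\log\tfrac{m}{\epsilon})$, where the last simplification absorbs $\log\kappa$ into $\log m$ since $\kappa = O(n) = O(m)$ for connected graphs.

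The main obstacle I anticipate is step (ii): the existing Lemma~\ref{lem:lz_quadratic_err} is stated only for the quadratic form $\mathbf{v}_1^T(\mathbf{I}-\mathcal{A})^\dagger\mathbf{v}_1$ with the \emph{specific} starting vector $\mathbf{v}_1 = (\mathbf{e}_s/\sqrt{d_s} - \mathbf{e}_t/\sqrt{d_t})/\|\cdot\|_2$, whereas here the left vector is $\mathbf{D}^{-1/2}(\mathbf{e}_u - \mathbf{e}_v)$, which is generically \emph{not} in the Krylov space $\mathcal{K}_k(\mathbf{v}_1,\mathcal{A})$. So the clean identity $\mathbf{v}_1^T\mathbf{V} = \mathbf{e}_1^T$ no longer applies to the left factor, and one must instead argue more carefully: write $\mathbf{f}(e) = \sqrt{\tfrac{1}{d_s}+\tfrac{1}{d_t}}\,\mathbf{w}^T (\mathbf{I}-\mathcal{A})^\dagger \mathbf{v}_1$ with $\mathbf{w} = \mathbf{D}^{-1/2}(\mathbf{e}_u-\mathbf{e}_v)$, and compare it to $\sqrt{\tfrac{1}{d_s}+\tfrac{1}{d_t}}\,\mathbf{w}^T \mathbf{V}(\mathbf{I}-\mathbf{T})^{-1}\mathbf{e}_1$. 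The difference is $\mathbf{w}^T\big((\mathbf{I}-\mathcal{A})^\dagger - p(\mathcal{A})\big)\mathbf{v}_1 + \mathbf{w}^T\big(p(\mathcal{A})\mathbf{v}_1 - \mathbf{V}p(\mathbf{T})\mathbf{e}_1\big)$ for the optimal polynomial $p$; the second term vanishes by Lemma~\ref{lem:lz_polynomial_correct} (which does \emph{not} require $\mathbf{w}$ in the Krylov space — only the right factor $\mathbf{v}_1$), and the first term is bounded by $\|\mathbf{w}\|_2 \cdot \|(\mathbf{I}-\mathcal{A})^\dagger - p(\mathcal{A})\|_{2,\mathrm{eff}} \cdot \|\mathbf{v}_1\|_2$ via Cauchy–Schwarz and the spectral bound on the residual $1/(1-x) - p(x)$ over $[\lambda_{\min}(\mathcal{A}),\lambda_2(\mathcal{A})]$. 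Since $\|\mathbf{w}\|_2 = \sqrt{1/d_u + 1/d_v} \le \sqrt{2}$ and $\|\mathbf{v}_1\|_2 = 1$, this contributes only an $O(1)$ factor, so the per-edge error is still $O(\delta)$ and the argument goes through. Tracking these constants and confirming that $(\mathbf{I}-\mathbf{T})^{-1}$ is well-defined (which follows from Assumption-free reasoning: $\lambda(\mathbf{T}) \subset [\lambda_{\min}(\mathcal{A}), \lambda_2(\mathcal{A})]$, so $\mathbf{I}-\mathbf{T}$ is invertible) is the only real bookkeeping.
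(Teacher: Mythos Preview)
Your approach is essentially the same as the paper's: derive the explicit formula for $\hat{\mathbf{f}}(e)$ via the Lanczos identity, bound each $|\mathbf{f}(e)-\hat{\mathbf{f}}(e)|$ by $\epsilon/m$ using the polynomial-approximation argument of Theorem~\ref{thm:lanczos_err}, and sum over the $m$ edges. You actually give more detail than the paper, which simply writes ``using the same way as the proof of Theorem~\ref{thm:lanczos_err}'' without spelling out the bilinear extension; your observation that Lemma~\ref{lem:lz_polynomial_correct} only needs $\mathbf{v}_1$ on the right (not $\mathbf{w}$) is exactly the point that makes this work.

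One small omission: your two-term decomposition of the difference drops the third term $\mathbf{w}^T\mathbf{V}\big(p(\hat{\mathbf{T}})-(\mathbf{I}-\mathbf{T})^{-1}\big)\mathbf{e}_1$, which is needed to pass from $\mathbf{V}p(\mathbf{T})\mathbf{e}_1$ to $\mathbf{V}(\mathbf{I}-\mathbf{T})^{-1}\mathbf{e}_1$. This is harmless, since $\lambda(\mathbf{T})\subset[\lambda_{\min}(\mathcal{A}),\lambda_2(\mathcal{A})]$ (as you note at the end) gives $\|p(\mathbf{T})-(\mathbf{I}-\mathbf{T})^{-1}\|_2\le\delta$, and $\|\mathbf{w}^T\mathbf{V}\|_2\le\|\mathbf{w}\|_2\le\sqrt{2}$ because $\mathbf{V}$ has orthonormal columns; so the missing term is also $O(\delta)$ and the argument goes through unchanged.
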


We define $\mathbf{\Pi}=\mathbf{I}-\frac{1}{n}\mathbf{11}^T$ as projection matrix on the image of $\mathbf{L}$. By the definition of the potential vector $\bm{\phi}$, we have the following formula using same strategy as Eq. \ref{equ:approx_formula}:
   \begin{equation}
\begin{aligned}
    \bm{\phi}&=\mathbf{L}^{\dagger}(\mathbf{e}_s-\mathbf{e}_t)\\
    &=\mathbf{\Pi}\mathbf{D}^{-1/2}(\mathbf{I}-\mathcal{A})^{\dagger}\mathbf{D}^{-1/2}(\mathbf{e}_s-\mathbf{e}_t)\\
    &=\mathbf{\Pi D}^{-1/2}(\mathbf{I}-\mathcal{A})^{\dagger} (\frac{\mathbf{e}_s}{\sqrt{d_s}}-\frac{\mathbf{e}_t}{\sqrt{d_t}})\\
    &\approx \sqrt{\frac{1}{d_s}+\frac{1}{d_t}}\mathbf{\Pi D}^{-1/2}\mathbf{V}(\mathbf{I}-\mathbf{T})^{-1}\mathbf{V}\mathbf{v}_1\\
    &=\sqrt{\frac{1}{d_s}+\frac{1}{d_t}}\mathbf{\Pi D}^{-1/2}\mathbf{V}(\mathbf{I}-\mathbf{T})^{-1}\mathbf{e}_1
\end{aligned}
\end{equation}
By definition $\mathbf{f}(e)=\bm{\phi}(u)-\bm{\phi}(v)=(\mathbf{e}_u-\mathbf{e}_v)^T\mathbf{L}^\dagger(\mathbf{e}_s-\mathbf{e}_t)$, we can approximate:
\begin{equation}
    \begin{aligned}
        \hat{\mathbf{f}}(e)&=\sqrt{\frac{1}{d_s}+\frac{1}{d_t}}(\mathbf{e}_u-\mathbf{e}_v)^T\mathbf{\Pi D}^{-1/2}\mathbf{V}(\mathbf{I}-\mathbf{T})^{-1}\mathbf{e}_1\\
        &=\sqrt{\frac{1}{d_s}+\frac{1}{d_t}}(\mathbf{e}_u-\mathbf{e}_v)^T\mathbf{ D}^{-1/2}\mathbf{V}(\mathbf{I}-\mathbf{T})^{-1}\mathbf{e}_1
    \end{aligned}
\end{equation}
Which is equivalent to our formula in Line 1-2 in Algorithm \ref{algo:routing}. Using the same way as the proof of Theorem \ref{thm:lanczos_err}, we can control the error $| \hat{\mathbf{f}}(e)- \mathbf{f}(e)|\leq \epsilon/m$ when setting the iteration number $k=O(\kappa \log \frac{\kappa m}{\epsilon})=O(\kappa \log \frac{m}{\epsilon})$. As a result, $\Vert \mathbf{f}-\hat{\mathbf{f}}\Vert_1\leq \epsilon$. This finishes the proof.


\end{document}